\documentclass[11pt]{article}

\usepackage{graphicx} 
\PassOptionsToPackage{prologue,dvipsnames}{xcolor}
\usepackage[dvipsnames]{xcolor}
\usepackage[citecolor=NavyBlue, linkcolor=NavyBlue, urlcolor=NavyBlue, colorlinks=True]{hyperref} 
\usepackage{url,mathtools}
\usepackage{caption}
\usepackage{subcaption}
\usepackage[innercaption]{sidecap}
\mathtoolsset{showonlyrefs}
\usepackage{setspace}
\sidecaptionvpos{figure}{t}

\newcommand\yz[1]{\textcolor{purple}{YZ: #1}}
\usepackage{amsmath}
\usepackage{amssymb}
\usepackage{tikz}
\usepackage{wrapfig}
\usepackage{enumitem}
\usepackage{subcaption}
\usepackage{mathrsfs}

\usepackage{amsthm}
\usepackage{bm}
\usepackage{bbm}
\usepackage{mathtools}
\usepackage{booktabs}
\usepackage{xspace}
\usepackage{authblk}
\usepackage{tcolorbox}
\usepackage{xcolor}
\usepackage{comment}

\usepackage{mathalfa}

\newcommand{\dutchp}{\text{\usefont{U}{dutchcal}{m}{n}p}}

\usepackage{tikz,pgf}
\usepackage{scalerel}
\usepackage{pict2e}
\usepackage{tkz-euclide}
\usetikzlibrary{calc}
\usetikzlibrary{shapes.geometric, fit}

\usetikzlibrary{arrows,positioning}
\usepackage{xcolor}

\newtcolorbox[auto counter, number within=section]{definitionbox}[2][]{colframe=blue!40!black, colback=blue!10!white, coltitle=black, title=Definition~\thetcbcounter: #2,#1}

\newcommand{\m}[1]{\mathbf{#1}}

\newcommand{\R}{\mathbb{R}}

\newcommand{\s}{\mathcal{S}}

\newcommand{\N}{\mathbb{N}}

\newcommand{\dif}{\mathrm{d}}
 
\newcommand\inner[2]{ \left \langle #1, #2 \right \rangle }

\newcommand{\wick}[1]{\left \langle #1 \right \rangle}

\newcommand{\abs}[1]{\left| {#1} \right|}
\newcommand{\norm}[1]{\left\| {#1} \right\|}

\renewcommand{\Re}{\mathrm{Re}\,}

\let\emptyset\varnothing

\newcommand{\Id}{\m{I}}

\newcommand{\tr}{\operatorname{tr}}

\renewcommand{\phi}{\varphi}
\DeclareMathAlphabet{\dutchcal}{U}{dutchcal}{m}{n}
\SetMathAlphabet{\dutchcal}{bold}{U}{dutchcal}{b}{n}
\DeclareMathAlphabet{\dutchbcal} {U}{dutchcal}{b}{n}

\newcommand{\E}{\mathbb{E}}
\newcommand{\prob}{\mathbb{P}}
\renewcommand{\Pr}{\prob}

\newcommand{\Var}{\operatorname{Var}}

\DeclareDocumentCommand{\Asto} {o} {
\IfNoValueTF {#1}
 {\overset{\operatorname{a.s.}}{\longrightarrow}}
 {
 \xrightarrow[ #1 \to \infty]{\operatorname{a.s.} }
 }
}

\DeclareDocumentCommand{\Probto} {o} {
\IfNoValueTF {#1}
 {\overset{\prob}{\longrightarrow}}
 {
 \xrightarrow[ #1 \to \infty]{\prob}
 }
}

\newcommand{\eps}{\epsilon}

\newcommand{\V}{\m{V}} 
\renewcommand{\v}{\m{v}} 
\newcommand{\x}{\m{x}} 
\newcommand{\K}{\m{K}} 

\newcommand{\wc}{\boldsymbol{\omega}}
\newcommand{\W}{\m{W}}

\newcommand{\z}{\m{z}}
\newcommand{\D}{\m{D}}
\newcommand{\pwrco}{\alpha}

\newcommand{\y}{\m{y}}

\newcommand{\col}{\text{col}}

\newcommand{\op}{\text{op}}

\newcommand{\p}{\dutchp}

\newcommand{\cov}{\m{H}}

\newcommand{\tpose}{{\rm T}}

\newcommand{\ti}{\tilde{\m{i}}}
\newcommand{\ts}{\tilde{\m{s}}}

\newtheorem{theorem}{Theorem}

\newtheorem{prop}{Proposition}
\newtheorem{corollary}{Corollary}
\newtheorem{lemma}{Lemma}
\newtheorem{definition}{Definition}

\newtheorem{example}{Example}
\newtheorem{remark}{Remark}

\newif\ifcomments
\commentstrue 

\ifcomments
\newcommand\aga[1]{\textcolor{teal}{AA: #1}}
\newcommand\ktodo[1]{\textcolor{cyan}{KX: #1}}

\else
\newcommand{\aga}[1]{}
\newcommand{\ntodo}[1]{}
\newcommand{\ktodo}[1]{}
\fi

\ifcomments
\newcommand\comm[1]{\textcolor{blue}{NM: #1}}
\else
\newcommand\comm[1]{}
\fi

\newcommand\red[1]{\textcolor{red}{#1}}

\newif\ifrebuttal
\rebuttaltrue 

\ifrebuttal
\newcommand{\reb}[1]{\red{#1}}
\else
\newcommand{\reb}[1]{}
\fi

  \usepackage[total={8.5in, 11in}, margin=0.75in]{geometry}

\definecolor{blue}{RGB}{68,119,170}
\definecolor{red}{RGB}{238,102,119}
\definecolor{green}{RGB}{0,158,115}

\definecolor{c1}{RGB}{174,118,163}

\definecolor{c2}{RGB}{136,46,163}

\definecolor{c3}{RGB}{136,46,114}

\definecolor{c4}{RGB}{25,101,176}

\definecolor{c5}{RGB}{82,137,199}

\definecolor{c6}{RGB}{123,175,222}

\definecolor{c7}{RGB}{78,178,101}

\definecolor{c8}{RGB}{144,201,135}

\definecolor{c9}{RGB}{246,193,65}

\definecolor{c10}{RGB}{241,147,45}

\definecolor{c11}{RGB}{238,102,119}

\definecolor{c12}{RGB}{220,5,12}

\title{Power-Law Spectrum of the Random Feature Model}
\author{Elliot Paquette\thanks{Department of Mathematics and Statistics, McGill University, elliot.paquette@mcgill.ca} \quad Ke Liang Xiao\thanks{Department of Mathematics and Statistics, McGill University, keliang.xiao@mail.mcgill.ca } \quad   Yizhe Zhu\thanks{Department of Mathematics, University of Southern California, yizhezhu@usc.edu}}

    \tcbset{
        colback=blue!5!white,colframe=blue!75!black,
        width=15cm, arc=0mm, 
        left =0pt,
        right=0mm, grow to right by=3.0mm,
        top=0mm,bottom= 0mm,
        boxrule=0.1mm
            }

\begin{document}
\maketitle

\begin{abstract}
Scaling laws for neural networks, in which the loss decays as a power-law in the number of parameters, data, and compute, depend fundamentally on the spectral structure of the data covariance, with power-law eigenvalue decay appearing ubiquitously in vision and language tasks.
A central question is whether this spectral structure is preserved or destroyed when data passes through the basic building block of a neural network: a random linear projection followed by a nonlinear activation.
We study this question for the random feature model: given data $\x \sim N(0,\cov)\in \R^v$ where $\cov$ has $\pwrco$-power-law spectrum ($\lambda_j(\cov) \asymp j^{-\pwrco}$, $\pwrco > 1$), a Gaussian sketch matrix $\W\in \R^{v\times d}$, and an entrywise monomial $f(y) = y^{\p}$, we characterize the eigenvalues of the population random-feature covariance $\E_{\x}[\frac{1}{d}f(\W^\top \x)^{\otimes 2}]$.
We prove matching upper and lower bounds: for all $1 \leq j \leq c_1 d \log^{-(\p+1)}(d)$, the $j$-th eigenvalue is of order $\left(\log^{\p-1}(j+1)/j\right)^{\pwrco}$. For $ c_1 d \log^{-(\p+1)}(d)\leq j\leq d$,  the $j$-th eigenvalue is of order $j^{-\alpha}$ up to a polylog factor.
That is, the power-law exponent $\pwrco$ is inherited exactly from the input covariance, modified only by a logarithmic correction that depends on the monomial degree $\p$.
The proof combines a dyadic head-tail decomposition with Wick chaos expansions for higher-order monomials and random matrix concentration inequalities.
\end{abstract}

\section{Introduction}

When training large neural networks, the loss typically scales as a power-law in the number of parameters, amount of data, and total compute \cite{kaplan2020scaling,hoffmann2022chinchilla}.
The exponents of these scaling laws are of central importance: they determine how much additional data or compute is needed to achieve a given reduction in loss, and they govern compute-optimal training strategies.
A growing body of theoretical work has identified the spectral decay of the data covariance as \emph{one} component controlling these exponents. Specifically, power-law eigenvalue decay of the form $\lambda_j \asymp j^{-\pwrco}$ appears as a key ingredient in scaling law models \cite{bahri2021explaining,maloney2022solvable,paquette20244+,reventos2025understanding}.
Of course, the data covariance spectrum is not the only mechanism at play: other factors such as the structure of the learning algorithm, the target function complexity, and feature learning dynamics also shape scaling behavior \cite{liu2026universal,ren2025emergence,benarous2025learning}.

Nonetheless, power-law spectral structure is ubiquitous. It appears in the covariance of natural images (see Figure~\ref{fig:cifar10}), language model representations, and intermediate features of neural networks.
Understanding what controls these spectral exponents is therefore a path toward understanding, and eventually improving, scaling laws.
Indeed, recent work has shown that exploiting the power-law geometry of data enables accelerated optimization algorithms with provably better scaling law exponents \cite{ferbach2025dana,ferbach2026logarithmic,bordelon2026optimal,varre2022accelerated,liu2018accelerating,bach2025ztransform}.

\begin{figure}[t]
  \centering
  \includegraphics[width=\textwidth]{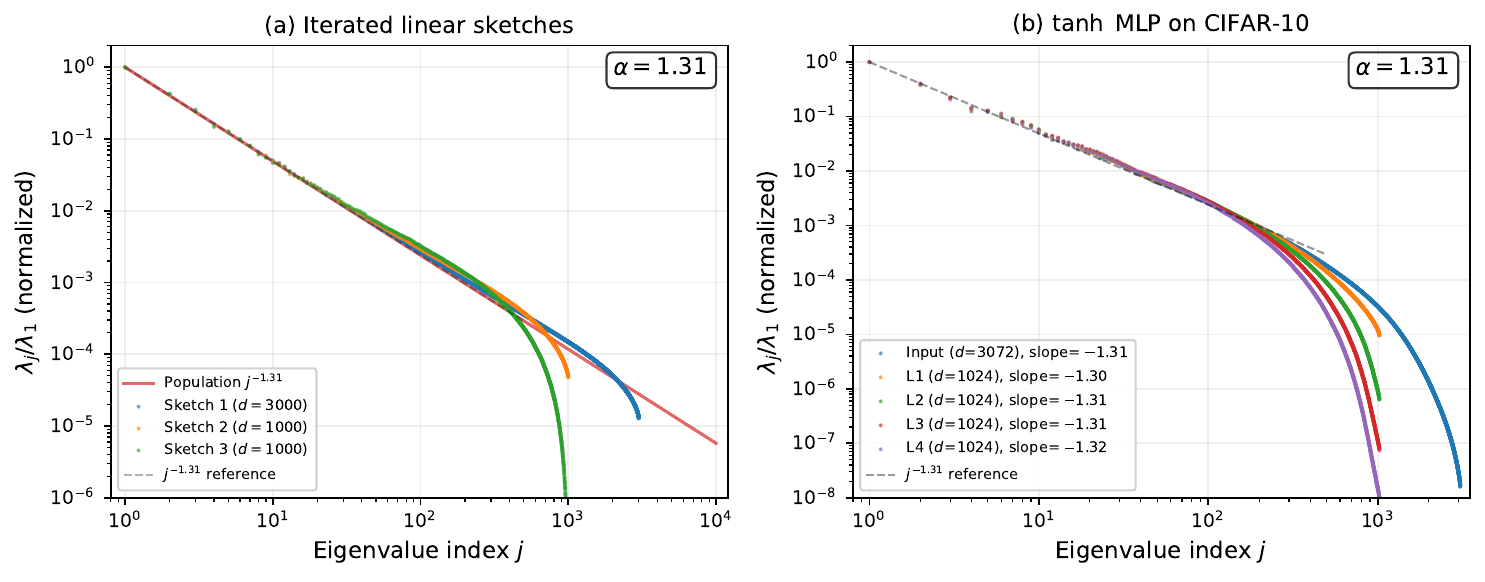}
  \caption{\textbf{Power-law spectral preservation.} Eigenvalue spectra (normalized by $\lambda_1$) on log-log axes. \textbf{(a)} Iterated linear sketches: starting from a population covariance $D = \mathrm{diag}(j^{-1.31})$ with $v = 10{,}000$ (red line), we apply three successive Gaussian sketches ($10{,}000 \to 3{,}000 \to 1{,}000 \to 1{,}000$). All three sketched spectra collapse onto the population power-law; only the tails peel off where the sketch dimension limits the rank. \textbf{(b)} A $4$-layer $\tanh$ MLP (width $1024$, random initialization) applied to CIFAR-10 data. All five spectra---input and four hidden layers---collapse onto the same $j^{-1.31}$ power-law, confirming that the spectral exponent is preserved across multiple nonlinear layers on real data.}
  \label{fig:cifar10}
\end{figure}

The most basic operation in a neural network is to take input data, multiply it by a weight matrix, and apply a nonlinear activation function. This is the fundamental multilayer perceptron (MLP) layer.
The \emph{power-law random features} (PLRF) model formalizes this construction: data $\x\in \R^v$ with power-law covariance $\cov$ is projected through a random sketch $\W\in \R^{v\times d}$ and passed through a nonlinearity $f$ \cite{maloney2022solvable,paquette20244+,rahimi2008random}.
In the linear case ($f$ is the identity), the spectrum of the sketched covariance $\frac{1}{d}\W^\top \cov \W$ is well understood: it inherits the power-law decay from $\cov$ \cite{paquette20244+,lin2024scaling}.
This raises a fundamental question: \emph{does the power-law spectral structure survive the nonlinearity?}
If the exponent $\pwrco$ persists through each layer, the scaling law structure is maintained as data propagates through the network, which can be leveraged to understand and improve scaling laws.

We answer this question affirmatively for monomial activations.
Our main result (Theorem~\ref{theorem:MAIN}) shows that for the monomial $f(y) = y^{\p}$, the eigenvalues of the population random-feature covariance satisfy
\[
\lambda_j\!\left(\E_{\x}\!\left[\tfrac{1}{d}f(\W^\top \x)^{\otimes 2}\right]\right) \asymp \left(\frac{\log^{\p-1}(j+1)}{j}\right)^{\!\pwrco},
\]
with matching upper and lower bounds for all $1 \leq j \leq c_1 d \log^{-(\p+1)}(d)$, holding with probability $1 - O(\log^{-4}(d))$.
The power-law exponent $\pwrco$ is inherited exactly from the input covariance.
The only modification is a logarithmic correction $\log^{\p-1}(j+1)$ that depends on the monomial degree: for $\p = 1$ (the linear case) there is no correction, recovering the known result \cite{lin2024scaling,bartlett2020benign,paquette20244+}, while for $\p > 1$ the correction is mild and does not change the polynomial rate of decay. For $ c_1 d \log^{-(\p+1)}(d)\leq j\leq d$, Theorem~\ref{theorem:MAIN} also shows that the $j$-th eigenvalue is of order $j^{-\alpha}$ up to a polylog factor.

Numerical experiments (Section~\ref{sec:discussion}) confirm the predicted spectral slopes for monomials of degree $1$ through $6$.
Moreover, the experiments suggest that the phenomenon extends well beyond the scope of the theorem: non-monomial activations such as ReLU and $\tanh$, as well as non-Gaussian data distributions including Rademacher, Student-$t$, and CIFAR-10, all exhibit similar power-law preservation.
This spectral characterization provides a building block for rigorous scaling law theory in nonlinear random feature models.

   \paragraph{Related work}

  Power-law spectral decay has been used as a principled model for describing effective dimension in modern high-dimensional learning problems, and for predicting how risk scales with sample size, parameters, and compute.
In linear models, this perspective is closely tied to benign overfitting and scaling-law phenomena; see, e.g.,
\cite{bartlett2020benign,lin2024scaling,tsigler2023benign}.
Our results can be viewed as identifying how such a power-law structure propagates through \emph{random sketching} and nonlinear feature maps, by characterizing the eigenvalue decay of the population random-feature covariance.

There is also a substantial literature on kernel methods and kernel random matrices beyond the isotropic setting.
Classical work analyzes the spectrum of kernel random matrices under fairly general distributions \cite{el2010spectrum}, and more recent works study approximation and universality phenomena for empirical kernel matrices and kernel regression in high dimensions, including anisotropic regimes; see, e.g., \cite{donhauser2021rotational,bartlett2021deep,tsigler2023benign,liang2020multiple,wang2021deformed,pandit2025universality,kaushik2025general,hu2024asymptotics}.
Many kernel-regression analyses in power-law settings can be interpreted as assuming (explicitly or implicitly) a power-law spectrum for a feature representation $\varphi(\x)$ together with sufficient concentration of $\varphi(\x)$ \cite{defilippis2024dimension,schroder2024deterministic,schroder2024asymptotics,cheng2024dimension}.
In contrast, we start from an explicit random-feature construction and derive the resulting spectral decay from first principles.

Finally, there is a connection to asymptotic geometric analysis: random projections of ellipsoids (and the geometry induced by heavy spectral decay) have been studied from a geometric and information-based viewpoint, e.g. \cite{hinrichs2021random}. Our approach can be viewed as a random matrix counterpart in this direction: the power-law structure is encoded by an ellipsoidal covariance $\cov$, and we track how random sketching and nonlinear feature maps reshape its spectrum, requiring a blend of random matrix concentration \cite{jirak2025concentration,koltchinskii2015bounding} and Wick chaos expansion techniques  \cite{janson1997gaussian} adapted to the power-law setting. 

 \paragraph{Comparison with \cite{wortsman2025kernel}}
The recent work~\cite{wortsman2025kernel} studies \emph{kernel ridge regression} (KRR)  under anisotropic Gaussian covariates whose covariance matrix exhibits a power-law spectrum.
A key object in their analysis is the associated \emph{kernel integral (population) operator} $T$,
defined for $\nu= N(0,\boldsymbol{\Sigma})$ by
\[
(Tf)(\x)
\;=\;
\mathbb E_{\m{Y}\sim \nu}\!\bigl[k(\x,\m{Y})\,f(\m{Y})\bigr]
\;=\;
\int_{\mathbb R^{v}} k(\x,\m{y})\,f(\m{y})\,d\nu(\m{y}),
\qquad f\in L^2(\nu).
\]
This operator depends only on the kernel $k$ and the data distribution $\nu$ (hence contains no finite-sample randomness),
and generalization bounds for KRR can be expressed in terms of the spectral decay of $T$.
When the kernel function is $k(\x,\m{y})=\langle \x,\m{y}\rangle^{p}$, the spectrum of $T$ is characterized in \cite[Corollary 2]{wortsman2025kernel}. 

Our contribution is complementary: we analyze the \emph{random feature model} obtained by applying a monomial nonlinearity to a random sketch,
and we characterize the eigenvalue decay of the resulting \emph{finite-dimensional} $d\times d$ random feature matrix.
Compared to the operator-level viewpoint in~\cite{wortsman2025kernel}, our results directly quantify how a power-law spectrum in the population covariance
propagates through random sketching and the feature nonlinearity at finite width.
We expect this spectral characterization to be a useful input for studying the test error of random-feature ridge regression under power-law data.

\paragraph{Notation} 
Given two sequences $\{a_n\}$ and $\{b_n\}$, we will write $a_n\lesssim b_n$ if $a_n\leq C_{\alpha,\p} b_n$ for some constant $C_{\alpha,\p}$ depending only on $\alpha$ and $\p$. 
We will write $a_n\asymp b_n$ if $a_n \lesssim b_n$ and $b_n \lesssim a_n$. For two vectors $\m{u}$ and $\m{v}$, we will write either $\m{u}\otimes \m{v}$ or $\m{u}\m{v}^\top$ to be the outer-product of the $2$-tensor. We will write $\m{u} \odot \m{v}$ to be the standard Hadamard product. If $\boldsymbol{\Sigma} \in \R^{n\times n}$, then $\norm{\boldsymbol{\Sigma}}$ will refer to its operator norm and $\norm{\boldsymbol{\Sigma}}_F$, its Frobenius norm. We denote the effective rank to be $r(\boldsymbol{\Sigma}) = \tr(\boldsymbol{\Sigma})/\norm{\boldsymbol{\Sigma}}$.


\section{Main results}

We start with a formal definition of the power-law spectral distribution of a Hermitian matrix with eigenvalues arranged in non-increasing order. 
\begin{definition}
    We say that a Hermitian matrix $\cov\in \R^{v \times v}$ has $\pwrco$-power-law spectrum with coefficient $\pwrco>1$ if for all $1\leq j \leq v$,
    \begin{equation}\label{eq:pwr_def}
        \lambda_j(\cov)  \asymp j^{-\pwrco}.
    \end{equation}  
\end{definition}

The following theorem describes how the power-law coefficient $\pwrco$ propagates through random sketch and monomial activation. 
\begin{theorem}\label{theorem:MAIN}
    Let $\pwrco>1$ and $\x \sim N(0,\cov)$ such that $\cov \in \R^{v \times v}$ has $\pwrco$-power-law spectrum. If $v \geq d$ and the sketch matrix $\W \in \R^{v \times d}$ has i.i.d. standard Gaussian entries, then given a monomial $f(y) = y^{\p}$ and $\p \in \N$, there exist constants $c_1,c_2,c_3>0$ depending only on $\pwrco$ and $\p$ so that with probability $1-O\left( \log^{-4}(d)\right)$, for all  $1\leq j \leq c_1 d \log^{-(\p+1)}(d)$,
    \begin{equation}\label{eq:main_1}
        c_2 \left(\frac{\log^{\p-1}(j+1)}{j} \right)^{\pwrco} \leq \lambda_j\left(\E_{\x} \left[ \frac{1}{d}f(\W^\top \x)^{\otimes 2}\right] \right) \leq c_3\left(\frac{\log^{\p-1}(j+1)}{j} \right)^{\pwrco}.
    \end{equation}
Moreover, if $ c_1 d \log^{-(\p+1)}(d) <j \leq d$ then
\begin{equation}\label{eq:main_min_tail}
    c_2 \left( \frac{\log^{-2}(j+1)}{j}\right)^\pwrco \leq \lambda_j\left(\E_{\x} \left[ \frac{1}{d}f(\W^\top \x)^{\otimes 2}\right] \right) \leq c_3 \left(  \frac{ \log^{2\p}(j+1)}{j}\right)^{\pwrco},
\end{equation}
and  \begin{equation}
   \lambda_d\left(\E_{\x} \left[ \frac{1}{d}f(\W^\top \x)^{\otimes 2}\right] \right) \geq c_2 \left( \frac{\log^{\p-1}(d)}{d}\right)^\pwrco.
\end{equation}
\end{theorem}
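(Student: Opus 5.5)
By rotation invariance of $\W$, I will assume $\cov=\D=\mathrm{diag}(\lambda_1,\dots,\lambda_v)$ with $\lambda_i\asymp i^{-\pwrco}$, and write $\y=\W^\top\x$, so that $\y\sim N(0,\W^\top\D\W)$. I will set $\m{G}=\W^\top\D\W\in\R^{d\times d}$. For a Gaussian vector the Wick/Isserlis formula gives
\[
\E_{\x}\bigl[y_a^{\p}y_b^{\p}\bigr]=\sum_{k\equiv \p \,(2)} c_{\p,k}\, G_{ab}^{k}\,(G_{aa}G_{bb})^{(\p-k)/2},
\]
so that $\m{K}:=\E_\x[\frac1d f(\y)^{\otimes2}]=\frac1d\sum_k c_{\p,k}\,\m{S}^{(\p-k)/2}\,\m{G}^{\odot k}\,\m{S}^{(\p-k)/2}$ with $\m{S}=\mathrm{diag}(G_{aa})$ and nonnegative coefficients $c_{\p,k}$ (Hermite expansion). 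Since $G_{aa}=\sum_i\lambda_iW_{ia}^2$ concentrates around $\tr\D\asymp1$ uniformly over $a\le d$ (note $\pwrco>1$), the diagonal factors are comparable to the identity up to constants. Every term is then PSD by the Schur product theorem. The \emph{upper bound} will come from bounding each $\m{G}^{\odot k}$ by the top term $\m{G}^{\odot\p}$ plus lower-order pieces, and the \emph{lower bound} from keeping the single term $k=\p$ (and, for $\lambda_d$, the PSD sum including the $k=0$ or $k=1$ terms, which have well-conditioned tails).

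The core object is $\frac1d\m{G}^{\odot\p}=\frac1d\sum_{i_1,\dots,i_\p}\lambda_{i_1}\cdots\lambda_{i_\p}\,(\m{w}_{i_1}\odot\cdots\odot\m{w}_{i_\p})^{\otimes2}$, where $\m{w}_i\in\R^d$ is the $i$-th row of $\W$. This is $\frac1d\m{A}\Lambda^{\otimes\p}\m{A}^\top$, where $\Lambda^{\otimes\p}$ is the tensor-power diagonal and $\m{A}$ has columns $\m{w}_{i_1}\odot\cdots\odot\m{w}_{i_\p}$. The spectrum of $\D^{\otimes\p}$ restricted to (symmetric) multi-indices has counting function $\#\{(i_1,\dots,i_\p):\prod i_r\le t\}\asymp t\log^{\p-1}t$. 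Hence its $j$-th eigenvalue is $\asymp(\log^{\p-1}j/j)^{\pwrco}$, which is exactly the target rate. The plan is therefore to show that the random sketch $\m{A}$ preserves this spectrum for $j\lesssim d/\log^{\p+1}d$, as in the linear case.

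I will use a dyadic head--tail decomposition. For a cutoff index $j$ I will split multi-indices into the head $H=\{\prod i_r\le T\}$ with $|H|\asymp j$, and the tail. For the \emph{lower bound}, Courant--Fischer with the PSD head term gives $\lambda_j(\m{K})\ge\lambda_j(\frac1d\m{A}_H\Lambda_H\m{A}_H^\top)$. I will then show that $\frac1d\m{A}_H^\top\m{A}_H\approx$ a well-conditioned Gram matrix on $H$. The columns are products of independent Gaussian rows; their inner products concentrate: diagonal entries $\approx$ a constant (or $\E[\prod w^2]$ with repeated indices), off-diagonal entries of size $O(\sqrt{\log d/d})$. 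A matrix Bernstein / Koltchinskii--Lounici style bound for sums over the $d$ independent coordinates, with polylog truncation of the degree-$\p$ Gaussian chaos, will give $\norm{\frac1d\m{A}_H^\top\m{A}_H-\E}\le1/2$ once $|H|\lesssim d/\log^{\p+1}d$. This is the source of the log loss and the constraint on $j$. For the \emph{upper bound}, $\lambda_{j+1}(\m{K})\le\lambda_1(\text{tail part})$ since the head part has rank $\le|H|\asymp j$. I will bound the tail dyadically: for each shell $\prod i_r\in[2^m,2^{m+1})$ I will bound $\norm{\frac1d\m{A}_{m}\Lambda_m\m{A}_m^\top}\lesssim 2^{-m\pwrco}(1+|{\rm shell}_m|/d)\cdot\mathrm{polylog}$, and sum over $m$ with $2^m\ge T$. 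Because $\pwrco>1$, the sum is dominated by the first shell, giving $\lesssim(\log^{\p-1}j/j)^{\pwrco}$ while $|{\rm shell}|\lesssim d$, and the weaker polylog bounds of \eqref{eq:main_min_tail} otherwise. The lower-order Wick terms ($k<\p$) are handled in the same way with $\p$ replaced by $k$; they are dominated, so they only affect constants.

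The expected main obstacle is the concentration of $\frac1d\m{A}_H^\top\m{A}_H$ and of the tail operator norms. The columns are non-Gaussian degree-$\p$ chaoses, heavy-tailed with $\psi_{2/\p}$ tails, and columns sharing indices are dependent, so Gaussian RMT cannot be applied directly. My first attempt is to truncate $|W_{ia}|\le C\sqrt{\log d}$, apply matrix Bernstein over the $d$ independent coordinates $a$ (rows of $\m{A}$), and control truncation errors by moments. This yields the $\log^{\p+1}d$ loss and a failure probability polynomial in $1/\log d$ after a union over the $O(\log d)$ dyadic scales. For the final claim on $\lambda_d$, I will use the full head with $|H|\asymp d\log^{-(\p+1)}d$ together with the well-conditioned contribution of all multi-indices of size up to $\asymp d$, whose Gram matrix $\frac1d\m{A}\m{A}^\top$ restricted to a suitable index set of size $\asymp d\cdot\mathrm{polylog}$ has smallest eigenvalue bounded below.
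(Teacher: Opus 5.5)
Your architecture matches the paper's: an Isserlis expansion into finitely many PSD pieces, a principal piece handled by a dyadic head--tail split with heavy-tailed matrix concentration at $|H|\asymp d\log^{-(\p+1)}d$, and the lattice count $\#\{\prod_r i_r\le t\}\asymp t\log^{\p-1}t$. The gap is that you treat every multi-index in $\frac1d\m{G}^{\odot\p}=\frac1d\sum_{i_1,\dots,i_\p}\lambda_{i_1}\cdots\lambda_{i_\p}(\m{w}_{i_1}\odot\cdots\odot\m{w}_{i_\p})^{\otimes2}$ as if its column were centered and nearly orthogonal to the others. Columns with a repeated index are neither. We have $\E[\m{w}_i^{\odot2}]=\m{1}$ and $\frac1d\inner{\m{w}_i^{\odot2}}{\m{w}_j^{\odot2}}\to1$ for $i\neq j$; unless you merge permutations, permuted tuples even give identical columns. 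So off-diagonal Gram entries are not $O(\sqrt{\log d/d})$, and the Gram matrix does not concentrate around a diagonal matrix.

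For the lower bound this is harmless: by PSD monotonicity you can keep only distinct tuples $i_1<\cdots<i_\p$, which is exactly the paper's principal term. For the upper bound it is fatal as written. Already for $\p=2$, the tail $\{i_1i_2>T\}$ contains $\frac1d\sum_{i>\sqrt T}\lambda_i^2(\m{w}_i^{\odot2})^{\otimes2}$. Its quadratic form at $\m{1}/\sqrt d$ is $\approx\sum_{i>\sqrt T}\lambda_i^2\asymp T^{1/2-\pwrco}$, while the target is $T^{-\pwrco}\asymp(\log^{\p-1}|H|/|H|)^{\pwrco}$. So $\lambda_{|H|+1}(\m{K})\le\lambda_1(\text{tail})$ gives the wrong exponent. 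Likewise your per-shell bound $2^{-m\pwrco}(1+|\mathrm{shell}_m|/d)\,\mathrm{polylog}$ fails on shells containing pairs $(i,i)$, whose coherent contribution is $\asymp 2^{m(1/2-\pwrco)}$. For $\p\ge3$ the coherent part is a whole lower-order kernel; for example, the tuples $(i,i,j)$ carry $\frac1d\sum_j\lambda_j\bigl(\sum_{i:\,i^2j>T}\lambda_i^2\bigr)\m{w}_j\m{w}_j^{\top}$. The same problem afflicts your $k<\p$ terms.

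The missing idea is the paper's Wick decorrelation (Section~\ref{sec:A_xi>0}, Lemma~\ref{lemma:lower_order_wick_decomp}). Expand each product $\prod_j(\y^{(a)}_{i_j})^{\pi_j}$ as $\sum_\eta N_\eta\prod_j\cov_{i_j}^{\eta_j}\wick{\prod_j(\y^{(a)}_{i_j})^{\pi_j-2\eta_j}}$. Then use convexity of $\x\mapsto(\sum_a\x_a)^2$, with the summable coefficients, to dominate each PSD piece by finitely many Gram matrices. Their columns are centered with \emph{diagonal} covariance and are indexed only by the contributing indices; the fully paired indices are summed out into $O(1)$ weights. Only then does your dyadic argument apply. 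The ordered lattice count with unequal exponents then gives decay $(\log^{\mu-1}j/j)^{\pwrco/\theta^\star}$ with $\theta^\star\le1$, which is dominated by the principal rate.

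Relatedly, $G_{aa}=\sum_i\lambda_iW_{ia}^2$ does \emph{not} concentrate. Since $\tr\cov/\norm{\cov}\asymp1$, it is a weighted $\chi^2$ with $O(1)$ effective degrees of freedom, and $\max_aG_{aa}\ge\lambda_1\max_aW_{1a}^2\asymp\log d$. Bounding $\m{S}^{(\p-k)/2}\m{G}^{\odot k}\m{S}^{(\p-k)/2}\preceq\norm{\m{S}}^{\p-k}\m{G}^{\odot k}$ therefore costs a factor $\log^{\p-k}d$. That destroys the $d$-independent upper bound in \eqref{eq:main_1} for small $j$. The paper instead expands its $\D^{A_\xi}_{aa}=\inner{\y^{(a)}}{\y^{(a)}}^{A_\xi}$ as a polynomial and feeds it into the same Wick decomposition. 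Also, $v$ is unbounded, so there are not $O(\log d)$ shells; you need failure probabilities summable over shells, as with the paper's thresholds $2^{\beta k}$, $1<\beta<\pwrco$.

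Finally, your $\lambda_d$ sketch cannot deliver the stated bound. An index set of size $d\cdot\mathrm{polylog}$ loses a polylog factor. Moreover, for $\p=1$ the claim is false. With $v=d$, $\lambda_d(\frac1d\W^\top\cov\W)\le\frac{\norm{\cov}}{d}s_{\min}(\W)^2$, which is $O(d^{-2})$ with probability bounded below for square Gaussian $\W$. That is below $c_2d^{-\pwrco}$ when $\pwrco<2$. The paper's Step 4 has the corresponding flaw: Koltchinskii--Mendelson applied to $n_*=cd$ rows controls $\lambda_{n_*}$, not $\lambda_d$, since $\frac1d\W_{n_*}^\top\W_{n_*}$ has rank at most $n_*<d$.
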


\subsection{Discussion and numerical experiments}\label{sec:discussion}

We now present numerical experiments that illustrate Theorem~\ref{theorem:MAIN} and explore the extent to which power-law preservation holds beyond the hypotheses of the theorem.

\paragraph{Illustrating the main theorem.}
Figure~\ref{fig:gaussian_monomials} displays the eigenvalue spectra of the random-feature covariance for Gaussian data with $\pwrco = 1.31$, computed via Monte Carlo estimation with $v = d = m = 10{,}000$.
The left panel shows monomial activations $f(y) = y^{\p}$ for $\p = 1,\ldots,6$.
All monomials exhibit power-law eigenvalue decay, consistent with Theorem~\ref{theorem:MAIN}.
The logarithmic correction $\log^{\p-1}(j+1)$ predicted by the theorem manifests as a slight shift in the spectra for higher-degree monomials, visible as a modest difference in shape in early and late parts of the spectra.

\begin{figure}[h]
  \centering
  \includegraphics[width=\textwidth]{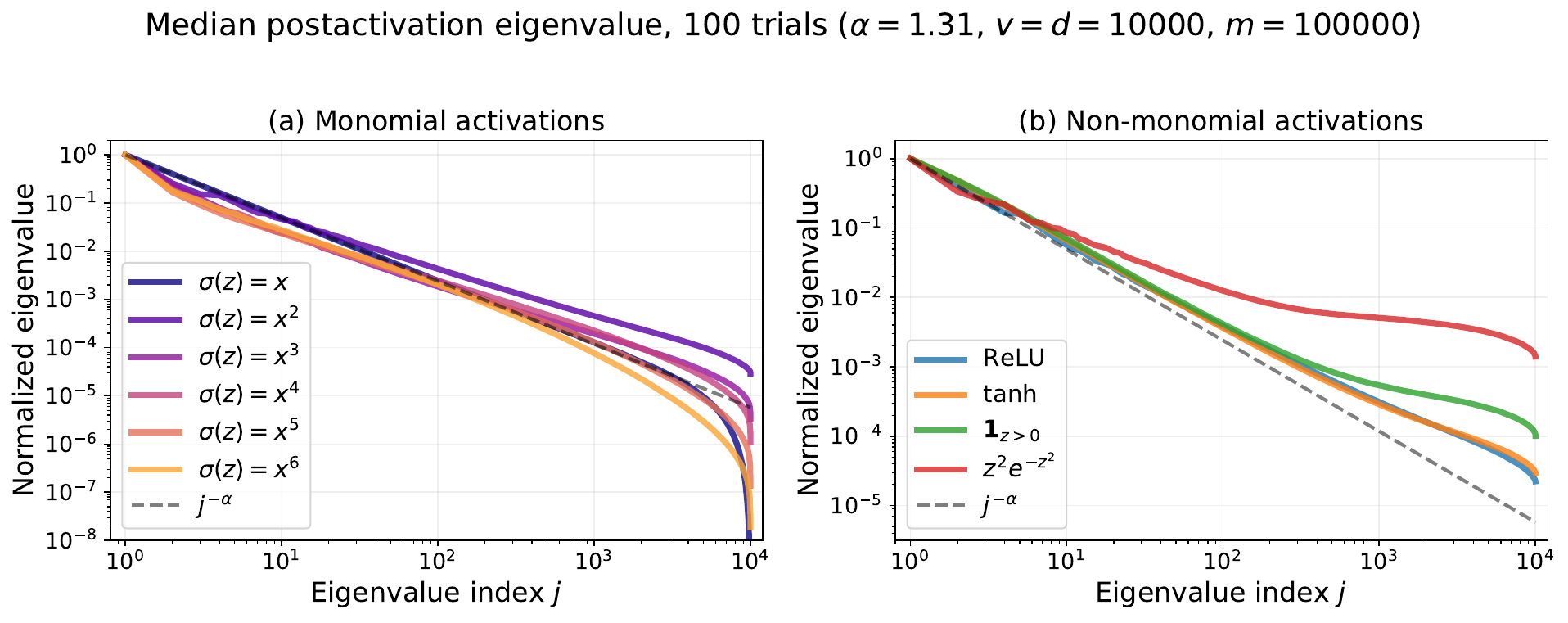}
  \caption{Median postactivation eigenvalue (normalized so the top eigenvalue equals~$1$) across 100 independent trials for Gaussian data with $\pwrco = 1.31$, using $v = d = 10{,}000$ and $m = 100{,}000$ Monte Carlo samples. The dashed line shows the reference $j^{-\pwrco}$. \textbf{Left:} monomial activations $f(y) = y^{\p}$ for $\p = 1,\ldots,6$; all exhibit power-law decay. \textbf{Right:} non-monomial activations (ReLU, $\tanh$, Heaviside, $z^2 e^{-z^2}$) also display power-law spectra, with all but $z^2 e^{-z^2}$ tracking the reference slope closely.}
  \label{fig:gaussian_monomials}
\end{figure}

\paragraph{Universality across data distributions.}
Theorem~\ref{theorem:MAIN} assumes Gaussian data, but the power-law preservation phenomenon appears to be universal across data distributions.
Figure~\ref{fig:universality} compares the eigenvalue spectra for four data sources (Gaussian, Rademacher, Student-$t(4)$, and CIFAR-10) across six activation functions, with power-law parameter $\pwrco = 1.31$.
The fitted spectral slopes are remarkably consistent across all four distributions for each activation function.
This suggests a universality phenomenon: the power-law exponent of the random-feature covariance depends primarily on the power-law exponent of the input covariance, not on the specific distributional form of the data.
A related open question is universality in the sketch matrix $\W$: we expect the result to hold for random matrices $\W$ beyond the Gaussian case, provided the entries have matching first and second moments and sufficiently light tails.
Indeed, our Theorem~\ref{thm:linear} for the linear case already establishes this under moment conditions on $\W$.

\begin{figure}[t]
  \centering
  \includegraphics[width=\textwidth]{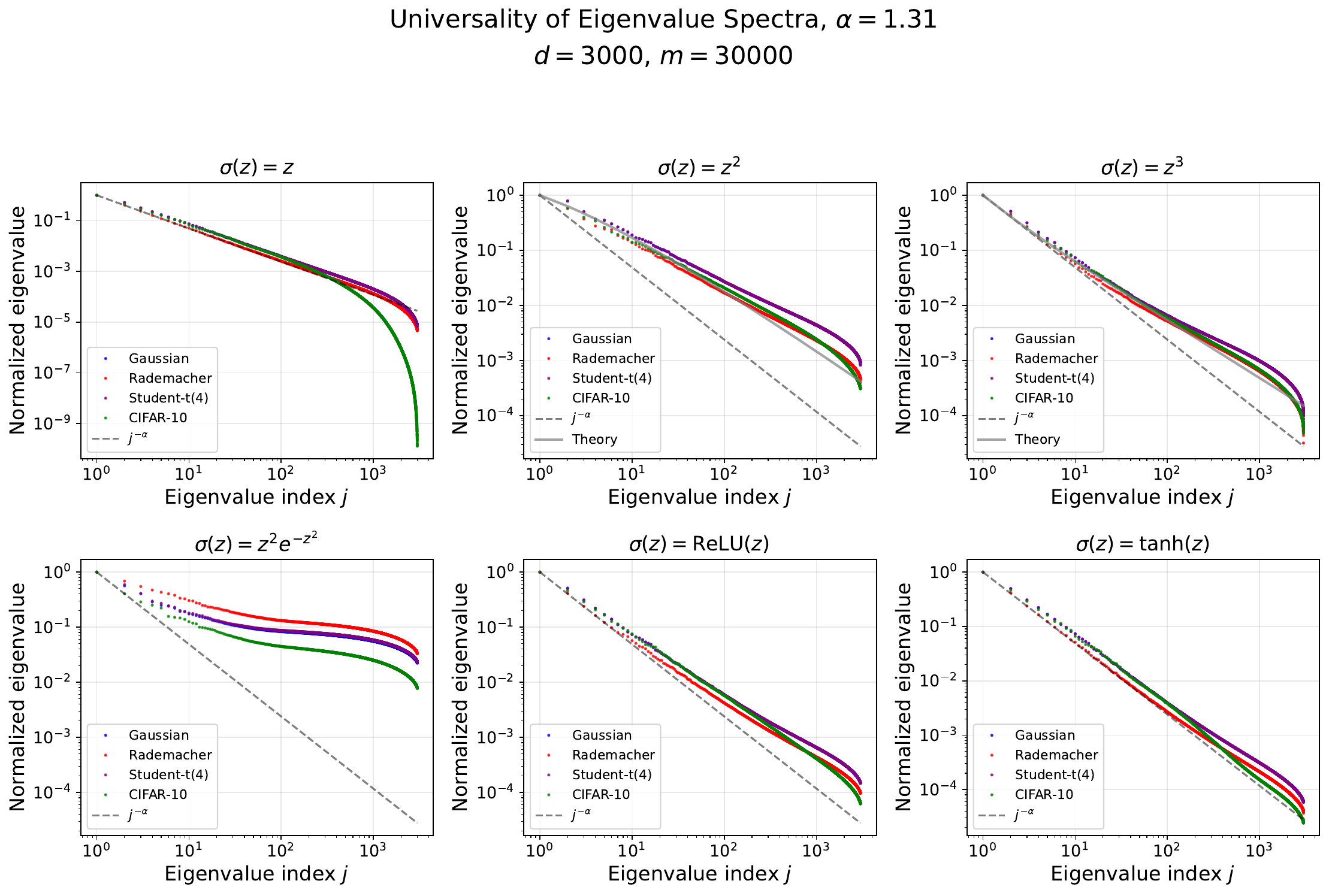}
  \caption{Eigenvalue spectra of the random-feature covariance for four data distributions (Gaussian, Rademacher, Student-$t(4)$, and CIFAR-10) with $\pwrco = 1.31$, $d = 3{,}000$, and $m = 30{,}000$ Monte Carlo samples. Each panel shows a different activation function. The spectral slopes are consistent across distributions, providing strong evidence of universality. For $x^2$ and $x^3$, the grey curve shows a zero-free-parameter \textbf{theory} prediction obtained by combining the lattice-point counts of all composition types in the Wick decomposition; this curve is asymptotically equivalent to the $(\log^{\p-1}(j+1)/j)^\alpha$ rate of Theorem~\ref{theorem:MAIN} but captures the subleading corrections. See Remark~\ref{rem:theory_curve} for details.}
  \label{fig:universality}
\end{figure}

\paragraph{Beyond monomial activations.}
The right panel of Figure~\ref{fig:gaussian_monomials} shows some non-monomial activations (ReLU, $\tanh$, Heaviside) also produce random-feature covariances with power-law spectral decay; however, $x^2 e^{-x^2}$ shows a strong deviation, and we remark that formally summing the logarithmic corrections suggests that for some classes of analytic activations, one should see non-power-law decay or altered exponents.
Theorem~\ref{theorem:MAIN} currently covers only monomials; extending to general activations is an important open direction.
Since any sufficiently regular activation function can be expanded in a basis of Hermite polynomials (equivalently, Wick polynomials), such an extension would likely decompose the covariance into monomial-like contributions.
However, the logarithmic corrections in Theorem~\ref{theorem:MAIN} make the convergence of such an expansion delicate.

\paragraph{Persistence across layers.}
Theorem~\ref{theorem:MAIN} characterizes the spectrum after a single random projection and activation, i.e., one layer of an MLP.
The result suggests that if one stacks multiple such layers (each applying a fresh random projection and monomial activation), the power-law exponent $\pwrco$ should persist through the network.
We test this experimentally in Appendix~\ref{sec:across_layers} for three architectures on CIFAR-10 data: a $4$-layer MLP (width $1024$), a VGG-style CNN, and a ResNet-20, each at both random initialization and after training.
The power-law exponent is preserved across all layers and architectures, with $R^2 > 0.87$ for every layer in every model; however, the actual exponent drifts across layers, perhaps consistent with accumulating logarithmic corrections.
For trained networks, the final-layer spectral slope approximately recovers the input slope ($-1.35$ vs.\ $-1.29$ for ResNet-20; $-1.56$ vs.\ $-1.29$ for CNN), suggesting that training actively shapes the spectral structure.
Full experimental details, per-layer spectra, and a summary figure are given in Appendix~\ref{sec:across_layers}.

\paragraph{Open questions.}
Several directions remain open:
\begin{enumerate}[leftmargin=*,itemsep=2pt]
    \item \emph{General activations.} Extending the result from monomials to polynomials and classical activations (ReLU, $\tanh$, sigmoid) is the most natural next step. The experiments strongly suggest that ``tame'' activations preserve the $j^{-\pwrco}$ behavior.
    \item \emph{Eigenvalue coverage.} The current two-sided sharp bound holds for the $j$-th eigenvalue up to $j=c_1 d \log^{-(\p+1)}(d)$, which is almost all $d$ eigenvalues but with a polylogarithmic gap. This gap is related to the well-known missing logarithm in covariance estimation: the empirical covariance matrix requires $n \gtrsim d \log d$ samples rather than $n \gtrsim d$  in some heavy-tailed examples to achieve operator-norm concentration \cite[Remark 5.43]{vershynin2010introduction}, while for sub-Gaussian random vectors, $n\gtrsim d$ suffices.  Under only a finite moment assumption on the random vectors, the log factors also appear in \cite{mendelson2012generic,tikhomirov2018sample}. For heavier-tailed data (such as the Gaussian chaos variables that arise here, which are not sub-Gaussian), this logarithmic factor can be removed by replacing the empirical covariance with a robust estimator \cite{mendelson2018robust,abdalla2024covariance}. We expect that getting sharp bounds for all eigenvalues is possible, but this would require different techniques.
    \item \emph{Universality in $\W$.} We expect the result to hold for sketch matrices with i.i.d.\ entries satisfying a two-moment matching condition, at least for light-tailed distributions. The reason is that the leading term in the expansion comes from the homogeneous chaos, which depends only on the first two moments of the sketch distribution. This is established for $\p = 1$ in Theorem~\ref{thm:linear}.
    \item \emph{Test error under power-law data.} Combining this spectral characterization with the analysis of random-feature ridge regression \cite{mei2019generalization,mei2021generalization,wang2023overparameterized,defilippis2024dimension} to obtain scaling laws for the test error is a natural goal.
\end{enumerate}

\section{Warm up: Linear Case}
To illustrate our proof strategy for Theorem \ref{theorem:MAIN}, we first present the linear case $f(x)=x$. Compared to \cite{lin2024scaling}, we can relax the distribution of the sketching matrix $\m{W}$ from Gaussian to general distributions under moment conditions. In fact, for the linear case we will assume a weaker moment assumption on $\m{W}$ than our main Theorem \ref{theorem:MAIN}.  It is easy to check for linear activation function,
\begin{equation}\label{eq:linear_cov}
    \E_{\x} \left[ \frac{1}{d}f(\W^\top \x)^{\otimes 2}\right] = \frac{1}{d} \W^\top \cov \W.
\end{equation}
We will show $\frac{1}{d}\W^\top \cov \W$ has $\pwrco$-power-law spectrum for the first $O(d/\log d)$ many eigenvalues.

\begin{theorem} \label{thm:linear}
    Let $\W \in \R^{v \times d}$ be a random matrix with independent centered column vectors $\W_{\col(i)}$ with
    
    \noindent $\E\left[\W_{\col(i)}\W_{\col(i)}^\top\right] = \Id$ such that for all $p\geq 1$, there exists $C_p>0$ such that for all $s$-dimensional sub-vector of $\W_{\col(i)}$, denoted by $\W_{s,\col(i)}$, we have for all $\m{A}\in \R^{s \times s}$,
    \begin{equation}\label{eq:quad_form_col}
     \norm{\inner{\W_{s,\col(i)}^{\otimes 2}}{\m{A}} - \tr(\m{A})}_{L^p} \leq C_p \norm{\m{A}}_F.  
    \end{equation}
    If $\cov$ has $\pwrco$-power-law spectrum and $\pwrco>1$, then there exists $c_1,c_2, c_3>0$ depending only on $\pwrco$ such that with probability $1-O\left(\log^{-4}(d) \right)$, for all $1\leq j \leq c_1 d \log^{-1}(d)$, 
    \begin{equation}
 c_2 j^{-\alpha}\leq \lambda_j\left(\frac{1}{d}\W^\top \cov \W  \right) \leq c_3 j^{-\pwrco}.
\end{equation}
\end{theorem}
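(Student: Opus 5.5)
Without loss of generality take $\cov$ diagonal, $\cov=\mathrm{diag}(\lambda_1,\dots,\lambda_v)$ with $\lambda_i\asymp i^{-\pwrco}$. This reduction needs care: our $\W$ is not rotation invariant, so we apply the argument to $\m{U}^\top\W$. Its columns remain independent and isotropic. The quadratic-form hypothesis \eqref{eq:quad_form_col} for sub-vectors of $\m{U}^\top\W_{\col(i)}$ follows by writing $\inner{(\m{P}\m{U}^\top \W_{\col(i)})^{\otimes 2}}{\m{A}}=\inner{\W_{\col(i)}^{\otimes 2}}{\m{U}\m{P}^\top \m{A}\m{P}\m{U}^\top}$ and applying \eqref{eq:quad_form_col} to the full column. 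Frobenius norms are preserved, and only the full-vector case is needed.

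The nonzero spectrum of $\frac1d\W^\top\cov\W$ equals that of $\frac1d\cov^{1/2}\W\W^\top\cov^{1/2}=\frac1d\sum_{i=1}^d \cov^{1/2}\W_{\col(i)}\W_{\col(i)}^\top\cov^{1/2}$. This is an empirical covariance of $d$ independent vectors with population covariance $\cov$.

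For a target index $j\le c_1 d/\log d$, I use a head--tail split at level $k\asymp j$.

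\textbf{Lower bound.} Let $\Pi_k$ project onto the first $k$ coordinates. By Cauchy interlacing, $\lambda_j(\frac1d \W^\top\cov\W)\ge \lambda_j(\frac1d \W_k^\top \cov_k \W_k)$, where $\W_k$ is the top $k\times d$ block and $\cov_k=\mathrm{diag}(\lambda_1,\dots,\lambda_k)$. Choose $k=2j$. It suffices that $\frac1d\W_k\W_k^\top\succeq \frac12\Id_k$, which gives $\lambda_j\ge \frac12\lambda_j(\cov_k)\gtrsim j^{-\pwrco}$.

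The key estimate is then a lower isotropy bound for $\frac1d\W_k\W_k^\top$ with $k\le c_1 d/\log d$. Here I use a matrix concentration inequality: the columns $\W_{k,\col(i)}$ have $\norm{\W_{k,\col(i)}}^2=k+O_{L^p}(\sqrt k)$ by \eqref{eq:quad_form_col} with $\m{A}=\Id$. After truncation at $\norm{\W_{k,\col(i)}}^2\le C k$, matrix Bernstein gives deviation $\sqrt{k\log d/d}+k\log d/d\le 1/2$, provided $k\le c d/\log d$. This is exactly where the $\log$ loss arises.

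Truncation error is controlled using high moments from \eqref{eq:quad_form_col} (Markov with $p$ large), which yields failure probability polylogarithmic in $d$. To get a statement uniform in all $j$, apply this only at dyadic scales $k=2^m$ and union bound over $O(\log d)$ scales; this is the source of $1-O(\log^{-4}d)$. Nestedness of the blocks means a single dyadic event at each scale suffices for all $j$ in that scale.

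\textbf{Upper bound.} Use Courant--Fischer with the head of size $k=j-1$. We have $\lambda_j(\frac1d\cov^{1/2}\W\W^\top\cov^{1/2})\le \norm{\frac1d \cov_{>k}^{1/2}\W\W^\top\cov_{>k}^{1/2}}$, since removing the rank-$(j-1)$ head via the projector $\Id-\Pi_{j-1}$ applied on both sides of $\cov^{1/2}$ leaves a matrix whose top eigenvalue bounds $\lambda_j$ (Weyl/min-max with the $(j-1)$-dimensional span $\cov^{1/2}\Pi_{j-1}$).

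Thus we need $\norm{\frac1d\sum_i \m{z}_i\m{z}_i^\top}\lesssim j^{-\pwrco}$, where $\m{z}_i=\cov_{>k}^{1/2}\W_{\col(i)}$ has covariance $\cov_{>k}$. This covariance has operator norm $\asymp j^{-\pwrco}$ and trace $\asymp j^{1-\pwrco}$, so its effective rank is $r\asymp j$. A Koltchinskii--Lounici type bound gives $\norm{\hat\Sigma}\lesssim\norm{\cov_{>k}}(1+\sqrt{r/d}+ r\log d/d)$, using the moment condition to handle $\norm{\m{z}_i}^2=\tr\cov_{>k}+O_{L^p}(\norm{\cov_{>k}}_F)$. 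With $r\asymp j\le c_1 d/\log d$, this is $O(j^{-\pwrco})$. I would carry this out via truncation plus matrix Bernstein, or by a dyadic decomposition of the tail into blocks, again union bounding over dyadic $j$.

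\textbf{Main obstacle.} The hardest step is the upper bound's concentration for the infinite-tail vectors $\m{z}_i$, which are only moment-controlled rather than sub-Gaussian. First I would try splitting the tail dyadically into blocks $B_m=\{2^m k<\ell\le 2^{m+1}k\}$. On each block, the matrix $\frac1d\W_{B_m}\W_{B_m}^\top$ has norm $\lesssim 1+|B_m|\log d/d$. Weighting by $(2^mk)^{-\pwrco}$ and summing then converges since $\pwrco>1$, with the blocks larger than $d$ controlled by the trace bound $\frac1d\sum\norm{\cov_{B_m}^{1/2}\W_{\col(i)}}^2$. Cross terms between blocks are handled by the triangle inequality in the form $\norm{\sum_m \m{A}_m}^{1/2}\le\sum_m\norm{\m{A}_m}^{1/2}$ on the factor $\cov_{>k}^{1/2}\W$.
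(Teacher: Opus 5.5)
You have a genuine but easily repaired gap in the step you flag as the main obstacle. The rest of your architecture is sound: the rotation to diagonal $\cov$, interlacing for the lower bound, and Courant--Fischer at level $j-1$ for the upper bound all work. Your check that \eqref{eq:quad_form_col} survives the rotation is also correct, and it fills a step the paper leaves implicit.

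\textbf{The failing step.} You propose to control every block with $|B_m|>d$ by the trace bound $\frac1d\sum_i\norm{\cov_{B_m}^{1/2}\W_{\col(i)}}^2$. That quantity concentrates at $\tr(\cov_{B_m})\asymp |B_m|\,\norm{\cov_{B_m}}$. Your own operator-norm bound for the same block is $\norm{\cov_{B_m}}(1+|B_m|\log d/d)$, so the trace loses a factor $d/\log d$ on every block beyond $d$. Summed, these blocks contribute $\tr(\cov_{>d})\asymp d^{1-\pwrco}$. Since $d^{1-\pwrco}/(\log d/d)^{\pwrco}=d\log^{-\pwrco}(d)\to\infty$, the argument no longer yields $\lambda_j\lesssim j^{-\pwrco}$ once $j\gg d^{1-1/\pwrco}$, and in particular not near $j=c_1 d/\log d$. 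The trace bound only becomes affordable for blocks beyond $K\gtrsim (d/\log d)^{\pwrco/(\pwrco-1)}$. (Separately, in the $d\times d$ form $\frac1d\W^\top\cov_{>k}\W=\sum_m\frac1d\W_{B_m}^\top\cov_{B_m}\W_{B_m}$ there are no cross terms, so the square-root triangle inequality is unnecessary; it does not affect this issue.)

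\textbf{Repairs.} Any of the following closes the gap.
\begin{itemize}
\item Use the operator-norm bound (truncation plus Bernstein with intrinsic dimension, where $\log|B_m|\lesssim\log d$) on all blocks up to $d^{C}$ with $C>\pwrco/(\pwrco-1)$. There are only $O(\log d)$ such blocks; use the trace bound only beyond them.
\item Do as the paper does: allow the $m$-th block deviation to be $2^{\beta m}$ with $1<\beta<\pwrco$. This beats the prefactor $r(\V^2)=|B_m|$ in Lemma \ref{lemma:minsker_2}, so the failure probabilities are summable over all blocks, while $\sum_m 2^{\beta m}(2^m k_*)^{-\pwrco}$ still converges. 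No trace bound is needed.
\item Take your first-mentioned route, which needs no blocks. By \eqref{eq:quad_form_col}, the event $\max_i\norm{\m{z}_i}^2\le d j^{-\pwrco}/\log d$ fails with probability $O(d^{1-p/2}\log^p d)$. Bernstein with intrinsic dimension $r(\cov_{>k})\asymp j$ then gives $O(j^{-\pwrco})$.
\end{itemize}

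\textbf{Comparison with the paper.} The paper never splits at a $j$-dependent level. It uses one split at $k_*=c_1 d/\log d$, shows two-sided isotropy $\frac12\Id\preceq\frac1d\W_0\W_0^\top\preceq\frac32\Id$ for the head, and bounds the tail norm by $O(k_*^{-\pwrco})\le O(j^{-\pwrco})$. Weyl's inequality then gives both bounds for all $j\le k_*$ at once, with no union over scales. Likewise, your lower bound needs only the event at scale $k_*$, because $\frac1d\W_k\W_k^\top$ is a principal submatrix of $\frac1d\W_{k_*}\W_{k_*}^\top$. This also spares you small $k$. There, a union bound on the event that no column exceeds $Ck$ fails, and a one-sided truncation would be needed instead.
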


\begin{proof}
Let $k_* = c_1 d \log^{-1}(d)$ for some $c_1>0$ to be specified later and $\mathcal{K}_0 = [k_*]$. We will write $\W_{0} \in \R^{k_* \times d}$, as well as $\cov_{0} \in \R^{k_* \times k_*}$ to be the row restriction of $\W$ and $\cov$ onto $\mathcal{K}_0$ respectively. Similarly, for $k\geq 1$, let $\mathcal{K}_k = [k_*2^{k-1}+1,k_*2^{k}]$ and $\W_k \in \R^{(k_*2^{k-1}) \times d}$ and $\cov_k \in \R^{(k_*2^{k-1}) \times (k_* 2^{k-1})}$ be the row restriction of $\W$ and $\cov$ onto $\mathcal{K}_k$ respectively. Therefore, by a block-wise decomposition we may write $\frac{1}{d}\W^\top \cov \W$ as
\begin{equation}\label{eq:dyadic_decomp}
    \frac{1}{d}\W^\top \cov \W = \frac{1}{d}\W_{0}^\top \cov_{0} \W_{0} + \sum_{k=1}^\infty \frac{1}{d} \W_k^\top \cov_k \W_k.
\end{equation}
When $k_*2^{k-1}+1>v$, we set $\m{W}_k=0$.
For a fixed $k\geq 0$, we notice that $\W_k^\top \cov_k \W_k$ has the same non-zero spectrum as $\sqrt{\cov_k}\W_k\W_k^\top \sqrt{\cov_k}$. Therefore, we see that it suffices to bound the latter. To do so, we apply a rank-$1$ decomposition onto $\sqrt{\cov_k}\W_k\W_k^\top \sqrt{\cov_k}$ to obtain
\begin{equation} 
    \frac{1}{d}\sqrt{\cov_k}\W_k\W_k^\top  \sqrt{\cov_k} = \frac{1}{d}\sum_{i=1}^d \sqrt{\cov_k}\wc^{(i)}\wc^{(i) \top} \sqrt{\cov_k},
\end{equation}
where $\wc^{(i)} \in \R^{ \abs{\mathcal{K}_k} }$ are the column vectors of $\W_k$. Let $M = \max_{1\leq i \leq d} \frac{1}{d}\norm{\wc^{(i)}\wc^{(i) \top} -\Id}$, then by Lemma \ref{lemma:Max_mmt}, for all $t>0$ and $q \geq 1$, there exists $C_q>0$ such that
\begin{equation}\label{eq:linear_prob_bdd}
    \prob \left(M \geq \frac{2 \abs{\mathcal{K}_k}  }{d}+t \right) \leq \frac{ C_q \abs{\mathcal{K}_k}^{q/2}}{d^{q-1} t^{q}}.
\end{equation}
We also have the $p$-th moment bound, for all $q>p$ there exists $C_{q,p}$ such that
\begin{equation}\label{eq:linear_mmt_bdd}
    \E \left[ M^p \right] \leq \left( \frac{\abs{\mathcal{K}_k}}{d} + \frac{C_{q,p} \sqrt{ \abs{\mathcal{K}_k} }}{d^{1-1/q}} \right)^p.
\end{equation}
Since the columns of $\W$ are isotropic, we get
\begin{equation}
    \E \left[ \frac{1}{d}\wc^{(i)} \wc^{(i) \top} \right] = \frac{1}{d}\Id,
\end{equation}
and by \eqref{eq:quad_form_col}, for all unit vectors $\m{u}\in \R^{k_* 2^{k-1}}$,
\begin{align}
    \m{u}^\top \E \left[\left(\frac{1}{d}\wc^{(i)} \wc^{(i) \top} \right)^2  \right]\m{u} &= \frac{1}{d^2} \E \left[ \wc^{(i) \top} \wc^{(i)} \left( \m{u}^\top \wc^{(i)} \wc^{(i) \top} \m{u}\right) \right] \\ 
    &
    \lesssim \frac{1}{d^2} \left( \norm{\wc^{(i) \top} \wc^{(i)} -k_*2^{k-1}}_{L^2}+k_*2^{k-1} \right) \left( \norm{\wc^{(i)\top} \m{u}\m{u}^\top \wc^{(i)} - 1 }_{L^2} +1 \right)\\
    &\lesssim \frac{1}{d^2} \abs{\mathcal{K}_k} . \label{eq:linear_second_mmt_bd}
\end{align}
That is, if we choose $\m{V}^2 = c\Id$ with  $c \asymp  \frac{ \abs{\mathcal{K}_k} }{d}  $, then $\m{V}^2 \succeq \sum_{i=1}^d \E \left[\left(\frac{1}{d}\wc^{(i)} \wc^{(i)\top}\right)^2  \right]$. 

\noindent\textit{Step 1: Two-sided bound on $\frac{1}{d}\W_{0}^\top \cov_{0}\W_{0} $.} 

In the case that $k=0$, we have $\abs{\mathcal{K}_0} = k_*$. So, $\norm{\m{V}^2} = O\left(\frac{k_*}{d} \right)$ and $r(\m{V}^2) = k_*$. It is easy to check that for sufficiently large $d$, $\max \left\{ \norm{\m{V}^2}, 32 \E [M] \right \} \leq \frac{1}{24}$. We now apply Lemma \ref{lemma:minsker_2} with parameters $(t,q)= \left(\frac{1}{24},2 \right)$ as specified in the respective Lemma to get
\begin{align}
    \prob \left( \norm{\frac{1}{d}\W_0\W_0^\top -\Id } \geq \frac{1}{2} \right) &\lesssim k_* \exp \left( \frac{-1}{O \left(\norm{\V^2}+  \E[M] \right) } \right) +  \prob \left(M \geq \frac{1}{24} \right) + \left( \left(\frac{8}{\log(2e)}  \right)^2  \E[M^2] \right)^2\\
    &\lesssim k_* \exp \left( \frac{-1}{O \left(\norm{\V^2}+  \E[M] \right) } \right) +  \prob \left(M \geq \frac{k_*}{d}+\frac{1}{48} \right) + \E[M^2]^2
    \\
    &\lesssim k_* \exp \left( O\left( \frac{k_*}{d}+ \frac{\sqrt{k_*}}{{d^{1-1/q}} } \right)^{-1} \right) + \frac{k_*^{q/2}}{d^{q-1}} + \left( \frac{k_*}{d}+\frac{\sqrt{k_*}}{d^{1-1/q}} \right)^4.\label{eq:head_bdd}
\end{align}
Choosing $q=4$, there exists $c_1>0$ such that if $k_* = c_1 d\log^{-1}(d) $ then \eqref{eq:head_bdd} is bounded by $O\left( \log^{-4}(d)\right)$. Which is to say that with probability at least $1-O\left( \log^{-4}(d)\right)$,
\begin{equation}
    \norm{\frac{1}{d}\W_0 \W_0^\top - \Id} < \frac{1}{2}.
\end{equation}
Therefore, 
\begin{equation}
    \frac{1}{2}  \cov_0 \preceq  \frac{1}{d}\sqrt{\cov_0}\W_0 \W_0^\top \sqrt{\cov_0}
    \preceq \frac{3}{2}\cov_0.
\end{equation}
By the Loewner ordering of PSD matrices, we see that for all $1\leq j \leq c_1 d \log^{-1}(d)$,
\begin{equation}\label{eq:linear_head_up_low_bdd}
    \lambda_j\left( \frac{1}{d}\W_0^\top \cov_0 \W_0 \right) \asymp \lambda_j(\cov_0) \asymp j^{-\pwrco}.
\end{equation}
\noindent\textit{Step 2: Upper bound on $\frac{1}{d}\W_{k}^\top \cov_{k}\W_{k} $.} 

In the case that $k\geq 1$, we have
$\abs{\mathcal{K}_{k}} = k_*2^{k-1}$. Thus, $\norm{\m{V}^2} = O\left(\frac{k_* 2^{k-1}}{d} \right)$ and $r(\m{V}^2) = k_*2^{k-1}$. We see for sufficiently large $d$ and any choice of $\beta\in (1,\pwrco)$, $\max \left\{ \norm{\m{V}^2}, 32 \E [M] \right \} \leq \frac{2^{\beta k}}{12}$. We apply Lemma \ref{lemma:minsker_2} with parameters
$(t,q) = \left(\frac{2^{\beta k}}{12},2 \right)$ to get
\begin{align}
    &\prob \left( \norm{ \frac{1}{d}\W_k\W_k^\top -\Id} \geq 2^{\beta k} \right)\\
    &\lesssim k_*2^{k} \exp \left( \frac{-2^{2\beta k}}{O\left( \norm{\V^2}+ 2^{\beta k}\E[M] \right)} \right) + \prob \left( M \geq 2^{\beta k}\right) +  \left(\frac{\E\left[M^2 \right]}{2^{2\beta k}} \right)^2\\
    & 
    \lesssim 
    k_*2^{k} \exp \left( \frac{-2^{2\beta k}}{O\left( \norm{\V^2}+ 2^{\beta k}\E[M] \right)} \right) + \prob \left( M \geq \frac{2k_*2^k}{d} + \frac{2^{\beta k}}{2}\right) +  \left(\frac{\E\left[M^2 \right]}{2^{2\beta k}} \right)^2\\
    &
    \lesssim k_*2^k \exp \left( \frac{-2^{\beta k}}{O\left( \frac{k_*2^k}{d}+ \frac{\sqrt{k_* 2^k}}{{d^{1-1/q}}} \right)} \right) + \frac{ \left( k_* 2^k \right)^{q/2} }{d^{q-1}2^{2\beta k} } + \left( \frac{k_*2^{k(1-\beta)} }{d }+ \frac{\sqrt{k_*} 2^{(1/2 -\beta) k}}{d^{1-1/q}} \right)^4. \label{eq:linear_tail_bdd_1}
\end{align}
By choosing $q=4$, we get
\begin{equation}\label{eq:linear_tail_bdd_2}
    \prob \left( \norm{\frac{1}{d}\W_k\W_k^\top -\Id} \geq 2^{\beta k} \right) \lesssim \frac{2^k}{\log(d)}  d^{1-2^{k(\beta-1)}} + \frac{2^{2k(1-\beta)}}{d\log^2(d)}+ \left( \frac{2^{k(1-\beta)}}{\log(d)}\right)^4.
\end{equation}
Since $\beta>1$, \eqref{eq:linear_tail_bdd_2} is summable and moreover,
\begin{equation}
    \prob \left( \bigcup_{k=1}^\infty \norm{\frac{1}{d}\W_k\W_k^\top -\Id} \geq 2^{\beta k} \right) \leq O\left( \log^{-4}(d) \right).
\end{equation}
It follows that with probability atleast $1-O\left( \log^{-4}(d) \right)$, for all $k\geq 1$,
\begin{equation}
    \norm{\frac{1}{d}\sqrt{\cov_k}\W_k\W_k^\top \sqrt{\cov_k}}  \leq (1+2^{\beta k}) \norm{\cov_k} \leq \left( \frac{1}{k_* 2^k}\right)^\pwrco + \frac{2^{k(\beta-\pwrco)}}{k_*^\pwrco}.
\end{equation}
Since $\beta<\pwrco$, we see that
\begin{equation}\label{eq:linear_tail_bdd}
    \sum_{k=1}^\infty \norm{\frac{1}{d}\W_k^\top \cov_k \W_k}  \leq O \left( k_*^{-\pwrco} \right).
\end{equation}
Finally, combining \eqref{eq:dyadic_decomp}, \eqref{eq:linear_head_up_low_bdd} and \eqref{eq:linear_tail_bdd}, there exist positive constants $c_1,c_2$ and $c_3$ depending only on $\pwrco$ such that for all $1\leq j \leq  c_1 d \log^{-1}(d)$,
\begin{equation}
   c_ 2j^{-\pwrco} \leq  \lambda_j\left( \frac{1}{d} \W^\top \cov \W \right) \leq  \lambda_j \left( \frac{1}{d}\W_0^\top \cov_0 \W_0 \right)+ \sum_{k=1}^\infty \norm{\frac{1}{d}\W_k^\top \cov_k \W_k} \leq c_3 j^{-\pwrco},
\end{equation}
which concludes our proof.
\end{proof}

\section{Higher order case}

In this section, we will generalize Theorem \ref{thm:linear} to higher order monomials $f(x) = x^{\p}$. We carry over the assumption that the data $\x \sim N(0,\cov)$, but we 
replace the moment assumption on the sketch matrix $\W \in \R^{v \times d}$ for independent standard Gaussian entries. Since $\m{W}$ is left/right invariant under orthogonal transformations, we will assume without loss of generality that $\cov$ is diagonal. To obtain bounds on the eigenvalues of $\E_{\x}[f(\m{W}^\top \x)^{\otimes 2}]$, where it is understood that $f$ is applied entry wise to $\W^\top \x$, it will be convenient to write $\x = \sqrt{\cov} \m{z}$ for $\z\sim N(0,\Id)$ and to denote $\m{y}^{(i)} = \sqrt{\cov} \m{W}_{\col(i)}$. We shall begin by decomposing this kernel into leading and sub-leading terms.  
For the $\p$-th order monomials, the $ij$-th entry is given by
\begin{align}
    \E_{\x}\left[ f(\m{W}^\top \x )^{\otimes 2}_{ij} \right] &= \E_{\z}\left[ f(\m{W}^\top \sqrt{\cov}\z)_{ij}^{\otimes 2} \right] \\
    &= \E_{\z}\left[  \inner{\m{y}^{(i)}}{\z}^\p \inner{\m{y}^{(j)}}{\z} ^\p\right]\\
    &=\sum_{i_1,\cdots ,i_{2\p}} \E\left[ \z_{i_1}\dots \z_{i_{2\p}}\right]\m{y}^{(i)}_{i_1}\dots \m{y}^{(i)}_{i_\p}\m{y}^{(j)}_{i_{\p+1}}\dots\m{i}^{(j)}_{i_{2\p}}. \label{eq:p_outerprod}
\end{align}
Using Isserlis' theorem \cite{janson1997gaussian}, the expectation of a product of Gaussians can be expressed as a sum of 
products of second moments. Visually, 
this product may be interpreted as a graph made up of $2\p$-vertices, where each vertex has exactly one edge. We will refer to an edge joining vertex $r$ and vertex $s$ as a pair $(r,s)$. 
Let $\mathcal{P}_2(2\p)$ be the set of all partitions of $\{1,\dots,2\p\}$ into $\p$ pairs. We may then write
\begin{align}
     \E_{\z}\left[ \z_{i_1}\dots \z_{i_{2\p}}\right] =\sum_{\xi \in \mathcal{P}_2(2\p)} \prod_{(r,s)\in \xi}\E \left[\z_{i_r} \z_{i_s} \right] = \sum_{\xi \in \mathcal{P}_2(2\p)} \prod_{(r,s)\in \xi} \delta_{i_r,i_s}.
\end{align}
Thus, 
\begin{equation}
 \E_{\x}\left[ f(\m{W}^\top \x )^{\otimes 2}_{ij} \right] 
 = \sum_{\xi \in \mathcal{P}_2(2\p)} \sum_{i_1,\cdots,i_{2\p}}  \prod_{(r,s)\in \xi} \delta_{i_r,i_s} \m{y}_{i_1}^{(i)}\dots \m{y}^{(i)}_{i_\p}\m{y}^{(j)}_{i_{\p+1}}\dots\m{y}^{(j)}_{i_{2\p}}
\end{equation}
However, given that sub-indices from $1$ to $\p$ and sub-indices from $\p+1$ to $2\p$ correspond to entries of $\m{y}^{(i)}$ and $\m{y}^{(j)}$ respectively, we can interpret each pair $(r,s) \in \xi$ to be the product
$\m{y}^{(i)}_{i_r} \m{y}^{(j)}_{i_s}$. Combined with the fact that the direc-delta implies $i_r = i_s$, the sum of $\m{y}^{(i)}_{i_r} \m{y}^{(j)}_{i_r}$ across the index $i_r$ corresponds to the inner product $\inner{\m{y}^{(i)}}{\m{y}^{(j)}}$.  Let $C_\xi$ be the number of pairs formed between $\{1,\dots,\p\}$ and $\{\p+1,\dots,2\p\}$, or equivalently the number of inner products $\inner{\m{y}^{(i)}}{\m{y}^{(j)}}$ formed. Then, 
\begin{equation}\label{eq:kernel_decomp}
    \E_{\x}\left[ f(\m{W}^\top \x )^{\otimes 2}_{ij} \right]  =\sum_{\xi \in \mathcal{P}_2(2\p)} \left( \inner{\m{y}^{(i)}}{\m{y}^{(i)}} \inner{\m{y}^{(j)}}{\m{y}^{(j)}}\right)^{\frac{\p-C_\xi}{2}} \inner{\m{y}^{(i)}}{\m{y}^{(j)}}^{C_\xi}.
\end{equation}
We will write $A_\xi = \frac{\p-C_\xi}{2}$ and $q= C_\xi$ for simplicity of notation. The cross term inner product can be further expanded into
\begin{equation} \label{eq:kernel_xi_ij}
    \inner{\m{y}^{(i)}}{\m{y}^{(j)}}^{q}
    =
    \sum_{i_1,\dots,i_q} \m{y}_{i_1}^{(i)} \y_{i_1}^{(j)} \dots \y^{(i)}_{i_q}\y^{(j)}_{i_q}.
\end{equation}
Since the summand is symmetric, we can rewrite this sum as an ordered sum. In particular, given an unordered multiset $(j_1,\dots, j_q)$, if $\ell$ is the number of unique indices in $\{j_1,\dots, j_q\}$ and $(i_1<\cdots<i_\ell)$ is the ordered unique indices, we may define $\pi_r$ to be the multiplicity of $i_r$ in $(j_1,\dots,j_q)$. Then, $\pi = (\pi_1,\dots,\pi_\ell)$ is a composition of $q$, i.e., $\sum_{r=1}^\ell \pi_r = q$.
There are exactly $\ell!$ multisets that map to the pair $((i_1<\cdots<i_{\ell}),\pi)$. Thus, \eqref{eq:kernel_xi_ij} becomes
\begin{align}\label{eq:inner_prod_power_p0}
    \inner{\m{y}^{(i)}}{\m{y}^{(j)}}^{q}
    =
    \sum_{\ell=1}^q 
    \sum_{\pi}
    \sum_{i_1<\dots<i_\ell}
    \ell! 
    \left( \y^{(i)}_{i_1} \y^{(j)}_{i_1}\right)^{\pi_1} \dots \left( \y^{(i)}_{i_\ell} \y^{(j)}_{i_\ell}\right)^{\pi_\ell},
\end{align}
where the sum across $\pi$ indicates the summation across all compositions of $q$ of size $\ell$. 
Using this new ordered tuple $(i_1<\cdots<i_\ell)$, if  we define $\m{Y}^\pi \in \R^{{v \choose l} \times d}$ and $\D \in \R^{ d \times d}$ by
\begin{equation} \m{Y}_{(i_1<\dots<i_\ell),i}^{\pi} = \m{y}_{i_1}^{(i),\pi_1}\dots \m{y}_{i_\ell}^{(i),\pi_\ell} \quad \text{and}\quad \m{D} = \operatorname{Diag}\left(\inner{\m{y}^{(i)}}{\m{y}^{(i)}} : 1\leq i \leq d\right),
\end{equation}
we see that each summand of \eqref{eq:kernel_decomp} is just the $ij$-th entry of the sum of 
Gram-matrices $\left( \m{Y}^{\pi}  \D^{A_\xi} \right)^\top \m{Y}^{\pi}  \D^{A_\xi} $.  Therefore,
\begin{equation}\label{eq:Krep}
    \frac{1}{d}\E_\x\left[ f(\W^\top \x)^{\otimes 2} \right] = \sum_{\xi \in \mathcal{P}_2(2\p)} \sum_{\ell=1}^{C_\xi}
    \sum_{\pi} \ell! \left(\frac{1}{d}\m{D}^{A_\xi}\m{Y}^{\pi \top} \m{Y}^{\pi}
    \m{D}^{A_\xi} \right).
\end{equation}
Note that this representation gives a decomposition of the covariance kernel into a finite $\p$-dependent collection of positive semidefinite terms. As such, by Loewner ordering, we see that it suffices to bound the eigenvalues of each positive semidefinite matrix.

\subsection{Principal term}

Following \eqref{eq:Krep}, we first consider pairings $\xi$ that only matches elements of $\{1,\dots,\p\}$ to $\{\p+1,\cdots,2\p \}$. For such pairings, we have that $C_\xi = \p$, $A_\xi = 0$ and we sum over all compositions $\pi$ of $\p$. The trivial composition is the all $1$-composition which we will denote by $\m{1}^\p=(1,\dots,1)$. We will refer to $\frac{1}{d}\m{Y}^{\m{1}^\p \top}\m{Y}^{\m{1}^\p}$ as the principal term, since we will derive both upper and lower bounds for its spectrum.

Before proceeding with the proof of the bounds for the principal term, it will be useful to untangle $\cov$ from $\m{Y}^{\pi}$ by recalling that  $\y^{(i)}_{i_k} = \sqrt{\cov_{i_k}} \m{W}_{i_k,i}$. So, if we define a new sketch matrix $\W^\pi \in \R^{{v \choose l} \times d}$, along with a covariance matrix $\cov^\pi \in \R^{{ v\choose l}\times {v \choose l}}$ by
\begin{equation} 
    \begin{gathered}
    \m{W}_{(i_1<\dots<i_\ell),i}^{\pi} = \W_{i_1,i}^{\pi_1}\dots \W_{i_\ell,i}^{\pi_\ell}, \quad \text{and}\quad \cov^\pi  = \text{Diag}\left( \left\{ \cov_{i_1}^{\pi_1}\dots \cov_{i_\ell}^{\pi_\ell}\right\} _{(i_1<\dots<i_\ell)}  \right),
    \end{gathered}
    \label{eq:W^pi_K^pi}
\end{equation}
then we see that
\begin{equation}
    \frac{1}{d}\m{Y}^{\pi\top} \m{Y}^{\pi}= \frac{1}{d}\W^{\pi \top} \cov^\pi \W^\pi.
\end{equation}
which is analogous to the linear case as seen in \eqref{eq:linear_cov}. As such, we may apply a similar proof strategy of block-wise decomposition of $\m{Y}^{\pi \top} \m{Y}^{\pi}$ to isolate the leading eigenvalue terms.

\begin{prop}[Principal term]\label{prop:principal} Assume $\alpha>1$ and $d \leq v$.
    There are constants $c_1,c_2$ and $c_3$ depending only on $\alpha$ and $\p$ so that with  probability $1-O(\log^{-4}(d))$, for all $1 \leq j \leq c_1d \log^{-(\p+1)}(d)$, 
    \begin{equation} \label{eq:principal_eig_asym}
    c_2 \left(   \frac{\log^{(\p-1)}(j+1)}{ j} \right)^{\pwrco}
    \leq
    \lambda_j\left(\frac{1}{d}\m{Y}^{\m{1}^\p\top} \m{Y}^{\m{1}^\p} \right) 
    \leq c_3 \left(   \frac{\log^{(\p-1)}(j+1)}{ j} \right)^{\pwrco}.
    \end{equation}
And for all $c_1d \log^{-(\p+1)}(d)<j\leq d$, 
   \begin{equation}
   \label{eq:princpal_upper_lower_lower}
 c_2 \left( j^{-1}\log^{-2}(j) \right)^\pwrco \leq   \lambda_j\left(\frac{1}{d}\m{Y}^{\m{1}^\p\top} \m{Y}^{\m{1}^\p} \right) 
    \leq c_3 \left( \frac{\log^{2 \p}(j+1)}{j} \right)^\pwrco
    \end{equation}  
Moreover, 
\begin{equation}
    \lambda_{d}\left( \frac{1}{d}\m{Y}^{ \m{1}^\p \top} \m{Y}^{\m{1}^\p}   \right) \geq c_2 \left( \frac{\log^{\p-1}(d)}{d}\right)^\pwrco .\label{eq:pricipal_min_lower}
\end{equation} 
\end{prop}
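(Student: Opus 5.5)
We treat $\frac{1}{d}\m{Y}^{\m{1}^\p\top}\m{Y}^{\m{1}^\p} = \frac{1}{d}\W^{\m{1}^\p\top}\cov^{\m{1}^\p}\W^{\m{1}^\p}$ exactly as in the proof of Theorem~\ref{thm:linear}. The index set is now the set of $\p$-subsets $(i_1<\dots<i_\p)$ of $[v]$, and the diagonal covariance is $\cov^{\m{1}^\p}$ with entries $\cov_{i_1}\cdots\cov_{i_\p}\asymp (i_1\cdots i_\p)^{-\pwrco}$.

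\textbf{Step 0 (counting).} We first compute the spectrum of $\cov^{\m{1}^\p}$. Its $j$-th largest entry is $\asymp (\log^{\p-1}(j+1)/j)^{\pwrco}$, because the number of $\p$-subsets with $i_1\cdots i_\p\le N$ is $\asymp N\log^{\p-1}N$. This is a standard divisor-type lattice count, done by induction on $\p$. Inverting $j\asymp N\log^{\p-1}N$ gives $N\asymp j/\log^{\p-1} j$, and hence the $j$-th entry $N^{-\pwrco}$ has the stated form.

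\textbf{Step 1 (head).} We order the index subsets by decreasing $\cov^{\m{1}^\p}$ and take the head block $\mathcal{K}_0$ to be the top $k_*=c_1 d\log^{-(\p+1)}(d)$ of them. We then show $\norm{\frac1d\W_0\W_0^\top-\Id}\le\frac12$ via Lemma~\ref{lemma:minsker_2}, which immediately gives \eqref{eq:principal_eig_asym} for $j\le k_*$ by Loewner ordering.

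The columns $\wc^{(i)}$ of $\W_0$ are independent, since column $i$ of $\W^{\m{1}^\p}$ depends only on $\W_{\col(i)}$. Each column is isotropic, because the entries $\W_{i_1,i}\cdots\W_{i_\p,i}$ over distinct subsets are orthonormal. However, the columns are only Gaussian chaoses of order $\p$, not sub-Gaussian. We therefore need replacements for Lemma~\ref{lemma:Max_mmt} and \eqref{eq:linear_second_mmt_bd}.

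\begin{enumerate}
\item \emph{Quadratic forms.} For $\m{u}$ in the unit sphere, $\m{u}^\top\wc\wc^\top\m{u}$ is the square of a degree-$\p$ polynomial chaos in independent Gaussians. Hypercontractivity gives $L^p$ norms $\lesssim p^{\p}$, so the second-moment bound $\norm{\V^2}\lesssim k_*/d$ goes through.
\item \emph{Norm tails.} The quantity $\norm{\wc}^2-k_*$ is a chaos of degree $2\p$ with variance $\lesssim k_*$. Hypercontractive tails therefore give $\norm{\wc}^2\le k_*+C\sqrt{k_*}\log^{\p}(d)$ simultaneously for all $i\le d$, with high probability. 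This controls $M$ and $\E[M^p]$.
\end{enumerate}

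The extra logarithms coming from these chaos tails are what force $k_*$ down to $d\log^{-(\p+1)}d$. Tracking them is the main obstacle: we must check that the Minsker-type bound yields failure probability $O(\log^{-4}d)$ with this choice of $k_*$.

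\textbf{Step 2 (upper bound on the tail).} We split the remaining subsets dyadically by rank, $\mathcal{K}_k$ of size $k_*2^{k-1}$, and repeat Step 2 of Theorem~\ref{thm:linear}. This shows $\norm{\frac1d\W_k\W_k^\top-\Id}\le 2^{\beta k}$ for all $k$ simultaneously, with $\beta\in(1,\pwrco)$. Summing against $\norm{\cov_k}\asymp(\log^{\p-1}(k_*2^k)/(k_*2^k))^{\pwrco}$ bounds the tail by $O(\lambda_{k_*}(\cov^{\m{1}^\p}))$. Weyl's inequality then gives the upper bound in \eqref{eq:principal_eig_asym}.

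For $k_*<j\le d$, we instead choose the head size comparable to $j/\log^{\p+1}$, or simply bound $\lambda_j\le\lambda_{k_*}$. Either way we obtain an upper bound of the form $(\log^{2\p}(j+1)/j)^{\pwrco}$, which absorbs the log loss.

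\textbf{Step 3 (lower bounds for $j>k_*$ and for $\lambda_d$).} Since every dyadic piece is PSD, it suffices to lower bound a single piece. We pick a block $\mathcal{K}'$ of roughly $C d\log^2 d$ consecutive top subsets, chosen so that $d/|\mathcal{K}'|$ is small and the $d\times d$ matrix $\frac{1}{|\mathcal{K}'|}\W_{\mathcal{K}'}^\top\W_{\mathcal{K}'}$ is well conditioned. This is the same covariance-concentration argument as in Step~1, applied with the roles of rows and columns transposed: the rows are independent across $i\le d$ only after conditioning, so we use the matrix Bernstein/Minsker bound on the column sums.

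We then get
\[
\frac1d\W^\top\cov^{\m{1}^\p}\W\succeq \Bigl(\min_{\mathcal{K}'}\cov^{\m{1}^\p}\Bigr)\frac1d\W_{\mathcal{K}'}^\top\W_{\mathcal{K}'}\gtrsim \lambda_{Cd\log^2 d}(\cov^{\m{1}^\p})\,\Id .
\]
This yields both \eqref{eq:pricipal_min_lower} and, after adjusting the block to the scale $j$, the lower bound in \eqref{eq:princpal_upper_lower_lower}, up to the stated $\log^{-2}$ loss.
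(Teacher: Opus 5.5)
Your Steps 0--2 are the paper's argument: dyadic blocks ordered by $\cov^{\m{1}^\p}$, Lemma~\ref{lemma:minsker_2} fed by hypercontractive moment and tail bounds, $k_*=c_1 d\log^{-(\p+1)}(d)$, Weyl's inequality for the tail, and $\lambda_j\le\lambda_{k_*}$ for $k_*<j\le d$. One correction is needed there. For $\p\ge 2$, $\Var(\norm{\wc}^2)$ is not $O(k_*)$. The $\asymp k_*/\log k_*$ head subsets containing the index $1$ all share the factor $\W_{1,a}^2$, and all covariances $3^{\abs{\m{i}\cap\m{i}'}}-1$ are nonnegative. So the variance is at least of order $k_*^2/\log^2 k_*$, and $\norm{\wc}^2$ fluctuates on scale $k_*/\log k_*$, not $\sqrt{k_*}$. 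The paper uses the crude bound $3^\p\abs{\mathcal{K}_k}^2$, which gives $\E M\lesssim k_*\log^{\p}(d)/d$. It is this that forces $k_*\asymp d\log^{-(\p+1)}(d)$. Since you take that $k_*$ anyway, your head and tail steps survive.

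The genuine gap is Step 3, specifically \eqref{eq:pricipal_min_lower}.
\begin{enumerate}
\item \emph{Quantitative shortfall.} Take $m=\abs{\mathcal{K}'}\asymp d\log^2 d$ and grant $\frac1m\W_{\mathcal{K}'}^\top\W_{\mathcal{K}'}\succeq\frac12\Id$. You then only get $\lambda_d\gtrsim\frac{m}{d}\lambda_m(\cov^{\m{1}^\p})$, and this quantity is $O\bigl((\log^{\p-1}(d)/d)^{\pwrco}\log^{-2(\pwrco-1)}(d)\bigr)$. Your display, which drops $m/d$, gives only $(\log^{\p-3}(d)/d)^{\pwrco}$. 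Since $\pwrco>1$, this misses \eqref{eq:pricipal_min_lower} by a diverging factor. The $\log^{-2}$ slack you invoke exists only in \eqref{eq:princpal_upper_lower_lower}.
\item \emph{Unjustified mechanism.} $\W_{\mathcal{K}'}^\top\W_{\mathcal{K}'}=\sum_{\m{i}\in\mathcal{K}'}\m{r}_{\m{i}}\m{r}_{\m{i}}^\top$ is a sum over rows that are dependent, because subsets share indices. So Minsker/Bernstein does not apply. The independence is across the $d$ columns. Well-conditioning of the Gram matrix of $d$ independent chaos vectors in $\R^m$, $m>d$, needs a decoupling argument for the off-diagonal part, and that is where the logarithms enter.
\end{enumerate}

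The paper takes a different route. It bounds the matrix below by $\lambda_{n_*}(\cov^{\m{1}^\p})\frac1d\W_{n_*}^\top\W_{n_*}$ with $n_*\asymp d$. It then applies the Koltchinskii--Mendelson small-ball theorem \cite{koltchinskii2015bounding} to the i.i.d.\ isotropic vectors $\wc^{(a)}\in\R^{n_*}$. This needs only a $2+\eta$ tail and loses no logarithms.

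Note, however, that the paper takes $n_*=cd$ with $c$ small. Then $\frac1d\W_{n_*}^\top\W_{n_*}$ has rank at most $cd$, so that theorem controls $\lambda_{cd}$, not $\lambda_d$. This gives a clean, log-free version of your ``adjust the block to scale $j$'' for $k_*<j\le cd$. For $j$ near $d$, though, one needs a block with more than $d$ rows, as in your orientation. Neither argument gives a log-free smallest-singular-value bound for such a block.

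In fact, for $\p=1$ and $v=d$ the claim is false. The Rayleigh quotient of $\frac1d\W^\top\cov\W$ at $x=\W^{-1}e_d$ equals $\cov_{dd}\,g^2/d$, where $g=1/\norm{\W^{-1}e_d}\sim\abs{N(0,1)}$. Hence $\lambda_d\lesssim d^{-1-\pwrco}$ with constant probability. This violates \eqref{eq:pricipal_min_lower} and the $j=d$ case of \eqref{eq:princpal_upper_lower_lower}.
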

\begin{proof} Let $k_*>0$ and $\mathcal{K}_0$ be the collection of ordered tuples $(i_1<\cdots<i_\p)$ such that $\cov_{(i_1<\cdots<i_\p)}^{\m{1}^\p}$ are among the first $k_*$ largest eigenvalues of $\cov^{\m{1}^\p}$. Let $\W_0^{\m{1}^\p} \in \R^{k_* \times d}$ and $\cov_0^{\m{1}^\p} \in \R^{k_* \times k_*}$ be the respective row restriction of\ $\W^{\m{1}^\p}$ and $\cov^{\m{1}^\p}$ onto $\mathcal{K}_0$. Similarly, for all $k\geq 1$, let $\mathcal{K}_k$ be the collection of ordered tuples $(i_1<\cdots<i_\p)$ such that $\cov_{(i_1<\cdots<i_\p)}^{\m{1}^\p}$ are among the $(k_*2^{k-1}+1)$ to $(k_*2^k)$ largest eigenvalues of $\cov^{\m{1}^\p}$. Let $\W_k^{\m{1}^\p}$ and $\cov_k^{\m{1}^\p}$ be the row restriction of $\W^{\m{1}^\p}$ and $\cov^{\m{1}^\p}$ onto $\mathcal{K}_k$. We set the rows of $\W^{\m{1}^\p}_k$ to be zero if the corresponding ordered tuple does not exist.  We may then decompose $\frac{1}{d}\m{Y}^{\m{1}^\p\top} \m{Y}^{\m{1}^\p}$ as
    \begin{align}\label{eq:decomposeY}
        \frac{1}{d}\m{Y}^{\m{1}^\p\top} \m{Y}^{\m{1}^\p}= \frac{1}{d}\m{W}^{\m{1}^\p\top} \m{H^{\m{1}^\p}}\m{W}^{\m{1}^\p}=\frac{1}{d}\m{W}_{0}^{\m{1}^\p\top} \m{H}_{0}^{\m{1}^\p}\m{W}_{0}^{\m{1}^\p}+\frac{1}{d} \sum_{k=1}^{\infty}\m{W}_{k}^{\m{1}^\p \top} \m{H}_{k}^{\m{1}^\p}\m{W}_{k}^{\m{1}^\p}.
    \end{align}
From Lemma \ref{lemma:Id_pi_eig_bounds}, we know for $j \leq d$,
\begin{align}
    \lambda_j(\m{H}^{\m{1}^\p})\asymp \left( \frac{\log^{(\p-1)}(j+1)}{j} \right)^\pwrco, 
\end{align}
and for $j>d$,
\begin{equation}
    \lambda_j(\cov^{\m{1}^\p}) \lesssim \left( \frac{\log^{(\p-1)}(j+1)}{j} \right)^\pwrco. 
\end{equation}
As in the linear case, it suffices to show $\norm{\frac{1}{d}{\m{W}_k^{\m{1}_\p}}  \m{W}_k^{\m{1}_\p \top} - \Id}$ is sufficiently bounded. Fix $k \geq 0$, by a rank-$1$ decomposition we have

\begin{align}\label{eq:Hk1p}
    \frac{1}{d}\sqrt{\m{H}_{k}^{\m{1}^\p}}\m{W}_{k}^{\m{1}^\p}{\m{W}_{k}^{\m{1}^\p \top}} \sqrt{\m{H}_{k}^{\m{1}^\p}}-\m{H}_{k}^{\m{1}^\p}=  \sqrt{\m{H}_{k}^{\m{1}^\p}}\left(\frac{1}{d} \sum_{i=1}^d \wc^{(i)} \wc^{(i) \top} -\Id \right)\sqrt{\m{H}_{k}^{\m{1}^\p}},
\end{align}
where $\wc^{(i)}$ are the column vectors of $\W_k^{\m{1}^\p}$. Let $M=\max_{i\in [d]} \frac{1}{d} \left\|  \wc^{(i)} \wc^{(i) \top} -\m{I}\right\|$. We first provide moments and tail estimates for $M$. Note that
\begin{align}
    M &\leq \max_{i} \frac{1}{d} \|\wc^{(i)}\|^2+\frac{1}{d} \\
    &\leq \max_{i} \frac{1}{d} | \|\wc^{(i)}\|^2- \abs{\mathcal{K}_k} |+\frac{ \abs{\mathcal{K}_k} +1}{d}\\
    &\leq \tilde M+\frac{2 \abs{\mathcal{K}_k}  }{d}, \label{eq:M_tildeM}
\end{align}
where $\tilde M=\max_{i} \frac{1}{d} | \|\wc^{(i)}\|^2- \abs{\mathcal{K}_k}  |$. For simplicity of notation, we drop the superscription from $\wc^{(i)}$ and write just $\wc$. Then each term \[\|\wc\|^2=\sum_{(i_1,\dots,i_\p)\in \mathcal K_k}  \wc_{i_1}^2\cdots \wc_{i_\p}^2 \] is a degree-$2\p$ polynomial in Gaussian random variables $\wc$. Moreover, \begin{align}
    \Var(\|\wc\|^2)\leq \E \|\wc\|^4 -(\E \|\wc\|^2)^2 \leq  3^\p( \abs{\mathcal{K}_k}  )^2.
\end{align}
From the Gaussian hypercontractivity inequality in Lemma~\ref{lem:hypercontractivity},
\begin{align}\label{eq:tail_tildeM}
        \Pr \left(| \|\wc^{(i)}\|^2-\abs{\mathcal{K}_k}  |\geq t \right)\leq e^2 \exp \left( -C\left(\frac{t}{ \abs{\mathcal{K}_k} }\right)^{1/\p}\right).
\end{align}
Taking union bound over $i\in [d]$, we obtain 
\begin{align} \label{eq:tail_bdd_principal}
    \mathbb P (M\geq t)\lesssim d \exp  \left( -C\left(\frac{td}{ \abs{\mathcal{K}_k}  }-2\right)^{1/\p}\right).
\end{align}
From \eqref{eq:tail_tildeM}, we also get the moment bound by taking $t_0=\abs{\mathcal{K}_k} \log^\p(d)$ and  
\begin{align}
    \E \left[ \tilde M^q \right]  & =d^{-q} q\int_{0}^{t_0} t^{q-1}\Pr \left( \tilde M \geq t\right) dt+ d^{-q}q\int_{t_0}^{\infty}  t^{q-1}\Pr \left( \tilde M \geq t\right) dt\\
    &\lesssim d^{-q} t_0^q+d^{-q} \int_{t_0}^{\infty} \exp(-C(t/k_*)^{1/\p}) dt \\
    &\lesssim \left( \frac{ \abs{\mathcal{K}_k} }{d} \right )^q \log^{\p q}(d),
\end{align} 
where in the last inequality we use the gamma tail estimate for $x\geq 2(\p-1)$ and $\p \geq 1$,\[\int_{x}^{\infty}  u^{\p-1} e^{-u} du\leq 2 x^{\p-1}e^{-x}.\] 
Therefore from \eqref{eq:M_tildeM},
    \begin{equation} \label{eq:mmt_bdd_principal}
    \E \left[ M^q  \right]\lesssim \left(\frac{ \abs{\mathcal{K}_k}  \log^\p (d)}{d } \right)^q. 
    \end{equation}
    Let $\m{A}_i =  \frac{1}{d}\left(\wc^{(i)}\wc^{(i) \top} -\m{I}\right).$ It is easy to check via independence that $\E[\m{A}_i^2]$ is diagonal and the $j$-th diagonal term satisfies $\E 
    \left [  \left( \wc^{(i)} \wc^{(i) \top} \right)^2 \right]_{jj}
    \leq 3^\p\abs{\mathcal{K}_k}.$
    By choosing $\m{V}^2 = \frac{3^\p \abs{\mathcal{K}_k} }{d}\Id$, we see that
\begin{equation}
    \sum_{i=1}^d \E \left[ \m{A}_i^2\right] \succeq \m{V}^2 \quad \text{and} \quad \norm{\m{V}^2} = \frac{3^\p \abs{\mathcal{K}_k}  }{d}.
\end{equation}

\noindent\textit{Step 1: Two-sided bound on $\frac{1}{d}\m{W}_{0}^{\m{1}^\p \top} 
\m{H}_{0}^{\m{1}^\p}\m{W}_{0}^{\m{1}^\p}$.} 

If $k=0$, then $\abs{ \mathcal{K}_0} = k_*$. We now take $k_* = c_1 d \log^{-(\p+1)}(d)$ for a sufficiently small constant $c_1>0$. Then, $\max \{  4\sigma, 32\E M\} \leq \frac{1}{24}$ for sufficiently large $d$.
Using \eqref{eq:tail_bdd_principal} and \eqref{eq:mmt_bdd_principal}, we may apply Lemma~\ref{lemma:minsker_2} with $t=\frac{1}{24}$ and $q=2$ to get 
\begin{align}\label{eq:Head_tail_bdd_principal}
    \Pr \left( \norm{ \frac{1}{d}\W_0^{\m{1}^\p}\W_0^{\m{1}^\p \top} - \Id } \geq \frac{1}{2}\right)  \lesssim \frac{1}{\log^{4}(d)}.
\end{align}
Therefore, with probability at least $1-O(\log^{-4}(d))$,
\begin{align}
   \frac{1}{2}\m{H}_{0}^{\m{1}^\p}\preceq \frac{1}{d}\sqrt{\m{H}_{0}^{\m{1}^\p}}\m{W}_{0}^{\m{1}^\p}{\m{W}_{0}^{\m{1}^\p}}^\top \sqrt{\m{H}_{0}^{\m{1}^\p}}\preceq \frac{3}{2} \m{H}_{0}^{\m{1}^\p}.
\end{align}
Hence, by the Loewner ordering of PSD matrices, for $1\leq j\leq c_1 d \log^{-(\p+1)}(d)$, 
\begin{align}\label{eq:two_sided_lambda}
    \lambda_j\left (\frac{1}{d}\m{W}_{0}^{\m{1}^\p\top} 
\cov_{0}^{\m{1}^\p}\m{W}_{0}^{\m{1}^\p} \right)\asymp  \lambda_j(\m{H}_0^{\m{1}^\p})\asymp \left( \frac{\log^{(\p-1)}(j+1)}{j} \right)^\pwrco.
\end{align}

\noindent \textit{Step 2: Upper bound on $ \sum_{k=1}^{\infty}\frac{1}{d}\m{W}_{k}^{\m{1}^\p\top} \m{H}_{k}^{\m{1}^\p}\m{W}_{k}^{\m{1}^\p}$}.

If $k \geq 1$, then $\abs{\mathcal{K}_k}= k_* 2^{k-1}$. It is easy to check for any $\beta \in (1,\pwrco)$ that $\max\{ 4 k_* 2^{k-1}, 32 \E[M]\} \leq 12 \left(2^{\beta k} \right)$. Thus, using Lemma \ref{lemma:minsker_2} with  $q=2$ and $t = 12(2^{\beta k}-1)$, we get
\begin{align}
\Pr\left( \left\| \frac{1}{d}\m{W}_{k}^{\m{1}^\p \top} \m{W}_{k}^{\m{1}^\p}-\m{I}\right\|\geq 2^{\beta k }-1\right) &\lesssim  2^{k -1} d^{1-\frac{1}{c_3} 2^{(\beta-1)k }}+ d \exp(-C 2^{(\beta-1)k } \log^{(p+1)/\p}(d))+\left(\frac{2^{(1-\beta)k }}{\log d}\right)^{4},
\end{align}
which is summable. Therefore, after taking the union bound, we see that with probability at least $1-O(\log^{-4}(d))$, for all $k\geq 1$,
\begin{align}
     \frac{1}{d}\m{W}_{k }^{\m{1}^\p \top} \m{W}_{k}^{\m{1}^\p} \preceq 2^{\beta k }\m{I}.
\end{align}
In particular, this implies
\begin{align}
    \frac{1}{d}\|\m{W}_{k }^{\m{1}^\p\top} \m{H}_{k }^{\m{1}^\p}\m{W}_{k }^{\m{1}^\p}\| &\leq  2^{\beta k } \|\m{H}_{k }^{\m{1}^\p}\| \\&\lesssim 2^{\beta k } (2^{k -1}k_*)^{-\alpha} \log(2^{k-1}k_*)^{\alpha(p-1)}\\
    &\lesssim 2^{(\beta-\alpha)k }\left( \frac{\log(2^{ k-1} k_*)}{\log k_*}\right)^{\alpha(p-1)} \lambda_{k_*}(\m{H}^{\m{1}^\p}).
\end{align}

Summing over $k \geq 1$, we obtain 
\begin{align}\label{eq:one_sided_lambda}
    \sum_{k =1}^{\infty}  \frac{1}{d}\|\m{W}_{k}^{\m{1}^\p\top} \m{H}_{k }^{\m{1}^\p}\m{W}_{k }^{\m{1}^\p}\| \lesssim  \lambda_{k_*}(\m{H}^{\m{1}^\p}).
\end{align}
Combining \eqref{eq:decomposeY}, \eqref{eq:two_sided_lambda} and \eqref{eq:one_sided_lambda}, we see that for all $1\leq j\leq k_*$,
\begin{align}
    \lambda_j\left(\frac{1}{d}\m{Y}^{\m{1}^\p\top} \m{Y}^{\m{1}^\p} \right)\asymp \lambda_j(\m{H}^{\m{1}^\p})\asymp j^{-\alpha} (\log(1+j))^{\alpha(p-1)}.
\end{align}
This concludes the proof of \eqref{eq:principal_eig_asym}.

\noindent \textit{Step 3: Upper bound on $\lambda_j\left(\frac{1}{d}\m{Y}^{\m{1}^\p\top} \m{Y}^{\m{1}^\p} \right)$ for $j> c_1 d\log^{-(\p+1)}(d)$.} 

If $k_*<j \leq d$, we have $\log(j+1)\asymp \log d$. Therefore,
\begin{align}
     \lambda_j\left(\frac{1}{d}\m{Y}^{\m{1}^\p\top} \m{Y}^{\m{1}^\p} \right)&\leq \lambda_{k_*} \left(\frac{1}{d}\m{Y}^{\m{1}^\p\top} \m{Y}^{\m{1}^\p} \right)\\
     &\lesssim \left(\frac{d}{\log^{\p+1}(d)} \right)^{-\alpha}\log^{\alpha(\p-1)}(d)\\
     &\lesssim j^{-\alpha} \log^{2\alpha \p} (j+1). \label{eq:principal_lower_eig_uppr_bdd}
\end{align}

\noindent \textit{Step 4: Lower bound on $\lambda_j \left( \m{Y}^{\m{1}^\p \top} \m{Y}^{\m{1}^\p} \right)$ for $j>c_1 d \log^{-(\p+1)}(d)$.} 

We will first show the least singular value bound \eqref{eq:pricipal_min_lower}. 
Let $\m{r}_j^\top$ be the $j$-th row of $\m{W}^{\m{1}^\p}$. Choose $n_*=cd$ for some sufficiently small constant $c>0$, then
    \begin{align}
           \frac{1}{d}\m{Y}^{\m{1}^\p\top} \m{Y}^{\m{1}^\p}= \frac{1}{d}\m{W}^{\m{1}^\p\top} \m{H^{\m{1}^\p}}\m{W}^{\m{1}^\p}&=\frac{1}{d}\sum_{j=1}^{\infty} \lambda_j(\m{H^{\m{1}^\p}}) \m{r}_j \m{r}_j^\top\\
           &\succeq  \frac{1}{d}\lambda_{n_*}(\m{H^{\m{1}^\p}}) \sum_{j=1}^{n_*}\m{r}_j \m{r}_j^\top\\
           &=\lambda_{n_*} (\m{H^{\m{1}^\p}}) \cdot \frac{1}{d} \m{W}_{n_*}^\top \m{W}_{n_*},
    \end{align}
    where $\m{W}_{n_*}\in \R^{n_*\times d}$ is the first $n_*$ rows of $\m{W}^{\m{1}^\p}$. Now $\m{W}_{n_*}^\top\in \R^{d\times n_*}$ is a random matrix with independent, isotropic rows $\wc$ such that for every $\x \in \R^{n_*}$ and $\| \x \|_2=1$, $\langle \wc,\x\rangle$ is an order-$\p$ polynomial of independent Gaussian random variables with  $\Var(\langle \wc,\x\rangle)=\|\x\|_2^2=1$. From Gaussian hypercontractivity in Lemma~\ref{lem:hypercontractivity},
    \[\mathbb P(|\langle \wc , \x \rangle|\geq u )\leq 2\exp(-Cu^{2/\p})\leq  \frac{L}{u^{2+\eta}}.\]
    for some $\eta>0$ and $L>0$.
    Then, from the least singular value bound of random matrices with independent rows in \cite[Theorem 1.3]{koltchinskii2015bounding}, we have with probability $1-e^{-Cd}$, $\lambda_{d}(\frac{1}{d} \m{W}_{n_*}^\top \m{W}_{n_*})=\Omega(1)$. This implies 
    \begin{align}
         \lambda_d\left(\frac{1}{d}\m{Y}^{\m{1}^\p\top} \m{Y}^{\m{1}^\p} \right)\geq \lambda_{n_*} (\m{H^{\m{1}^\p}}) \cdot  \lambda_{d}\left(\frac{1}{d} \m{W}_{n_*}^\top \m{W}_{n_*}\right)\gtrsim \lambda_{n_*} (\m{H^{\m{1}^\p}}) \geq c_2 \left( \frac{\log^{\p-1}(d)}{d}\right)^\pwrco.
    \end{align}
    Since for  $c_1d \log^{-(\p+1)}(d)\leq j\leq d$,
    \begin{align}\lambda_j\left(\frac{1}{d}\m{Y}^{\m{1}^\p\top} \m{Y}^{\m{1}^\p} \right)&\geq \lambda_d\left(\frac{1}{d}\m{Y}^{\m{1}^\p\top} \m{Y}^{\m{1}^\p} \right)\geq c_2 d^{-\alpha} (\log(d))^{\alpha(\p-1)}\geq c_2' j^{-\alpha} (\log (1+j))^{-2\alpha}.
    \end{align}
Combining this with \eqref{eq:principal_lower_eig_uppr_bdd} concludes \eqref{eq:princpal_upper_lower_lower}.
\end{proof}

\subsection{Subleading terms and Wick chaoses}

In this section, we focus on the remaining non-principal matrices of \eqref{eq:Krep}. We will refer to these matrices as lower order terms or subleading terms, and they consist of matrices of the form:
\begin{enumerate}
    \item $\frac{1}{d}\m{Y}^{\pi\top}\m{Y}^{\pi}$, if
    $C_\xi = \p$ and $\pi \neq \m{1}^\p$. 
    \item $\frac{1}{d}\D^{A_\xi}\m{Y}^{\pi \top}\m{Y}^{\pi} \D^{A_\xi}$, if $C_\xi \neq \p$ and $A_\xi \geq 1$.
\end{enumerate}
Using \eqref{eq:W^pi_K^pi}, we can write the sample data matrix in both cases as $\m{Y}^\pi \D^{A_\xi} =  \sqrt{\cov^\pi}\W^\pi \D^{A_\xi}$. Notice that in contrast to the principal term, when $\m{Y}^{\m{1}^\p} =\sqrt{\cov^{\m{1}^\p}}\W^{\m{1}^\p}$, the entries of $\m{Y}^{\pi} \D^{A_\xi}$ terms are not uncorrelated. Implying that even after applying our block-wise strategy, $\frac{1}{d}\D^{A_\xi}\m{Y}^{\pi \top}\m{Y}^{\pi} \D^{A_\xi}$ would not concentrate to the diagonal covariance $\cov^\pi$. Therefore, we will need to uncorrelate the lower order terms before applying our block-wise strategy. To do so, we will utilize the Wick product decomposition.

\begin{definition}
    Let $\m{P}_n$ be the set of all Gaussian polynomials of degree less than or equal to $n$ and $\m{G}^{:n:}=  \overline{\m{P}}_n \cap \overline{\m{P}}_{n-1}^{\perp}$, where the closure and orthogonal complement are taken with respect to the $L^2$ inner-product. If $g_1,\dots,g_n$ are centered joint Gaussians, then the Wick product denoted by $\wick{g_1\cdots g_n}$ is defined as the orthogonal projection of $\prod_{i=1}^n g_i$ onto $ \m{G}^{:n:}$, i.e.,
    \begin{equation}
        \wick{g_1\cdots g_n} \equiv \pi_n(g_1 \cdots g_n).
    \end{equation}
\end{definition}
For an overview on Wick products, see Appendix \ref{app:wick} or \cite{janson1997gaussian}. There are two main advantages of working with Wick products. 
First, one can perform a change of basis to move between the original Gaussian product and the orthogonal projection. See Lemma \ref{lemma:wick_decomp} and \ref{lemma:prod_Wick_decomp}. This transformation can be systematically tracked by graphical representations known as Feynman diagrams:

\begin{definition}
    A rank-$r$ Feynman diagram of a collection of random variables $g_1,\dots,g_n$, is a graph $\gamma$ consisting of $n$ vertices labeled $g_1,\dots, g_n$ with $r$ edges that are vertex disjoint. Let $\mathcal{P}(\gamma)$ be the set of all paired vertices and $\mathcal{U}(\gamma)$ the set of all unpaired vertices, we define the value of $\gamma$ to be
    \begin{equation}
        v(\gamma) = \prod_{(r,s) \in \mathcal{P}(\gamma)}\E \left[ g_r g_s\right] \prod_{i\in \mathcal{U}(\gamma)} g_i.
    \end{equation}
    We say that a Feynman diagram is complete if the number of edges is $n/2$, i.e., $\mathcal{U}(\gamma) = \emptyset$. 
\end{definition}
 This implies that the machinery we used in the proof of the principal case should carry over nicely after projecting $\m{Y}^\pi \D^{A_\xi} $. The second advantage is that the Wick matrices formed from the orthogonal projection of $\m{Y}^{\pi} \D^{A_\xi} $ 
inherently possess diagonal covariance, thus solving the correlation problem. See Lemma \ref{lemma:wick_cor}. To illustrate the Wick decomposition method, we will start with the case that $A_\xi=0$ before turning to the more general case $A_\xi \geq 0$ in Section \ref{sec:A_xi>0}.

\subsubsection{Wick decomposition for $A_\xi=0$}\label{sec:A_xi = 0}

We consider the lower order term $\frac{1}{d}\m{Y}^{\pi \top}\m{Y}^{\pi}$ and a fixed composition $\pi$ of $C_\xi$. Since the entries $\m{Y}^{\pi}_{\m{i},a} = \prod_{j=1}^\ell \m{y}^{(a),\pi_j}_{i_j}$ are products of Gaussians, we may decompose them using Lemma \ref{lemma:prod_Wick_decomp} into a sum of Wick products, i.e.,
\begin{equation}\label{eq:A_xi_wick_1}
     \prod_{j=1}^\ell \m{y}^{(a),\pi_j}_{i_j}= \sum_{\gamma} \prod_{(r,s)\in \mathcal{P}(\gamma)}\E \left[\m{y}^{(a)}_{i_r}\m{y}^{(a)}_{i_s} \right]\wick{\prod_{j \in \mathcal{U}(\gamma)} \m{y}^{(a)}_{i_j}}.
\end{equation}
Since $\m{y}^{(a)}_{i_j}$ are pairwise independent, the only non-zero Feynman diagrams are those that connect vertices with other vertices that share the same label. That is, $\m{y}^{(a)}_{i_j}$ can only be paired with another copy of $\m{y}^{(a)}_{i_j}$. As such, we may map each Feynman diagram to a tuple $\eta = (\eta_1,\dots,\eta_\ell)$, where $\eta_j$ corresponds to the number of $\m{y}^{(a)}_{i_j}$ pairs formed. Note, this map is not injective, so we will denote $N_{\eta}$ to be the number of Feynman diagrams that are mapped to $\eta$ (see Figure \ref{fig:wick1}). Thus, \eqref{eq:A_xi_wick_1} simplifies to
\begin{equation}\label{eq:A_xi_wick_2}
    \m{Y}^{\pi}_{\m{i},a} = \sum_{\eta} N_{\eta} \prod_{j=1}^\ell \cov_{i_j}^{\eta_j}\wick{ \prod_{j=1}^\ell \m{y}^{(a),\pi_j -2\eta_j}_{i_j}}.
\end{equation}

\begin{figure}[h] 
\centering

\begin{tikzpicture}[point/.style={circle,inner sep=1.2pt,fill=black}]


\node[point,blue, thick, draw=blue, fill=none]  (v11) at (0,2) {$i_1$};
\node[point,  blue, thick, draw=blue, fill=none] (v12) at (1.5,2) {$i_1$};

\node[point, blue, draw=blue, thick, fill=none] (v21) at (0,1) {$i_1$};
\node[point,blue, draw = blue,thick, fill = none] (v22) at (1.5,1) {$i_1$};

\node[point, red, draw = red, thick, fill = none] (v31) at (0,0) {$i_2$};
\node[point,red,draw = red, thick, fill = none] (v32) at (1.5,0) {$i_2$};





        (mid) ellipse [x radius=0.5, y radius=1.55];

\draw [very thick](v11) -- (v12);
\draw [very thick](v31) -- (v32);

\node at (0.75,-0.7) {$\gamma_1$};

\end{tikzpicture}
\hspace{0.7cm}
\begin{tikzpicture}[point/.style={circle,inner sep=1.2pt,fill=black}]

\node[point,blue, thick, draw=blue, fill=none]  (v11) at (0,2) {$i_1$};
\node[point,  blue, thick, draw=blue, fill=none] (v12) at (1.5,2) {$i_1$};

\node[point, blue, draw=blue, thick, fill=none] (v21) at (0,1) {$i_1$};
\node[point,blue, draw = blue,thick, fill = none] (v22) at (1.5,1) {$i_1$};

\node[point, red, draw = red, thick, fill = none] (v31) at (0,0) {$i_2$};
\node[point,red,draw = red, thick, fill = none] (v32) at (1.5,0) {$i_2$};



        (mid) ellipse [x radius=0.5, y radius=1.55];

\draw[very thick] (v21) -- (v22);
\draw[very thick] (v31) -- (v32);

\node at (0.75,-0.7) {$\gamma_2$};
\end{tikzpicture}
\hspace{0.7cm}
\begin{tikzpicture}[point/.style={circle,inner sep=1.2pt,fill=black}]

\node[point,blue, thick, draw=blue, fill=none]  (v11) at (0,2) {$i_1$};
\node[point,  blue, thick, draw=blue, fill=none] (v12) at (1.5,2) {$i_1$};

\node[point, blue, draw=blue, thick, fill=none] (v21) at (0,1) {$i_1$};
\node[point,blue, draw = blue,thick, fill = none] (v22) at (1.5,1) {$i_1$};

\node[point, red, draw = red, thick, fill = none] (v31) at (0,0) {$i_2$};
\node[point,red,draw = red, thick, fill = none] (v32) at (1.5,0) {$i_2$};


        (mid) ellipse [x radius=0.5, y radius=1.55];



\draw[very thick] (v11) -- (v22);
\draw[very thick] (v31) -- (v32);

\node at (0.75,-0.7) {$\gamma_3$};
\end{tikzpicture}
\hspace{0.7cm}
\begin{tikzpicture}[point/.style={circle,inner sep=1.2pt,fill=black}]

\node[point,blue, thick, draw=blue, fill=none]  (v11) at (0,2) {$i_1$};
\node[point,  blue, thick, draw=blue, fill=none] (v12) at (1.5,2) {$i_1$};

\node[point, blue, draw=blue, thick, fill=none] (v21) at (0,1) {$i_1$};
\node[point,blue, draw = blue,thick, fill = none] (v22) at (1.5,1) {$i_1$};

\node[point, red, draw = red, thick, fill = none] (v31) at (0,0) {$i_2$};
\node[point,red,draw = red, thick, fill = none] (v32) at (1.5,0) {$i_2$};


        (mid) ellipse [x radius=0.5, y radius=1.55];



\draw[very thick] (v21) -- (v12);
\draw[very thick] (v31) -- (v32);

\node at (0.75,-0.7) {$\gamma_4$};
\end{tikzpicture}
\hspace{0.7cm}
\begin{tikzpicture}[point/.style={circle,inner sep=1.2pt,fill=black}]

\node[point,blue, thick, draw=blue, fill=none]  (v11) at (0,2) {$i_1$};
\node[point,  blue, thick, draw=blue, fill=none] (v12) at (1.5,2) {$i_1$};

\node[point, blue, draw=blue, thick, fill=none] (v21) at (0,1) {$i_1$};
\node[point,blue, draw = blue,thick, fill = none] (v22) at (1.5,1) {$i_1$};

\node[point, red, draw = red, thick, fill = none] (v31) at (0,0) {$i_2$};
\node[point,red,draw = red, thick, fill = none] (v32) at (1.5,0) {$i_2$};


        (mid) ellipse [x radius=0.5, y radius=1.55];



\draw[very thick] (v11) -- (v21);
\draw[very thick] (v31) -- (v32);

\node at (0.75,-0.7) {$\gamma_5$};
\end{tikzpicture}
\hspace{0.7cm}
\begin{tikzpicture}[point/.style={circle,inner sep=1.2pt,fill=black}]

\node[point,blue, thick, draw=blue, fill=none]  (v11) at (0,2) {$i_1$};
\node[point,  blue, thick, draw=blue, fill=none] (v12) at (1.5,2) {$i_1$};

\node[point, blue, draw=blue, thick, fill=none] (v21) at (0,1) {$i_1$};
\node[point,blue, draw = blue,thick, fill = none] (v22) at (1.5,1) {$i_1$};

\node[point, red, draw = red, thick, fill = none] (v31) at (0,0) {$i_2$};
\node[point,red,draw = red, thick, fill = none] (v32) at (1.5,0) {$i_2$};


        (mid) ellipse [x radius=0.5, y radius=1.55];



\draw[very thick] (v12) -- (v22);
\draw[very thick] (v31) -- (v32);

\node at (0.75,-0.7) {$\gamma_6$};
\end{tikzpicture}

    \caption{If $\pi = (4,2)$, then there are $4$ copies of $i_1$ and $2$ copies of $i_2$. The Feynman diagrams $\gamma_1, \cdots, \gamma_6$ are all mapped to $\eta = (1,1)$. Which is to say that they all 
    yield the same Wick product $\cov_{i_1}\cov_{i_2}\wick{\m{y}^{(a),2}_{i_1}}$. Thus, $N_{(1,1)}=6$.}
    \label{fig:wick1}
\end{figure}
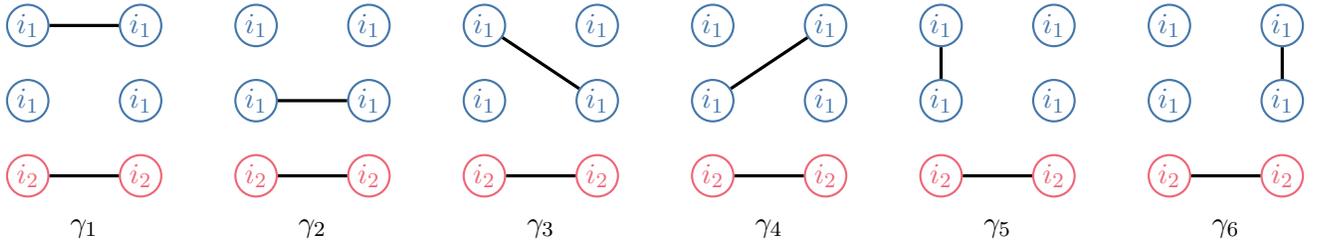

Now, notice that the Wick product term of \eqref{eq:A_xi_wick_2} doesn't utilize the entire ordered tuple but only a subset of sub-indices $j$ such that $\pi_j-2\eta_j\neq 0$. Fixing $\eta$, we define $
\mathcal{W}_{\pi,\eta} = \{j\,;\, \pi_j \neq 2 \eta_j \}$ be the set of sub-indices that contribute to the Wick product. We define the projection of $\m{i}$ onto $\mathcal{W}_{\pi,\eta}$ by $P_{\mathcal{W}_{\pi,\eta}} (\m{i}) = (i_j)_{j \in \mathcal{W}_{\pi,\eta}}$ and $\mathcal{I}_{\pi,\eta} = \text{Image}\left(P_{\mathcal{W}_{\pi,\eta}} \right)$. That is, if $\tilde{\m{i}} \in \mathcal{I}_{\pi,\eta}$ then $\tilde{\m{i}} = (i_j)_{j\in \mathcal{W}_{\pi,\eta}}$ for an ordered tuple $\m{i}=(i_1,\cdots,i_\ell)$. On the other hand, given an $\tilde{\m{i}} \in \mathcal{I}_{\pi,\eta}$, let $\mathcal{U}_{\pi,\eta}(\tilde{\m{i}})$ be the set of all ordered tuple $(i_j)_{j \not \in \mathcal{W}_{\pi,\eta}}$ such that the concatenation of $\tilde{\m{i}}$ and $(i_j)_{j \not \in \mathcal{W}_{\pi,\eta}}$ form an ordered tuple $\m{i}$. See the example below. We now define a new matrix $\wick{\m{Y}^{\pi,\eta}} \in \R^{ \abs{\mathcal{I}_{\pi,\eta}} \times \abs{\mathcal{I}_{\pi,\eta}}}$ using the indices of $\mathcal{I}_{\pi,\eta}$ by 
\begin{equation}
\wick{\m{Y}^{\pi,\eta}}_{\ti,a}= \wick{\prod_{j\in \mathcal{W}_{\pi,\eta}} \m{y}_{i_j}^{(a), \pi_j-2\eta_j}}.
\end{equation}
\begin{example}[Restricted ordered tuple $\mathcal{I}_{\pi,\eta}$]
    If $\ell=3$ and $\mathcal{W}_{\pi,\eta} = (2,3)$, then $\mathcal{I}_{\pi,\eta}$ is made of all ordered tuples $(i_2,i_3)$ such that for some $i_1\in [v]$, $(i_1,i_2,i_3)$ is an ordered tuple. Given $(i_2,i_3)\in \mathcal{I}_{\pi,\eta}$, then $\mathcal{U}_{\pi,\eta}(i_2,i_3)= \{i_1\,;\,i_1<i_2\}$. It is easy to check that $(4,5) \in \mathcal{I}_{\pi,\eta}$ and $\mathcal{U}_{\pi,\eta}(4,5)= \{1,2,3\}$, but $(1,5) \not \in \mathcal{I}_{\pi,\eta}$ as there does not exist any $i_1 \in [v]$ such that $(i_1,1,5)$ forms an ordered tuple. 
\end{example}
\noindent
Separating \eqref{eq:A_xi_wick_2} into contributing and non-contributing Wick terms, we get
\begin{equation}\label{eq:A_xi_0_Y_wick}
    \m{Y}^{\pi}_{\m{i},a} = \sum_{\eta} N_\eta \prod_{j\in\mathcal{W}_{\pi,\eta}} \cov_{i_j}^{\eta_j} \prod_{j \not\in \mathcal{W_{\pi,\eta}}} \cov_{i_j}^{\eta_j} \wick{\m{Y}^{\pi,\eta}}_{\ti,a}.
\end{equation}
It will be convenient to denote the coefficients as
\begin{equation}
    c_\eta(\m{i}) = N_\eta \prod_{j=1}^\ell \cov_{i_j}^{\eta_j} =
    N_\eta  \prod_{j\in \mathcal{W}_{\pi,\eta}} \cov_{i_j}^{\eta_j} \prod_{j\not\in \mathcal{W}_{\pi,\eta}} \cov_{i_j}^{\eta_j}, \quad \text{ and } \quad \mathcal{S}_{\m{i}} = \sum_{\eta} c_\eta(\m{i}).
\end{equation}
Note, $\mathcal{S}_{\m{i}}$ is uniformly bounded in $\m{i}$ as $\cov_{j} \asymp j^{-\pwrco}$. We may write \eqref{eq:A_xi_0_Y_wick} as
\begin{equation}\label{eq:A_xi_0_y_wick_2}
    \m{Y}^{\pi}_{\m{i},a} = \sum_{\eta} c_\eta(\m{i})  \wick{\m{Y}^{\pi,\eta}}_{\ti,a}.
\end{equation}
Let $Q:\R^d \to \R$ be defined by
$\x \mapsto \left( \sum_{a}\x_a \right)^2$, then by convexity of $Q$ and \eqref{eq:A_xi_0_y_wick_2},
\begin{equation}
    Q\left( \x \odot \left( \m{Y}^{\pi} \right)_{\m{i}}\right) \lesssim  \sum_{\eta} c_\eta(\m{i}) Q\left( \x \odot \wick{\m{Y}^{\pi,\eta}}_{\ti} \right),
\end{equation}
where $\left( \m{Y}^\pi \right)_{\m{i}}$ and $ \wick{\m{Y}^{\pi,\eta}}_{\ti}$ are the $\m{i}$-th and $\ti$-th row of $\m{Y}^{\pi}$ and $\wick{\m{Y}^{\pi,\eta}}$ respectively. Therefore, the quadratic form of $\m{Y}^{\pi\top} \m{Y}^{\pi}$ may be bounded by a linear combination of Wick quadratics, namely
\begin{align}
    \x^\top \m{Y}^{\pi \top}\m{Y}^{\pi}\x &= \sum_{a,b} \x_a \x_b \left(\sum_{\m{i}} \m{Y}^{\pi}_{\m{i},a}\m{Y}^{\pi}_{\m{i},b}\right)\\
    &=\sum_{\m{i}}  Q\left( \x \odot \left( \m{Y}^{\pi} \right)_{\m{i}}\right)\\
    &\lesssim \sum_{\eta }\sum_{\m{i}} c_\eta(\m{i}) Q\left( \x \odot \wick{\m{Y}^{\pi,\eta}}_{\ti} \right) . \label{eq:q_0_quad_1}
\end{align}
Since the Wick products only depend on the contributing subindices $j\in \mathcal{W}_{\pi,\eta}$, we may re-index the sum across 
all ordered tuples $\m{i}$ to a sum across $\tilde{\m{i}} \in \mathcal{I}_{\pi,
\eta}$ and $\mathcal{U}_{\pi,\eta}({\tilde{\m{i}}})$, i.e.,
\begin{align}
    \sum_{\m{i}} c_\eta(\m{i}) Q\left( \x \odot \wick{\m{Y}^{\pi,\eta}}_{\ti} \right) &= \sum_{\m{i}} c_\eta(\m{i}) \sum_{a,b} \x_a \wick{\m{Y}^{\pi,\eta}}_{\ti,a}  \x_b \wick{\m{Y}^{\eta }}_{\ti,b}  \\&= N_\eta \sum_{\mathcal{I}_{\pi,\eta}} \prod_{j\in \mathcal{W}_{\pi,\eta}} \cov_{i_j}^{\eta_j}   \sum_{\mathcal{U}_{\pi,\eta}(\tilde{\m{i}}) } \prod_{j\not\in \mathcal{W}_{\pi,\eta}} \cov_{i_j}^{\eta_j} \left(\sum_{a,b} \x_a \wick{\m{Y}^{\pi,\eta}}_{\ti,a}  \x_b \wick{\m{Y}^{\pi,\eta}}_{\ti,b}  \right).
\end{align}
Once again, given that $\cov_{j} \asymp j^{-\pwrco}$ and $\pwrco>1$, we can uniformly bound the coefficient \[ \prod_{j\in \mathcal{W}_{\pi,\eta}} \cov_{i_j}^{\eta_j}\sum_{\mathcal{U}_{\pi,\eta}(\ti)} \prod_{j \not \in \mathcal{W}_{\pi,\eta}} \cov_{i_j}^{\eta_j}\] by a constant depending on $\pwrco$ and $\p$,
\begin{equation}
    \prod_{j\in \mathcal{W}_{\pi,\eta}} \cov_{i_j}^{\eta_j}  \sum_{\mathcal{U}_{\pi,\eta}(\ti)}  \prod_{j\not\in \mathcal{W}_{\pi,\eta}} \cov_{i_j}^{\eta_j} \lesssim \left(\sum_{j} j^{-\pwrco} \right)^{\p}.
\end{equation}
Plugging this back into \eqref{eq:q_0_quad_1} we obtain the following
\begin{align}
    \x^\top \m{Y}^{\pi \top}\m{Y}^{\pi}\x
     &\lesssim \sum_{\eta} \sum_{a,b} \x_a \wick{\m{Y}^{\pi,\eta}}_{\ti,a}  \x_b \wick{\m{Y}^{\pi,\eta}}_{\ti,b}  \\
    &= \sum_{\eta} \x^\top \wick{\m{Y}^{\pi,\eta}}^\top \wick{\m{Y}^{\pi,\eta}}\x,
\end{align}
Therefore, it suffices to upper bound the spectrum of $\wick{\m{Y}^{\pi,\eta}}^{\top}\wick{\m{Y}^{\pi,\eta}}$.

\subsubsection{General Wick decomposition for $A_\xi \geq 0$}\label{sec:A_xi>0}
\begin{lemma}\label{lemma:lower_order_wick_decomp}
There exists a finite collection of index sets  $\mathcal{I}_\theta$ and Wick matrices $\wick{\m{Y}^{\theta}} \in \R^{ | \mathcal{I}_\theta| \times d}$ such that
\begin{equation}
    \D^{A_\xi} \m{Y}^{\pi\top}\m{Y}^{\pi}\D^{A_\xi} \preceq  \sum_{\theta} \wick{\m{Y}^{\theta}}^{\top} \wick{\m{Y}^{\theta}}.
\end{equation}
\end{lemma}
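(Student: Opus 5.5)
We follow the $A_\xi=0$ argument of Section~\ref{sec:A_xi = 0}: decompose each entry into Wick products, then use convexity to dominate the Gram matrix by a finite sum of Wick Gram matrices. The difference now is that the entries of $\m{Y}^{\pi}\D^{A_\xi}$ carry the extra factor $\inner{\m{y}^{(a)}}{\m{y}^{(a)}}^{A_\xi}$.

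\textbf{Step 1: reduce to polynomials in the $\m{y}^{(a)}_{k}$.} Since $A_\xi=(\p-C_\xi)/2$ is a nonnegative integer, we expand
\[
\inner{\m{y}^{(a)}}{\m{y}^{(a)}}^{A_\xi}=\sum_{k_1,\dots,k_{A_\xi}}\prod_{m=1}^{A_\xi}\big(\m{y}^{(a)}_{k_m}\big)^2 .
\]
Then the $(\m{i},a)$ entry of $\m{Y}^{\pi}\D^{A_\xi}$ is a sum over $\m{k}=(k_1,\dots,k_{A_\xi})$ of monomials $\prod_j \m{y}^{(a),\pi_j}_{i_j}\prod_m \m{y}^{(a),2}_{k_m}$. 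Collecting coincident indices, each such monomial has the form $\prod_{r}\m{y}^{(a),\rho_r}_{u_r}$, where $\m{u}$ is an ordered tuple of distinct indices and $\rho$ is a composition of $C_\xi+2A_\xi=\p$. The number of ways an index pattern $(\m{i},\m{k})$ collapses to a given pair $(\m{u},\rho)$ is bounded by a constant depending only on $\p$.

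\textbf{Step 2: Wick-decompose each monomial.} We apply Lemma~\ref{lemma:prod_Wick_decomp} exactly as in \eqref{eq:A_xi_wick_2}. Independence of the coordinates means only same-label pairings survive, so
\[
\prod_r \m{y}^{(a),\rho_r}_{u_r}=\sum_\eta N_\eta\prod_r \cov_{u_r}^{\eta_r}\wick{\prod_r \m{y}^{(a),\rho_r-2\eta_r}_{u_r}}.
\]
Each pair $\theta=(\text{collapse pattern},\rho,\eta)$ determines a set of contributing positions $\mathcal{W}_\theta=\{r:\rho_r\neq 2\eta_r\}$, an index set $\mathcal{I}_\theta$ of restricted tuples $\ti$, and a Wick matrix $\wick{\m{Y}^\theta}_{\ti,a}=\wick{\prod_{r\in\mathcal{W}_\theta}\m{y}^{(a),\rho_r-2\eta_r}_{u_r}}$. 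There are finitely many $\theta$, with the number depending only on $\p$. The case $\mathcal{W}_\theta=\emptyset$ gives a constant row, which is harmless since it is still a PSD Gram term.

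\textbf{Step 3: convexity and resummation.} For $\x\in\R^d$ we write $\x^\top\D^{A_\xi}\m{Y}^{\pi\top}\m{Y}^{\pi}\D^{A_\xi}\x=\sum_{\m{i}}Q\big(\x\odot(\m{Y}^\pi\D^{A_\xi})_{\m{i}}\big)$. Each row is a sum, over $\m{k}$ and $\theta$, of coefficients $c_\theta(\m{i},\m{k})\ge0$ times Wick rows. The obstacle is that the sum over $\m{k}$ is infinite (of size $v^{A_\xi}$), so we cannot use plain finite convexity with a constant. Instead we use a weighted Cauchy--Schwarz inequality:
\[
\Big(\sum_\beta c_\beta z_\beta\Big)^2\le\Big(\sum_\beta c_\beta\Big)\sum_\beta c_\beta z_\beta^2 .
\]
Here the total weight $\mathcal{S}_{\m{i}}=\sum_{\m{k},\eta}c\lesssim\big(\sum_j j^{-\pwrco}\big)^{\p}$ is uniformly bounded, because every collapsed index carries at least one $\cov$ factor and $\pwrco>1$. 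This gives
\[
\x^\top\D^{A_\xi}\m{Y}^{\pi\top}\m{Y}^{\pi}\D^{A_\xi}\x\lesssim\sum_\theta\sum_{\ti\in\mathcal{I}_\theta}\Big(\sum_{\text{non-contributing indices}}c_\theta\Big)Q\big(\x\odot\wick{\m{Y}^\theta}_{\ti}\big).
\]
We bound the inner coefficient sum uniformly in $\ti$ by $\big(\sum_j j^{-\pwrco}\big)^{\p}$, exactly as in Section~\ref{sec:A_xi = 0}. We then absorb the constants into $\wick{\m{Y}^\theta}$ by rescaling, which the statement allows since only the existence of Wick matrices is claimed.

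\textbf{Main point to verify.} The step needing care is the bookkeeping in Step 3. The summable weights must attach to every index that is summed but does not appear in the Wick product, and this includes the $k_m$ coming from $\D^{A_\xi}$. A $k_m$ index that survives into the Wick product is indexed by $\ti$, so it is not summed away. A $k_m$ index that is fully paired contributes $\cov_{k_m}\le$ a summable weight, so the coefficients remain uniformly bounded.
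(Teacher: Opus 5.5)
You follow the paper's route essentially step for step: expand $\D^{A_\xi}$, merge coincident indices into a composition of $\p$, Wick-decompose with same-label pairings, apply weighted convexity, then re-index by the contributing indices. The gap is where you yourself located the difficulty. For $A_\xi\ge 1$, the total weight $\mathcal{S}_{\m{i}}$ in your weighted Cauchy--Schwarz is not uniformly bounded. The paper asserts the same bound with the same coefficients, so agreeing with it does not close the gap.

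\textbf{Why the total weight is unbounded.} Your coefficient from Step~2 is $c=(\text{mult})\,N_\eta\prod_r\cov_{u_r}^{\eta_r}$. The $\cov$-scale of the unpaired factors does not sit in $c$; it sits inside the Wick row, since $\wick{\m{y}_k^{m}}=\cov_k^{m/2}\wick{\W_k^{m}}$.
\begin{itemize}
\item A $\D$-index $k$ that survives unpaired ($\eta_k=0$) contributes $\cov_k^{0}=1$ to $c$. Hence $\sum_{\m{k}}c\gtrsim v$, and the claim that every collapsed index carries a $\cov$ factor is false for $c$.
\item Your remark that such a $k$ is a row index of $\wick{\m{Y}^\theta}$ holds only after Cauchy--Schwarz. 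Inside $Q\big(\x\odot(\m{Y}^\pi\D^{A_\xi})_{\m{i}}\big)$, the Wick rows with different surviving $k$ are distinct vectors added together, and Cauchy--Schwarz over them costs their total weight.
\end{itemize}

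\textbf{This is a genuine failure, not bookkeeping.} Take $\p=3$, $C_\xi=A_\xi=1$, $\pi=(1)$. Row $i$ then contains $\sum_{s\ne i}\wick{\m{y}_i\m{y}_s^2}$, all with $\eta=(0,0)$. Choose $\x\propto\sum_{s\in S}\m{u}_s$ with $(\m{u}_s)_a=\W_{1a}(\W_{sa}^2-1)$, $S=\{L+1,\dots,2L\}$ and $L\asymp d^{1/(2\pwrco)}$. A direct moment computation (a sketch, not fully rigorous) gives $\x^\top\D\m{Y}^{\pi\top}\m{Y}^{\pi}\D\x\gtrsim L$. Meanwhile every Gram term of your decomposition is $O(1)$ at this $\x$, including the one with covariance $\cov_i\cov_s^2$. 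So with the Wick matrices of \eqref{eq:wick_Y_Gamma}, no $d$-independent constant works. Absorbing a growing constant by rescaling would destroy the downstream bound $(\log^{\p-1}(j+1)/j)^{\pwrco}$.

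\textbf{The repair.} Move the scale of the surviving $\D$-factors into the weights.
\begin{itemize}
\item Set $g_\beta=\prod_{k\ \mathrm{surviving}}\cov_k^{(\rho_k-2\eta_k)/2}$. Apply your weighted Cauchy--Schwarz with weights $c_\beta g_\beta$ and values $z_\beta/g_\beta$.
\item The total weight is now $\lesssim(\sum_j j^{-\pwrco})^{A_\xi}$, because each $\D$-index carries $\cov_k^{\rho_k/2}$ with $\rho_k\ge 2$.
\item You obtain $\sum_\beta (c_\beta/g_\beta)\,z_\beta^2$. After re-indexing, these are Gram matrices of Wick matrices whose diagonal covariance gives a surviving $\D$-index the exponent $\rho_k/2$, instead of $\rho_k-2\eta_k$.
\end{itemize}
These are not the matrices of \eqref{eq:wick_Y_Gamma}, but two facts still hold. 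Every exponent is still $\ge 1$. When $A_\xi\ge 1$ there are at most $C_\xi+A_\xi=(\p+C_\xi)/2\le \p-1$ contributing indices. The lattice count then gives $\lambda_j\lesssim\big(\log^{\p-2}(j+1)/j\big)^{\pwrco}$, which the principal term still dominates. The concentration argument of Lemma~\ref{lemma:wick_eig_uppr_bd} only uses the Wick structure of the columns and an upper bound on the diagonal covariance, so it carries over.
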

\begin{proof}
The general case follows a similar Wick decomposition as in the case of $A_\xi=0$, except now we have to also take into account extra terms gained from the diagonal matrix $\D^{A_\xi}$. Using \eqref{eq:inner_prod_power_p0}, we may expand the entry of $\D^{A_\xi}$ to be
\begin{align}
    \D^{A_\xi}_{a,a} = \inner{\m{y}^{(a)}}{\m{y}^{(a)}}^{A_\xi} = \sum_{r=1}^{A_\xi} \sum_{\psi} \sum_{s_1<\cdots<s_r} r! \, \m{y}^{(a),2\psi_1}_{s_1} \dots \m{y}^{(a),2\psi_r}_{s_r}
    = \sum_{r=1}^{A_\xi} \sum_{\psi}\sum_{\m{s}} r!\,\prod_{t=1}^r \m{y}^{(a),2\psi_t}_{s_t},
\end{align}
where $\psi$ is a composition of $A_\xi$ of size $r$ and $\m{s}=(s_1<\cdots<s_r)$ an ordered tuple. Combining this with $\m{Y}^{\pi}$ gives us two sets of indices, $\m{i} = (i_1<\cdots<i_\ell)$ and $\m{s} =(s_1<\cdots<s_r)$, to track. In particular, 
 \begin{align}\label{eq:general_YD_ia}
     \left( \m{Y}^\pi \D^{A_\xi} \right)_{\m{i},a} &= \sum_{r=1}^{A_\xi} \sum_{\psi} \sum_{\m{s}}r! \, \prod_{t=1}^r \m{y}^{(a),2\psi_t}_{s_t} \prod_{j=1}^\ell \m{y}^{(a), \pi_j}_{i_j}.
 \end{align}
Notice if $\m{i}$ and $\m{s}$ overlap, then for some $j\in [\ell]$, $\m{y}^{(a)}_{i_j}$ will be raised to a higher power than $\pi_j$. Therefore, before proceeding we must consider all potential overlap scenarios.

The following construction will be used to derive a new composition that accounts for overlaps between $\m{i}$ and $\m{s}$. Define the overlap function $\Theta:[\ell] \times [r]  \to \{0,1 \}$
such that if $\Theta(j,t) = 1$ then $\Theta(j',t) = 0$ for all $j'\neq j$. 
That is, for any ordered tuple pair $(\m{i},\m{s})$, there will exist an overlap function $\Theta$ that encodes the overlaps in $(\m{i},\m{s})$ by the rule $\Theta(j,t) = 1$ if and only if $i_j = s_t$. We will write $s_t \not\in \m{i}$ if $s_t$ does not overlap into $\m{i}$. Let $n = \abs{\Theta^{-1}(\{1\})}$
be the number of overlaps. Then, there are $(r-n)$ many $s_t$ that do not overlap into $\m{i}$. Let $\Phi:\{1,\cdots,r-n \} \to \{ t\,; \,\Theta(j,t) = 0 \, \forall j \in [\ell] \}$ be a bijective and monotonic increasing map. We may interpret the image of $\Phi$ as the remaining indices of $\m{s}$ that do not overlap into $\m{i}$.
Now, given an overlap function $\Theta$, there are $\ell+(r-n)$ unique indices. To assign the appropriate power to each index, we define a new composition $\rho = (\rho_1,\cdots, \rho_\ell,\rho_{\ell+1},\dots,\rho_{\ell+(r-n)})$ by $\rho_j = \pi_j+ \sum_{t=1}^r 2\psi_t \cdot \Theta(t,j)$ for $j\in [\ell]$ and $\rho_{\ell+t} = 2\psi_{\Phi(t)}$ for $t\in [r-n]$. Fixing an $\m{i}$, there is a bijection from $(\m{i},\m{s})$ to the set of all
pairs $\left(\rho, (s_1<\cdots<s_{r-n}) \right)$, where $\rho$ is a composition as described above and $s_t \not \in \m{i}$ for all $t \in [r-n]$.

\begin{example}{(Composition $\rho$)}
    If $r = 1$, then there can either be a single overlap between $\m{i}$ and $\m{s}$ or no overlaps. As such,
    there are only $\ell+1$ possible overlap functions; the overlap function that is identically $0$ and the overlap function that maps a single pair $(j,t)$ to $1$ and everything else to $0$.
    In the case of the former, $\Phi$ is a singleton map. 
    Therefore, the set of all possible new compositions $\rho$ are the single overlap cases $(\pi_1+2\psi_1, \cdots, \pi_\ell),\, \cdots,\,(\pi_1,\cdots,\pi_\ell+2\psi_1)$ and the no overlap case $(\pi_1,\cdots,\pi_\ell,2\psi_1)$. 
\end{example}
\noindent Therefore, we may re-index the sum that appears in \eqref{eq:general_YD_ia} to
\begin{align}
    \sum_{\m{s}}  \prod_{t=1}^r \m{y}^{(a),2\psi_t}_{s_t} \prod_{j=1}^\ell \m{y}^{(a), \pi_j}_{i_j} = \sum_{\rho} \sum_{\substack{s_1<\cdots<s_{r-n}\\s_t \not \in \m{i}}} \prod_{t=1}^{r-n} \m{y}^{(a),\rho_{\ell+t}}_{s_t} \prod_{j=1}^\ell \m{y}^{(a),\rho_j}_{i_j}. \label{eq:distinct_s}
\end{align}
Going forward, we work only with the expression on the right hand side of \eqref{eq:distinct_s}, hence when we write $(\m{i},\m{s})$ it will be understood that $\m{s} = (s_1<\cdots<s_{r-n})$ and that $\m{s}$ and $\m{i}$ do not overlap. In the case that we fix an $\m{i}$, we will write $\m{s} \cap \m{i} = \emptyset$ to emphasize that $\m{s}$ and $\m{i}$ do not overlap. 
We may now apply Lemma \ref{lemma:prod_Wick_decomp} onto each Gaussian product as in the case $A_\xi=0$ and map each Feynman diagram to a tuple $\eta = (\eta_1,\cdots, \eta_{\ell},\eta_{\ell+1},\cdots \eta_{\ell+(r-n)})$ where $\eta_j$ for $j\in [\ell]$ and $\eta_{\ell+t}$ for $t \in [r-n]$ corresponds respectively to the number of $\m{y}^{(a)}_{i_j}$ and $\m{y}^{(a)}_{s_t}$ pairs formed. Let $N_\eta$ be the number of Feynman diagrams mapped to $\eta$, then 
\begin{equation}
    \prod_{t=1}^{r-n} \m{y}^{(a),\rho_{\ell+t}}_{s_t} \prod_{j=1}^\ell \m{y}^{(a),\rho_j}_{i_j} = \sum_{\eta}N_\eta \prod_{t=1}^{r-n} \cov_{s_t}^{\eta_{\ell+t}} \prod_{j=1}^\ell \cov_{i_j}^{\eta_j} \wick{\prod_{t=1}^{r-n} \m{y}^{(a),\rho_{\ell+t} - 2\eta_{\ell+t}}_{s_t} \prod_{j=1}^\ell \m{y}^{(a),\rho_j-2\eta_j}_{i_j}}.
\end{equation}
Putting all this together,
\begin{equation}\label{eq:q>0_Y_ia_0}
    \left( \m{Y}^\pi \D^{A_\xi} \right)_{\m{i},a}  = \sum_{r=1}^{A_\xi} r! \sum_{\psi} \sum_{\rho} \sum_{ \substack{\m{s}, \, \m{s} \cap \m{i}=\emptyset} }  \sum_{\eta}N_\eta \prod_{t=1}^{r-n} \cov_{s_t}^{\eta_{\ell+t}} \prod_{j=1}^\ell \cov_{i_j}^{\eta_j} \wick{\prod_{t=1}^{r-n} \m{y}^{(a),\rho_{\ell+t} - 2\eta_{l+t}}_{s_t} \prod_{j=1}^\ell \m{y}^{(a),\rho_j-2\eta_j}_{i_j}}.
\end{equation} 
Notice that the contributing terms to the Wick product are determined only by the sub-indices $j$ and $t$ such that $\rho_j -2\eta_j \neq 0$ and $\rho_{\ell+t}-2\eta_{\ell+t}\neq 0$. 
Let $\mathcal{W}_{\rho,\eta}^1 = \{j \in [\ell]\,;\, \rho_j -2\eta_j \neq 0\}$ and $\mathcal{W}_{\rho,\eta}^2 = \{t \in [r-n] \,; \rho_{\ell+t}-2\eta_{\ell+t}\neq 0\}$ be the contributing Wick sub-indices. Define the projection of $(\m{i},\m{s})$ onto $\mathcal{W}_{\rho,\eta}^1$ and $\mathcal{W}_{\rho,\eta}^2$ respectively by $P_{\mathcal{W}_{\rho,\eta}}(\m{i},\m{s}) = \left( (i_j)_{j\in \mathcal{W}_{\rho,\eta}^1}, (s_t)_{t \in \mathcal{W}_{\rho,\eta}^2} \right)$. Let $\mathcal{I}_{\rho,\eta} = \text{Image}\left( P_{\mathcal{W}_{\rho,\eta}}\right)$. Which is to say that $(\tilde{\m{i}},\tilde{\m{s}}) \in \mathcal{I}_{\rho,\eta}$ if and only if there exists an ordered tuple pair $(\m{i},\m{s})$ such that $\tilde{\m{i}}= (i_j)_{j \in \mathcal{W}^1_{\rho,\eta}}$ and $\tilde{\m{s}}= (s_t)_{t\in \mathcal{W}_{\rho,\eta}^2}$. On the other hand, given $(\tilde{\m{i}},\tilde{\m{s}}) \in \mathcal{I}_{\rho,\eta}$, let $\mathcal{U}_{\rho,\eta}(\tilde{\m{i}},\tilde{\m{s}})$ be the set of all ordered tuple pairs  $\left((i_j)_{j \not\in \mathcal{W}^1_{\rho,\eta}}, (s_t)_{t \not \in \mathcal{W}_{\rho,\eta}^2} \right)$ such that the concatenation of $(\tilde{\m{i}},\tilde{\m{s}})$ and $\left((i_j)_{j \not\in \mathcal{W}^1_{\rho,\eta}}, (s_t)_{t \not \in \mathcal{W}_{\rho,\eta}^2} \right)$ form an ordered tuple pair $(\m{i},\m{s})$.   

\begin{example}{(General restricted ordered tuple $\mathcal{I}_{\rho,\eta}$)}
If $r = 2$, $\mathcal{W}_{\rho,\eta}^1 = \{2,4\}$ and $\mathcal{W}_{\rho,\eta}^2 = \{2\}$. Then, $((i_2,i_4),(s_2))$ is in $\mathcal{I}_{\rho,\eta}$ provided that $i_2, s_2 \neq 1$ and $i_4> i_2+1$. 
\end{example}
\noindent By separating sub-indices into contributing and non-contributing terms within the Wick product, we may write the summand of \eqref{eq:q>0_Y_ia_0} as
\begin{align}
     &\prod_{t=1}^{r-n} \cov_{s_t}^{\eta_{\ell+t}} \prod_{j=1}^\ell \cov_{i_j}^{\eta_j} \wick{\prod_{t=1}^{r-n} \m{y}^{(a),\rho_{\ell+t} - 2\eta_{\ell+t}}_{s_t} \prod_{j=1}^\ell \m{y}^{(a),\rho_j-2\eta_j}_{i_j}} \\
     &\quad =\prod_{t \not \in \mathcal{W}_{\rho,\eta}^2} \cov_{s_t}^{\eta_{\ell+t}} \prod_{j \not\in \mathcal{W}_{\rho,\eta}^1} \cov_{i_j}^{\eta_{j}}  \prod_{t \in \mathcal{W}_{\rho,\eta}^2} \cov_{s_t}^{\eta_{\ell+t}}  \prod_{j \in \mathcal{W}_{\rho,\eta}^1} \cov_{i_j}^{\eta_{j}}  \wick{\prod_{t\in \mathcal{W}_{\rho,\eta}^2} \m{y}^{(a),\rho_{\ell+t} - 2\eta_{\ell+t}}_{s_t} \prod_{j\in \mathcal{W}_{\rho,\eta}^1} \m{y}^{(a),\rho_j-2\eta_j}_{i_j}}.
\end{align}
For notational compactness, we will denote $\Gamma$ to be the set of all admissible tuples $(r,\psi,\rho,\eta)$, as well as 
\begin{equation}
    c_{r,\psi,\rho,\eta}(\m{i},\m{s}) = r! N_\eta \prod_{t=1}^{r-n}\cov_{s_t}^{\eta_{\ell+t}} \prod_{j=1}^\ell \cov_{i_j}^{\eta_j} \quad \text{ and }\quad  \mathcal{S}_{\m{i}} = \sum_{\Gamma} \sum_{\substack{\m{s},\, \m{s} \cap \m{i}=\emptyset } }  c_{r,\psi,\rho,\eta}(\m{i},\m{s}).
\end{equation}
Note that since $\cov_{j}\asymp j^{-\pwrco}$, $\mathcal{S}_{\m{i}} $ is bounded uniformly over $\m{i}$ by a constant depending only on $\pwrco$ and $\p$.
Let us also define $\wick{ \m{Y}^{r,\psi,\rho,\eta} }\in \R^{\abs{\mathcal{I}_{\rho,\eta}}\times d}$ by
\begin{equation}\label{eq:wick_Y_Gamma}
\wick{\m{Y}^{r,\psi,\rho,\eta}}_{(\tilde{\m{i}},\tilde{\m{s}}),a}=\wick{ \prod_{j\in \mathcal{W}_{\rho,\eta}^1} \m{y}^{(a),\rho_j-2\eta_j}_{i_j} \prod_{t\in \mathcal{W}_{\rho,\eta}^2} \m{y}^{(a),\rho_{\ell+t} - 2\eta_{\ell+t}}_{s_t}}  .
\end{equation}
We may now write \eqref{eq:q>0_Y_ia_0} as
\begin{equation}\label{eq:q>0_Y_ia_0_2}
    \left( \m{Y}^\pi \D^q \right)_{\m{i},a} =  \sum_{\Gamma} \sum_{\substack{\m{s},\, \m{s} \cap \m{i}=\emptyset } }  c_{r,\psi,\rho,\eta}(\m{i},\m{s})  \wick{\m{Y}^{r,\psi,\rho,\eta}}_{(\tilde{\m{i}},\tilde{\m{s}}),a},
\end{equation}
where it is understood that $(\ti,\ts) = P_{\mathcal{W}_{\rho,\eta}}(\m{i},\m{s})$. It will be convenient to define the quadratic $Q:\R^d \to \R$ by $\x \mapsto \left( \sum_{a}\x_a \right)^2$. Upon taking the quadratic form of $\D^{A_\xi}\m{Y}^{\pi \top}\m{Y}^{\pi}\D^{A_\xi}$, we see that
\begin{align}
    \x^\top \left( \D^{A_\xi} \m{Y}^{\pi\top} \m{Y}^{\pi} \D^{A_\xi} \right)\x &= \sum_{a,b}\x_a \x_b \sum_{\m{i}}  \left( \m{Y}^\pi \D^{A_\xi} \right)_{\m{i},a} \left( \m{Y}^\pi \D^{A_\xi}\right)_{\m{i},b}\\
    &=\sum_{\m{i}} Q\left( \x \odot \left( \m{Y}^\pi \D^{A_\xi} \right)_{\m{i}} \right),
\end{align}
where $ \left( \m{Y}^\pi \D^{A_\xi} \right)_{\m{i}}$ is the $\m{i}$-th column of  $\m{Y}^\pi \D^{A_\xi}$. 
However, by \eqref{eq:q>0_Y_ia_0_2} we may expand $\x \odot \left( \m{Y}^\pi \D^{A_\xi} \right)_{\m{i}}$ into a linear combination of Hadarmard products,
\begin{equation}
    \x \odot \left( \m{Y}^\pi \D^{A_\xi} \right)_{\m{i}} = \sum_{\Gamma}\sum_{\substack{ \m{s},\, \m{s} \cap \m{i}=\emptyset }} c_{r,\psi,\rho,\eta}(\m{i},\m{s}) \left( \x \odot \left( \wick{\m{Y}^{r,\psi,\rho,\eta}} \right)_{(\ti,\ts)} \right),
\end{equation}
Therefore, by convexity of $Q$ it follows that 
\begin{align}
    \x^\top \left( \D^{A_\xi} \m{Y}^{\pi \top} \m{Y}^{\pi} \D^{A_\xi} \right)\x 
    &\lesssim \sum_\Gamma \sum_{\m{i}}   \sum_{\substack{ \m{s}, \,\m{s} \cap \m{i}=\emptyset }} c_{r,\psi,\rho,\eta}(\m{i},\m{s}) Q\left( \x \odot \left( \wick{\m{Y}^{r,\psi,\rho,\eta}} \right)_{(\ti,\ts)} \right) \\
    &= \sum_\Gamma \sum_{(\m{i},\m{s})} c_{r,\psi,\rho,\eta}(\m{i},\m{s}) Q\left( \x \odot \left( \wick{\m{Y}^{r,\psi,\rho,\eta}} \right)_{(\ti,\ts)} \right) .  \label{eq:q>0_Y_ia_0_3}
\end{align}
In particular we see that the Wick quadratics now only depend on contributing Wick indices given by $\mathcal{W}^1_{\rho,\eta}$ and $\mathcal{W}_{\rho,\eta}^2$. We can re-index the sum across $(\m{i},\m{s})$ to $(\tilde{\m{i}},\tilde{\m{s}}) \in \mathcal{I}_{\rho,\eta}$ and $\mathcal{U}_{\rho,\eta}(\tilde{\m{i}},\tilde{\m{s}})$ to get
\begin{align}
    \sum_{(\m{i},\m{s})} c_{ r,\psi,\rho,\eta}(\m{i},\m{s}) = \sum_{\mathcal{I}_{\rho,\eta}} r! N_\eta \prod_{j \in \mathcal{W}_{\rho,\eta}^1} \cov_{i_j}^{\eta_j}\prod_{t \in \mathcal{W}_{\rho,\eta}^2} \cov_{s_t}^{\eta_{\ell+t}} \sum_{\mathcal{U}_{\rho,\eta}(\tilde{\m{i}},\tilde{\m{s}}) } \prod_{j \not\in \mathcal{W}_{\rho,\eta}^1} \cov_{i_j}^{\eta_j}\prod_{t \not\in \mathcal{W}_{\rho,\eta}^2} \cov_{s_t}^{\eta_{\ell+t}}.
\end{align}
Once again $\cov_{j} \asymp j^{-\pwrco}$, so the summand is uniformly bounded by a constant depending only on $\pwrco$ and $\p$. Namely,
\begin{equation}
    \prod_{j \in \mathcal{W}_{\rho,\eta}^1} \cov_{i_j}^{\eta_j}\prod_{t \in \mathcal{W}_{\rho,\eta}^2} \cov_{s_t}^{\eta_{\ell+t}} \sum_{\mathcal{U}_{\rho,\eta}(\tilde{\m{i}},\tilde{\m{s}}) } \prod_{j \not\in \mathcal{W}_{\rho,\eta}^1} \cov_{i_j}^{\eta_j}\prod_{t \not\in \mathcal{W}_{\rho,\eta}^2} \cov_{s_t}^{\eta_{\ell+t}} \lesssim \left(\sum_j j^{-\pwrco} \right)^{\p}.
\end{equation}
As such, after re-indexing the sum of \eqref{eq:q>0_Y_ia_0_3} and bounding the coefficients uniformly, the sum of \[ Q\left( \x \odot \left( \wick{\m{Y}^{r,\psi,\rho,\eta}} \right)_{(\ti,\ts)} \right)\] across $(\ti,\ts) \in \mathcal{I}_{\rho,\eta}$ is simply the quadratic
\begin{equation}
     \x^\top \wick{\m{Y}^{r,\psi,\rho,\eta}}^\top \wick{\m{Y}^{r,\psi,\rho,\eta}} \x. 
\end{equation}
Putting all this together, we obtain
\begin{align}\label{eq:PSD_wick_decomp}
    \x^\top \left( \D^{A_\xi} \m{Y}^{\pi \top} \m{Y}^{\pi} \D^{A_\xi} \right)\x   \lesssim  \sum_{\Gamma}  \x^\top \wick{\m{Y}^{r,\psi,\rho,\eta}}^\top \wick{\m{Y}^{r,\psi,\rho,\eta}} \x. 
\end{align}
Finally, to conclude our proof we recall that the rows of $\wick{\m{Y}^{r,\psi,\rho,\eta}}$ are indexed by $\mathcal{I}_{\rho,\eta}$ and since $\Gamma$ is finite, there are only finitely many index sets $\mathcal{I}_{\rho,\eta}$.
\end{proof}

\begin{lemma}\label{lemma:wick_eig_uppr_bd}
Fix $r, \psi,\rho$ and $\eta$. 
Define,
\begin{equation}
    \theta^\star = \max_{j \in [\ell+r-n]} \frac{j}{\sum_{t=\ell+r-n -j+1}^{\ell+r-n} \rho_{t}-2\eta_t} \quad \text{and} \quad \mu = \# \left\{ j\in [\ell+r-n] \,:\, \frac{j}{\sum_{t=\ell+r-n -j+1}^{\ell+r-n} \rho_t-2\eta_t} = \theta^\star  \right\}.
\end{equation}
Let $\wick{\m{Y}} = \wick{ \m{Y}^{r,\psi,\rho,\eta}} $ be defined as in \eqref{eq:wick_Y_Gamma} and $\p' = \sum_{j\in \mathcal{W}_{\rho,\eta}^1} \rho_j -2\eta_j + \sum_{t\in \mathcal{W}_{\rho,\eta}^2} \rho_{\ell+t}-2\eta_{\ell+t}$.
If $\pwrco >1$, there exists constants $c_1$ and $c_2$ depending only on $\pwrco$ and $\p'$ so that with probability $1-O\left(\log^{-4}(d) \right)$, for all $1\leq j \leq  c_1 d \log^{-(\p'+1)}(d)$,
\begin{equation}\label{eq:subleading_Head_boud}
    \lambda_j\left(\frac{1}{d}\wick{\m{Y}}^\top \wick{\m{Y}} \right) \leq c_2 \left( \frac{\log(j+1)^{\mu-1}}{j}\right)^{\pwrco/\theta^\star}.
\end{equation}
Moreover, if $j>c_1 d \log^{-(\p'+1)}(d)$ then
\begin{equation}\label{eq:subleading_tail_bound}
    \lambda_j\left( \frac{1}{d}\wick{\m{Y}}^\top \wick{\m{Y}} \right) \leq c_2 \left( \frac{ \log^{\mu+\p'}(j+1)}{j} \right)^{\pwrco/\theta^\star}.
\end{equation}
\end{lemma}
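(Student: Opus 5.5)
We mirror the proof of Proposition~\ref{prop:principal}, with the Wick matrix $\wick{\m{Y}}$ playing the role of $\m{Y}^{\m{1}^\p}$; only upper bounds are needed.

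\textbf{Factorisation.} First factor out the covariance. Each entry of $\wick{\m{Y}}$ is a Wick monomial in the independent Gaussians $\m{y}^{(a)}_{i}=\sqrt{\cov_i}\W_{i,a}$. By homogeneity of Wick products in independent coordinates,
\begin{equation}
\wick{\m{Y}}_{(\ti,\ts),a}=\Big(\prod_{u}\cov_{u}^{m_u/2}\Big)\,\prod_u h_{m_u}(\W_{u,a}),
\end{equation}
where $u$ runs over the contributing indices of $(\ti,\ts)$, $m_u$ is the corresponding reduced exponent ($\rho_j-2\eta_j$ or $\rho_{\ell+t}-2\eta_{\ell+t}$), and $h_m$ is the monic Hermite polynomial. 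Write this as $\wick{\m{Y}}=\sqrt{\tilde\cov}\,\tilde\W$. Here $\tilde\cov$ is diagonal with entries $\prod_u \cov_u^{m_u}$. The rows of $\tilde\W$ have independent columns, which are centered, uncorrelated, and have constant variance $\prod_u m_u!$ (Lemma~\ref{lemma:wick_cor}). Each entry is a Gaussian polynomial of degree $\p'$. After normalising, $\frac1d\wick{\m{Y}}^\top\wick{\m{Y}}$ is exactly of the form treated in Step~1 and Step~2 of Proposition~\ref{prop:principal}, with $\p$ replaced by $\p'$ and $\cov^{\m{1}^\p}$ replaced by $\tilde\cov$.

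\textbf{Counting lemma.} The analogue of Lemma~\ref{lemma:Id_pi_eig_bounds} is the main obstacle:
\begin{equation}
\lambda_j(\tilde\cov)\lesssim\Big(\frac{\log^{\mu-1}(j+1)}{j}\Big)^{\pwrco/\theta^\star}.
\end{equation}
I would prove it by counting lattice points: bound $\#\{(i_u):\prod_u i_u^{m_u}\le T\}$ with the ordering constraints dropped, which only enlarges the count. Since the $i_u$ are ordered increasingly, the largest exponents should attach to the smallest indices. The count is then governed by the ``suffix'' structure appearing in $\theta^\star$: the condition $\prod_u i_u^{m_u}\le T$ restricted to the last $j$ coordinates contributes $T^{j/\sum m}$.

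To carry this out, integrate iteratively over the coordinates, i.e.\ compare with $\int\mathbf 1\{\sum m_u x_u\le \log T\}$ in logarithmic coordinates, where the $x_u$ are ordered. This should give
\begin{equation}
N(T)\lesssim T^{\theta^\star}\log^{\mu-1}T,
\end{equation}
where $\mu$ counts the suffixes attaining the maximum ratio $\theta^\star$, since each tie contributes one logarithm, exactly as in the $\m{1}^\p$ case with $\theta^\star=1$ and $\mu=\p$. Inverting $j\asymp N(\lambda^{-1/\pwrco})$ yields the displayed eigenvalue bound. This inversion, together with checking that ties produce exactly $\mu-1$ logarithms, is the step I expect to need the most care.

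\textbf{Dyadic decomposition.} With the counting lemma in hand, set $k_*=c_1d\log^{-(\p'+1)}(d)$ and split $\tilde\cov$ into blocks by its top $k_*$ eigenvalues and then dyadic shells. On the head block, hypercontractivity (Lemma~\ref{lem:hypercontractivity}) gives the tail bound on $M$ with exponent $1/\p'$ and the moment bound $(k_*\log^{\p'}d/d)^q$. Lemma~\ref{lemma:minsker_2} then yields $\norm{\frac1d\tilde\W_0\tilde\W_0^\top-\Id}\le\frac12$ with probability $1-O(\log^{-4}d)$, hence $\lambda_j(\frac1d\wick{\m{Y}}_0^\top\wick{\m{Y}}_0)\le\frac32\lambda_j(\tilde\cov)$.

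On the shells, take $\beta\in(1,\pwrco/\theta^\star)$; this interval is nonempty if $\theta^\star<\pwrco$. Otherwise use any $\beta>1$ together with the fact that $\pwrco/\theta^\star>1$ whenever the sum converges. The same union bound gives $\frac1d\tilde\W_k^\top\tilde\W_k\preceq2^{\beta k}\Id$ for all $k$. Summing $2^{\beta k}\norm{\tilde\cov_k}$ then gives a total $\lesssim\lambda_{k_*}(\tilde\cov)$. Weyl's inequality gives \eqref{eq:subleading_Head_boud} for $j\le k_*$.

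\textbf{Large $j$.} For $j>k_*$, use monotonicity, $\lambda_j\le\lambda_{k_*}$, together with $\log(j+1)\asymp\log d$. This gives $\lambda_{k_*}\lesssim\big(\log^{\mu-1}(d)\log^{\p'+1}(d)/d\big)^{\pwrco/\theta^\star}$, which is at most the bound in \eqref{eq:subleading_tail_bound}.
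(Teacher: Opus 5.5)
Apart from the counting step, your plan is the paper's proof. You factorise $\wick{\m{Y}}=\sqrt{\cov^{\rho,\eta}}\,\wick{\W^{\rho,\eta}}$, control a head block of size $k_*\asymp d\log^{-(\p'+1)}(d)$ with Lemma~\ref{lemma:minsker_2} and hypercontractive bounds on $M$, sum dyadic shells with $\beta\in(1,\pwrco/\theta^\star)$, and use monotonicity for $j>k_*$. That interval for $\beta$ is never empty, since every reduced exponent is $\ge 1$ and so $\theta^\star\le 1<\pwrco$; no case split is needed. The paper does not re-derive the counting bound. It quotes Lemma~\ref{lemma:non_ID_pi_eig_bdd}, which is Theorem~\ref{thm:ordered} combined with the inversion Lemma~\ref{lemma:solve_X_from_N_2}.

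Your route to that bound fails as written. Dropping the ordering leaves the unordered count of Theorem~\ref{thm:unordered}, which grows like $T^{1/\min_u m_u}(\log T)^{m-1}$, where $m$ is the multiplicity of the minimal exponent. Since $\theta^\star\le 1/\min_u m_u$, with strict inequality unless the last exponent is minimal, this cannot give $T^{\theta^\star}$. For example, with exponents $(1,2)$, the number of pairs $i<s$ with $is^2\le T$ is $\asymp T^{2/3}=T^{\theta^\star}$. Over all pairs the count is $\sim\zeta(2)T$. The suffix sums in $\theta^\star$ arise only from a full chain $i_1<\cdots<i_k$, through the nonnegative forward differences in the proof of Theorem~\ref{thm:ordered}. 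So you cannot both discard the order and integrate over ordered $x_u$.

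Be aware, though, that no full chain is available here, so the paper's appeal to Lemma~\ref{lemma:non_ID_pi_eig_bdd} has the same hole. The rows of $\wick{\m{Y}}$ are indexed by pairs $(\ti,\ts)$. Each of $\ti$ and $\ts$ is increasing and the two are disjoint, but they carry no mutual order. Take $\p=3$, $C_\xi=A_\xi=1$, $\pi=\psi=(1)$, no overlap and $\eta=0$. Then $\rho=(1,2)$ and $\theta^\star=2/3$. But $\cov^{\rho,\eta}_{(i,s)}=\cov_i\cov_s^2$ ranges over all $i\neq s$, so $\lambda_j(\cov^{\rho,\eta})\asymp j^{-\pwrco}$. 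The two-sided head estimate carries this to $\frac1d\wick{\m{Y}}^\top\wick{\m{Y}}$ for $j\le k_*$, contradicting the claimed bound $\lesssim j^{-3\pwrco/2}$. So the statement as written fails in general.

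What can be proved comes from splitting the count over the finitely many interleavings of $\ti$ and $\ts$ and applying Theorem~\ref{thm:ordered} to each. This replaces $(\theta^\star,\mu)$ by its worst case over interleavings. Your cruder unordered bound, with exponent $\pwrco\min_u m_u$ and a factor $\log^{m-1}$, also works. Either version suffices for Proposition~\ref{prop:tail}. A reduced exponent $1$ can only come from an odd part of $\pi$, and a subleading term has at most $\p-2$ such parts. Hence, in the critical case of decay exponent $\pwrco$, the logarithmic power is at most $\p-3<\p-1$.

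A related minor point: distinct pairs can index identical rows, for example $((i),(s))$ and $((s),(i))$ when $\rho-2\eta=(2,2)$. So the entries of a column of $\wick{\W^{\rho,\eta}}$ are not uncorrelated, as both you and the paper assert; their covariance is only bounded in operator norm. This is harmless for upper bounds.
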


\begin{proof}
The proof for a lower order Wick matrix follows a very similar sequence of steps as the proof of Proposition \ref{prop:principal}, with the main difference being that $\wick{\m{Y}}$ has column vectors of Wick products as opposed to $\m{Y}^{\m{1}^\p}$ with column vectors of products of Gaussians. Fortunately, the columns of $\wick{\m{Y}}$ have diagonal covariance and can be expressed as a Gaussian polynomial. As such, the columns of $\wick{\m{Y}}$ share similar concentration properties as the columns of $\m{Y}^{\m{1}^\p}$. For completeness, we provide the full details below. We note that constants independent of $d$ may change from line to line.

Let $\wick{\m{W}^{\rho,\eta}} \in \R^{\abs{\mathcal{I}_{\rho,\eta}} \times d}$ and $\cov^{\rho,\eta}\in \R^{\abs{\mathcal{I}_{\rho,\eta}} \times \abs{\mathcal{I}_{\rho,\eta}} }$ be defined respectively by
\begin{align}
\wick{\W^{\rho,\eta}}_{(\ti,\ts),a}&= \wick{ \prod_{j\in \mathcal{W}_{\rho,\eta}^1} \W^{\rho_j-2\eta_j}_{i_j,a} \prod_{t\in \mathcal{W}_{\rho,\eta}^2} \W^{\rho_{\ell+t} - 2\eta_{\ell+t}}_{s_t,a}} ,\\ \cov^{\rho,\eta} &= \text{Diag}\left( \left\{ \prod_{j\in \mathcal{W}_{\rho,\eta}^1} \cov_{i_j}^{\rho_j-2\eta_j} \prod_{t\in \mathcal{W}_{\rho,\eta}^2} \cov_{s_t}^{\rho_{\ell+t}-2\eta_{\ell+t}} \right\}_{(\ti,\ts)} \right).
\end{align}
Then, recalling that $\m{y}^{(a)} = \sqrt{\cov}\W_{\col(a)}$, we may write \[\wick{\m{Y}} = \sqrt{\cov^{\rho,\eta}}\wick{\W^{\rho,\eta}}, \quad  \text{and} \quad  \wick{\m{Y}}^\top \wick{\m{Y}} = \wick{\m{W}^{\rho,\eta}}^\top \cov^{\rho,\eta}\wick{\m{W}^{\rho,\eta}}.\]
Let 
\begin{equation}
k_* \asymp \left( d \log^{-(\p'+1)}(d)\right)   , 
\end{equation}
and $\mathcal K_0$ be the collection of all ordered tuple pairs $(\ti,\ts)$ such that $\cov^{\rho,\eta}_{(\ti,\ts)}$ are among the first $k_*$ largest eigenvalues. Similarly, for $k\geq 1$, let $\mathcal{K}_{k}$ be ordered tuple pairs $(\ti,\ts)$ such that $\cov^{\rho,\eta}_{(\ti,\ts)}$ are among the $(k_*2^{k-1}+1)$ to the $(k_* 2^{k})$ largest eigenvalues. Let  $\wick{\W^{\rho,\eta}_{k}} \in \R^{\abs{\mathcal{K}_k} \times d} $ and $\cov^{\rho,\eta}_{k}  \in \R^{\abs{\mathcal{K}_k} \times \abs{\mathcal{K}_k} }$ be the respective row restriction of $\wick{\W^{\rho,\eta}}$ and $\cov^{\rho,\eta}$ onto $\mathcal{K}_{k}$. We may decompose $\frac{1}{d}\wick{\m{W}^{\rho,\eta}}^\top \cov^{\rho,\eta}\wick{\m{W}^{\rho,\eta}}$ into 
\begin{equation}\label{eq:W_rho_eta_decomp1}
    \frac{1}{d}\wick{\m{W}^{\rho,\eta}}^\top \cov^{\rho,\eta}\wick{\m{W}^{\rho,\eta}} = \frac{1}{d} \wick{\m{W}^{\rho,\eta}_{0}}^\top \cov^{\rho,\eta}_{0}\wick{\m{W}^{\rho,\eta}_{0}}+ \sum_{k=1}^\infty \frac{1}{d} \wick{\m{W}^{\rho,\eta}_{k}}^\top \cov^{\rho,\eta}_{k}\wick{\m{W}^{\rho,\eta}_{k}}.
\end{equation}
As in the the proof of Proposition \ref{prop:principal}, for each $k \geq 0$, we will bound the spectrum of

\noindent
$\frac{1}{d}\sqrt{\cov^{\rho,\eta}_{k}} \wick{\W^{\rho,\eta}_{k}} \wick{\W^{\rho,\eta}_{k}}^\top \sqrt{\cov_{k}^{\rho,\eta}}$. For a fixed $k \geq 0$, we apply the following rank-$1$ decomposition,
\begin{equation}
    \frac{1}{d}\sqrt{\cov^{\rho,\eta}_{k}} \wick{\W^{\rho,\eta}_{k}} \wick{\W^{\rho,\eta}_{k}}^\top \sqrt{\cov_{k}^{\rho,\eta}} = \sum_{a=1}^d \frac{1}{d} \sqrt{\cov_{k}^{\rho,\eta}} \wc^{(a)}\wc^{(a) \top} \sqrt{\cov_{k}^{\rho,\eta}},
\end{equation}
where $\wc^{(a)}$ are the column vectors of $\wick{\W_{k}^{\rho,\eta}}$. 
By Lemma \ref{lemma:non_ID_pi_eig_bdd},
we know that for all $j\geq 1$,
\begin{equation}\label{eq:rho_eta_eig_uppr_bdd}
    \lambda_j(\cov^{\rho,\eta}) \lesssim \left( \frac{\log^{\mu-1}(j+1)}{j} \right)^{\pwrco/\theta^\star},
\end{equation}
so for the remainder of the proof we will bound the operator norm of $\frac{1}{d}\wick{\W_{k}^{\rho,\eta}} \wick{\W_{k}^{\rho,\eta} }^\top$. Let $(\rho-2\eta)! = \prod_{j\in \mathcal{W}_{\rho,\eta}^1} (\rho_j-2\eta_j)! \prod_{t\in \mathcal{W}_{\rho,\eta}^2}(\rho_{\ell+t}-2\eta_{\ell+t})!$. By Lemma \ref{lemma:wick_cor}, we easily compute the first moment to be
\begin{equation} \label{eq:wc_mean}
    \frac{1}{d}\E \left [ \wc^{(a)} \wc^{(a)\top} \right] = \frac{(\rho-2\eta)!}{d} \Id. 
\end{equation}
For the second moment, we consider the quadratic form. Let $\m{u}\in \R^{\abs{\mathcal{K}_k}}$, then
\begin{align}
   \m{u}^\top \E \left[ \left( \wc^{(a)} \wc^{(a)\top} \right)^2\right] \m{u} &= \E \left[ \wc^{(a)\top}\wc^{(a)} \m{u}^{\top} \wc^{(a)}\wc^{(a)\top} \m{u} \right]\\
   &
   \leq \norm{\wc^{(a)\top}\wc^{(a)}}_{L^2} \norm{ \m{u}^\top \wc^{(a)}\wc^{(a)\top}\m{u}}_{L^2}. \label{eq:second_moment_col}
\end{align}
It is easy to check via the triangle inequality that there exists $C_{\rho,\eta}>0$ such that
\begin{equation}
    \norm{\wc^{(a)\top}\wc^{(a)}}_{L^2} \leq C_{\rho,\eta} \abs{\mathcal{K}_k}.
\end{equation}
For the latter term of \eqref{eq:second_moment_col}, we observe from Lemma \ref{lemma:wick_decomp} that \[(\wc^{(a)})_{(\ti,\ts)} = \wick{ \prod_{j\in \mathcal{W}_{\rho,\eta}^1} \W^{\rho_j-2\eta_j}_{i_j,a} \prod_{t\in \mathcal{W}_{\rho,\eta}^2} \W^{\rho_{\ell+t} - 2\eta_{\ell+t}}_{s_t,a}}\] is a degree-$\p'$ Gaussian polynomial. Thus, $\m{u}^\top \wc^{(a)}\wc^{(a)\top}\m{u}$ is a degree-$2\p'$ Gaussian polynomial. Computing its variance we get
\begin{align}
    \Var\left[  \m{u}^\top \wc^{(a)}\wc^{(a)\top} \m{u} \right] &= \E \left[ \left( \m{u}^\top \wc^{(a)}\wc^{(a)\top} \m{u}\right)^2 \right]-\E \left[ \m{u}^\top \wc^{(a)}\wc^{(a)\top} \m{u} \right]\\
    &\leq \max_{ \mathcal{K}_k^4 } \E \left[ \wc^{(a)}_{(\ti_1,\ts_1) } \wc^{(a)}_{(\ti_2,\ts_2)} \wc^{(a)}_{(\ti_3,\ts_3))} \wc^{(a)}_{(\ti_4,\ts_4))}  \right] \norm{\m{u}^{\otimes 2}}_F^4 - (\rho-2\eta)! \\
    &
    \leq V_{\rho,\eta}.
\end{align}
So by Lemma \ref{lem:hypercontractivity}, for all $t>0$
\begin{equation}
    \prob \left( \abs{ \m{u}^\top \wc^{(a)}\wc^{(a)\top} \m{u}  - (\rho-2\eta)! } >t \right) \lesssim \exp\left( - \left( \frac{t^2}{V_{\rho,\eta}} \right)^{1/2\p'} \right).
\end{equation}
A straight forward computation then shows
\begin{align}
    \E \left[ \left( \m{u}^\top \wc^{(a)}\wc^{(a)\top} - (\rho-2\eta)! \right)^2 \right] \leq 4 V_{\rho,\eta} (2(\p'-1))^{\p'-1}.
\end{align}
Plugging this back into \eqref{eq:second_moment_col} we see that 
\begin{align}
    \m{u}^\top \E \left[ \left( \wc^{(a)} \wc^{(a)\top} \right)^2\right] \m{u} \leq C_{\rho,\eta} \abs{\mathcal{K}_k}.
\end{align}
Therefore, if we choose $\m{V}^2 = \frac{C_{\rho,\eta} \abs{\mathcal{K}_k }}{d} \Id$ then $\V^2 \succeq \sum_{a=1}^d \E \left[ \left(\frac{\wc^{(a)}\wc^{(a) \top}}{d} \right)^2 \right] $
and $r(\V^2) = \abs{\mathcal{K}_{k}}$. Before proceeding with the rest of the proof, we will obtain some bounds on 
 $M = \max_{a \in [d]} \frac{1}{d}\norm{\wc^{(a)} \wc^{(a) \top} - (\rho-2\eta)! \Id}$. It is easy to check that
\begin{align}
M &\leq \max_{a \in [d] } \frac{1}{d} \norm{\wc^{(a)} \wc^{(a) \top} } + \frac{(\rho-2\eta)!}{d} \\
&\leq \max_{a \in [d]} \frac{1}{d} \norm{ \wc^{(a) \top} \wc^{(a)} - (\rho-2\eta)! \abs{\mathcal{K}_{k}} } + \frac{2(\rho-2\eta)! \abs{\mathcal{K}_{k}}}{d}.
\end{align}
So, by Lemma \ref{lemma:wick_Max_bd} for all $q>\p'$ and $t>0$, there exists $C_{\rho,\eta}$ such that
\begin{equation}\label{eq:lower_mmt_bdd}
    \E \left[ M^q \right] \lesssim \left(\frac{\abs{\mathcal{K}_{k}} \log(d)^{\p'}}{d} \right)^q \quad \text{ and } \quad \prob \left( M \geq t\right) \lesssim d \exp \left( -\left( \frac{dt}{C_{\rho,\eta} \abs{\mathcal{K}_k}} \right)^{1/\p'} \right).
\end{equation}
We are now ready to bound the spectrum of $\wick{\m{Y}}^\top\wick{\m{Y}}$.

\noindent\textit{Step 1: Upper bound on $\frac{1}{d} \wick{\m{W}^{\rho,\eta}_{0}}^\top \cov^{\rho,\eta}_{0}\wick{\m{W}^{\rho,\eta}_{0}}$.} 

If $k=0$, then $\abs{\mathcal{K}_{0}}= k_*$. We apply Lemma \ref{lemma:minsker_2} with $q=2$ and $t=1/24$ to get
\begin{align}
    \prob \left(\norm{ \frac{1}{d} \wick{\W^{\rho,\eta}_{0}}\wick{\W^{\rho,\eta}_{0}}^\top - (\rho-2\eta)! \Id }> \frac{1}{2} \right) &\lesssim k_* \exp \left( - \frac{d}{C_{\rho,\eta} k_* \log(d)^{\p'}} \right) \\
    &\quad + d \exp \left(- \left( \frac{d}{C_{\rho,\eta} k_*} \right)^{1/\p'} \right) + \left( \frac{  k_* \log(d)^{\p'}}{d}\right)^4. \label{eq:wick_head_prob}
\end{align}
There exists $c_1$ that only depends on $\rho$ and $\eta$ such that if $k_* = c_1 d \log^{-(\p'+1)}(d)$, then \eqref{eq:wick_head_prob} decays like $O\left(\log^{-4}(d)\right)$. That is, with probability at least $1-O\left(\log^{-4}(d)\right)$
\begin{equation}
    \frac{1}{2} \cov^{\rho,\eta}_0 \preceq \frac{1}{d}\sqrt{\cov^{\rho,\eta}_{0}} \wick{\W^{\rho,\eta}_{0}} \wick{\W^{\rho,\eta}_{0}}^\top \sqrt{\cov_{0}^{\rho,\eta}} \preceq \frac{3}{2}\cov^{\rho,\eta}_0.
\end{equation}
By the Courant-Fischer min-max theorem, we see that for all $1\leq j \leq k_*$,
\begin{align}
    \lambda_j\left(\frac{1}{d}\sqrt{\cov^{\rho,\eta}_{0}} \wick{\W^{\rho,\eta}_{0}} \wick{\W^{\rho,\eta}_{0}}^\top \sqrt{\cov_{0}^{\rho,\eta}}\right) \asymp  \lambda_j\left( \cov_{0}^{\rho,\eta} \right).
\end{align}
In particular, the eigenvalues of $\cov^{\rho,\eta}_0$ may be bounded above by \eqref{eq:rho_eta_eig_uppr_bdd} to give
\begin{equation}\label{eq:W_rho_eta_head_bound}
     \lambda_j\left(\frac{1}{d}\sqrt{\cov^{\rho,\eta}_{0}} \wick{\W^{\rho,\eta}_{0}} \wick{\W^{\rho,\eta}_{0}}^\top \sqrt{\cov_{0}^{\rho,\eta}}\right) \lesssim \lambda_j\left( \cov_{0}^{\rho,\eta} \right) \lesssim \left( \frac{\log(j+1)^{\mu-1}}{j} \right)^{\pwrco/\theta^\star}.
\end{equation}

\noindent \textit{Step 2: Upper bound on the tail: $ \sum_{k=1}^{\infty}\frac{1}{d}\sqrt{\cov^{\rho,\eta}_{k}} \wick{\W^{\rho,\eta}_{k}} \wick{\W^{\rho,\eta}_{k}}^\top \sqrt{\cov_{k}^{\rho,\eta}} $}.

If $k \geq 1$, then $\abs{\mathcal{K}_{k}} = k_* 2^{k-1}$. We once again apply Lemma \ref{lemma:minsker_2} with $q=2$ and $t=2^{\beta k}/12$ for $\beta>1$ to get,
\begin{align}
    \prob \left(\norm{ \frac{1}{d} \wick{\W^{\rho,\eta}_{k}}\wick{\W^{\rho,\eta}_{k}}^\top - (\rho-2\eta)! \Id }> 2^{\beta_k} \right) & \lesssim \frac{c_1 d}{\log(d)^{\p'+1}}2^{k-1} \exp \left( -2^{k(\beta-1)} \log(d) \right) \\
    &\quad +d \exp \left( - C_{\rho,\eta}\left( 2^{k(\beta-1)/\p'} \log(d)\right)\right) +  \left(\frac{2^{4k(1-\beta)}}{\log^4(d)} \right),
\end{align}
which is summable and moreover,
\begin{align}
    \prob \left( \bigcup_{k=1}^\infty \norm{ \frac{1}{d} \wick{\W^{\rho,\eta}_{k}}\wick{\W^{\rho,\eta}_{k}}^\top - (\rho-2\eta)! \Id }> 2^{\beta k} \right) = O\left( \frac{1}{\log^4(d)} \right).
\end{align}
Which is to say that with probability $1-O\left( \log^{-4}(d) \right)$, for all $k \geq 1$,
\begin{equation}
    \norm{ \frac{1}{d}\wick{\W^{\rho,\eta}_{k}} \wick{\W_{k}^{\rho,\eta}}^\top } \leq (\rho-2\eta)! + 2^{\beta k}.
\end{equation}
Thus, by \eqref{eq:rho_eta_eig_uppr_bdd}
\begin{align}
    \norm{ \frac{1}{d} \wick{\W^{\rho,\eta}_{k}}^\top \cov_{k}^{\rho,\eta} \wick{\W_{k}^{\rho,\eta}}} &\leq \left( (\rho-2\eta)! + 2^{\beta k}\right)\norm{\cov_{k}^{\rho,\eta}}\\
    &\leq \left( (\rho-2\eta)! + 2^{\beta k}\right) \lambda_{k_* 2^{k-1}} \left( \cov^{\rho,\eta}\right)\\
    &
    \leq \left( (\rho-2\eta)! + 2^{\beta k}\right) \left( \frac{\log(k_*2^{k-1}+1)^{\mu-1}}{k_*2^{k-1}} \right)^{\pwrco/\theta^\star}\\
    &\leq  \left( (\rho-2\eta)! + 2^{\beta k}\right) \left(  \frac{ \log^{\mu-1}(k_*2^{k-1}+1) }{ 2^{k-1}\log^{\mu-1}(k_*+1) } \right)^{\pwrco/\theta^\star} \left( \frac{\log^{\mu-1}(k_*+1)}{k_*}\right)^{\pwrco/\theta^\star}.
\end{align}
If we choose $1<\beta<\pwrco/\theta^\star$ then 
\begin{equation}
    \sum_{k=1}^\infty  \left(\frac{\log(k_*2^{k-1}+1)^{\mu-1}}{\log^{\mu-1}(k_*+1)} \right)^{\pwrco/\theta^\star}2^{k(\beta-\pwrco/\theta^\star)}<\infty.
\end{equation}
Summing across all $k\geq 1$, we obtain
\begin{equation}\label{eq:W_rho_eta_tail_bound}
    \sum_{k=1}^\infty \norm{\frac{1}{d} \wick{\W^{\rho,\eta}_{k}}^\top \cov_{k}^{\rho,\eta} \wick{\W_{k}^{\rho,\eta}}} \lesssim \left( \frac{\log(k_*+1)^{\mu-1}}{k_*}\right)^{\pwrco/\theta^\star}.
\end{equation}
Combining \eqref{eq:W_rho_eta_decomp1}, \eqref{eq:W_rho_eta_head_bound} and \eqref{eq:W_rho_eta_tail_bound}, we obtain for all $1\leq j \leq k_*$,
\begin{equation}
    \lambda_j\left( \frac{1}{d}  \wick{\W^{\rho,\eta}}^\top \cov^{\rho,\eta} \wick{\W^{\rho,\eta}} \right) \lesssim \left( \frac{\log(j+1)^{\mu-1}}{j} \right)^{\pwrco/\theta^\star}.
\end{equation}
This concludes 
\eqref{eq:subleading_Head_boud}.

\noindent \textit{Step 3: Upper bound on $\lambda_j\left( \frac{1}{d}\wick{\m{Y}}^\top \wick{\m{Y}} \right)$ for $k_*<j\leq d$}.

 If $k_*<j \leq d$ then $\log(j+1)\asymp \log(d)$. Therefore,
\begin{align}
    \lambda_j\left( \frac{1}{d}\wick{\m{Y}}^\top \wick{\m{Y}} \right) &\leq \lambda_{k_*}\left( \frac{1}{d}\wick{\m{Y}}^\top \wick{\m{Y}} \right)\\
    &\lesssim \left( \frac{\log^{\mu-1}(k_*+1)}{k_*} \right)^{\pwrco/\theta^\star}\\
    &\lesssim \left( \frac{ \log^{\mu+\p'}(j+1)}{j} \right)^{\pwrco/\theta^\star}.
\end{align}
This concludes \eqref{eq:subleading_tail_bound}.

\end{proof}
\begin{prop}\label{prop:tail}
    For all $A_\xi \geq 0$ and $\pwrco>1$, there exist constants $C_1,C_2>0$ depending only on $\pwrco$ and $\p$ such that with probability at least $1-O(\log^{-4}(d))$, for all $1\leq j \leq C_1 d \log^{-(\p+1)}(d)$,
    \begin{equation}\label{eq:wick_prop2.1}
        \lambda_j \left(\frac{1}{d}\D^{A_\xi} \m{Y}^{\pi \top} \m{Y}^{\pi} \D^{A_\xi} \right) \leq C_2 \left( \frac{\log^{\p-1}(j+1)}{j} \right)^\pwrco.
    \end{equation}
    Moreover, if $C_1 d \log^{-(\p+1)}(d)<j \leq d$ then
    \begin{equation}\label{eq:wick_prop2.2}
         \lambda_j\left(\frac{1}{d}\D^{A_\xi} \m{Y}^{\pi \top} \m{Y}^{\pi} \D^{A_\xi} \right) \leq C_2 \left( \frac{\log^{2\p}(j+1)}{j} \right)^\pwrco.
    \end{equation}
\end{prop}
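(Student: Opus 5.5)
The plan is to combine Lemma \ref{lemma:lower_order_wick_decomp} with Lemma \ref{lemma:wick_eig_uppr_bd}, and then compare exponents. By Lemma \ref{lemma:lower_order_wick_decomp},
\[
\frac{1}{d}\D^{A_\xi}\m{Y}^{\pi\top}\m{Y}^{\pi}\D^{A_\xi} \preceq C\sum_{\theta\in\Gamma}\frac{1}{d}\wick{\m{Y}^{\theta}}^\top\wick{\m{Y}^{\theta}},
\]
where $\Gamma$ is finite with cardinality depending only on $\p$.

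\textbf{Step 1 (from a sum to individual terms).} For a finite sum of $N=|\Gamma|$ PSD matrices, Weyl's inequality gives
\[
\lambda_{j}\Bigl(\sum_\theta \m{M}_\theta\Bigr)\le \sum_\theta \lambda_{\lceil j/N\rceil}(\m{M}_\theta).
\]
Replacing $j$ by $\lceil j/N\rceil$ only changes constants in bounds of the form $(\log^{a}(j+1)/j)^{b}$. A union bound over the finitely many $\theta$ keeps the probability at $1-O(\log^{-4}(d))$. Each $\p'\le\p+2A_\xi\le 2\p$ is bounded, but the window $j\le c_1 d\log^{-(\p'+1)}(d)$ in Lemma \ref{lemma:wick_eig_uppr_bd} depends on $\p'$. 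I would handle this by rerunning that lemma's Step 1 with $k_*=C_1 d\log^{-(\p+1)}(d)$ for every $\theta$. Alternatively, one can note that $\p'$ for each term is bounded by $\p$ once the $\D$ contributions are absorbed; I would verify which of these works. Failing both, for $j$ between the two thresholds I would use the monotone tail bound \eqref{eq:subleading_tail_bound}, at the cost of extra log factors. Those extra factors are harmless provided the exponent comparison in Step 2 is strict.

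\textbf{Step 2 (the exponent comparison; main obstacle).} For each $\theta=(r,\psi,\rho,\eta)$ I need
\[
\Bigl(\frac{\log^{\mu-1}(j+1)}{j}\Bigr)^{\pwrco/\theta^\star}\lesssim\Bigl(\frac{\log^{\p-1}(j+1)}{j}\Bigr)^{\pwrco}.
\]
This requires either $\theta^\star<1$, in which case polynomial decay strictly faster absorbs any logs, or $\theta^\star=1$ with $\mu\le \p$. By definition,
\[
\theta^\star=\max_j \frac{j}{\text{sum of the last } j \text{ Wick exponents } \rho_t-2\eta_t}.
\]
Every nonzero Wick exponent is at least $1$, so $\theta^\star\le 1$. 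Equality holds exactly when the smallest exponents equal $1$. Moreover, $\mu$ counts the number of $j$ achieving the max, and this is at most the number of unit exponents. That number is at most $\p'$, the total Wick degree.

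The difficulty is that $\p'$ may exceed $\p$, because the $\D$ factors contribute $2\psi_t$. Here I would use parity. The entries of $\D$ contribute even exponents $2\psi_t$ on fresh indices $s_t$, so Wick exponents equal to $1$ arise only from the $\pi_j$ parts, possibly after subtracting $2\eta_j$. There are at most $\ell\le C_\xi\le\p$ such indices. Hence $\mu\le\p$. I should double check that $\mu$ counts only unit exponents in the extremal configuration; if the max were achieved with $j$ covering exponent-$2$ entries, the ratio would be $<1$, so this is fine.

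One more case needs attention. If $\mu=\p$ with $\theta^\star=1$, the bound gives $\log^{\p-1}$, which matches the principal term. Otherwise the bound is strictly smaller. This yields \eqref{eq:wick_prop2.1}.

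\textbf{Step 3 (the tail regime).} For $C_1d\log^{-(\p+1)}(d)<j\le d$, I would apply \eqref{eq:subleading_tail_bound} with the same comparison. When $\theta^\star=1$, the exponent $\mu+\p'$ needs to be at most $2\p$; I would control this using $\mu\le\p$, or else bound via monotonicity from $j=k_*$ as in Step 3 of Proposition \ref{prop:principal}. When $\theta^\star<1$, the polynomial gain swallows all logs. Together these give \eqref{eq:wick_prop2.2}.
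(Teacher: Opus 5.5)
Your proposal is correct and follows the paper's proof. The steps match: the Loewner domination from Lemma \ref{lemma:lower_order_wick_decomp}, then Lemma \ref{lemma:wick_eig_uppr_bd} applied termwise with a union bound over the finitely many tuples, then the comparison of $(\log^{\mu-1}(j+1)/j)^{\pwrco/\theta^\star}$ with $(\log^{\p-1}(j+1)/j)^{\pwrco}$ according to whether $\theta^\star<1$ or $\theta^\star=1$, and finally $C_1=\min c_1$. Your Weyl step $\lambda_j(\sum_\theta \m{M}_\theta)\le\sum_\theta\lambda_{\lceil j/N\rceil}(\m{M}_\theta)$ is more careful than the paper, which tacitly bounds $\lambda_j$ of the sum by the sum of the $\lambda_j$'s. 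Your justification of $\theta^\star\le 1$ and $\mu\le\p$ also supplies what the paper only asserts.

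The point you left open resolves in favor of your second option. Since $\pi$ is a composition of $C_\xi=\p-2A_\xi$, each entry of $\m{Y}^\pi\D^{A_\xi}$ has total degree exactly $\p$, so $\p'\le\p$. This gives both $C_1 d\log^{-(\p+1)}(d)\le c_1 d\log^{-(\p'+1)}(d)$ and $\mu+\p'\le 2\p$ in Step 3. Your ``failing both'' fallback via \eqref{eq:subleading_tail_bound} would not have sufficed when $\theta^\star=1$, because the comparison is then not strict.
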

\begin{proof}
    By Lemma \ref{lemma:lower_order_wick_decomp} or more precisely \eqref{eq:PSD_wick_decomp}, there exists a finite collection $\Gamma$ such that 
    \begin{equation}\label{eq:prop2_PSD_order}
        \frac{1}{d}\D^{A_\xi} \m{Y}^{\pi \top}\m{Y}^{\pi}\D^{A_\xi} \preceq \sum_{\Gamma} \frac{1}{d}\wick{\m{Y}^{r,\psi,\rho,\eta}}^\top \wick{\m{Y}^{r,\psi,\rho,\eta}}.
    \end{equation}
By Lemma \ref{lemma:wick_eig_uppr_bd}, we see that for each tuple $(r,\psi,\rho,\eta)\in \Gamma$, there exist $\theta^\star \in(0,1]$, $\mu \in [\p]$, $\p'\leq\p$ and positive constants $c_1$ and $c_2$ that depend only on $\pwrco$ and $\p'$ so that 
\begin{equation}
    \prob \left( \bigcup_{j=1}^{c_1 d\log^{-(\p'+1)}(d)}\left \{ \lambda_j\left( \frac{1}{d}\wick{\m{Y}^{r,\psi,\rho,\eta}}^\top \wick{\m{Y}^{r,\psi,\rho,\eta}} \right) > c_2 \left( \frac{\log^{\mu-1}(j+1)}{j}\right)^{\pwrco/\theta^\star} \right\} \right) = O\left(\log^{-4}(d)\right).\label{eq:wick_eig_tail_bdd}
\end{equation}
Notice that if $\theta^\star<1$ then
\begin{equation}\label{eq:j_limit}
    \left( \frac{ \log^{\mu-1}(j+1) }{j} \right)^{\pwrco/\theta^\star} \left( \frac{j}{\log^{\p-1}(j+1)}  \right)^{\pwrco} \xrightarrow{j \to \infty} 0.
\end{equation}
And if $\theta^\star = 1$, then \eqref{eq:j_limit} is either constant or converges to $0$ as $j\to \infty$. 
Therefore, there exists a positive constant $C_2$ depending only on $\pwrco$ and $\p$ such that for all  
$\theta^\star \in (0,1]$ given by its corresponding tuple $(r,\psi,\rho,\eta)$, and for all $j \geq 1$,
\begin{equation}
     c_2\left( \frac{\log^{\mu-1}(j+1)}{j}\right)^{\pwrco/\theta^\star} \leq C_2 \left( \frac{\log^{\p-1}(j+1)}{j}\right)^{\pwrco}. \label{eq:prop2_pwr_theta_pwr}
\end{equation}
Taking $C_1$ as the minimum across all $c_1$, we see that \eqref{eq:wick_eig_tail_bdd} implies
\begin{equation}
     \prob \left( \bigcup_{j=1}^{C_1 d\log^{-(\p+1)}(d)}\left \{ \lambda_j \left( \frac{1}{d}\wick{\m{Y}^{r,\psi,\rho,\eta}}^\top \wick{\m{Y}^{r,\psi,\rho,\eta}} \right) > C_2 \left( \frac{\log^{\p-1}(j+1)}{j}\right)^{\pwrco} \right\} \right) = O\left(\log^{-4}(d)\right).
\end{equation}
Taking the union bound across $\Gamma$ concludes \eqref{eq:wick_prop2.1}. If $C_1 d \log^{-(\p+1)}(d) <j \leq d$, then we also have for all $(r,\psi,\rho,\eta)\in \Gamma$,
\begin{equation}
     \lambda_j\left(\wick{\m{Y}^{r,\psi,\rho,\eta}}^\top \wick{\m{Y}^{r,\psi,\rho,\eta}} \right)  \leq c_2 \left( \frac{\log^{\mu+\p'}(j+1)}{j} \right)^{\pwrco/\theta^\star} \leq C_2 \left( \frac{\log^{2\p}(j+1)}{j} \right)^{\pwrco}.
\end{equation}
Thus, by \eqref{eq:prop2_PSD_order}, \eqref{eq:wick_prop2.2} follows.
\end{proof}
\begin{remark}
    Recall by Proposition \ref{prop:principal}, that the first $O(d\log^{-\p+1}(d))$ eigenvalues of $\cov^{\m{1}^\p}$ are asymptotically  $\left( j^{-1}\log^{\p-1}(j+1) \right)^{\pwrco}$. Thus, \eqref{eq:j_limit} tells us that as $v$ and $d$ increase, the eigenvalues of the principal term overwhelmingly dominate the eigenvalues of the lower order terms. As such, we can say that spectrum of $\E_\x \left[ \frac{1}{d} f(\W^\top \x)^{\otimes 2} \right]$ is primarily dictated by the principal term $\cov^{\m{1}^\p}$.
\end{remark}
\section{Proof of Theorem \ref{theorem:MAIN} }

Combining the bounds for the principal term and subleading terms, we are ready to finish the proof of Theorem~\ref{theorem:MAIN}.
\begin{proof}[Proof of Theorem \ref{theorem:MAIN}]
We note that constants that are independent to $v$ and $d$ may change from line to line. By \eqref{eq:Krep},
\begin{equation}
    \K \coloneqq \E_{\x} \left[ \frac{1}{d}f(\W^\top \x)^{\otimes 2}\right] = \sum_{\xi\in \mathcal{P}_2(2\p)} \sum_{\pi} \frac{1}{d}\D^{A_\xi} \m{Y}^{\pi \top}\m{Y}^{\pi} \D^{A_\xi},
\end{equation}
where we recall $\mathcal{P}_2(2\p)$ is the set of all pairings in $\{1,\cdots,2\p\}$, $C_\xi$ is the number of pairs $\xi$ between $\{1,\cdots,\p\}$ and $\{\p+1,\cdots,2\p\}$, $A_\xi = \frac{\p-C_\xi}{2}$ and $\pi$ a composition of $C_\xi$.
By PSD ordering, it is easy to see for all $1\leq j \leq d$,
\begin{equation}\label{eq:K_bd}
    \lambda_j \left(\frac{1}{d}\D^{A_\xi} \m{Y}^{\pi\top}\m{Y}^{\pi} \D^{A_\xi}  \right)\leq \lambda_j(\K) \leq \sum_{\xi \in \mathcal{P}_2(2\p)}\sum_{\pi} \lambda_j \left( \frac{1}{d}\D^{A_\xi} \m{Y}^{\pi \top}\m{Y}^{\pi} \D^{A_\xi} \right),
\end{equation}
where the lower bound holds for all choices of $\xi$ and $\pi$. In particular, if $\pi = \m{1}^\p$ then by Proposition \ref{prop:principal}, there exists $c_2>0$ such that with probability $1-O\left(\log^{-4}(d)\right)$, for all $1\leq j \leq c_1 d \log^{-(\p+1)}(d)$,
\begin{equation} \label{eq:main_lower_bd}
    \lambda_j(\K) \geq \lambda_j \left(\frac{1}{d}  \m{Y}^{\m{1}^\p \top}\m{Y}^{\m{1}^\p}   \right) \geq c_2 \left( \frac{\log^{\p-1}(j+1)}{j} \right)^\pwrco.
\end{equation}
As for the upper bound, by Proposition \ref{prop:principal} and \ref{prop:tail}, there exists $c_3>0$ such that with probability $1-O\left( \log^{-4}(d)\right)$, for any choice of $\xi$, $\pi$ and $1\leq j \leq c_1 d\log^{-(\p+1)}(d)$,
\begin{align} \label{eq:main_proof_uppr_bd}
    \lambda_j \left( \frac{1}{d}\D^{A_\xi} \m{Y}^{\pi \top}\m{Y}^{\pi} \D^{A_\xi} \right) \leq  c_3 \left( \frac{\log^{\p-1}(j+1)}{j} \right)^\pwrco.
\end{align}
Thus,
\begin{equation}
    \lambda_j(\K) \leq \sum_{\xi \in \mathcal{P}_2(2\p)} \sum_\pi c_3 \left( \frac{\log^{\p-1}(j+1)}{j} \right)^\pwrco 
    \leq c_3 \left( \frac{\log^{\p-1}(j+1)}{j} \right)^\pwrco. 
\end{equation}
Finally, to conclude the proof, if $j> c_1 d\log^{-(\p+1)}(d)$ then combining \eqref{eq:K_bd} with Proposition \ref{prop:principal} and Proposition \ref{prop:tail}, we see that on the event of \eqref{eq:main_proof_uppr_bd},
\begin{equation} 
    c_2 \left( \frac{\log^{-2}(j+1)}{j}\right)^\pwrco \leq \lambda_j\left(\E_{\x} \left[ \frac{1}{d}f(\W^\top \x)^{\otimes 2}\right] \right) \leq c_3 \left(  \frac{ \log^{2\p}(j+1)}{j}\right)^{\pwrco},
\end{equation}
and  \begin{equation}
   \lambda_d\left(\E_{\x} \left[ \frac{1}{d}f(\W^\top \x)^{\otimes 2}\right] \right) \geq c_2 \left( \frac{\log^{\p-1}(d)}{d}\right)^\pwrco.
\end{equation}
  \end{proof}

\paragraph{Acknowledgments} EP was supported
by an NSERC Discovery Grant RGPIN-2025-04643, an FRQNT-NSERC NOVA Grant, a CIFAR
Catalyst Grant, and a gift from Google Canada. YZ was partially supported by the Simons Grant MPS-TSM-00013944. This material is based upon work supported by the Swedish
Research Council under grant no. 202106594 while EP and YZ were in residence at Institut Mittag-Leffler in Djursholm, Sweden during the Fall of 2024.

\bibliographystyle{plain}
\bibliography{ref}

\begin{thebibliography}{10}

\bibitem{abdalla2024covariance}
Pedro Abdalla and Nikita Zhivotovskiy.
\newblock Covariance estimation: Optimal dimension-free guarantees for adversarial corruption and heavy tails.
\newblock {\em Journal of the European Mathematical Society}, 2(5), 2024.

\bibitem{apostol1976analytic}
Tom~M. Apostol.
\newblock {\em Introduction to Analytic Number Theory}.
\newblock Graduate Texts in Mathematics. Springer, 1976.

\bibitem{bach2025ztransform}
Francis Bach.
\newblock On the effectiveness of the z-transform method in quadratic optimization.
\newblock {\em arXiv preprint arXiv:2507.03404}, 2025.

\bibitem{bahri2021explaining}
Yasaman Bahri, Ethan Dyer, Jared Kaplan, Jaehoon Lee, and Utkarsh Sharma.
\newblock Explaining neural scaling laws.
\newblock {\em Proc. Natl. Acad. Sci. USA}, 121(27):Paper No. e2311878121, 8, 2024.

\bibitem{bartlett2020benign}
Peter~L Bartlett, Philip~M Long, G{\'a}bor Lugosi, and Alexander Tsigler.
\newblock Benign overfitting in linear regression.
\newblock {\em Proceedings of the National Academy of Sciences}, 117(48):30063--30070, 2020.

\bibitem{bartlett2021deep}
Peter~L Bartlett, Andrea Montanari, and Alexander Rakhlin.
\newblock Deep learning: a statistical viewpoint.
\newblock {\em Acta numerica}, 30:87--201, 2021.

\bibitem{benarous2025learning}
G{\'e}rard Ben~Arous, Murat~A. Erdogdu, Nuri~Mert Vural, and Denny Wu.
\newblock Learning quadratic neural networks in high dimensions: {SGD} dynamics and scaling laws.
\newblock {\em arXiv preprint arXiv:2508.03688}, 2025.

\bibitem{bordelon2026optimal}
Blake Bordelon and Francesco Mori.
\newblock Theory of optimal learning rate schedules and scaling laws for a random feature model.
\newblock {\em arXiv preprint arXiv:2602.04774}, 2026.

\bibitem{cheng2024dimension}
Chen Cheng and Andrea Montanari.
\newblock Dimension free ridge regression.
\newblock {\em The Annals of Statistics}, 52(6):2879--2912, 2024.

\bibitem{defilippis2024dimension}
Leonardo Defilippis, Bruno Loureiro, and Theodor Misiakiewicz.
\newblock Dimension-free deterministic equivalents and scaling laws for random feature regression.
\newblock {\em Advances in Neural Information Processing Systems}, 37:104630--104693, 2024.

\bibitem{delange1954ikehara}
Jean-Pierre Delange.
\newblock Généralisations du théorème de ikehara.
\newblock {\em Annales Scientifiques de l'École Normale Supérieure}, 71:213--242, 1954.

\bibitem{donhauser2021rotational}
Konstantin Donhauser, Mingqi Wu, and Fanny Yang.
\newblock How rotational invariance of common kernels prevents generalization in high dimensions.
\newblock In {\em International Conference on Machine Learning}, pages 2804--2814. PMLR, 2021.

\bibitem{el2010spectrum}
Noureddine El~Karoui.
\newblock The spectrum of kernel random matrices.
\newblock {\em Annals of statistics}, 38(1):1--50, 2010.

\bibitem{ferbach2025dana}
Damien Ferbach, Katie Everett, Gauthier Gidel, Elliot Paquette, and Courtney Paquette.
\newblock Dimension-adapted momentum outscales {SGD}.
\newblock {\em arXiv preprint}, 2025.

\bibitem{ferbach2026logarithmic}
Damien Ferbach, Courtney Paquette, Gauthier Gidel, Katie Everett, and Elliot Paquette.
\newblock Logarithmic-time schedules for scaling language models with momentum.
\newblock {\em arXiv preprint arXiv:2602.05298}, 2026.

\bibitem{he2016deep}
Kaiming He, Xiangyu Zhang, Shaoqing Ren, and Jian Sun.
\newblock Deep residual learning for image recognition.
\newblock In {\em Proceedings of the IEEE Conference on Computer Vision and Pattern Recognition (CVPR)}, pages 770--778, 2016.

\bibitem{hinrichs2021random}
Aicke Hinrichs, David Krieg, Erich Novak, Joscha Prochno, and Mario Ullrich.
\newblock Random sections of ellipsoids and the power of random information.
\newblock {\em Transactions of the American Mathematical Society}, 374(12):8691--8713, 2021.

\bibitem{hoffmann2022chinchilla}
Jordan Hoffmann, Sebastian Borgeaud, Arthur Mensch, Elena Buchatskaya, Trevor Cai, Eliza Rutherford, Diego de~Las~Casas, Lisa~Anne Hendricks, Johannes Welbl, Aidan Clark, Tom Hennigan, Eric Noland, Katie Millican, George van~den Driessche, Bogdan Damoc, Aurelia Guy, Simon Osindero, Karen Simonyan, Erich Elsen, Oriol Vinyals, Jack Rae, and Laurent Sifre.
\newblock An empirical analysis of compute-optimal large language model training.
\newblock In {\em Advances in Neural Information Processing Systems (NeurIPS)}, volume~35, 2022.

\bibitem{hu2024asymptotics}
Hong Hu, Yue~M Lu, and Theodor Misiakiewicz.
\newblock Asymptotics of random feature regression beyond the linear scaling regime.
\newblock {\em arXiv preprint arXiv:2403.08160}, 2024.

\bibitem{janson1997gaussian}
Svante Janson.
\newblock {\em Gaussian hilbert spaces}.
\newblock Number 129. Cambridge university press, 1997.

\bibitem{jirak2025concentration}
Moritz Jirak, Stanislav Minsker, Yiqiu Shen, and Martin Wahl.
\newblock Concentration and moment inequalities for sums of independent heavy-tailed random matrices.
\newblock {\em Probability Theory and Related Fields}, pages 1--28, 2025.

\bibitem{kaplan2020scaling}
Jared Kaplan, Sam McCandlish, Tom Henighan, Tom~B. Brown, Benjamin Chess, Rewon Child, Scott Gray, Alex Radford, Jeffrey Wu, and Dario Amodei.
\newblock Scaling laws for neural language models.
\newblock {\em arXiv preprint arXiv:2001.08361}, 2020.

\bibitem{kaushik2025general}
Chiraag Kaushik, Justin Romberg, and Vidya Muthukumar.
\newblock A general technique for approximating high-dimensional empirical kernel matrices.
\newblock {\em arXiv preprint arXiv:2511.03892}, 2025.

\bibitem{kingma2015adam}
Diederik~P. Kingma and Jimmy Ba.
\newblock Adam: A method for stochastic optimization.
\newblock {\em International Conference on Learning Representations (ICLR)}, 2015.

\bibitem{koltchinskii2015bounding}
Vladimir Koltchinskii and Shahar Mendelson.
\newblock Bounding the smallest singular value of a random matrix without concentration.
\newblock {\em International Mathematics Research Notices}, 2015(23):12991--13008, 2015.

\bibitem{krizhevsky2009learning}
Alex Krizhevsky.
\newblock Learning multiple layers of features from tiny images.
\newblock Technical report, University of Toronto, 2009.

\bibitem{liang2020multiple}
Tengyuan Liang, Alexander Rakhlin, and Xiyu Zhai.
\newblock On the multiple descent of minimum-norm interpolants and restricted lower isometry of kernels.
\newblock In {\em Conference on Learning Theory}, pages 2683--2711. PMLR, 2020.

\bibitem{lin2024scaling}
Licong Lin, Jingfeng Wu, Sham~M Kakade, Peter~L Bartlett, and Jason~D Lee.
\newblock Scaling laws in linear regression: Compute, parameters, and data.
\newblock {\em Advances in Neural Information Processing Systems}, 37:60556--60606, 2024.

\bibitem{liu2018accelerating}
Chaoyue Liu and Mikhail Belkin.
\newblock Accelerating {SGD} with momentum for over-parameterized learning.
\newblock {\em arXiv preprint arXiv:1810.13395}, 2018.

\bibitem{liu2026universal}
Yizhou Liu, Ziming Liu, Cengiz Pehlevan, and Jeff Gore.
\newblock Universal one-third time scaling in learning peaked distributions.
\newblock {\em arXiv preprint arXiv:2602.03685}, 2026.

\bibitem{maloney2022solvable}
Alexander Maloney, Daniel~A. Roberts, and James Sully.
\newblock A solvable model of neural scaling laws.
\newblock {\em arXiv preprint arXiv:2210.16859}, 2024.

\bibitem{mei2021generalization}
Song Mei, Theodor Misiakiewicz, and Andrea Montanari.
\newblock Generalization error of random feature and kernel methods: Hypercontractivity and kernel matrix concentration.
\newblock {\em Applied and Computational Harmonic Analysis}, 59:3--84, 2022.

\bibitem{mei2019generalization}
Song Mei and Andrea Montanari.
\newblock The generalization error of random features regression: Precise asymptotics and the double descent curve.
\newblock {\em Communications on Pure and Applied Mathematics}, 2019.

\bibitem{mendelson2012generic}
Shahar Mendelson and Grigoris Paouris.
\newblock On generic chaining and the smallest singular value of random matrices with heavy tails.
\newblock {\em Journal of Functional Analysis}, 262(9):3775--3811, 2012.

\bibitem{mendelson2018robust}
Shahar Mendelson and Nikita Zhivotovskiy.
\newblock Robust covariance estimation under {$L_4$--$L_2$} norm equivalence.
\newblock {\em The Annals of Statistics}, 48(3):1648--1664, 2020.

\bibitem{montgomery2006multiplicative}
Hugh~L. Montgomery and Robert~C. Vaughan.
\newblock {\em Multiplicative Number Theory I: Classical Theory}, volume~97 of {\em Cambridge Studies in Advanced Mathematics}.
\newblock Cambridge University Press, 2006.

\bibitem{narkiewicz1984number}
W{\l}adys{\l}aw Narkiewicz.
\newblock {\em Number Theory}.
\newblock World Scientific Publishing Company, 1984.

\bibitem{pandit2025universality}
Parthe Pandit, Zhichao Wang, and Yizhe Zhu.
\newblock Universality of kernel random matrices and kernel regression in the quadratic regime.
\newblock {\em Journal of Machine Learning Research}, 26(224):1--73, 2025.

\bibitem{paquette20244+}
Elliot Paquette, Courtney Paquette, Lechao Xiao, and Jeffrey Pennington.
\newblock 4+3 phases of compute-optimal neural scaling laws.
\newblock In {\em Advances in Neural Information Processing Systems (NeurIPS)}, volume~37, 2024.

\bibitem{rahimi2008random}
A.~Rahimi and B.~Recht.
\newblock Random features for large-scale kernel machines.
\newblock In {\em Advances in Neural Information Processing Systems (NeurIPS)}, pages 1177--1184, 2008.

\bibitem{ren2025emergence}
Yunwei Ren, Eshaan Nichani, Denny Wu, and Jason~D. Lee.
\newblock Emergence and scaling laws in {SGD} learning of shallow neural networks.
\newblock {\em arXiv preprint arXiv:2504.19983}, 2025.

\bibitem{reventos2025understanding}
Kenyon Revent{\'o}s, Surya Ganguli, and Matthieu Wyart.
\newblock Understanding the origin of neural scaling laws from first principles.
\newblock {\em arXiv preprint arXiv:2602.07488}, 2025.

\bibitem{schroder2024deterministic}
Dominik Schr{\"o}der, Hugo Cui, Daniil Dmitriev, and Bruno Loureiro.
\newblock Deterministic equivalent and error universality of deep random features learning.
\newblock {\em Journal of Statistical Mechanics: Theory and Experiment}, 2024(10):104017, 2024.

\bibitem{schroder2024asymptotics}
Dominik Schr{\"o}der, Daniil Dmitriev, Hugo Cui, and Bruno Loureiro.
\newblock Asymptotics of learning with deep structured (random) features.
\newblock In {\em Proceedings of the 41st International Conference on Machine Learning}, pages 43862--43894, 2024.

\bibitem{simonyan2014very}
Karen Simonyan and Andrew Zisserman.
\newblock Very deep convolutional networks for large-scale image recognition.
\newblock {\em arXiv preprint arXiv:1409.1556}, 2014.

\bibitem{tenenbaum2015introduction}
Gérald Tenenbaum.
\newblock {\em Introduction to Analytic and Probabilistic Number Theory}, volume 163 of {\em Graduate Studies in Mathematics}.
\newblock American Mathematical Society, 3rd edition, 2015.

\bibitem{tikhomirov2018sample}
Konstantin Tikhomirov.
\newblock Sample covariance matrices of heavy-tailed distributions.
\newblock {\em International Mathematics Research Notices}, 2018(20):6254--6289, 2018.

\bibitem{tsigler2023benign}
Alexander Tsigler and Peter~L Bartlett.
\newblock Benign overfitting in ridge regression.
\newblock {\em Journal of Machine Learning Research}, 24(123):1--76, 2023.

\bibitem{varre2022accelerated}
Aditya Varre and Nicolas Flammarion.
\newblock Accelerated {SGD} for non-strongly-convex least squares.
\newblock In {\em Proceedings of the 35th Conference on Learning Theory (COLT)}, volume 178 of {\em Proceedings of Machine Learning Research}. PMLR, 2022.

\bibitem{vershynin2010introduction}
Roman Vershynin.
\newblock Introduction to the non-asymptotic analysis of random matrices.
\newblock {\em arXiv preprint arXiv:1011.3027}, 2010.

\bibitem{wang2023overparameterized}
Zhichao Wang and Yizhe Zhu.
\newblock Overparameterized random feature regression with nearly orthogonal data.
\newblock In {\em International Conference on Artificial Intelligence and Statistics}, pages 8463--8493. PMLR, 2023.

\bibitem{wang2021deformed}
Zhichao Wang and Yizhe Zhu.
\newblock Deformed semicircle law and concentration of nonlinear random matrices for ultra-wide neural networks.
\newblock {\em The Annals of Applied Probability}, 34(2):1896--1947, 2024.

\bibitem{wortsman2025kernel}
Arie Wortsman and Bruno Loureiro.
\newblock Kernel ridge regression under power-law data: spectrum and generalization.
\newblock {\em arXiv preprint arXiv:2510.04780}, 2025.

\end{thebibliography}

\appendix

\appendix

\section{Concentration inequalities}

\begin{lemma}\label{lemma:Max_mmt}
Let $\{\wc^{(i)}\}_{i=1}^d$ be independent random vectors in $\R^{s}$ such that for every $i\in [d]$ and every $p\geq 1$, there exist a $C_p>0$ with the property that for all $\m{H}\in \R^{s\times s}$,
\begin{equation}\label{eq:moment_bdd}
    \norm{\inner{\wc_i^{\otimes 2}}{\m{H}} - \tr(\cov)}_{L^p} \leq C_p \norm{\m{H}}_F.
\end{equation}
If $\cov \in \R^{s \times s}$ is PSD and $M = \max_{1\leq i\leq d} \norm{ \frac{\sqrt{\cov}\left(\wc^{(i)} \right)^{\otimes 2}\sqrt{\cov} - \cov}{d} }$ then for all $q \geq 1$, there exists $C_q>0$ such that for all $t>0$,
\begin{equation} \label{eq:max_tail_bd}
    \prob\left( M \geq \frac{2 \tr(\cov)}{d} + t \right) \leq \frac{C_q \norm{\cov}_F^q}{d^{q-1} t^q}.
\end{equation}
Moreover, for all $q>p$,
\begin{equation}\label{eq:E_max_p}
    \E[M^p] \leq \left( \frac{2\tr{\cov}}{d}+ \frac{2pC_q \norm{\cov}_F}{d^{1-1/q}}\right)^p.
\end{equation}
\end{lemma}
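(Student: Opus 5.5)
The plan is to reduce everything to the scalar quadratic forms $X_i := \inner{(\wc^{(i)})^{\otimes 2}}{\cov} - \tr(\cov) = \wc^{(i)\top}\cov\,\wc^{(i)} - \tr(\cov)$. These are controlled in every $L^p$ by hypothesis \eqref{eq:moment_bdd}, which I read with $\m{H} = \cov$, i.e. $\norm{X_i}_{L^q} \leq C_q \norm{\cov}_F$. The first step is a deterministic operator-norm bound on each summand. Since $\sqrt{\cov}(\wc^{(i)})^{\otimes 2}\sqrt{\cov}$ is rank one with norm $\norm{\sqrt{\cov}\wc^{(i)}}^2 = \wc^{(i)\top}\cov\wc^{(i)}$, the triangle inequality gives
\begin{equation}
\norm{\sqrt{\cov}(\wc^{(i)})^{\otimes 2}\sqrt{\cov} - \cov} \leq \wc^{(i)\top}\cov\wc^{(i)} + \norm{\cov} \leq \abs{X_i} + \tr(\cov) + \norm{\cov} \leq \abs{X_i} + 2\tr(\cov).
\end{equation}
The last step uses $\norm{\cov}\leq \tr(\cov)$ for PSD $\cov$. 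Hence $M \leq \frac{2\tr(\cov)}{d} + \frac{1}{d}\max_{i}\abs{X_i}$. This is exactly the splitting used later in \eqref{eq:M_tildeM}.

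For the tail bound \eqref{eq:max_tail_bd}, I would use this reduction, a union bound over $i\in[d]$, and Markov's inequality at order $q$:
\begin{equation}
\prob\left(M \geq \tfrac{2\tr(\cov)}{d} + t\right) \leq \sum_{i=1}^d \prob\left(\abs{X_i} \geq dt\right) \leq d\,\frac{C_q^q\norm{\cov}_F^q}{(dt)^q} = \frac{C_q^q \norm{\cov}_F^q}{d^{q-1}t^q}.
\end{equation}
After renaming the constant, this is the claim.

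For the moment bound \eqref{eq:E_max_p}, I would apply Minkowski in $L^p$ to the same reduction, giving $\norm{M}_{L^p} \leq \frac{2\tr(\cov)}{d} + \frac{1}{d}\norm{\max_i \abs{X_i}}_{L^p}$. For $q>p$, monotonicity of $L^p$ norms together with the bound of a maximum by a sum gives
\begin{equation}
\norm{\max_i\abs{X_i}}_{L^p} \leq \norm{\max_i\abs{X_i}}_{L^q} \leq \Big(\sum_{i=1}^d \E\abs{X_i}^q\Big)^{1/q} \leq d^{1/q}C_q\norm{\cov}_F.
\end{equation}
Raising to the $p$-th power yields $\E[M^p] \leq \big(\frac{2\tr(\cov)}{d} + \frac{C_q\norm{\cov}_F}{d^{1-1/q}}\big)^p$. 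This is stronger than \eqref{eq:E_max_p}, since the extra factor $2p$ there is slack.

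The only delicate point is notational rather than mathematical. Hypothesis \eqref{eq:moment_bdd} must be read as the Hanson--Wright-type bound $\norm{\inner{\wc_i^{\otimes 2}}{\m{A}} - \tr(\m{A})}_{L^p}\leq C_p\norm{\m{A}}_F$, applied with $\m{A}=\cov$, matching \eqref{eq:quad_form_col}. One must also ensure the constant is uniform in $i$, which the hypothesis provides. Everything else is union bounds and Minkowski, so I expect no real obstacle.
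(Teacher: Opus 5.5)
Your proof is correct. The reduction $M\le \frac{2\tr(\cov)}{d}+\frac{1}{d}\max_i\abs{X_i}$, followed by a union bound and Markov at order $q$, is exactly the paper's argument for \eqref{eq:max_tail_bd}. Your reading of the hypothesis, as a bound on $\inner{\wc_i^{\otimes 2}}{\m{H}}-\tr(\m{H})$ uniform in $i$, is the intended one.

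The genuine difference is in \eqref{eq:E_max_p}.

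\textbf{The paper's route.} It derives \eqref{eq:E_max_p} from the tail bound. It writes $\E[M^p]=\int_0^\infty \prob(M\ge t^{1/p})\,\dif t$ and bounds the integrand by $1$ up to $(2\tr(\cov)/d+\epsilon)^p$. Beyond that point it inserts \eqref{eq:max_tail_bd}, which is where $q>p$ is needed for integrability. It then chooses $\epsilon=pC_q\norm{\cov}_F/d^{1-1/q}$, and this choice is where the factor $2p$ comes from.

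\textbf{Your route.} You bypass the tail bound and work with moments directly, via Minkowski, monotonicity of $L^p$ norms, and $\E\max_i\abs{X_i}^q\le\sum_i\E\abs{X_i}^q$. This is shorter and gives the constant $C_q$ without the $2p$. It also allows $q=p$. It avoids the bookkeeping of the tail integration, which actually produces a factor $1/(q-p)$ that the paper's display omits. That omission is harmless for the choices $p\le 2$, $q=4$ made in Theorem~\ref{thm:linear}, but for $q<p+1$ the displayed inequality does not follow from the tail bound.

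\textbf{What the paper's route buys.} It is the same tail-integration template the paper reuses in \eqref{eq:mmt_bdd_principal} and Lemma~\ref{lemma:wick_Max_bd}. There, stretched-exponential tails are integrated to get polylogarithmic rather than $d^{1/q}$ growth of the maximum; your direct method would need moments of order $\log d$ to match that. It also works verbatim for $p<1$, where you would have to apply monotonicity of $L^p$ norms before Minkowski.
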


\begin{proof}
\eqref{eq:max_tail_bd} follows from the observation that $\sqrt{\m{H}}\wc_i^{\otimes 2}\sqrt{\m{H}}$ is rank-$1$ so its operator norm is the quadratic $\inner{\wc_i^{\otimes 2}}{\m{H}}$. It follows that
\begin{equation}
    \norm{\sqrt{\m{H}} \wc_i^{\otimes 2} \sqrt{\m{H}} -\m{H}} \leq \abs{ \inner{\wc^{\otimes 2}}{\m{H}} }+ \tr(\m{H}). 
\end{equation}
Plugging this into the tail bound gives
\begin{align}
    \prob \left( M \geq \frac{2 \tr{(\cov)}}{d} + t \right) &\leq d \, \prob \left( \norm{\frac{\sqrt{\cov}\wc_1^{\otimes 2}\sqrt{\cov} - \cov}{d}}  \geq \frac{2 \tr{(\cov)}}{d} + t \right)\\
    &\leq d \, \prob \left( \left|\inner{\wc_1^{\otimes 2}}{\cov} - \tr{(\cov)}  \right| >t d \right).
\end{align}
Applying \eqref{eq:moment_bdd} with Markov's inequality then gives the desired tail bound. For \eqref{eq:E_max_p}, using \eqref{eq:max_tail_bd} we see that if $q>p$ and $\epsilon>0$, then
\begin{align}
    \E [M^p] &= \int_0^\infty \prob \left( M \geq t^{1/p} \right)\,\dif t\\
    &= \left( \frac{2\tr(\cov)}{d}+ \epsilon\right)^p + \int_{\left( \frac{2\tr(\cov)}{d}+ \epsilon\right)^p}^\infty \prob \left( M \geq t^{1/p} \right)\,\dif t \\
    &\leq  \left( \frac{2\tr(\cov)}{d}+ \epsilon\right)^p + \frac{p C_q \norm{\cov}_F^q}{d^{q-1}} \epsilon^{1-q} \left( \frac{2\tr(\cov)}{d}+\epsilon \right)^{p-1}.
\end{align}
Choosing $\epsilon = \frac{pC_q \norm{\cov}_F}{d^{1-1/q}}$ yields the result. 
\end{proof}

\begin{lemma}[Theorem 2 in \cite{jirak2025concentration}]\label{lemma:minsker_2}
Let $\W_1,\cdots, \W_n \in \R^{d \times d}$ be centered, independent, self-adjoint random matrices such that $\E[M^q]<\infty$ for some $q\geq 2$. If there exists $\V$ that satisfies $\V^2 \succeq \sum_{k} \E[ \W_k^2]$ and $\sigma^2 = \norm{\V^2}$, then for all $t\geq \max(4\sigma, 32 \E[M]) $
\begin{equation}
    \prob\left( \norm{\sum_{k}\W_k} \geq 12t \right) \leq 128 r(\V^2) \exp \left( \frac{-t^2}{8(\sigma^2+4t \E[M])}\right)+ 4\prob(M>t) + \left( \left( \frac{4q}{\log(eq)} \right)^q \frac{\E[M^q]}{t^q}\right)^2,
\end{equation}
where $K$ is an absolute constant.
\end{lemma}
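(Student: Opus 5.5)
This statement is cited from \cite{jirak2025concentration}, so the plan is to reduce it to the tools used there rather than derive it from scratch. The strategy is the standard truncation argument for sums of heavy-tailed independent matrices. The notation is $M=\max_k\norm{\W_k}$ and $S=\sum_k \W_k$.

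\emph{Step 1: truncate.} Fix the level $t$. Split each summand as $\W_k=\W_k\mathbbm{1}\{\norm{\W_k}\le t\}+\W_k\mathbbm{1}\{\norm{\W_k}>t\}$, and recenter the bounded part by subtracting its mean. On the event $\{M\le t\}$ the unbounded parts vanish. Since $\E\W_k=0$, the mean correction equals $-\sum_k\E[\W_k\mathbbm{1}\{\norm{\W_k}>t\}]$. Its norm is at most $\E[\sum_k\norm{\W_k}\mathbbm{1}\{\norm{\W_k}>t\}]$, and I would control this via the moment of $M$; see Step 3. The first term of the bound then comes from the recentered truncated sum, and the $4\prob(M>t)$ term comes from the complement event.

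\emph{Step 2: bounded part.} The recentered truncated summands have norm at most $2t$ and variance proxy dominated by $\V^2$. Truncation only decreases $\E\W_k^2$, and recentering only decreases it in the PSD order. Apply the intrinsic-dimension matrix Bernstein inequality (Minsker/Tropp), which gives a prefactor of order $r(\V^2)$ and exponent $-t^2/(\sigma^2+tL)$ with $L\le 2t$. This naive choice gives only a $t^2/(\sigma^2+t^2)$ exponent. To reach the sharper denominator $\sigma^2+4t\E[M]$ stated in the lemma, I would follow the Jirak--Minsker--Shen--Wahl refinement. There one replaces the almost-sure bound $L$ by $\E M$, using a Talagrand/Klein--Rio type concentration for $\norm{S}$ around $\E\norm{S}$, where the wimpy-variance and $\E M$ terms enter. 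The expectation $\E\norm{S}$ is bounded by $\sigma\sqrt{\log r}+\E M\log r$ via the intrinsic-dimension expectation bound, and the hypothesis $t\ge\max(4\sigma,32\E M)$ is used to absorb this mean into $t$.

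\emph{Step 3: the squared moment term.} The term $\bigl((4q/\log(eq))^q\E[M^q]/t^q\bigr)^2$ should come from a Hoffmann-J{\o}rgensen type argument. One bounds $\prob(\norm{S}\ge 12t)\le \prob(\norm{S}\ge 4t)^2$-ish plus $\prob(M>t)$, applied to the symmetrized sum. The $q$-th moment of the large-jump part is controlled by Rosenthal/Lata{\l}a constants $(q/\log q)^q$, combined with Markov's inequality. The square arises from requiring two disjoint large jumps.

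\emph{Main obstacle.} The hardest step is getting $\E[M]$ rather than $t$ in the Bernstein denominator (Step 2). Bookkeeping the constants $12,128,32$ is also delicate. Given that this lemma is quoted verbatim as Theorem 2 of \cite{jirak2025concentration}, I would ultimately just verify the hypotheses match and cite it.
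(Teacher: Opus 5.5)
Your bottom line is to check that $M=\max_k\norm{\W_k}$ and then cite Theorem~2 of \cite{jirak2025concentration}. That is exactly what the paper does; it supplies no proof of its own. (When checking hypotheses, note that the transcribed statement never defines $M$, and the constant $K$ it mentions appears nowhere.)

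The derivation you sketch, however, cannot produce this inequality. The obstruction is structural, not a matter of constants: you truncate at level $t$. Take $d=1$ and $\W_k=t\,\varepsilon_k\xi_k$ for $k\le n$, with $\varepsilon_k$ Rademacher, $\xi_k\sim\mathrm{Bernoulli}(\epsilon/n)$, all independent, and $n$ large. Then $M\le t$ almost surely, so your truncation removes nothing and there is no mean correction. Also $\sigma^2=\epsilon t^2$ and $\E[M]\le\epsilon t$, so the hypotheses hold once $\epsilon\le 1/32$. Twelve equal-signed jumps give $\prob\bigl(\abs{\sum_k\W_k}\ge 12t\bigr)\gtrsim\epsilon^{12}$. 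By contrast, $128\exp\bigl(-t^2/(8(\sigma^2+4t\E[M]))\bigr)\le 128e^{-1/(40\epsilon)}$, which is far smaller as $\epsilon\to0$.

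So the target of your Step~2 (the exponential term alone, for the truncated sum) is false, not merely hard. Consistently with this, Bernstein and Talagrand/Bousquet/Klein--Rio all need an almost-sure bound, here $2t$, and then yield only an $O(1)$ exponent; no such tool lets $\E[M]$ replace $L$. In this example the lemma survives only through its squared moment term, which is of order $\epsilon^2$. In your split that term has nowhere to come from. The part above $t$ vanishes on $\{M\le t\}$, and Hoffmann-J{\o}rgensen applied to the whole truncated sum would bring in $\E\norm{\sum_k\W_k}$, hence $\sigma\sqrt{\log r}$, rather than moments of $M$ alone. Likewise, $\E\norm{\sum_k\W_k}\lesssim\sigma\sqrt{\log r}+\E[M]\log r$ cannot be absorbed into $t$ using only $t\ge\max(4\sigma,32\E[M])$.

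If you want a self-contained argument, the missing idea is to truncate at a level of order $\E[M]$, not $t$. Put $\rho=8\E[M]$, so that $\prob(M>\rho)\le 1/8$ by Markov; the hypothesis $t\ge 32\E[M]=4\rho$ reflects exactly this. Write $\W_k=\m{Y}_k+\m{T}_k$ with $\m{Y}_k=\W_k\mathbf{1}\{\norm{\W_k}\le\rho\}$. Since $\E\W_k=0$, we have $\sum_k\W_k=\sum_k(\m{Y}_k-\E\m{Y}_k)+\sum_k(\m{T}_k-\E\m{T}_k)$.

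The first sum has summands of norm at most $2\rho\lesssim\E[M]$ and variance $\preceq\V^2$. The intrinsic-dimension Bernstein inequality therefore gives $r(\V^2)\exp\bigl(-t^2/(C(\sigma^2+t\E[M]))\bigr)$ directly, with the effective rank in front and no mean to absorb. The condition $t\ge\max(4\sigma,32\E[M])$ places $t$ above that inequality's $\sigma+L/3$ threshold.

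The second sum is nonzero only on $\{M>\rho\}$, an event of probability at most $1/8$. Hoffmann-J{\o}rgensen then gives $\E\norm{\sum_k\m{T}_k}\lesssim\E[M]$, which pays for the recentering. Talagrand's moment version gives $\E\bigl[\norm{\sum_k(\m{T}_k-\E\m{T}_k)}^q\bigr]^{1/q}\lesssim\frac{q}{\log q}\,\E[M^q]^{1/q}$. The tail form is $\prob\bigl(\max_j\norm{\sum_{k\le j}\m{T}_k}>3t+s\bigr)\le\prob\bigl(\max_j\norm{\sum_{k\le j}\m{T}_k}>t\bigr)^2+\prob(M>s)$. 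Taking $s=t$ and applying Markov produces the $\prob(M>t)$ term and a squared term of the form $\bigl((Cq/\log q)^q\E[M^q]/t^q\bigr)^2$.

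Your ``two large jumps'' intuition in Step~3 is the right one, but it belongs to the part above $\rho$, not to the whole sum.
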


\begin{lemma}[Gaussian hypercontractivity, Theorem 6.7. in \cite{janson1997gaussian}]\label{lem:hypercontractivity}
    Consider a degree-$q$ polynomial $
f(Y)=f(Y_1,\ldots,Y_n)$
of independent centered Gaussian  random variables 
$Y_1,\ldots,Y_n$. Then for every $\lambda>0$,
\[
\Pr\!\left( |f(Y)-\mathbb{E}f(Y)| \ge \lambda \right)
\;\le\;
2
\exp\!\left(
 - R\left( 
 \frac{\lambda^{2}}{\Var[f(Y)]}
 \right)^{1/q}
\right),
\]
where  $R>0$ is an absolute constant. 
\end{lemma}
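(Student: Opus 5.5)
The plan is to derive the tail bound from moment growth via Markov's inequality, with the moment growth supplied by Nelson's hypercontractivity of the Ornstein--Uhlenbeck semigroup. I take Nelson's inequality from \cite{janson1997gaussian} rather than reprove it; this is the one genuinely deep ingredient. Everything else is bookkeeping.

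\textbf{Reduction and moment bound.} Write $Y_i=\sigma_i Z_i$ with $Z_i$ i.i.d.\ standard Gaussians. Then $f(Y)$ is a polynomial of degree at most $q$ in $Z$, so we may assume the $Y_i$ are standard. Put $g=f(Y)-\E f(Y)$ and $\sigma^2=\Var[f(Y)]=\norm{g}_{L^2}^2$. Expand $g=\sum_{k=1}^{q} g_k$ in Wiener chaoses, where $g_k\in\m{G}^{:k:}$ and the $g_k$ are mutually orthogonal; the constant term vanishes because $g$ is centered. The Ornstein--Uhlenbeck operator $T_t$ acts on $\m{G}^{:k:}$ as multiplication by $e^{-kt}$. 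Nelson's theorem states $\norm{T_t h}_{L^p}\le \norm{h}_{L^2}$ whenever $e^{2t}\ge p-1$ and $p\ge 2$. Fix $t$ with $e^{2t}=p-1$ and set $h=\sum_k e^{kt}g_k$, so that $T_t h=g$. Orthogonality of the chaoses gives
\[
\norm{g}_{L^p}\le \norm{h}_{L^2}=\Big(\sum_{k\le q}e^{2kt}\norm{g_k}_{L^2}^2\Big)^{1/2}\le (p-1)^{q/2}\sigma .
\]

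\textbf{Optimization.} By Markov's inequality, for every $p\ge 2$,
\[
\Pr(|g|\ge\lambda)\le \big((p-1)^{q/2}\sigma/\lambda\big)^p .
\]
Choose $p=1+(\lambda/(e\sigma))^{2/q}$, so that the base of the power equals $e^{-1}$. This choice is admissible ($p\ge2$) exactly when $\lambda\ge e\sigma$, and in that regime
\[
\Pr(|g|\ge\lambda)\le e^{-p}\le \exp\!\big(-e^{-2/q}(\lambda^2/\sigma^2)^{1/q}\big).
\]
When $\lambda<e\sigma$, we have $(\lambda^2/\sigma^2)^{1/q}<e^{2/q}\le e^2$. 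Taking $R=e^{-2}\log 2$ then makes the right-hand side $2\exp(-R(\lambda^2/\sigma^2)^{1/q})$ at least $1$, so the bound holds trivially. In the first regime, $e^{-2/q}\ge e^{-2}\ge R$, so the same $R$ works. Hence a single absolute constant $R$ serves for all $q\ge1$, and the factor $2$ is only needed in the trivial regime.
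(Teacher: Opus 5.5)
Your argument is correct. The hypercontractive moment bound $\|g\|_{L^p}\le (p-1)^{q/2}\sigma$ (from Nelson's inequality applied chaos by chaos), followed by Markov's inequality with $p$ chosen so the base equals $e^{-1}$, is the standard proof behind the result the paper cites from \cite{janson1997gaussian}; the paper gives no proof of its own. Your separate treatment of $\lambda<e\sigma$ via the prefactor $2$ is the routine extra step needed for the bound to hold for every $\lambda>0$ as stated.
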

\section{Auxiliary lemmas}

\subsection{Wick Products}\label{app:wick}
We present some of the fundamental results of the Wiener chaos decomposition. For a complete treatment on Gaussian Hilbert space, we refer the reader to  \cite{janson1997gaussian}.
\begin{definition}
    Let $\m{G}$ be a Gaussian Hilbert space. Define for all $n \geq 0$, 
    \begin{equation}
        \m{P}_n = \left\{ p(g_1,\cdots,g_m) \,; p \text{ is a polynomial of degree}\leq n, \text{ and } \, g_1,\dots, g_m \in \m{G} \right\}.
    \end{equation}
    We will denote $\overline{\m{P}}_n$ to be the $L^2$-closure of $\m{P}_n$.
\end{definition}
\begin{definition}
    A rank-$r$ Feynman diagram of a collection of random variables $g_1,\dots,g_n$, is a graph $\gamma$ consisting of $n$ vertices labeled by $g_1,\dots, g_n$ with $r$ edges that are vertex disjoint. Let $\mathcal{P}(\gamma)$ be the set of all paired vertices and $\mathcal{U}(\gamma)$ the set of all unpaired vertices, we define the value of $\gamma$ to be
    \begin{equation}
        v(\gamma) = \prod_{(r,s) \in \mathcal{P}(\gamma)}\E \left[ g_r g_s\right] \prod_{i\in \mathcal{U}(\gamma)} g_i.
    \end{equation}
    We say that a Feynman diagram is complete if the number of edges is $n/2$, i.e., $\mathcal{U}(\gamma) = \emptyset$. 
\end{definition}
\begin{figure}[h]
\centering
\begin{tikzpicture}[point/.style={circle,inner sep=1.2pt,fill=black}]
    \node[point,blue, thick, draw=blue, fill=none]  (a1) at (0,2) {$a_1$};
    \node[point,  red, thick, draw=red, fill=none] (a2) at (1.2,2) {$a_2$};
    
    \node[point,green, thick, draw=green, fill=none](a3) at (0,1) {$a_3$};
    \node[point,orange, thick, draw=orange, fill=none] (a4) at (1.2,1) {$a_4$};
    
    \node at (0.6,0.4) {$\gamma_1$};

\end{tikzpicture}
\hspace{0.7cm}
\begin{tikzpicture}[point/.style={circle,inner sep=1.2pt,fill=black}]
    \node[point,blue, thick, draw=blue, fill=none]  (a1) at (0,2) {$a_1$};
    \node[point,  red, thick, draw=red, fill=none] (a2) at (1.2,2) {$a_2$};
    
    \node[point,green, thick, draw=green, fill=none](a3) at (0,1) {$a_3$};
    \node[point,orange, thick, draw=orange, fill=none] (a4) at (1.2,1) {$a_4$};
    
    \node at (0.6,0.4) {$\gamma_2$};

    \draw[very thick]  (a1) -- (a2);

\end{tikzpicture}
\hspace{0.7cm}
\begin{tikzpicture}[point/.style={circle,inner sep=1.2pt,fill=black}]
    \node[point,blue, thick, draw=blue, fill=none]  (a1) at (0,2) {$a_1$};
    \node[point,  red, thick, draw=red, fill=none] (a2) at (1.2,2) {$a_2$};
    
    \node[point,green, thick, draw=green, fill=none](a3) at (0,1) {$a_3$};
    \node[point,orange, thick, draw=orange, fill=none] (a4) at (1.2,1) {$a_4$};
    
    \node at (0.6,0.4) {$\gamma_3$};

    \draw[very thick]  (a1) -- (a3);

\end{tikzpicture}
\hspace{0.7cm}
\begin{tikzpicture}[point/.style={circle,inner sep=1.2pt,fill=black}]
    \node[point,blue, thick, draw=blue, fill=none]  (a1) at (0,2) {$a_1$};
    \node[point,  red, thick, draw=red, fill=none] (a2) at (1.2,2) {$a_2$};
    
    \node[point,green, thick, draw=green, fill=none](a3) at (0,1) {$a_3$};
    \node[point,orange, thick, draw=orange, fill=none] (a4) at (1.2,1) {$a_4$};
    
    \node at (0.6,0.4) {$\gamma_4$};
    \draw[very thick]  (a1) -- (a4);

\end{tikzpicture}
\hspace{0.7cm}
\begin{tikzpicture}[point/.style={circle,inner sep=1.2pt,fill=black}]
    \node[point,blue, thick, draw=blue, fill=none]  (a1) at (0,2) {$a_1$};
    \node[point,  red, thick, draw=red, fill=none] (a2) at (1.2,2) {$a_2$};
    
    \node[point,green, thick, draw=green, fill=none](a3) at (0,1) {$a_3$};
    \node[point,orange, thick, draw=orange, fill=none] (a4) at (1.2,1) {$a_4$};
    
    \node at (0.6,0.4) {$\gamma_5$};

    \draw[very thick]  (a2) -- (a4);

\end{tikzpicture}

\begin{tikzpicture}[point/.style={circle,inner sep=1.2pt,fill=black}]
    \node[point,blue, thick, draw=blue, fill=none]  (a1) at (0,2) {$a_1$};
    \node[point,  red, thick, draw=red, fill=none] (a2) at (1.2,2) {$a_2$};
    
    \node[point,green, thick, draw=green, fill=none](a3) at (0,1) {$a_3$};
    \node[point,orange, thick, draw=orange, fill=none] (a4) at (1.2,1) {$a_4$};
    
    \node at (0.6,0.4) {$\gamma_6$};

    \draw[very thick]  (a2) -- (a3);

\end{tikzpicture}
\hspace{0.7cm}
\begin{tikzpicture}[point/.style={circle,inner sep=1.2pt,fill=black}]
    \node[point,blue, thick, draw=blue, fill=none]  (a1) at (0,2) {$a_1$};
    \node[point,  red, thick, draw=red, fill=none] (a2) at (1.2,2) {$a_2$};
    
    \node[point,green, thick, draw=green, fill=none](a3) at (0,1) {$a_3$};
    \node[point,orange, thick, draw=orange, fill=none] (a4) at (1.2,1) {$a_4$};
    
    \node at (0.6,0.4) {$\gamma_7$};

    \draw[very thick]  (a3) -- (a4);

\end{tikzpicture}
\hspace{0.7cm}
\begin{tikzpicture}[point/.style={circle,inner sep=1.2pt,fill=black}]
    \node[point,blue, thick, draw=blue, fill=none]  (a1) at (0,2) {$a_1$};
    \node[point,  red, thick, draw=red, fill=none] (a2) at (1.2,2) {$a_2$};
    
    \node[point,green, thick, draw=green, fill=none](a3) at (0,1) {$a_3$};
    \node[point,orange, thick, draw=orange, fill=none] (a4) at (1.2,1) {$a_4$};
    
    \node at (0.6,0.4) {$\gamma_8$};

    \draw[very thick]  (a1) -- (a2); 
    \draw[very thick]  (a3) -- (a4);

\end{tikzpicture}
\hspace{0.7cm}
\begin{tikzpicture}[point/.style={circle,inner sep=1.2pt,fill=black}]
    \node[point,blue, thick, draw=blue, fill=none]  (a1) at (0,2) {$a_1$};
    \node[point,  red, thick, draw=red, fill=none] (a2) at (1.2,2) {$a_2$};
    
    \node[point,green, thick, draw=green, fill=none](a3) at (0,1) {$a_3$};
    \node[point,orange, thick, draw=orange, fill=none] (a4) at (1.2,1) {$a_4$};
    
    \node at (0.6,0.4) {$\gamma_9$};

    \draw[very thick]  (a1) -- (a3); 
    \draw[very thick]  (a2) -- (a4);

\end{tikzpicture}
\hspace{0.7cm}
\begin{tikzpicture}[point/.style={circle,inner sep=1.2pt,fill=black}]
    \node[point,blue, thick, draw=blue, fill=none]  (a1) at (0,2) {$a_1$};
    \node[point,  red, thick, draw=red, fill=none] (a2) at (1.2,2) {$a_2$};
    
    \node[point,green, thick, draw=green, fill=none](a3) at (0,1) {$a_3$};
    \node[point,orange, thick, draw=orange, fill=none] (a4) at (1.2,1) {$a_4$};
    
    \node at (0.6,0.4) {$\gamma_{10}$};

    \draw[very thick]  (a1) -- (a4); 
    \draw[very thick]  (a2) -- (a3);

\end{tikzpicture}
\caption{For the set $g_{a_1},g_{a_2},g_{a_3},g_{a_4}$, there exists $10$ distinct Feynman diagrams.}
\end{figure}
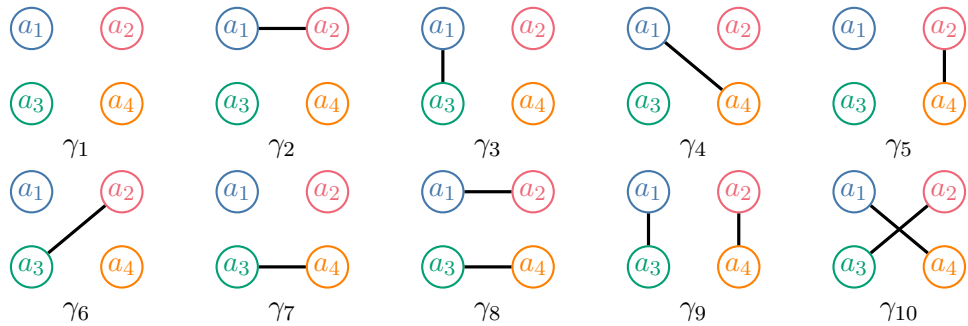

\begin{definition}
    We define the homogeneous chaos of order-$n$ as
    \begin{align}
        \m{G}^{:n:} = \overline{\m{P}}_n \cap \overline{\m{P}}_{n-1}^{\perp}.
    \end{align}
    When $n=0$, $\m{G}^{:n:} = \m{P}_0$ which is the space of constants.
\end{definition}
Notice that by definition, $\m{G}^{:n:}\perp \m{G}^{:m:}$ for all $n\neq m$. In particular, this implies that if $X \in \m{G}^{:n:}$ for all $n \geq 1$, then $\E[X] = 0$.
\begin{definition}
    If $g_1,\dots,g_n$ are centered Gaussians in $\m{G}$, then the Wick product denoted by $\wick{g_1\cdots g_n}$ is defined as the orthogonal projection of $\prod_{i=1}^n g_i$ onto $ \m{G}^{:n:}$, i.e.,
    \begin{equation}
        \wick{g_1\cdots g_n} \equiv \pi_n(g_1 \cdots g_n).
    \end{equation}
\end{definition}
The following theorem provides a convenient way of computing the product of Wick products using only second moments of Gaussians.
\begin{lemma}\label{thm:Feynman}
    Given $\{ Y_i\}_{i=1}^k$ a set of Wick products such that for each $1\leq i\leq k$,
    $Y_i = \wick{g^{(i)}_{{j_1}} \cdots g^{(i)}_{{j_{N_i}}}}$, where $\{g^{(i)}_{j_t} \}_{t=1}^{N_i}$ are centered jointly Gaussians. The expectation of the product of Wick products can then be computed to be
    \begin{equation}
        \E [ Y_1\cdots Y_k] = \sum_{\gamma}v(\gamma),
    \end{equation}
    where the sum is taken across the set of all complete Feynman diagram $\gamma$ such that the only possible pairings are those from Gaussians from different groups, i.e., $(g^{(i)}_{j_t}, g^{(s)}_{j_\ell})$ is a valid pair if and only if $i\neq s$.
\end{lemma}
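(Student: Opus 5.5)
The plan is to prove the identity through generating functions, reducing it to a computation with Wick exponentials. Then the combinatorics of complete Feynman diagrams will come out of expanding an ordinary exponential of cross-covariances.

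First, I will record two standard facts about a single centered Gaussian $X\in\m{G}$ with $\sigma^2=\E[X^2]$. The first is that $\wick{X^n}=\sigma^n \mathrm{He}_n(X/\sigma)$, where $\mathrm{He}_n$ is the $n$-th probabilists' Hermite polynomial. This follows directly from the definition of $\wick{\cdot}$ as the projection $\pi_n$: the polynomial $\sigma^n\mathrm{He}_n(X/\sigma)$ is $X^n$ plus lower-degree terms, and it is orthogonal to $\overline{\m{P}}_{n-1}$ because Hermite polynomials of the Gaussian $X$ are orthogonal to every polynomial of lower degree in any element of $\m{G}$. Summing the Hermite generating function then gives the Wick exponential
\begin{equation}
\sum_{n\ge 0}\frac{\wick{X^n}}{n!} = \exp\!\left(X-\tfrac12\E[X^2]\right),
\end{equation}
with convergence in $L^2$ (indeed in every $L^p$) by Gaussian integrability. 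The second fact is that, because $\pi_n$ is linear and $(g_1,\dots,g_n)\mapsto g_1\cdots g_n$ is symmetric and multilinear, the map $(g_1,\dots,g_n)\mapsto\wick{g_1\cdots g_n}$ is symmetric and multilinear. By polarization, it is therefore determined by the diagonal values $\wick{X^n}$. In particular, for $X=\sum_{t=1}^{N} s_t g_t$, the coefficient of the monomial $s_1 s_2\cdots s_N$ in $\wick{X^N}/N!$ equals $\wick{g_1\cdots g_N}$. This coefficient extraction is valid even when some of the $g_t$ coincide, since we work with formal variables $s_t$.

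Next, for each group $i$ I introduce formal variables $s^{(i)}_1,\dots,s^{(i)}_{N_i}$ and set $X_i=\sum_t s^{(i)}_t g^{(i)}_{j_t}$. The random variables $X_1,\dots,X_k$ are jointly Gaussian, so $\sum_i X_i$ is a single centered Gaussian, and the Gaussian moment generating function gives
\begin{equation}
\E\!\left[\prod_{i=1}^k \exp\!\left(X_i-\tfrac12\E[X_i^2]\right)\right]
=\exp\!\left(\tfrac12\E\Big[\big(\textstyle\sum_i X_i\big)^2\Big]-\tfrac12\sum_i\E[X_i^2]\right)
=\exp\!\Big(\sum_{i<s}\E[X_iX_s]\Big).
\end{equation}
The within-group covariances have cancelled exactly, and this cancellation is what will produce the restriction to pairings between different groups.

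Finally, I compare the coefficients of the multilinear monomial $\prod_{i}\prod_{t} s^{(i)}_t$ on both sides. On the left, expand each Wick exponential as $\sum_n \wick{X_i^n}/n!$ and interchange expectation and summation. The monomial is homogeneous of degree $N_i$ in the variables of group $i$, so only the term $n=N_i$ from each group contributes. By the polarization fact above, that term contributes $\E[Y_1\cdots Y_k]$. On the right, each factor $\E[X_iX_s]$ equals $\sum_{t,u} s^{(i)}_t s^{(s)}_u\,\E[g^{(i)}_{j_t}g^{(s)}_{j_u}]$, a sum over cross-group edges. In the expansion of $\exp\big(\sum_{i<s}\E[X_iX_s]\big)$, only products of $m$ distinct edges with $2m=\sum_i N_i$ contribute a multilinear monomial. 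Choosing such a set of edges accounts for the $m!$ from the exponential series, so each such set appears with coefficient one. A set of disjoint cross-group edges covering every vertex is exactly a complete Feynman diagram with no intra-group pairs, and its coefficient is $v(\gamma)$. This yields the formula.

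The step I expect to be the main obstacle is making the formal-variable manipulation rigorous. The interchange of $\E$ with the double series is justified by taking the $s$'s real and small, where both sides are absolutely convergent power series, since all moments of Gaussian polynomials are controlled by Lemma~\ref{lem:hypercontractivity}. Coefficients of two power series that agree on a neighbourhood of the origin then coincide. The identification of $\wick{X^n}$ with the scaled Hermite polynomial is routine but must be checked against the projection definition of the Wick product used in the paper.
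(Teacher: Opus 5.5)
Your proof is correct, and it takes a genuinely different route from the paper. The paper gives no proof of this lemma and simply defers to Janson's monograph. The natural finite argument built from the tools in its appendix is combinatorial:
\begin{itemize}
\item expand each $Y_i$ by Lemma~\ref{lemma:wick_decomp} as a signed sum over Feynman diagrams inside group $i$;
\item apply Isserlis' theorem to the surviving unpaired variables;
\item regroup by the resulting complete pairing $\Gamma$ of all $\sum_i N_i$ vertices.
\end{itemize}
Each $\Gamma$ then carries the weight $v(\Gamma)\sum_{S}(-1)^{|S|}$, where $S$ ranges over subsets of the intra-group edges of $\Gamma$. This sum vanishes unless $\Gamma$ has no intra-group edge.

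That route is purely algebraic and has no convergence issues. However, it presupposes the signed diagram expansion of a Wick product, which is the substantive content one must first extract from the projection definition. Your route extracts that content directly, via $\wick{X^n}=\sigma^n\mathrm{He}_n(X/\sigma)$ together with multilinearity of $\pi_n$ composed with the product. It also explains the exclusion of intra-group pairings conceptually, as the exact cancellation of the $-\tfrac12\E[X_i^2]$ terms. The price is some analytic bookkeeping.

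Three small points would tighten your argument:
\begin{itemize}
\item The orthogonality of $\mathrm{He}_n(X/\sigma)$ to lower-degree polynomials in \emph{other} elements of $\m{G}$ needs one line. Write each such element as a multiple of $X$ plus a Gaussian independent of $X$, then use one-dimensional Hermite orthogonality.
\item You use that the $g^{(i)}_{j_t}$ are jointly Gaussian \emph{across} groups. The statement only asserts joint Gaussianity within each group, but the cross-group version is needed and clearly intended; say so explicitly.
\item The interchange of $\E$ with the series is cleaner via Tonelli than via hypercontractivity. The power series of $\prod_i\exp(X_i-\tfrac12\E[X_i^2])$ in the $s$-variables is dominated coefficientwise by the series obtained by replacing every $s$, every Gaussian and every covariance by its absolute value. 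That majorant is an integrable exponential for every real $s$, so no smallness of $s$ is needed, and coefficient matching follows from uniqueness of power series.
\end{itemize}
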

\noindent This generalizes the matching formula (Isserlis' theorem) for the mixed moments of a product of Gaussians, which corresponds to the case that all $N_i=1$.

\begin{figure}[h]
\centering
\begin{tikzpicture}[point/.style={circle,inner sep=1pt,fill=black}]

    \node[point,blue, thick, draw = blue, fill = none] (v1) at (-0.5,0.866) {$a_1$};
    
    \node[point, red, thick, draw = red, fill = none] (v2) at (0.5,0.866) {$b_1$};
    
    \node[point, thick,red, draw = red, fill = none] (v3) at (1,0) {$b_2$};
    
    \node[point, thick, green, draw = green, fill = none] (v4) at (0.5,-0.866) {$c_2$};
    
    \node[point, thick,green, draw = green, fill = none] (v5) at (-0.5,-0.866) {$c_1$};
    
    \node[point, blue, thick, draw = blue, fill = none](v6) at (-1,0) {$a_2$};

    \path (v1) -- (v6) coordinate[midway] (mid);

    \draw[blue, thick, rotate around={-30:(mid)}] 
        (mid) ellipse [x radius=0.35, y radius=1];
        
     \path (v2) -- (v3) coordinate[midway] (mid23);
    \draw[red, thick, rotate around={30:(mid23)}] 
        (mid23) ellipse [x radius=0.35, y radius=1];

    \path (v4) -- (v5) coordinate[midway] (mid23);
    \draw[green, thick, rotate around={90:(mid23)}] 
        (mid23) ellipse [x radius=0.35, y radius=1];

   \draw[very thick]  (v1) -- (v2);   
   \draw[very thick]  (v6) -- (v5);  
   \draw[very thick]  (v3) -- (v4);  
\end{tikzpicture}
\hspace{0.7cm}
\begin{tikzpicture}[point/.style={circle,inner sep=1pt,fill=black}]

    \node[point,blue, thick, draw = blue, fill = none] (v1) at (-0.5,0.866) {$a_1$};
    
    \node[point, red, thick, draw = red, fill = none] (v2) at (0.5,0.866) {$b_1$};
    
    \node[point, thick,red, draw = red, fill = none] (v3) at (1,0) {$b_2$};
    
    \node[point, thick, green, draw = green, fill = none] (v4) at (0.5,-0.866) {$c_2$};
    
    \node[point, thick,green, draw = green, fill = none] (v5) at (-0.5,-0.866) {$c_1$};
    
    \node[point, blue, thick, draw = blue, fill = none](v6) at (-1,0) {$a_2$};

    \path (v1) -- (v6) coordinate[midway] (mid);

    \draw[blue, thick, rotate around={-30:(mid)}] 
        (mid) ellipse [x radius=0.35, y radius=1];
        
     \path (v2) -- (v3) coordinate[midway] (mid23);
    \draw[red, thick, rotate around={30:(mid23)}] 
        (mid23) ellipse [x radius=0.35, y radius=1];

    \path (v4) -- (v5) coordinate[midway] (mid23);
    \draw[green, thick, rotate around={90:(mid23)}] 
        (mid23) ellipse [x radius=0.35, y radius=1];
   \draw[very thick]  (v1) -- (v2);   
   \draw[very thick]  (v3) -- (v5);  
   \draw[very thick]  (v4) -- (v6);  
\end{tikzpicture}
\hspace{0.7cm}
\begin{tikzpicture}[point/.style={circle,inner sep=1pt,fill=black}]

    \node[point,blue, thick, draw = blue, fill = none] (v1) at (-0.5,0.866) {$a_1$};
    
    \node[point, red, thick, draw = red, fill = none] (v2) at (0.5,0.866) {$b_1$};
    
    \node[point, thick,red, draw = red, fill = none] (v3) at (1,0) {$b_2$};
    
    \node[point, thick, green, draw = green, fill = none] (v4) at (0.5,-0.866) {$c_2$};
    
    \node[point, thick,green, draw = green, fill = none] (v5) at (-0.5,-0.866) {$c_1$};
    
    \node[point, blue, thick, draw = blue, fill = none](v6) at (-1,0) {$a_2$};

    \path (v1) -- (v6) coordinate[midway] (mid);

    \draw[blue, thick, rotate around={-30:(mid)}] 
        (mid) ellipse [x radius=0.35, y radius=1];
        
     \path (v2) -- (v3) coordinate[midway] (mid23);
    \draw[red, thick, rotate around={30:(mid23)}] 
        (mid23) ellipse [x radius=0.35, y radius=1];

    \path (v4) -- (v5) coordinate[midway] (mid23);
    \draw[green, thick, rotate around={90:(mid23)}] 
        (mid23) ellipse [x radius=0.35, y radius=1];
   \draw[very thick]  (v1) -- (v3);   
   \draw[very thick]  (v6) -- (v4);  
   \draw[very thick]  (v2) -- (v5);  
\end{tikzpicture}
\hspace{0.7cm}
\begin{tikzpicture}[point/.style={circle,inner sep=1pt,fill=black}]

    \node[point,blue, thick, draw = blue, fill = none] (v1) at (-0.5,0.866) {$a_1$};
    
    \node[point, red, thick, draw = red, fill = none] (v2) at (0.5,0.866) {$b_1$};
    
    \node[point, thick,red, draw = red, fill = none] (v3) at (1,0) {$b_2$};
    
    \node[point, thick, green, draw = green, fill = none] (v4) at (0.5,-0.866) {$c_2$};
    
    \node[point, thick,green, draw = green, fill = none] (v5) at (-0.5,-0.866) {$c_1$};
    
    \node[point, blue, thick, draw = blue, fill = none](v6) at (-1,0) {$a_2$};

    \path (v1) -- (v6) coordinate[midway] (mid);

    \draw[blue, thick, rotate around={-30:(mid)}] 
        (mid) ellipse [x radius=0.35, y radius=1];
        
     \path (v2) -- (v3) coordinate[midway] (mid23);
    \draw[red, thick, rotate around={30:(mid23)}] 
        (mid23) ellipse [x radius=0.35, y radius=1];

    \path (v4) -- (v5) coordinate[midway] (mid23);
    \draw[green, thick, rotate around={90:(mid23)}] 
        (mid23) ellipse [x radius=0.35, y radius=1];

   \draw[very thick]  (v1) -- (v3);   
   \draw[very thick]  (v6) -- (v5);  
   \draw[very thick]  (v2) -- (v4);  
\end{tikzpicture}

\vspace{0.5cm}
\begin{tikzpicture}[point/.style={circle,inner sep=1pt,fill=black}]

    \node[point,blue, thick, draw = blue, fill = none] (v1) at (-0.5,0.866) {$a_1$};
    
    \node[point, red, thick, draw = red, fill = none] (v2) at (0.5,0.866) {$b_1$};
    
    \node[point, thick,red, draw = red, fill = none] (v3) at (1,0) {$b_2$};
    
    \node[point, thick, green, draw = green, fill = none] (v4) at (0.5,-0.866) {$c_2$};
    
    \node[point, thick,green, draw = green, fill = none] (v5) at (-0.5,-0.866) {$c_1$};
    
    \node[point, blue, thick, draw = blue, fill = none](v6) at (-1,0) {$a_2$};

    \path (v1) -- (v6) coordinate[midway] (mid);

    \draw[blue, thick, rotate around={-30:(mid)}] 
        (mid) ellipse [x radius=0.35, y radius=1];
        
     \path (v2) -- (v3) coordinate[midway] (mid23);
    \draw[red, thick, rotate around={30:(mid23)}] 
        (mid23) ellipse [x radius=0.35, y radius=1];

    \path (v4) -- (v5) coordinate[midway] (mid23);
    \draw[green, thick, rotate around={90:(mid23)}] 
        (mid23) ellipse [x radius=0.35, y radius=1];
   \draw[very thick]  (v1) -- (v4);   
   \draw[very thick]  (v2) -- (v5);  
   \draw[very thick]  (v6) -- (v3);  
\end{tikzpicture}
\hspace{0.7cm}
\begin{tikzpicture}[point/.style={circle,inner sep=1pt,fill=black}]

    \node[point,blue, thick, draw = blue, fill = none] (v1) at (-0.5,0.866) {$a_1$};
    
    \node[point, red, thick, draw = red, fill = none] (v2) at (0.5,0.866) {$b_1$};
    
    \node[point, thick,red, draw = red, fill = none] (v3) at (1,0) {$b_2$};
    
    \node[point, thick, green, draw = green, fill = none] (v4) at (0.5,-0.866) {$c_2$};
    
    \node[point, thick,green, draw = green, fill = none] (v5) at (-0.5,-0.866) {$c_1$};
    
    \node[point, blue, thick, draw = blue, fill = none](v6) at (-1,0) {$a_2$};

    \path (v1) -- (v6) coordinate[midway] (mid);

    \draw[blue, thick, rotate around={-30:(mid)}] 
        (mid) ellipse [x radius=0.35, y radius=1];
        
     \path (v2) -- (v3) coordinate[midway] (mid23);
    \draw[red, thick, rotate around={30:(mid23)}] 
        (mid23) ellipse [x radius=0.35, y radius=1];

    \path (v4) -- (v5) coordinate[midway] (mid23);
    \draw[green, thick, rotate around={90:(mid23)}] 
        (mid23) ellipse [x radius=0.35, y radius=1];

   \draw[very thick]  (v1) -- (v4);   
   \draw[very thick]  (v6) -- (v2);  
   \draw[very thick]  (v3) -- (v5);  
\end{tikzpicture}
\hspace{0.7cm}
\begin{tikzpicture}[point/.style={circle,inner sep=1pt,fill=black}]

    \node[point,blue, thick, draw = blue, fill = none] (v1) at (-0.5,0.866) {$a_1$};
    
    \node[point, red, thick, draw = red, fill = none] (v2) at (0.5,0.866) {$b_1$};
    
    \node[point, thick,red, draw = red, fill = none] (v3) at (1,0) {$b_2$};
    
    \node[point, thick, green, draw = green, fill = none] (v4) at (0.5,-0.866) {$c_2$};
    
    \node[point, thick,green, draw = green, fill = none] (v5) at (-0.5,-0.866) {$c_1$};
    
    \node[point, blue, thick, draw = blue, fill = none](v6) at (-1,0) {$a_2$};

    \path (v1) -- (v6) coordinate[midway] (mid);

    \draw[blue, thick, rotate around={-30:(mid)}] 
        (mid) ellipse [x radius=0.35, y radius=1];
        
     \path (v2) -- (v3) coordinate[midway] (mid23);
    \draw[red, thick, rotate around={30:(mid23)}] 
        (mid23) ellipse [x radius=0.35, y radius=1];

    \path (v4) -- (v5) coordinate[midway] (mid23);
    \draw[green, thick, rotate around={90:(mid23)}] 
        (mid23) ellipse [x radius=0.35, y radius=1];

   \draw[very thick]  (v1) -- (v5);   
   \draw[very thick]  (v6) -- (v2);  
   \draw[very thick]  (v3) -- (v4);  
\end{tikzpicture}
\hspace{0.7cm}
\begin{tikzpicture}[point/.style={circle,inner sep=1pt,fill=black}]

    \node[point,blue, thick, draw = blue, fill = none] (v1) at (-0.5,0.866) {$a_1$};
    
    \node[point, red, thick, draw = red, fill = none] (v2) at (0.5,0.866) {$b_1$};
    
    \node[point, thick,red, draw = red, fill = none] (v3) at (1,0) {$b_2$};
    
    \node[point, thick, green, draw = green, fill = none] (v4) at (0.5,-0.866) {$c_2$};
    
    \node[point, thick,green, draw = green, fill = none] (v5) at (-0.5,-0.866) {$c_1$};
    
    \node[point, blue, thick, draw = blue, fill = none](v6) at (-1,0) {$a_2$};

    \path (v1) -- (v6) coordinate[midway] (mid);

    \draw[blue, thick, rotate around={-30:(mid)}] 
        (mid) ellipse [x radius=0.35, y radius=1];
        
     \path (v2) -- (v3) coordinate[midway] (mid23);
    \draw[red, thick, rotate around={30:(mid23)}] 
        (mid23) ellipse [x radius=0.35, y radius=1];

    \path (v4) -- (v5) coordinate[midway] (mid23);
    \draw[green, thick, rotate around={90:(mid23)}] 
        (mid23) ellipse [x radius=0.35, y radius=1];
   \draw[very thick]  (v1) -- (v5);   
   \draw[very thick]  (v6) -- (v3);  
   \draw[very thick]  (v2) -- (v4);  
\end{tikzpicture}

\caption{
The complete Feynman diagrams for $\E \left[ \wick{g_{a_1}g_{a_2}} \wick{g_{b_1}g_{b_2}} \wick{g_{c_1}g_{c_2}} \right]$.
}
\label{fig:wick_prod_compt}
\end{figure}

Since Wick products are projections of Gaussians onto lower-order polynomials, we may alternatively express them as linear combinations of Gaussian products.
\begin{lemma} \label{lemma:wick_decomp}
    The Wick product is given by
    \begin{equation}\label{eq:wick_decomp0}
        \wick{\prod_{i=1}^n g_i} = \sum_{\gamma}(-1)^{ \abs{\mathcal{P}(\gamma)} }v(\gamma),
    \end{equation}
    where the sum is taken over all Feynman diagrams.
\end{lemma}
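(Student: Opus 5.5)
The plan is to verify directly that the right-hand side of \eqref{eq:wick_decomp0} is the orthogonal projection of $\prod_{i=1}^n g_i$ onto $\m{G}^{:n:}$, using only the definition of the Wick product and Isserlis' theorem. Write $R = \sum_{\gamma}(-1)^{\abs{\mathcal{P}(\gamma)}} v(\gamma)$, where the sum is over all Feynman diagrams on the vertices $g_1,\dots,g_n$. The proof has three steps.

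\textbf{Step 1: $R$ differs from the product by a lower-degree term.} The diagram with no edges contributes exactly $\prod_{i=1}^n g_i$. Every other diagram $\gamma$ has at least one edge. Its value $v(\gamma)$ is therefore a constant times a product of $n-2\abs{\mathcal{P}(\gamma)}\le n-2$ Gaussians, so it lies in $\m{P}_{n-1}$. Hence
\[
R-\prod_{i=1}^n g_i\in \m{P}_{n-1}, \qquad\text{and in particular}\qquad R\in \m{P}_n .
\]

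\textbf{Step 2: $R$ is orthogonal to $\overline{\m{P}}_{n-1}$.} The space $\m{P}_{n-1}$ is spanned by products $h_1\cdots h_m$ with $m\le n-1$ and $h_k\in\m{G}$; the case $m=0$ gives the constants. By continuity of the inner product, it therefore suffices to show $\E[R\,h_1\cdots h_m]=0$ for every such product.

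Fix $\gamma$ and expand $\E[v(\gamma)h_1\cdots h_m]$ by Isserlis' theorem. The result is a sum over complete pairings $\sigma$ of the set $\mathcal{U}(\gamma)\cup\{h_1,\dots,h_m\}$. The union $\tau=\gamma\cup\sigma$ is then a complete pairing of all $n+m$ variables, and each term equals the product of covariances along the edges of $\tau$.

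Now regroup by $\tau$. Let $E(\tau)$ be the set of edges of $\tau$ joining two $g$'s. A pair $(\gamma,\sigma)$ produces $\tau$ exactly when $\gamma$ is a subset $S\subseteq E(\tau)$ and $\sigma=\tau\setminus S$. The coefficient of the value of $\tau$ is therefore
\[
\sum_{S\subseteq E(\tau)}(-1)^{\abs{S}},
\]
which vanishes unless $E(\tau)=\emptyset$. But $E(\tau)=\emptyset$ means every $g_i$ is paired with some $h_k$, which requires $m\ge n$, a contradiction. Hence every coefficient vanishes and $\E[R\,h_1\cdots h_m]=0$.

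\textbf{Step 3: conclude by uniqueness of the projection.} By Steps 1 and 2, $R\in \overline{\m{P}}_n\cap\overline{\m{P}}_{n-1}^{\perp}=\m{G}^{:n:}$. Also $\prod_i g_i - R\in\overline{\m{P}}_{n-1}$, and $\overline{\m{P}}_{n-1}$ is orthogonal to $\m{G}^{:n:}$. So $R$ is the orthogonal projection of $\prod_i g_i$ onto $\m{G}^{:n:}$, that is, $R=\wick{g_1\cdots g_n}$.

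The main obstacle is the bookkeeping in Step 2. One must check that the correspondence $(\gamma,\sigma)\leftrightarrow(\tau,S)$ is a genuine bijection, and this includes repeated or correlated $g_i$, where values coincide but diagrams are still counted as distinct labelled graphs. Once that is settled, the alternating-sum cancellation is immediate.
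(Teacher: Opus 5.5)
Your proof is correct and complete. The paper itself gives no proof of this lemma. It is recorded in the appendix as a standard fact, with the reader referred to Janson's monograph \cite{janson1997gaussian}, so your proposal supplies a self-contained justification rather than an alternative to an argument in the text.

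Your route is the standard one. You pin down $\wick{g_1\cdots g_n}$ by the two properties that characterize the projection, $R-\prod_i g_i\in\m{P}_{n-1}$ and $R\perp\m{P}_{n-1}$. You prove the second by expanding with Isserlis' theorem and cancelling, for each combined complete pairing $\tau$, through $\sum_{S\subseteq E(\tau)}(-1)^{\abs{S}}=0$. The bijection you flag as the main obstacle is immediate: $\gamma$ and $\sigma$ cover disjoint vertex sets, so $(\gamma,\sigma)$ and $(\tau,S)=(\gamma\cup\sigma,\gamma)$ determine each other, and every $S\subseteq E(\tau)$ is admissible because $\tau$ is a matching.

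Compared with the citation, this buys two things. First, the proof uses only Isserlis' theorem, which the paper already invokes in the main text, so it does not lean on Lemma~\ref{thm:Feynman}. Second, because vertices are labelled, it automatically covers repeated and correlated variables, which is exactly how the paper uses Wick products such as $\wick{\prod_j \W_{i_j}^{\pi_j}}$. The same subset-lattice cancellation, applied after substituting your formula into the right-hand side of Lemma~\ref{lemma:prod_Wick_decomp}, also yields that lemma immediately.

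Finally, you read $\abs{\mathcal{P}(\gamma)}$ correctly as the number of edges of $\gamma$. Read literally as the number of paired vertices, it would make every sign $+1$, contradicting Example~\ref{exmp:wick}.
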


\begin{example}\label{exmp:wick} Using Lemma \ref{lemma:wick_decomp}, we may compute the following Wick products:
    \begin{enumerate}
        \item $\wick{g_1} = g_1$,
        \item $\wick{g_1 g_2} = g_1g_2 -\E[g_1 g_2]$,
        \item $\wick{g_1 g_2 g_3} = g_1 g_2 g_3 -\E[g_2 g_3]g_1 - \E[g_1 g_3]g_2  - E[g_1 g_2]g_3$.
    \end{enumerate}
\end{example}

\noindent Using Lemma \ref{lemma:wick_decomp}, we can also write the product of Gaussians as a linear combination of Wick products.

\begin{lemma}\label{lemma:prod_Wick_decomp}
The product of $n$-Gaussian can be expressed as a linear combination of Wick products,
\begin{align}
    \prod_{i=1}^n g_i =  \sum_{\gamma} \prod_{(r,s)\in \mathcal{P}(\gamma)} \E [ g_r g_s] \wick{\prod_{i \in \mathcal{U}(\gamma)} g_i},
\end{align}
with sum taken over all Feynman diagrams $\gamma$.
\end{lemma}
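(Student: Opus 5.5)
We derive this identity from Lemma~\ref{lemma:wick_decomp} by a combinatorial inversion. We use no analytic input beyond the convention that the empty Wick product equals $1$. The idea is that Lemma~\ref{lemma:wick_decomp} writes each Wick product as a signed sum over Feynman diagrams. Substituting it into the proposed right-hand side gives a double sum over pairs of diagrams, and the signs make every term cancel except the one with no edges.

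\textit{Step 1 (substitution).} Fix a Feynman diagram $\gamma$ on the vertex set $[n]$. The Wick factor $\wick{\prod_{i\in\mathcal{U}(\gamma)} g_i}$ involves only the centered Gaussians indexed by $\mathcal{U}(\gamma)$. Applying Lemma~\ref{lemma:wick_decomp} to this subfamily expresses it as
\begin{equation}
\sum_{\gamma'} (-1)^{\abs{\mathcal{P}(\gamma')}}\prod_{(r,s)\in\mathcal{P}(\gamma')}\E[g_rg_s]\prod_{i\in\mathcal{U}(\gamma)\setminus V(\gamma')} g_i ,
\end{equation}
where $\gamma'$ ranges over Feynman diagrams on the vertex set $\mathcal{U}(\gamma)$ and $V(\gamma')$ denotes the vertices covered by the edges of $\gamma'$. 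The right-hand side of the lemma therefore becomes a sum over ordered pairs $(\gamma,\gamma')$, where $\gamma'$ is a matching on the vertices left unmatched by $\gamma$.

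\textit{Step 2 (regrouping).} Each such pair determines a single Feynman diagram $\sigma=\gamma\cup\gamma'$ on $[n]$, obtained by taking the union of the edge sets. Its value is
\begin{equation}
v(\sigma)=\prod_{(r,s)\in\mathcal{P}(\sigma)}\E[g_rg_s]\prod_{i\in\mathcal{U}(\sigma)}g_i ,
\end{equation}
and the summand of the pair $(\gamma,\gamma')$ equals $(-1)^{\abs{\mathcal{P}(\gamma')}}v(\sigma)$. Conversely, a given $\sigma$ arises from exactly the pairs obtained by splitting its edge set $E(\sigma)$ into a subset $F$ assigned to $\gamma'$ and the complement assigned to $\gamma$. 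Any such split is admissible, because the edges of $\sigma$ are vertex-disjoint. Hence the right-hand side equals
\begin{equation}
\sum_{\sigma} v(\sigma)\sum_{F\subseteq E(\sigma)}(-1)^{\abs{F}}.
\end{equation}

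\textit{Step 3 (cancellation).} The inner sum equals $(1-1)^{\abs{E(\sigma)}}$. It vanishes unless $\sigma$ has no edges, in which case it is $1$. The empty diagram has value $v(\sigma)=\prod_{i=1}^n g_i$, which gives the claim.

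The only point needing care is bookkeeping rather than a genuine obstacle. One must check that the map $(\gamma,\gamma')\mapsto(\sigma,F)$ is a bijection onto pairs consisting of a diagram $\sigma$ and a subset $F\subseteq E(\sigma)$. One must also check that the coefficient $\E[g_rg_s]$ attached to each edge is the same whether that edge sits in $\gamma$ or in $\gamma'$. Both are immediate from the definitions. As a sanity check, the case $n=2$ gives $g_1g_2=\wick{g_1g_2}+\E[g_1g_2]$, matching Example~\ref{exmp:wick}.

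As a fallback, the identity can also be proved by induction on $n$ via the recursion
\begin{equation}
\wick{g_1\cdots g_n}\,g_{n+1}=\wick{g_1\cdots g_{n+1}}+\sum_{i=1}^n\E[g_ig_{n+1}]\wick{\textstyle\prod_{j\neq i}g_j}.
\end{equation}
That route, however, requires deriving the recursion separately, which the inversion argument above avoids.
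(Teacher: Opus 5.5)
Your proposal is correct and follows essentially the paper's route. The paper gives no separate proof beyond noting that the identity follows from Lemma~\ref{lemma:wick_decomp}, and your bijection $(\gamma,\gamma')\leftrightarrow(\sigma,F)$ with the cancellation $\sum_{F\subseteq E(\sigma)}(-1)^{|F|}=(1-1)^{|E(\sigma)|}$ supplies exactly that inversion, correctly reading $|\mathcal{P}(\gamma)|$ as the number of edges, which is what the sign in Lemma~\ref{lemma:wick_decomp} requires.
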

\noindent 
See Figure \ref{fig:feyn_12} for an example of a Feynman diagram interpreted as a Wick product.
\begin{figure}[h] 
\centering

\begin{tikzpicture}[point/.style={circle,inner sep=1.2pt,fill=black}]
\node[point,c1, thick, draw=c1, fill=none]  (a1) at (1,2) {$a_1$};

\node[point,c2, thick, draw=c2, fill=none]  (a2) at (2,2) {$a_2$};

\node[point,c3, thick, draw=c3, fill=none]  (a3) at (4,2) {$a_3$};

\node[point,c4, thick, draw=c4, fill=none]  (a4) at (5,2) {$a_4$};

\node[point,c5, thick, draw=c5, fill=none]  (a5) at (0.5,1) {$a_5$};

\node[point,c6, thick, draw=c6, fill=none]  (a6) at (1.5,1) {$a_6$};

\node[point,c7, thick, draw=c7, fill=none]  (a7) at (4.5,1) {$a_7$};
\node[point,c8, thick, draw=c8, fill=none]  (a8) at (5.5,1) {$a_8$};
\node[point,c9, thick, draw=c9, fill=none]  (a9) at (1,0) {$a_9$};

\node[point,c10, thick, draw=c10, fill=none]  (a10) at (2,0) {$a_{10}$};

\node[point,c11, thick, draw=c11, fill=none]  (a11) at (4,0) {$a_{11}$};

\node[point,c12, thick, draw=c12, fill=none]  (a12) at (5,0) {$a_{12}$};

\draw[very thick]  (a2) -- (a11);
\draw[very thick]  (a3) -- (a10);

\draw[very thick] (a1) .. controls (-0.3,1.8) and (-0.3,0.2) .. (a9);

\draw[very thick] (a4) .. controls (6.6,2) and (6.6,0) .. (a12);

\draw[very thick]  (a7) -- (a8);

\end{tikzpicture}
\hspace{0.7cm}
\begin{tikzpicture}[point/.style={circle,inner sep=1.2pt,fill=black}]
\node[point,c1, thick, draw=c1, fill=none]  (a1) at (1,2) {$a_1$};

\node[point,c2, thick, draw=c2, fill=none]  (a2) at (2,2) {$a_2$};

\node[point,c3, thick, draw=c3, fill=none]  (a3) at (4,2) {$a_3$};

\node[point,c4, thick, draw=c4, fill=none]  (a4) at (5,2) {$a_4$};

\node[point,c5, thick, draw=c5, fill=none]  (a5) at (0.5,1) {$a_5$};

\node[point,c6, thick, draw=c6, fill=none]  (a6) at (1.5,1) {$a_6$};

\node[point,c7, thick, draw=c7, fill=none]  (a7) at (4.5,1) {$a_7$};
\node[point,c8, thick, draw=c8, fill=none]  (a8) at (5.5,1) {$a_8$};
\node[point,c9, thick, draw=c9, fill=none]  (a9) at (1,0) {$a_9$};

\node[point,c10, thick, draw=c10, fill=none]  (a10) at (2,0) {$a_{10}$};

\node[point,c11, thick, draw=c11, fill=none]  (a11) at (4,0) {$a_{11}$};

\node[point,c12, thick, draw=c12, fill=none]  (a12) at (5,0) {$a_{12}$};

\draw[very thick]  (a2) -- (a3);
\draw[very thick]  (a6) -- (a11);

\draw[very thick]  (a5) -- (a9);
\end{tikzpicture}

\caption{ The left and right Feynman diagrams correspond respectively to the Wick product given by  $\E[g_{a_1}g_{a_9}] \E[g_{a_2}g_{a_{11}}] \E[g_{a_3}g_{a_{10}}] \E[g_{a_7}g_{a_8}]\E[g_{a_4}g_{a_12}] \wick{g_{a_5}g_{a_6}}$
and  $\E[g_{a_5}g_{a_9}] \E[g_{{a_6}} g_{a_{11}}] \E[ g_{a_2} g_{a_3}] \wick{g_{a_1} g_{a_{4}} g_{a_7} g_{a_8} g_{a_{10}} g_{a_{12}} }.$
}\label{fig:feyn_12}
\end{figure}
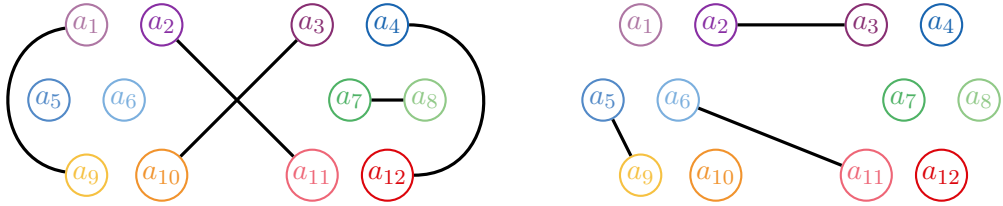

\subsection{$L^p$ bounds of Wick Products}
\begin{lemma} \label{lemma:wick_cor}
    If $\{\m{W}_{i_j}\}_{j=1}^\ell$ are independent standard Gaussians, then 
    \[\E\left[\wick{\prod_{j=1}^\ell \W_{i_j}^{\pi_j}}\right] = 0 \quad  \text{and} \quad  
        \E\left[ \wick{\prod_{j=1}^\ell \W_{i_j}^{\pi_j}}^2\right] = \pi_1!\dots\pi_\ell!.\] Moreover, if $(i_1,\dots,i_\ell) \neq (s_1,\cdots,s_\ell)$ then $\wick{\prod_{j=1}^\ell \W_{i_j}^{\pi_j}}$ and $\wick{\prod_{j=1}^\ell \W_{s_j }^{\pi_j}}$ are uncorrelated.
\end{lemma}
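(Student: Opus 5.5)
We treat the row $\wick{\prod_{j=1}^\ell \W_{i_j}^{\pi_j}}$ as a product over independent coordinates and reduce everything to one-dimensional Hermite polynomials. The key starting fact is that for independent standard Gaussians the Wick product factorizes:
\begin{equation}
\wick{\prod_{j=1}^\ell \W_{i_j}^{\pi_j}} = \prod_{j=1}^\ell \wick{\W_{i_j}^{\pi_j}} = \prod_{j=1}^\ell He_{\pi_j}(\W_{i_j}),
\end{equation}
where $He_n$ is the probabilists' Hermite polynomial and the indices $i_1<\cdots<i_\ell$ are distinct.

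To justify the factorization, apply Lemma \ref{lemma:wick_decomp}. Any Feynman diagram containing an edge between copies of different variables has value zero, because $\E[\W_{i_j}\W_{i_{j'}}]=0$ for $j\neq j'$. So the signed sum over diagrams splits into a product, over $j$, of signed sums over diagrams on the $\pi_j$ copies of $\W_{i_j}$. Each such factor is exactly the Wick power $\wick{\W_{i_j}^{\pi_j}}$. By the one-variable case of Lemma \ref{lemma:wick_decomp}, this equals $\sum_k (-1)^k \binom{\pi_j}{2k}(2k-1)!!\, g^{\pi_j-2k}$, which is the standard expansion of $He_{\pi_j}(g)$.

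Given the factorization, the moments follow quickly.
\begin{itemize}
\item \emph{Mean.} Independence gives $\E[\prod_j He_{\pi_j}(\W_{i_j})]=\prod_j \E[He_{\pi_j}(\W_{i_j})]$. Each factor vanishes since $\pi_j\ge 1$; equivalently, a nonzero-order chaos is orthogonal to constants.
\item \emph{Second moment.} Independence gives $\E[\prod_j He_{\pi_j}(\W_{i_j})^2]=\prod_j \pi_j!$, by the Hermite orthogonality relation $\E[He_n(g)He_m(g)]=n!\,\delta_{nm}$.
\item \emph{Alternative check.} One can instead use Lemma \ref{thm:Feynman} with two groups. The complete diagrams pairing the two copies of $\W_{i_j}^{\pi_j}$ across groups number $\pi_j!$, and every cross-label pairing vanishes.
\end{itemize}

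For the uncorrelatedness claim, let $\m{i}\neq\m{s}$ and write the product $\prod_j He_{\pi_j}(\W_{i_j})\prod_j He_{\pi_j}(\W_{s_j})$ as a product over the distinct underlying variables.
\begin{itemize}
\item \emph{Some variable appears once.} If some variable $g$ occurs in only one of the two tuples, or in both tuples but with different powers, its factor is $He_a(g)$ or $He_a(g)He_b(g)$ with $a\neq b$. By independence and Hermite orthogonality, that factor has mean zero, so the whole product has mean zero.
\item \emph{Every variable matches.} Otherwise every variable occurs in both tuples with the same exponent. Then the multisets $\{(i_j,\pi_j)\}$ and $\{(s_j,\pi_j)\}$ coincide. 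Since both tuples are ordered increasingly and carry the same composition $\pi$ position by position, this forces $\m{i}=\m{s}$, a contradiction.
\end{itemize}

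The main obstacle is this final bookkeeping step. The composition $\pi$ is fixed, and the uncorrelatedness claim is made for two index tuples sharing that same $\pi$. Because the tuples are increasing, matching the multisets of (variable, exponent) pairs is equivalent to equality of the tuples, and we would note this explicitly. The only other point needing care is the one-variable identification $\wick{g^n}=He_n(g)$, which is the classical one-dimensional case of Lemma \ref{lemma:wick_decomp}.
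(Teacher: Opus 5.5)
Your argument is correct, but it is organized differently from the paper's. The paper never factorizes the Wick monomial. It gets the mean from $\wick{\cdot}\in\m{G}^{:p:}$. For both the second moment and the cross term it applies the diagram formula (Lemma~\ref{thm:Feynman}) directly to the product of the two Wick products: it splits each copy into subgroups $A_j$, $B_j$ by variable, discards every diagram with an edge between different variables, and counts the $\pi_1!\cdots\pi_\ell!$ surviving matchings $A_j\leftrightarrow B_j$. For $\m{i}\neq\m{s}$ it observes that some subgroup has no same-variable partner in the other group, so every complete diagram vanishes.

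You instead use Lemma~\ref{lemma:wick_decomp} and independence to factor the monomial as $\prod_j He_{\pi_j}(\W_{i_j})$. Everything then reduces to independence plus the one-variable relation $\E[He_n(g)He_m(g)]=n!\,\delta_{nm}$. That relation is itself the one-variable diagram count, so your ``alternative check'' is exactly the paper's proof.

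Your route costs an extra step: the factorization and the identification $\wick{g^n}=He_n(g)$. What it buys is a cleaner and more general cross-correlation criterion: Wick monomials in independent standard Gaussians are uncorrelated whenever their variable-to-exponent profiles differ, regardless of ordering or of sharing the composition $\pi$. Under the lemma's hypotheses your first subcase alone suffices. Two increasing tuples of length $\ell$ with the same index set coincide, so for $\m{i}\neq\m{s}$ some variable always occurs in only one of them. Still, it is the profile, not the ordered tuple, that decides uncorrelatedness once rows are indexed by pairs $(\ti,\ts)$ whose concatenation need not be increasing, as in Lemma~\ref{lemma:wick_eig_uppr_bd}.
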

\begin{proof}
Let $p = \sum_{j=1}^\ell \pi_j$, then the mean follows immediately from the fact that $\wick{\prod_{j=1}^\ell \W_{i_j}^{\pi_j}} \in \m{G}^{:p:}$. To compute the variance, it will be convenient to refer to the Gaussians that make up one copy of $\wick{\prod_{j=1}^\ell \W_{i_j}^{\pi_j}}$ as group $A$ and the Gaussians making up the second copy as group $B$. Then by Theorem \eqref{thm:Feynman},
\begin{align}
    \E \left[ \wick{\prod_{j=1}^\ell \W_{i_j}^{\pi_j}} \wick{\prod_{j=1}^\ell \W_{i_j}^{\pi_j}}\right] = \sum_{\gamma}v(\gamma),
\end{align}
such that the sum is taken across complete Feynman diagrams $\gamma$ such that Gaussians from group $A$ are only paired with Gaussians from group $B$. Note, each group consists of $p$-many Gaussians including repetition. Now, 
within each group, we may further group multiple copies of the same Gaussian into subgroups of $\pi_j$ which we will refer to as subgroup $A_j/B_j$, i.e., subgroup $A_j$ consists of $\pi_j$-copies of $\W_{i_j}$. 
Since Gaussians from group $A$ can only be paired with Gaussians from group $B$, if $\gamma$ is a Feynman diagram consisting of an edge that pairs Gaussians from $A_j$ to $B_t$, then by independence  $\E[\W_{i_j}\W_{i_t}] = \delta_{j,t}$. Which is to say that Feynman diagrams that mix different subgroups have $v(\gamma) = 0$. On the other hand, if the Feynman diagram only connects Gaussians from subgroups $A_j$ to $B_j$ for all $j\in [\ell]$, then $v(\gamma) = 1$.
It follows that there are $\pi_1! \dots \pi_\ell!$ many non-zero valued Feynman diagrams. This concludes the computation of the second moment.

If $(i_1<\dots<i_\ell) \neq (s_1<\dots<s_\ell)$ then there must exist a subgroup within each group that doesn't exist as an identical copy in the other. Which means that any Feynman diagram $\gamma$ must form at least one edge between non-identical subgroups. Thus, $v(\gamma) = 0$ for all $\gamma$ and 
\begin{equation}
    \E \left[ \wick{\prod_{j=1}^\ell \W_{s_j}^{\pi_j}} \wick{\prod_{j=1}^\ell \W_{t_j}^{\pi_j}} \right]=0.
\end{equation}
\end{proof}

\begin{lemma}\label{lemma:wick_Max_bd}
Let $\mathcal{I} = \{ \m{i}= (i_1,\cdots, i_\ell)\,;\, i_1<\cdots<i_\ell\, \text{ and } i_j \in [v] \}$ and $\wc \in \R^{|\mathcal{I}|}$ be a Wick vector defined by $\wc_{\m{i}} = \wick{\prod_{j=1}^\ell \W_{i_j}^{\pi_j}}$. If $p = \sum_{j=1}^\ell \pi_j$ and $\pi! = \pi_1! \cdots \pi_\ell!$ then there exists $C_\pi>0$ such that
\begin{equation}
    \prob \left( \left| \wc^\top  \wc - \pi! |\mathcal{I}| \right| >t \right) \leq e^2 \exp \left( - \left( \frac{t}{ C_\pi|\mathcal{I}|}\right)^{1/p} \right).
\end{equation}
Moreover, if $q>1$ and $\{\wc^{(a)}\}_{a \in [d]}$ are i.i.d. random Wick vectors in $\R^{|\mathcal{I}|}$, then for $d$ sufficiently large
\begin{equation}
    \E \left[\max_{a\in [d]} \left| \wc^\top \wc - \pi! |\mathcal{I}| \right|^q \right] \leq C_{\pi,q}  \abs{\mathcal{I}}^q \log^{pq}(d).
\end{equation}
\end{lemma}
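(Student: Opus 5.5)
We prove the two claims of Lemma~\ref{lemma:wick_Max_bd} in order: first a tail bound for a single norm via Gaussian hypercontractivity, then the moment bound for the maximum by integrating that tail after a union bound over $a\in[d]$.

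\textbf{Tail bound.} The quantity $F=\wc^\top\wc=\sum_{\m{i}\in\mathcal{I}}\wc_{\m{i}}^2$ is a polynomial of degree $2p$ in the independent standard Gaussians $\{\W_i\}$, since each $\wc_{\m{i}}$ is a degree-$p$ Gaussian polynomial by Lemma~\ref{lemma:wick_decomp}. By Lemma~\ref{lemma:wick_cor}, $\E F=\pi!\,|\mathcal{I}|$. Lemma~\ref{lem:hypercontractivity} (applied with degree $2p$) then gives
\[
\prob(|F-\E F|>t)\le 2\exp\bigl(-R(t^2/\Var F)^{1/(2p)}\bigr)=2\exp\bigl(-R(t/\sqrt{\Var F})^{1/p}\bigr).
\]
It therefore suffices to show $\Var F\le C_\pi|\mathcal{I}|^2$. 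Since
\[
\Var F\le \E F^2=\sum_{\m{i},\m{j}}\E[\wc_{\m{i}}^2\wc_{\m{j}}^2]\le |\mathcal{I}|^2\max_{\m{i},\m{j}}\E[\wc_{\m{i}}^2\wc_{\m{j}}^2],
\]
and Cauchy--Schwarz gives $\E[\wc_{\m{i}}^2\wc_{\m{j}}^2]\le\max_{\m{i}}\E\wc_{\m{i}}^4$, we only need a uniform fourth-moment bound. The fourth moment $\E\wc_{\m{i}}^4$ is a finite Feynman-diagram count (Lemma~\ref{thm:Feynman}) depending only on $\pi$. (Hypercontractivity would also give it, being bounded by $3^{2p}(\pi!)^2$.) The constant $e^2$ in the statement comes from absorbing $2$ and $R$ into $C_\pi$: for small arguments the right-hand side exceeds $1$, so the bound is trivial there, and for large arguments we adjust $C_\pi$.

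\textbf{Moment of the max.} Let $X=\max_a|F^{(a)}-\pi!|\mathcal{I}||$. A union bound gives
\[
\prob(X>t)\le \min\bigl\{1,\ d\,e^2\exp(-(t/C_\pi|\mathcal{I}|)^{1/p})\bigr\}.
\]
Set $t_0=C_\pi|\mathcal{I}|(2\log d)^p$, and split
\[
\E X^q=q\int_0^\infty t^{q-1}\prob(X>t)\,dt .
\]
On $[0,t_0]$ we bound the probability by $1$, which gives a contribution at most $t_0^q\lesssim |\mathcal{I}|^q\log^{pq}d$. On $[t_0,\infty)$ we substitute $u=(t/C_\pi|\mathcal{I}|)^{1/p}$, turning the integral into
\[
d\,(C_\pi|\mathcal{I}|)^q\,pq\int_{2\log d}^\infty u^{pq-1}e^{-u}\,du .
\]
The gamma tail estimate already used in the proof of Proposition~\ref{prop:principal}, $\int_x^\infty u^{m-1}e^{-u}du\le 2x^{m-1}e^{-x}$ for $x\ge 2(m-1)$ (valid once $d$ is large), bounds the last integral by $2(2\log d)^{pq-1}d^{-2}$. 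The whole tail contribution is then $O(|\mathcal{I}|^q\log^{pq-1}d/d)$, which is negligible. Summing the two pieces gives $\E X^q\le C_{\pi,q}|\mathcal{I}|^q\log^{pq}d$.

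\textbf{Main obstacle.} The only nonroutine point is controlling $\Var F$ by $|\mathcal{I}|^2$ rather than something larger. The crude Cauchy--Schwarz bound above already suffices, since the target is $|\mathcal{I}|^2$ and not the sharper $|\mathcal{I}|$. If one wanted the sharper order, one would use Lemma~\ref{lemma:wick_cor} together with the fact that $\wc_{\m{i}}^2$ and $\wc_{\m{j}}^2$ are independent when $\m{i},\m{j}$ are disjoint, but this is not needed.
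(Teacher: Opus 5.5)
Your proof is correct and is essentially the paper's argument: Gaussian hypercontractivity for the degree-$2p$ polynomial $\wc^\top\wc$ with the crude bound $\Var(\wc^\top\wc)\le C_\pi|\mathcal{I}|^2$, then a union bound over $a\in[d]$ and a split of the moment integral at $t_0\asymp|\mathcal{I}|\log^p d$ closed by the gamma tail estimate. Your threshold $t_0=C_\pi|\mathcal{I}|(2\log d)^p$ is in fact a bit more careful than the paper's $t_0=|\mathcal{I}|\log^p d$, since it pits the union-bound factor $d$ against $e^{-2\log d}=d^{-2}$ whatever the value of $C_\pi$ (your closing aside is off, though it is unused: for $\ell\ge 2$, tuples sharing an index have positively correlated squares, so the variance is genuinely of order $|\mathcal{I}|^{2-1/\ell}$, not $|\mathcal{I}|$).
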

\begin{proof}
    By Lemma \ref{eq:wick_decomp0}, $\wc^\top \wc$ is a Gaussian polynomial of degree-$2p$ with mean
    \begin{equation}
        \E \left [ \wc^\top \wc \right] = \sum_{\m{i}} \E \left[ \wc_{\m{i}}^2 \right] = \pi! \abs{\mathcal{I}}, 
    \end{equation}
and variance bound,
\begin{align}
    \Var\left [ \wc^\top \wc \right] &= \E \left [(\wc^\top \wc)^2 \right] - \E \left[\wc^\top \wc \right]^2 \\
    &= \sum_{\m{i},\m{s}} \E \left[  \wc_{\m{i}}^2 \wc_{\m{s}}^2\right] - \left( \pi! \abs{\mathcal{I}}\right)^2\\
    &\leq \max_{\m{i},\m{s}} \E \left[ \wc_{\m{i}}^2 \wc_{\m{s}}^2\right] \abs{\mathcal{I}}^2 -  \left( \pi! \abs{\mathcal{I}}\right)^2\\
    &\leq C_\pi \abs{\mathcal{I}}^2.
\end{align}
By Lemma \ref{lem:hypercontractivity},
\begin{equation}
     \prob \left( \left| \wc^\top  \wc - \pi! |\mathcal{I}| \right| >t \right) \leq e^2 \exp \left( - \left( \frac{t}{ C_\pi|\mathcal{I}|}\right)^{1/p} \right).
\end{equation}
For the $L^q$ bound, let $t_0= \abs{\mathcal{I}} \log(d)^p $ then
\begin{align}
    \E\left [ \max_{1\leq a \leq d} \abs{ \wc^{(a)\top} \wc^{(a)} - \pi! \abs{\mathcal{I}}}^q \right]    &= \int_0^\infty \prob \left( \max_{1\leq a \leq d}\left| \wc^{(a),\top}  \wc^{(a)} - \pi! |\mathcal{I}| \right|^q >t \right)\,\dif t \\
    &\leq  q t_0^{q}+qde^2 \int_{ \left( \frac{t_0}{C_\pi \abs{\mathcal{I}}} \right)^{1/p}}^\infty \left(C_\pi \abs{\mathcal{I}} t^{p} \right)^{q-1}\exp\left(-t \right)\,\dif t\\
    &\lesssim C_{\pi,q} \abs{\mathcal{I}}^q \log^{pq}(d)
    \end{align}
where we use the gamma tail estimate for $x\geq 2(p-1)$ and $p\geq 1$,\[\int_{x}^{\infty}  u^{p-1} e^{-u} du\leq 2 x^{p-1}e^{-x}.\] 
\end{proof}

\section{Lattice point counting}

In this section, we prove for completeness a lattice point counting problem, which gives the asymptotic behavior of the population covariance spectra, as well as an upper bound on Weyl Chamber version of the same counting problem (Theorem \ref{thm:ordered}).  Problems like these have been studied extensively in number theory, see for example \cite{apostol1976analytic,montgomery2006multiplicative,tenenbaum2015introduction}.

Fix $k\ge1$ and positive real numbers $\pi_1,\dots,\pi_k$.
Define
\[
\pi_*\coloneqq\min_{1\le i\le k}\pi_i,\qquad
m\coloneqq\#\{i:\ \pi_i=\pi_*\},\qquad
\sigma_0:=\frac{1}{\pi_*}.
\]
Define the unordered and ordered lattice point counts as
\[
A_k(X)\coloneqq \#\Bigl\{(s_1,\dots,s_k)\in\mathbb{N}^k:\ s_1^{\pi_1}\cdots s_k^{\pi_k}\le X\Bigr\},
\]
and respectively
\[
N_\downarrow(X)
\coloneqq\#\Bigl\{(s_1,\dots,s_k)\in\mathbb{N}^k:\ 1\leq  s_1<\cdots\leq s_k,\ s_1^{\pi_1}\cdots s_k^{\pi_k}\le X\Bigr\}.
\]
Let $\zeta(s)$ be the Riemann $\zeta$-function, which for $s > 1$ is given by the Dirichlet series
\[
\zeta(s) \coloneqq \sum_{n=1}^\infty \frac{1}{n^s}.
\]
We will use Tauberian theory to give the asymptotic behavior of the lattice point counting problems.

\begin{theorem}[Unordered tuples]\label{thm:unordered}
As $X\to\infty$,
\begin{equation}\label{eq:unordered}
A_k(X)\sim
\frac{\pi_*^{\,1-m}}{\Gamma(m)}
\Biggl(\prod_{\substack{1\le i\le k\\ \pi_i>\pi_*}}\zeta\!\Bigl(\frac{\pi_i}{\pi_*}\Bigr)\Biggr)
\,X^{1/\pi_*}\,(\log X)^{m-1}.
\end{equation}
In particular, if all $\pi_i$ are equal to a common value $\pi$ (so $m=k$), then
\[
A_k(X)\sim \frac{\pi^{\,1-k}}{\Gamma(k)}\,X^{1/\pi}(\log X)^{k-1}.
\]
\end{theorem}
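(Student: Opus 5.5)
The plan is to prove Theorem~\ref{thm:unordered} with the Dirichlet-series/Tauberian method. The generating series factors over coordinates, so its analytic structure near the critical exponent can be read off from the Riemann zeta function. Write
\[
A_k(X)=\sum_{n\le X} a(n),
\]
where $a(\cdot)$ is the counting measure on the multiset of values $\{s_1^{\pi_1}\cdots s_k^{\pi_k}\}$. Since the $\pi_i$ are real, these values need not be integers. We therefore work with the generalized Dirichlet series (a Laplace--Stieltjes transform)
\[
F(s)=\int_{1^-}^\infty x^{-s}\,\dif A_k(x)=\sum_{(s_1,\dots,s_k)\in\N^k}\prod_{i=1}^k s_i^{-\pi_i s}=\prod_{i=1}^k \zeta(\pi_i s),
\]
which converges absolutely for $\Re s>\sigma_0=1/\pi_*$. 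This factorization is the key structural fact.

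Next we analyze the singularity at $s=\sigma_0$. For an index with $\pi_i>\pi_*$, the factor $\zeta(\pi_i s)$ is holomorphic and nonzero in a neighbourhood of the closed half-plane $\Re s\ge\sigma_0$, because $\pi_i\sigma_0>1$; its value at $\sigma_0$ is $\zeta(\pi_i/\pi_*)$. For each of the $m$ indices with $\pi_i=\pi_*$, we use $\zeta(u)=\frac{1}{u-1}+G(u)$ with $G$ entire, so
\[
\zeta(\pi_* s)=\frac{1}{\pi_*(s-\sigma_0)}+\text{entire}.
\]
Hence
\[
F(s)=\frac{\pi_*^{-m}\prod_{\pi_i>\pi_*}\zeta(\pi_i/\pi_*)}{(s-\sigma_0)^m}+\sum_{j<m}\frac{c_j}{(s-\sigma_0)^{j}}+H(s),
\]
with $H$ holomorphic near $s=\sigma_0$. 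On the rest of the line $\Re s=\sigma_0$, $s\ne\sigma_0$, the function $F$ extends continuously, since $\zeta$ has no pole on $\Re u=1$ other than $u=1$.

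We then apply the Delange generalization of the Ikehara--Wiener Tauberian theorem \cite{delange1954ikehara}. Its hypotheses are: $A_k$ is nondecreasing; $F$ converges for $\Re s>\sigma_0$; and $F(s)-\sum_j c_j(s-\sigma_0)^{-j}$ extends continuously to $\Re s\ge\sigma_0$ with leading pole of order $m$. The conclusion is
\[
A_k(X)\sim\frac{C}{\sigma_0\,\Gamma(m)}X^{\sigma_0}(\log X)^{m-1},
\qquad C=\pi_*^{-m}\prod_{\pi_i>\pi_*}\zeta(\pi_i/\pi_*).
\]
The factor $1/\sigma_0=\pi_*$ arises because we are counting $A_k$ rather than integrating $x^{-1}\dif A_k$. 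This gives exactly the constant $\pi_*^{1-m}/\Gamma(m)$ in \eqref{eq:unordered}. The equal-exponent case is the specialization $m=k$ with an empty product.

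The main obstacle is verifying the boundary hypothesis of the Tauberian theorem uniformly on the vertical line $\Re s=\sigma_0$. Delange's theorem requires local integrability or continuity of the remainder on every compact segment, and this needs $\zeta(\pi_i s)$ to be well behaved when $\Re(\pi_i s)=1$ only for the minimal indices. That is true, but one must check there are no accidental poles: if $\pi_i s=1$ with $\Re s=\sigma_0$, then $\pi_i=\pi_*$ and $s=\sigma_0$, so no other poles occur. A secondary technical point is handling the pole of order $m$, including its lower-order terms, rather than the simple pole of Ikehara's original theorem; Delange's version covers this directly.

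If one prefers to avoid Tauberian theory, there is an elementary alternative by induction on $k$. Write
\[
A_k(X)=\sum_{s_k\le X^{1/\pi_k}} A_{k-1}\!\left(X s_k^{-\pi_k}\right).
\]
Then compare the sum with an integral. A minimal index produces an extra $\log X$ factor together with the $1/\big((m-1)\pi_*\big)$ normalization. A non-minimal index produces the convergent sum $\sum_{s} s^{-\pi_i/\pi_*}=\zeta(\pi_i/\pi_*)$, justified by dominated convergence. This route requires controlling the error terms uniformly, which is routine but tedious, so the Tauberian route is the primary plan.
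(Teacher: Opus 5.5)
Your proposal is correct and follows essentially the same route as the paper: factor the counting Dirichlet series as $\prod_{i=1}^k \zeta(\pi_i s)$, read off the order-$m$ pole at $s=1/\pi_*$ with leading coefficient $\pi_*^{-m}\prod_{\pi_i>\pi_*}\zeta(\pi_i/\pi_*)$, and apply Delange's Tauberian theorem to obtain the constant $\pi_*^{1-m}/\Gamma(m)$. Your version is slightly more careful than the paper's in two places: you use a Stieltjes-type generalized Dirichlet series, which accommodates non-integer values of $s_1^{\pi_1}\cdots s_k^{\pi_k}$ when the $\pi_i$ are real, and you explicitly check that $F$ has no other singularities on the line $\Re s=1/\pi_*$, a hypothesis the genuine Delange theorem requires but the paper's quoted version omits.
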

For the ordered case, the count differs from the unordered case, save for when all $\pi_i$ are equal.
\begin{theorem}[Strictly decreasing tuples]\label{thm:ordered}
  Let $k\ge1$ and positive integers $\pi_1,\dots,\pi_k$. Define
  $\Pi_r \coloneqq \pi_k +\cdots+\pi_{k-r+1}$ and $\theta_r \coloneqq r/\Pi_r$, and set
  \[
  \theta^\star \coloneqq \max_{1\le r\le k}\theta_r.
  \]
  For the ordered count
  \[
  N_\downarrow(X) \coloneqq \#\{(1\leq s_1<\cdots<s_k):\ s_1^{\pi_1}\cdots s_k^{\pi_k}\le X\},
  \]
  let
  \[
  \mu \coloneqq \#\{\,r\in\{1,\dots,k\}:\ \theta_r=\theta^\star\,\}.
  \]
  Then, as $X\to\infty$,
  \[
  N_\downarrow(X)\ \lesssim  X^{\theta^\star}(\log X)^{\mu-1}.
  \]
  In the case that all $\pi_i$ are equal to a common value $\pi$,
  \[
    N_\downarrow(X)\sim \frac{\pi^{\,1-k}}{\Gamma(k)\Gamma(k+1)}\,X^{1/\pi}(\log X)^{k-1}.
  \]
\end{theorem}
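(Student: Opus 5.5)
We prove the general bound by induction on $k$, peeling off the smallest coordinate $s_1$, and deduce the equal-exponent asymptotic from Theorem~\ref{thm:unordered}. The key structural observation is this: removing $s_1$ leaves the tuple $(s_2<\cdots<s_k)$ with exponents $(\pi_2,\dots,\pi_k)$. Its tail sums $\Pi_r=\pi_k+\cdots+\pi_{k-r+1}$ for $r\le k-1$ are exactly those of the original problem. Hence the shorter problem has $\theta'=\max_{r\le k-1}\theta_r$ and multiplicity $\mu'$ computed from the same quantities. The full problem only adds the single new candidate $\theta_k=k/\Pi_k$.

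For the induction to close we need control of the constrained count
\[
N_{>m}(Y)=\#\{m<s_1<\cdots<s_k:\ s_1^{\pi_1}\cdots s_k^{\pi_k}\le Y\},
\]
since after fixing $s_1=m$ the remaining coordinates must exceed $m$. The strengthened hypothesis I would carry is
\[
N_{>m}(Y)\lesssim Y^{\theta^\star}\,m^{\,k-\theta^\star\Pi_k}\,(\log Y)^{\mu-1},
\qquad\text{and}\qquad N_{>m}(Y)=0 \ \text{ if } Y<m^{\Pi_k}.
\]
The vanishing holds because every $s_i>m$. Note that $k-\theta^\star\Pi_k\le 0$, because $\theta^\star\ge\theta_k=k/\Pi_k$, so the $m$-factor is a genuine gain. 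For $k=1$ the hypothesis is the elementary estimate $\#\{m<s\le Y^{1/\pi}\}\le Y^{1/\pi}$.

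For the inductive step write $N_{>m}(X)=\sum_{s_1>m}N'_{>s_1}(X s_1^{-\pi_1})$, where $N'$ is the count for the shorter tuple. Apply the hypothesis to $N'$ and use the support condition to restrict to $s_1\le X^{1/\Pi_k}$. The summand then behaves like
\[
X^{\theta'}(\log X)^{\mu'-1}\,s_1^{-(\theta'\Pi_k-(k-1))}.
\]
Three cases arise according to the exponent $\theta'\Pi_k-(k-1)$ compared with $1$, which is the same as comparing $\theta_k$ with $\theta'$:
\begin{itemize}
\item If $\theta_k<\theta'$, the sum converges. It gives $X^{\theta'}(\log X)^{\mu'-1}$ times the expected tail power of $m$, with $\theta^\star=\theta'$ and $\mu=\mu'$.
\item If $\theta_k=\theta'$, the sum is harmonic up to $X^{1/\Pi_k}$. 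It produces exactly one extra $\log X$, matching $\mu=\mu'+1$.
\item If $\theta_k>\theta'$, the sum is dominated by its top end $s_1\approx X^{1/\Pi_k}$. It gives $X^{k/\Pi_k}$ with at most the inherited logs, and these are absorbed by the power gain $X^{(\theta_k-\theta')/\Pi_k}$ of the neighbouring terms, so $\mu=1$.
\end{itemize}
Taking $m=0$ yields $N_\downarrow(X)\lesssim X^{\theta^\star}(\log X)^{\mu-1}$.

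The step I expect to be the main obstacle is the bookkeeping of the log factors together with the $m$-dependence. The inherited logarithm $(\log(X s_1^{-\pi_1}))^{\mu'-1}$ must not be crudely bounded by $(\log X)^{\mu'-1}$ in the critical case $\theta_k=\theta'$, or one log too many appears. If that happens I would first try replacing the log factor in the hypothesis by $\bigl(1+\log(Y/m^{\Pi_k})\bigr)^{\mu-1}$, so that it vanishes at the support edge; with this, the critical sum becomes $\sum_{s_1}s_1^{-1}\bigl(\log(X/s_1^{\Pi_k})\bigr)^{\mu'-1}\asymp(\log X)^{\mu'}$. As a cross-check on the exponents I would use the Rankin bound $N_\downarrow(X)\le X^\sigma F(\sigma)$ with $F(\sigma)=\sum_{s_1<\cdots<s_k}\prod_i s_i^{-\pi_i\sigma}$. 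Evaluating $F$ by iterated integral comparison innermost-first, the $r$-th step contributes a factor $(\Pi_r\sigma-r)^{-1}$. This confirms abscissa $\theta^\star$ and pole order $\mu$, albeit with one extra log.

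For the equal case $\pi_i\equiv\pi$, the product is symmetric. Tuples with all coordinates distinct are counted $k!$ times in $A_k(X)$, once per ordering. Tuples with a repeated coordinate reduce to a $(k-1)$-variable count with exponents $(2\pi,\pi,\dots,\pi)$. By Theorem~\ref{thm:unordered} these number $O(X^{1/\pi}(\log X)^{k-2})$ when $k\ge3$, and $O(X^{1/(2\pi)})$ when $k=2$; either way this is $o\bigl(X^{1/\pi}(\log X)^{k-1}\bigr)$. Hence $N_\downarrow(X)\sim A_k(X)/k!$, and Theorem~\ref{thm:unordered} with $\Gamma(k+1)=k!$ gives the stated constant.
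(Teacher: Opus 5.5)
Your induction is a genuinely different route from the paper's, and it gives a complete proof only with the refined logarithm. You have also put the difficulty in the wrong case. Write $\kappa=\theta'\Pi_k-(k-1)$ for the exponent of $s_1$.

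\emph{Critical case.} When $\kappa=1$ the crude bound is harmless: $\sum_{m<s_1\le X^{1/\Pi_k}}s_1^{-1}(\log X)^{\mu'-1}\lesssim(\log X)^{\mu'}=(\log X)^{\mu-1}$.

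\emph{Supercritical case.} The failure is when $\theta_k>\theta'$. At the top end $s_1\approx X^{1/\Pi_k}$ you have $Xs_1^{-\pi_1}\approx X^{\Pi_{k-1}/\Pi_k}$. So the inherited factor $(\log(Xs_1^{-\pi_1}))^{\mu'-1}$ is still $\asymp(\log X)^{\mu'-1}$, and your first hypothesis only gives $X^{\theta_k}(\log X)^{\mu'-1}$. Nothing absorbs this. The power gain is $X^{(1-\kappa)/\Pi_k}=X^{\theta_k-\theta'}$ (not $X^{(\theta_k-\theta')/\Pi_k}$). It is exactly what turns $X^{\theta'}$ into $X^{\theta_k}$, and the log loss sits at the very end where that gain is realized.

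\emph{Consequence.} Your first hypothesis is true, since it follows from the refined one, but it does not propagate. The replacement by $(1+\log(Y/m^{\Pi_k}))^{\mu-1}$ is therefore mandatory, not a fallback. With it, the inherited factor becomes $(1+\log(X/s_1^{\Pi_k}))^{\mu'-1}$, which is $O(1)$ near the edge. Substituting $s_1=X^{1/\Pi_k}e^{-u}$ bounds the supercritical contribution by $X^{\theta'}X^{(1-\kappa)/\Pi_k}\int_0^\infty e^{-(1-\kappa)u}(1+\Pi_k u)^{\mu'-1}\,du\lesssim X^{\theta_k}$. The other two cases follow by bounding the factor by its value at $s_1=m$. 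Also replace $m$ by $\max(m,1)$ throughout. The exponent $k-\theta^\star\Pi_k$ is negative whenever $\theta^\star>\theta_k$, so $m=0$ is not literally allowed.

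\emph{Comparison with the paper.} The paper does not induct. It places all $s_i$ in dyadic blocks at once and passes to the gaps $\Delta_j$ between consecutive dyadic exponents. It then uses the identity $\sum_i m_i=\theta^\star\sum_j\Pi_j\Delta_j-\sum_j\eta_j\Delta_j$ with $\eta_j=\theta^\star\Pi_j-j\ge 0$. Gaps with $\theta_j<\theta^\star$ are exponentially penalized. On each slice $\sum_j\Pi_j\Delta_j=A$, the $\mu$ extremal gaps contribute only $O((1+A)^{\mu-1})$ lattice points. Summing $2^{\theta^\star(A-M)}$ over $A$ gives $X^{\theta^\star}M^{\mu-1}$ in one stroke.

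Your peeling integrates these gap variables out one at a time. That is why you must carry the lower cutoff $m$ and an edge-vanishing logarithm. The paper's argument is shorter, and it makes $\mu-1$ visibly the dimension of the extremal face of the underlying linear program. Yours is more delicate but proves a stronger estimate, uniform in the cutoff: $N_{>m}(Y)\lesssim Y^{\theta^\star}m^{k-\theta^\star\Pi_k}(1+\log(Y/m^{\Pi_k}))^{\mu-1}$ for $m\ge 1$.

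Your treatment of the equal-exponent case is the same as the paper's: ties reduce to a $(k-1)$-variable count with one exponent $2\pi$, so $N_\downarrow\sim A_k/k!$.
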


\begin{remark}[Theory-predicted eigenvalue counts for $\p=2$ and $\p=3$]\label{rem:theory_curve}
Theorems~\ref{thm:unordered} and~\ref{thm:ordered} can be combined with the
decomposition~\eqref{eq:Krep} to produce a zero-free-parameter prediction for the
eigenvalue counting function of the full random-feature covariance
$\frac{1}{d}\E_{\x}[f(\W^\top\x)^{\otimes 2}]$.

\paragraph{General setup.}
For the monomial $f(y)=y^\p$, the covariance decomposes as a sum of positive-semidefinite
terms indexed by compositions~$\pi$ and pairings~$\xi$.
Each term $\frac{1}{d}\D^{A_\xi}\m{Y}^{\pi\top}\m{Y}^\pi\D^{A_\xi}$ has a diagonal
population covariance $\cov^\pi$ whose entries are products
$\cov_{i_1}^{\pi_1}\cdots\cov_{i_\ell}^{\pi_\ell}$.
When the input covariance has $\alpha$-power-law spectrum
($\cov_s\asymp s^{-\alpha}$), the number of eigenvalues of $\cov^\pi$
exceeding a threshold~$\varepsilon$ is
\[
N_\pi(\varepsilon)
=\#\bigl\{s_1<\cdots<s_\ell:\; s_1^{-\pi_1\alpha}\cdots s_\ell^{-\pi_\ell\alpha}
\ge\varepsilon\bigr\}
=N_\downarrow\!\bigl(\varepsilon^{-1/\alpha};\,\pi\bigr),
\]
where $N_\downarrow(X;\pi)$ is the ordered lattice-point count of
Theorem~\ref{thm:ordered} with exponents $(\pi_1,\dots,\pi_\ell)$.
Assuming the eigenvalue counts are approximately additive across the
composition types (an approximation that is exact in the limit $d\to\infty$
by the orthogonality of the Wick chaos decomposition), the total counting
function is
\[
N_{\mathrm{total}}(\varepsilon)
\;\approx\; \sum_{\pi,\xi}\, c_{\pi,\xi}\; N_\downarrow\!\bigl(
(\varepsilon/c_{\pi,\xi})^{-1/\alpha};\,\pi\bigr),
\]
where $c_{\pi,\xi}$ are the combinatorial coefficients from~\eqref{eq:Krep}.

\paragraph{Case $\p=2$: $f(y)=y^2$.}
Writing $y^2 = H_2(y)+1$ where $H_2$ is the second Hermite polynomial,
the population covariance is
$C = 2\cov^{(1,1)} + \mathrm{const}$,
where the constant term (from $H_0$) has rank~1 and does not contribute
to the counting function.
The only relevant composition is $\pi=(1,1)$ (the principal term),
which gives
\[
N(\varepsilon) \;\sim\; \frac{1}{\Gamma(2)\Gamma(3)}\,\varepsilon^{-1/\alpha}\,
\bigl(\log\varepsilon^{-1}\bigr),
\]
or equivalently $N(u)=\tfrac{1}{2}\,u\log u$ in the variable $u=\varepsilon^{-1/\alpha}$.
There is no subleading linear term: $b=0$.

\paragraph{Case $\p=3$: $f(y)=y^3$.}
Writing $y^3=H_3(y)+3H_1(y)$, the population covariance is
\[
C = 6\cov^{(1,1,1)} + 18\cov^{(2,1)} + 6\cov^{(3)} + 9\cov^{(1)},
\]
where the coefficients come from the Wick decomposition and Isserlis' theorem
(see Section~\ref{sec:A_xi = 0}).
Applying Theorems~\ref{thm:ordered} and~\ref{thm:unordered} to each term with
threshold $\varepsilon = 36\,u^{-\alpha}$ (anchored to the leading $(1,1,1)$ scale):

\begin{itemize}
\item \emph{$(1,1,1)$ --- principal term.}\;
$\#\{s_1<s_2<s_3:\, s_1 s_2 s_3\le u\}
\sim \frac{1}{12}\,u(\log u)^2$
\;\;(Theorem~\ref{thm:ordered} with $\pi=(1,1,1)$).

\item \emph{$(2,1)$ --- mixed partition.}\;
$18\,s^{-2\alpha}t^{-\alpha}\ge 36\,u^{-\alpha}$ requires $s^2 t \le u/2^{1/\alpha}$.
By Theorem~\ref{thm:unordered} with $\pi=(2,1)$,
\;$A_2(u/2^{1/\alpha})\sim \zeta(2)\,u/2^{1/\alpha}$.

\item \emph{$(3)$ --- diagonal partition.}\;
$6\,s^{-3\alpha}\ge 36\,u^{-\alpha}$ requires $s\le (u/6^{1/\alpha})^{1/3}$,
contributing $O(u^{1/3})$, which is negligible.

\item \emph{$9\cov^{(1)}$ --- from $3H_1$.}\;
$9\,s^{-\alpha}\ge 36\,u^{-\alpha}$ requires $s\le u/4^{1/\alpha}$,
contributing $u/4^{1/\alpha}$.
\end{itemize}

\noindent
Combining the leading and subleading terms:
\[
N(u) \;\sim\; \frac{1}{12}\,u\,(\log u)^2 \;+\; b_{\mathrm{theory}}\,u,
\qquad
b_{\mathrm{theory}} = \frac{\zeta(2)}{2^{1/\alpha}} + \frac{1}{4^{1/\alpha}}.
\]
This gives a zero-free-parameter prediction for the eigenvalue spectrum via the
relation $\varepsilon_j = C\,u_j^{-\alpha}$ where $u_j$ solves $N(u_j)=j$.  See Figure~\ref{fig:universality} for the theoretical curve.
\end{remark}

\begin{lemma}\label{lemma:solve_X_from_N}
If all the $\pi_i$ are identically equal to a common value $\pi$ and $N_\downarrow(X)=N$ 
then $X \asymp \left(\frac{N}{ \log(N)^{k-1}} \right)^{\pi}$ as $X \to \infty$.
\end{lemma}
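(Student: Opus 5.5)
We aim to invert the asymptotic for the ordered count given in Theorem~\ref{thm:ordered} in the equal-exponent case. There it is stated that
\[
N_\downarrow(X)\sim \frac{\pi^{1-k}}{\Gamma(k)\Gamma(k+1)}\,X^{1/\pi}(\log X)^{k-1}.
\]
Write $c_{k,\pi}$ for the constant on the right. The plan is to set $N=N_\downarrow(X)$ and substitute $Y=X^{1/\pi}$, so that $\log X=\pi\log Y$. Since $X\to\infty$ and $N$ is nondecreasing in $X$ with $N\ge c\,Y$ eventually, we also have $N\to\infty$. The asymptotic then reads
\[
N \asymp Y(\log Y)^{k-1},
\]
with constants depending only on $k$ and $\pi$. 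Concretely, for $X$ large,
\[
\tfrac12 c_{k,\pi}\pi^{k-1}Y\log^{k-1}Y\le N\le 2c_{k,\pi}\pi^{k-1}Y\log^{k-1}Y .
\]

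The key step is to show $\log N\asymp \log Y$. For the upper bound, note that for $Y$ large we have $N\le CY\log^{k-1}Y\le Y^2$, which gives $\log N\le 2\log Y$. For the lower bound, $N\ge cY$ gives $\log N\ge \log Y-\log(1/c)\ge \tfrac12\log Y$ for $Y$ large. Hence $(\log N)^{k-1}\asymp(\log Y)^{k-1}$.

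Dividing the two-sided bound on $N$ by $(\log N)^{k-1}$ now yields
\[
\frac{N}{(\log N)^{k-1}}\asymp Y .
\]
Raising this to the power $\pi$ gives $X=Y^\pi\asymp \left(N/\log^{k-1}N\right)^{\pi}$, which is the claim.

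The only mildly delicate point is making sure that $N\to\infty$ and the comparison of logarithms hold uniformly. Both follow directly from the stated asymptotic once $X$ exceeds some threshold depending on $k$ and $\pi$, so I expect no real obstacle. The case $k=1$ is trivial, since then $N\asymp X^{1/\pi}$.
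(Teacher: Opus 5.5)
Your proof is correct and follows the same route as the paper: both invert the asymptotic $N_\downarrow(X)\asymp X^{1/\pi}(\log X)^{k-1}$ from Theorem~\ref{thm:ordered} by first showing $\log N\asymp \log X$ and then dividing by $(\log N)^{k-1}$. The only difference is cosmetic. The paper obtains the log comparison by computing $\log X/\log N\to\pi$, whereas you get it from the cruder bounds $cY\le N\le Y^2$, which is if anything a cleaner justification.
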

\begin{proof}
By Theorem \ref{thm:ordered}, we know that $N \asymp X^{1/\pi}(\log X)^{k-1}$. So if we take the limit of the ratio between $\log(X)$ and $\log(N)$, we see that
\begin{align}\label{eq:log(X)_log(N)}
    \lim_{X \to \infty} \frac{\log(X)}{\log(N)} \asymp \lim_{X \to \infty } \frac{\log(X)}{\log(  X^{1/\pi}(\log X)^{k-1} }
    = \lim_{X \to \infty } \frac{ \log(X)^{k-1}}{ \frac{1}{\pi} \log(X)^{k-1}+(k-1)\log(X)^{k-2} } = \pi.
\end{align}
Which is to say that for $X$ sufficiently large, there exist $c_1$ and $c_2$ depending only on $\pi$ such that 
\begin{equation}
    c_1 \log(N) \leq \log(X) \leq c_2 \log(N).
\end{equation}
It follows that 
\begin{equation}
    \frac{N}{\log(N)^{k-1}} \asymp \frac{N}{\log(X)^{k-1}} \asymp X^{1/\pi},
\end{equation}
as desired.
\end{proof}

\begin{corollary}\label{cor:N_lowe_bd}
    If $\pi_i$ are not all equal and $\pi^* = \max_{1\leq i\leq k} \pi_i$ then there exists constant $C_{\pi,k}>0$ such that as $X\to \infty$
    \begin{equation}
       C_{\pi,k} X^{1/\pi^*} \log(X)^{k-1} \leq  N_\downarrow(X).
    \end{equation}
\end{corollary}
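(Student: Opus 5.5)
The plan is a monotonicity comparison: replace every exponent $\pi_i$ by the largest one, $\pi^*$. This only shrinks the counted set, and the shrunken set is exactly the equal-exponent case of Theorem~\ref{thm:ordered}, whose asymptotics are already known. The hypothesis that the $\pi_i$ are not all equal is not needed for the inequality; it only marks the case not already covered by the equal-exponent asymptotic.

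\textbf{Step 1 (pointwise comparison).} Fix an ordered tuple $1\le s_1<\cdots<s_k$ of positive integers. Since each $s_i\ge 1$ and $\pi_i\le \pi^*$, we have $s_i^{\pi_i}\le s_i^{\pi^*}$ for every $i$. Multiplying over $i$ gives
\[
s_1^{\pi_1}\cdots s_k^{\pi_k}\;\le\; \bigl(s_1\cdots s_k\bigr)^{\pi^*}.
\]

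\textbf{Step 2 (set inclusion).} By Step 1, every ordered tuple with $(s_1\cdots s_k)^{\pi^*}\le X$ also satisfies $s_1^{\pi_1}\cdots s_k^{\pi_k}\le X$. Writing $N_\downarrow(X;\pi)$ for the ordered count with exponent vector $\pi$, this inclusion of index sets gives
\[
N_\downarrow(X;\pi_1,\dots,\pi_k)\;\ge\; N_\downarrow(X;\pi^*,\dots,\pi^*).
\]

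\textbf{Step 3 (equal-exponent asymptotic).} The right-hand side is the equal-exponent case of Theorem~\ref{thm:ordered} with common value $\pi^*$. That case gives
\[
N_\downarrow(X;\pi^*,\dots,\pi^*)\sim \frac{(\pi^*)^{1-k}}{\Gamma(k)\Gamma(k+1)}\,X^{1/\pi^*}(\log X)^{k-1}.
\]

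\textbf{Step 4 (fix the constant).} Because the relation in Step 3 is an asymptotic equivalence, there is $X_0$ such that for all $X\ge X_0$ the right-hand side is at least half of its leading term. Setting
\[
C_{\pi,k}=\frac{(\pi^*)^{1-k}}{2\,\Gamma(k)\Gamma(k+1)}
\]
and combining with Step 2 yields $N_\downarrow(X)\ge C_{\pi,k}X^{1/\pi^*}(\log X)^{k-1}$ for all $X\ge X_0$, which is the claim.

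No step here is a genuine obstacle. All the analytic content sits in the equal-exponent case of Theorem~\ref{thm:ordered}, which may be assumed. The only care needed is in Step 1, to use $s_i\ge 1$ so that raising the exponent can only increase each factor.
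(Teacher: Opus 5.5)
Your proposal is correct and follows the paper's proof: the paper likewise compares $N_\downarrow(X)$ with the count $\overline{N_\downarrow(X)}$ in which every exponent is replaced by $\pi^*$, notes $\overline{N_\downarrow(X)}\le N_\downarrow(X)$, and invokes the equal-exponent case of Theorem~\ref{thm:ordered}. You additionally spell out the pointwise inequality $s_i^{\pi_i}\le s_i^{\pi^*}$ (using $s_i\ge 1$) and the explicit choice of $C_{\pi,k}$, both of which the paper leaves implicit.
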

\begin{proof}
    Let 
    \begin{equation}
        \overline{ N_\downarrow(X)} =  \#\{(1\leq s_1<\cdots<s_k):\ s_1^{\pi^*}\cdots s_k^{\pi^*}\leq X\},
    \end{equation}
then $ \overline{ N_\downarrow(X)} \leq N_\downarrow(X)$. By Theorem \ref{thm:ordered}, the result follows.
\end{proof}

\begin{lemma}\label{lemma:solve_X_from_N_2}
If $\pi_i$ are not identically equal and $N_\downarrow(X)=N$, then $X\gtrsim \left( \frac{N}{\log(N)^{\mu-1}} \right)^{1/\theta^\star}$.
\end{lemma}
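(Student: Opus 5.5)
The plan is to invert the upper bound of Theorem~\ref{thm:ordered}, following the argument of Lemma~\ref{lemma:solve_X_from_N}. Theorem~\ref{thm:ordered} gives a constant $C$ such that, for $X$ sufficiently large,
\begin{equation}
N = N_\downarrow(X) \leq C X^{\theta^\star} (\log X)^{\mu-1}.
\end{equation}
Since $N_\downarrow$ is nondecreasing and unbounded, $N\to\infty$ exactly when $X\to\infty$. The statement is therefore asymptotic, and it suffices to compare $\log X$ with $\log N$ and then rearrange.

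\textbf{Step 1: $\log X \lesssim \log N$.} This is the step that needs care. In Lemma~\ref{lemma:solve_X_from_N} it came from a two-sided asymptotic, but here we only have an upper bound on $N$. Instead I will use the lower bound of Corollary~\ref{cor:N_lowe_bd}:
\begin{equation}
N \geq C_{\pi,k} X^{1/\pi^*}(\log X)^{k-1} \geq C_{\pi,k} X^{1/\pi^*}
\end{equation}
for large $X$. Taking logarithms gives $\log N \geq \frac{1}{\pi^*}\log X - O(1)$, hence $\log X \leq 2\pi^* \log N$ for $N$ large. If one prefers to avoid the corollary, a crude alternative also works. The tuples $(1,2,\dots,k-1,s)$ with $s$ in a suitable range up to $cX^{1/\pi_k}$ all satisfy the constraint, so $N\gtrsim X^{1/\pi_k}$, which is polynomial in $X$ and gives the same conclusion.

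\textbf{Step 2: rearrange.} Substituting Step 1 into the upper bound gives, for large $X$,
\begin{equation}
N \leq C X^{\theta^\star} (2\pi^*)^{\mu-1}(\log N)^{\mu-1}.
\end{equation}
This rearranges to
\begin{equation}
X^{\theta^\star} \gtrsim \frac{N}{\log(N)^{\mu-1}},
\end{equation}
and raising both sides to the power $1/\theta^\star>0$ yields $X \gtrsim \left(N/\log(N)^{\mu-1}\right)^{1/\theta^\star}$, as claimed.

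When $\mu=1$ the logarithmic factor disappears, and Step 1 is not needed at all.
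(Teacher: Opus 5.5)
Your proposal is correct and follows essentially the same route as the paper's proof. The paper also uses Corollary~\ref{cor:N_lowe_bd} to obtain $\log X \lesssim \log N$, substitutes this into the upper bound $N \lesssim X^{\theta^\star}(\log X)^{\mu-1}$ of Theorem~\ref{thm:ordered}, and rearranges. Your direct logarithm computation in Step~1 is actually a cleaner justification than the paper's appeal to a limit, and your elementary alternative via the tuples $(1,\dots,k-1,s)$ is also valid.
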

\begin{proof}
By Corollary \ref{cor:N_lowe_bd}, we know that $N$ is bounded from below by
\begin{equation}
    N \geq C_{\pi,k} X^{1/\pi^*}\log(X)^{k-1}.
\end{equation}
Thus, by \eqref{eq:log(X)_log(N)}
\begin{equation}
    \lim_{X \to \infty} \frac{\log(X)}{\log(N)} \leq \pi^*.
\end{equation}
That is, for $X$ sufficiently large,
$\log(X) \leq C_{\pi^*} \log(N)$ for some $C_{\pi^*}>0$. Combining this with Theorem \ref{thm:ordered} ,we see that
\begin{equation}
    N \lesssim X^{\theta^\star}\log(X)^{\mu-1} \lesssim X^{\theta_\star}\log(N)^{\mu-1}.
\end{equation}
Therefore, bounding $X$ by $N$ yields the result. 
\end{proof}

We will make use of the following Tauberian theorem of Delange \cite{delange1954ikehara} or \cite[20, Ch. III, Sec. 3]{narkiewicz1984number}.
\begin{theorem}[Delange's Tauberian theorem]\label{thm:delange}
Let $F(s)\coloneqq\sum_{n \geq 1} \theta(n) n^{-s}$ be a Dirichlet series with nonnegative coefficients and convergent and analytic for $\Re(s)>\varrho>0$. Assume that for $\beta > 0$, in a neighborhood of $\varrho$ intersected with $(\varrho,\infty)$
\[
F(s)=\frac{H(s)}{(s-\varrho)^\beta}+G(s)
\]
where $G$ and $H$ are analytic in a neighborhood of $s=\varrho$.
Then as $N \to \infty$,
\[
\sum_{n \leq N} \theta(n) \sim \frac{H(\varrho)}{\varrho \Gamma(\beta)} N^{\varrho}(\log N)^{\beta-1}.
\]
\end{theorem}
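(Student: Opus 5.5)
The plan is to follow the classical Wiener--Ikehara route in Delange's form, with the Dirichlet series rewritten as a Laplace transform. One caveat comes first. As worded, the hypothesis only describes $F$ near $\varrho$ on the real half-line. That alone cannot force a regular asymptotic: a purely oscillatory perturbation of the coefficients can create poles at $\varrho\pm i\tau$ without affecting the real axis. I will therefore read the theorem as Delange states it: $F(s)-H(s)(s-\varrho)^{-\beta}$ extends continuously to the closed half-plane $\Re s\ge\varrho$ (locally uniformly), with $H,G$ analytic near $\varrho$. That is the hypothesis actually checked when the theorem is applied to products of shifted $\zeta$-functions in Theorems~\ref{thm:unordered} and~\ref{thm:ordered}. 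There $\zeta(\pi_i s)$ has no zeros or poles on $\Re s=1/\pi_i$ other than $s=1/\pi_i$.

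\textbf{Step 1 (Laplace form).} Set $A(x)=\sum_{n\le e^{x}}\theta(n)$, which is nondecreasing. Abel summation gives $F(s)=s\int_0^\infty A(u)e^{-su}\,\dif u$ for $\Re s>\varrho$. Put $a(u)=A(u)e^{-\varrho u}\ge 0$ and $s=\varrho+\sigma+i t$. Then $\int_0^\infty a(u)e^{-(\sigma+it)u}\,\dif u=F(s)/s$. The target becomes
\[
a(x)\sim \frac{H(\varrho)}{\varrho\,\Gamma(\beta)}\,x^{\beta-1}.
\]

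\textbf{Step 2 (comparison function).} The explicit function $b(u)=\frac{H(\varrho)}{\varrho\,\Gamma(\beta)}u^{\beta-1}\mathbbm{1}_{u>0}$ has Laplace transform $\frac{H(\varrho)}{\varrho}(\sigma+it)^{-\beta}$ by the Gamma integral. The difference of transforms, $F(s)/s-\frac{H(\varrho)}{\varrho}(s-\varrho)^{-\beta}$, is therefore a sum of three pieces:
\begin{itemize}
\item a term $(s-\varrho)^{-\beta}\bigl(H(s)/s-H(\varrho)/\varrho\bigr)$, whose singularity is weaker, of order $(s-\varrho)^{1-\beta}$;
\item a function continuous up to the line $\Re s=\varrho$;
\item iterates of the first kind, to be handled by induction on $\lceil\beta\rceil$.
\end{itemize}
If $\beta$ is a positive integer, expand $H$ to order $\lceil\beta\rceil$ and subtract finitely many explicit terms $u^{\beta-1-k}$, each of lower order in $x$. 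If $\beta$ is not an integer, the remaining singular part has exponent $<1$ and is locally integrable on the boundary line.

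\textbf{Step 3 (Fejér smoothing).} Pair $a-b$ with the Fejér kernel $K_\lambda(v)=\lambda\bigl(\frac{\sin(\lambda v/2)}{\lambda v/2}\bigr)^2$, whose Fourier transform is the tent $(1-|t|/\lambda)_+$. Parseval gives, for each fixed $\lambda$,
\[
\int (a-b)(u)e^{-\sigma u}K_\lambda(x-u)\,\dif u=\frac1{2\pi}\int_{-\lambda}^{\lambda}\Bigl(1-\frac{|t|}{\lambda}\Bigr)\,\widehat{(a-b)}(\sigma+it)\,e^{ixt}\,\dif t .
\]
Let $\sigma\downarrow0$. Continuity, or local integrability of the weak singularity, together with dominated convergence gives a limit of the form $\int_{-\lambda}^{\lambda}g(t)e^{ixt}\,\dif t$ with $g\in L^1$. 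By Riemann--Lebesgue this is $o(1)$. Since $x^{\beta-1}\to\infty$ is not guaranteed when $\beta<1$, I will divide by $x^{\beta-1}$ and redo the Riemann--Lebesgue step after one integration by parts in $t$ for the $(it)^{1-\beta}$-type part. The result is $\int a(u)K_\lambda(x-u)\,\dif u=(1+o(1))\,c\,x^{\beta-1}$ with $c=\frac{H(\varrho)}{\varrho\Gamma(\beta)}$.

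\textbf{Step 4 (de-smoothing).} This is the standard Tauberian use of positivity. Because $A$ is nondecreasing, $a(u)\ge a(x)e^{-\varrho(u-x)}$ for $u\ge x$, which yields a one-sided comparison on a window $[x,x+\delta]$ against the smoothed average. The tails of $K_\lambda$ are controlled by a crude a priori bound $a(u)\lesssim u^{\beta-1}$, obtained from Step 3 with a fixed $\lambda$. Letting $x\to\infty$, then $\lambda\to\infty$, then $\delta\to0$ gives matching $\limsup$ and $\liminf$ of $a(x)/x^{\beta-1}$ equal to $c$. Substituting $x=\log N$ turns this into $\sum_{n\le N}\theta(n)\sim c\,N^\varrho(\log N)^{\beta-1}$.

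The main obstacle is Step 3 for non-integer $\beta$ and for $\beta<1$. There the boundary function is only locally integrable rather than continuous, so passing to $\sigma\to0$ and keeping the $o(x^{\beta-1})$ error needs care. My first attempt will be the reduction $\beta\mapsto\beta+1$: integrate $a$ once, so that the singularity exponent rises by one and the problem falls back into the continuous-boundary case. I would then recover $a$ from its integral using monotonicity, exactly as in Step 4.
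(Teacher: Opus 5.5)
Your caveat is correct. With the hypothesis imposed only near $\varrho$, the printed statement is false. The paper gives no proof of its own and only cites Delange and Narkiewicz, who assume regularity on the whole closed half-plane $\Re s\ge\varrho$ apart from the prescribed singularity at $\varrho$. Your Wiener--Ikehara/Fej\'er route is the classical one. As written, however, your argument closes only for $1\le\beta<2$.

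\emph{The case $\beta\ge2$.} Here the Fej\'er kernel decays too slowly: $K_\lambda(v)=4\sin^2(\lambda v/2)/(\lambda v^2)$ behaves like $v^{-2}$, so $\int_0^\infty u^{\beta-1}K_\lambda(x-u)\,\dif u=+\infty$.
\begin{itemize}
\item The conclusion of Step~3, $\int a(u)K_\lambda(x-u)\,\dif u=(1+o(1))\,c\,x^{\beta-1}$, cannot hold, since its left side is infinite once $a\asymp u^{\beta-1}$.
\item Letting $\sigma\downarrow0$ only produces a difference of two divergent integrals.
\item The tail estimate in Step~4, based on $a(u)\lesssim u^{\beta-1}$, diverges for the same reason.
\end{itemize}
This is exactly the regime the paper needs. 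In Theorem~\ref{thm:unordered}, $\beta=m$ is the multiplicity of the minimal exponent, e.g.\ $m=\p$ for the all-ones composition behind the principal term, so every $\p\ge2$ is affected. The standard repair is a nonnegative kernel with compactly supported Fourier transform and faster decay, such as $\lambda\bigl(\frac{\sin(\lambda v/(2q))}{\lambda v/(2q)}\bigr)^{2q}$ with $2q>\beta$. Its transform is a B-spline supported in $[-\lambda,\lambda]$, and with it Steps~3 and~4 go through unchanged.

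\emph{The case $0<\beta<1$.} Here the target $c\,x^{\beta-1}$ tends to $0$, so the Fourier side must be $o(x^{\beta-1})$; you need a rate in the Riemann--Lebesgue lemma. You integrate by parts only the $(it)^{1-\beta}$ piece. Under your reformulated hypothesis, the rest of the boundary function is merely continuous on $[-\lambda,\lambda]$, and for such a function Riemann--Lebesgue gives $o(1)$ and nothing more. So Step~3 does not close, and neither does the a priori bound you take from it for Step~4. You have to use Delange's actual hypothesis, namely holomorphy at every point of $\Re s=\varrho$ other than $\varrho$. Then integrate by parts the whole integrand $(1-|t|/\lambda)\,g(t)$. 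It is absolutely continuous with $L^1$ derivative, because $\frac{\dif}{\dif t}(it)^{1-\beta}=O(|t|^{-\beta})$, and this gives $o(1/x)=o(x^{\beta-1})$.

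\emph{Your fallback $\beta\mapsto\beta+1$.} This does not work either. From $\int_0^x a=\frac{c}{\beta}x^{\beta}(1+o(1))$, the error $o(x^{\beta})$ swamps the increment $\approx c\,\delta\,x^{\beta-1}$ over $[x,x+\delta]$. Monotonicity of $e^{\varrho u}a(u)$ only controls $a$ on windows of bounded length, so it cannot recover $a$. Raising $\beta$ also pushes you into the $\beta\ge2$ problem above.
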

\begin{proof}[Proof of Theorem~\ref{thm:unordered}]
  Let $r(n)$ be the number of $k$-tuples $(s_1,\dots,s_k)\in\mathbb N^k$ with
  $s_1^{\pi_1}\cdots s_k^{\pi_k}=n$. Consider the Dirichlet series
  \[
  F(s)\coloneqq\sum_{n=1}^\infty \frac{r(n)}{n^s}
  =\sum_{s_1,\dots,s_k\ge1}\frac{1}{(s_1^{\pi_1}\cdots s_k^{\pi_k})^s}
  =\prod_{i=1}^k \zeta(\pi_i s),
  \qquad \Re s> \sigma_0.
  \]
  Each factor $\zeta(\pi_i s)$ has a simple pole at $s=1/\pi_i$ with principal part
  \[
  \zeta(\pi_i s)=\frac{1}{\pi_i}\cdot\frac{1}{s-1/\pi_i}+O(1).
  \]
  Therefore, $F$ has its rightmost singularity at $s=\sigma_0=1/\pi_*$, and the pole at $s=\sigma_0$ has order $m$ equal to the multiplicity of the minimum exponent. Writing the principal part near $s=\sigma_0$,
  \[
  F(s)=\frac{H(s)}{(s-\sigma_0)^m}+G(s),
  \]
  where $H$ is holomorphic near $\sigma_0$ with
  \[
  H(\sigma_0)=\Bigl(\prod_{\pi_i=\pi_*}\frac{1}{\pi_*}\Bigr)
  \prod_{\pi_i>\pi_*}\zeta\!\Bigl(\frac{\pi_i}{\pi_*}\Bigr)>0,
  \]
  and $G$ is holomorphic on a half-strip $\{\Re s>\sigma_0-\delta\}$ for some $\delta>0$ (which follows as $\zeta$ is meromorphic function with a simple pole at $s=1$).

  Since $r(n)\ge0$, we may apply Theorem \ref{thm:delange}. It yields
  \[
    \sum_{n\le X} r(n)
    \sim \frac{H(\sigma_0)}{\sigma_0\,\Gamma(m)}\,X^{\sigma_0}(\log X)^{m-1}
    \qquad (X\to\infty).
  \]
  But $\sum_{n\le X} r(n)=A_k(X)$ by definition, and $\sigma_0=1/\pi_*$. Hence
  \[
    A_k(X)\sim
    \frac{H(\sigma_0)}{(1/\pi_*)\,\Gamma(m)}\,X^{1/\pi_*}(\log X)^{m-1}
    =\frac{\pi_*^{\,1-m}}{\Gamma(m)}
    \Biggl(\prod_{\pi_i>\pi_*}\zeta\!\Bigl(\frac{\pi_i}{\pi_*}\Bigr)\Biggr)
    X^{1/\pi_*}(\log X)^{m-1},
  \]
  which is exactly \eqref{eq:unordered}.
\end{proof}
\begin{theorem}
    Let $k\ge1$ and positive integers $\pi_1,\dots,\pi_k$. Define
    $\Pi_r \coloneqq \pi_1+\cdots+\pi_r$ and $\theta_r \coloneqq r/\Pi_r$, and set
    \[
    \theta^\star \coloneqq \max_{1\le r\le k}\theta_r.
    \]
    For the ordered count
    \[
    N_\downarrow(X) \coloneqq \#\{(s_1>\cdots>s_k\ge1):\ s_1^{\pi_1}\cdots s_k^{\pi_k}\le X\},
    \]
    let
    \[
    \mu \coloneqq \#\{\,r\in\{1,\dots,k\}:\ \theta_r=\theta^\star\,\}.
    \]
    Then, as $X\to\infty$,
    \[
    N_\downarrow(X)\ \lesssim \ X^{\theta^\star}(\log X)^{\mu-1}.
    \]
    \end{theorem}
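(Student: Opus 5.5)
We prove the bound by induction on $k$, peeling off the \emph{smallest} coordinate $s_k$. For $k=1$ the count is $\#\{s_1\ge 1: s_1^{\pi_1}\le X\}\le X^{1/\pi_1}=X^{\theta_1}$, with $\theta^\star=\theta_1$ and $\mu=1$.

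\textbf{Inductive step.} Suppose the claim holds for $k-1$ and all positive integer exponent vectors. Write $t=s_k$. The constraint $s_1>\cdots>s_k=t$ forces $s_j\ge t+k-j$, so in particular
\begin{equation}
  t^{\Pi_k}\le s_1^{\pi_1}\cdots s_k^{\pi_k}\le X, \qquad\text{i.e.}\qquad t\le X^{1/\Pi_k}.
\end{equation}
Fix $t$ and substitute $u_j=s_j-t$. The $u_j$ are strictly decreasing and satisfy $u_{k-1}\ge 1$. Since $s_j\ge u_j$, we have $s_j^{\pi_j}\ge u_j^{\pi_j}$, so
\begin{equation}
  N_\downarrow(X)\le \sum_{t\le X^{1/\Pi_k}} N^{(k-1)}_\downarrow\!\left(X t^{-\pi_k}\right),
\end{equation}
where $N^{(k-1)}_\downarrow$ is the ordered count for $(\pi_1,\dots,\pi_{k-1})$.

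This is too lossy: dropping the factor $t$ from each $s_j$ throws away the constraint that makes the leading exponent $\theta^\star$ rather than a larger one. The remedy is to keep a lower bound on each $s_j$. Use $s_j\ge \max(u_j,t)$, and split the indices $\{1,\dots,k-1\}$ according to whether $u_j\ge t$ or $u_j<t$. By monotonicity of the $u_j$, the set where $u_j<t$ is a \emph{tail} $\{k-r+1,\dots,k-1\}$ for some $r$. The tuples then decompose as follows.
\begin{itemize}
  \item On the tail, the bottom $r$ coordinates (including $s_k$ itself) all lie in $[t,2t)$. There are at most $t^{r-1}$ choices, and these coordinates contribute a factor at least $t^{\Pi'_r}$, where $\Pi'_r=\pi_k+\cdots+\pi_{k-r+1}$ is exactly the $\Pi_r$ of the statement.
  \item On the head, the top $k-r$ coordinates satisfy $u_j\ge t$. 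Here we bound by the full $(k-r)$-count with threshold $X t^{-\Pi_r}$. To make the induction hypothesis applicable cleanly, we may restrict to the region where all head coordinates are $\ge t$ and count them as an ordered $(k-r)$-tuple.
\end{itemize}

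\textbf{Summation.} This gives
\begin{equation}
  N_\downarrow(X)\lesssim \sum_{r=1}^{k}\ \sum_{t\ge1} t^{r-1}\, N^{(k-r)}_{\downarrow,\ge t}\!\left(X t^{-\Pi_r}\right).
\end{equation}
Here $N^{(k-r)}_{\downarrow,\ge t}$ counts tuples whose entries are all at least $t$, and we take $N^{(0)}\equiv \mathbf 1\{t^{\Pi_r}\le X\}$. We then apply the induction hypothesis to the head. Its exponents $\pi_1,\dots,\pi_{k-r}$ have top-block ratios $\theta'_q=q/(\pi_{k-r}+\cdots+\pi_{k-r-q+1})$. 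The restriction to entries $\ge t$ also gives the head a factor at least $t^{\pi_1+\cdots+\pi_{k-r}}$, which cuts off the range of $t$. The resulting sums are of the form
\begin{equation}
  \sum_t t^{r-1}\,\bigl(X t^{-\Pi_r}\bigr)^{\theta'}\log^{\mu'-1}(\cdot),
\end{equation}
taken over $t\le X^{1/\Pi_k}$.

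We then compare exponents, which is the main obstacle.
\begin{itemize}
  \item If $r-1-\Pi_r\theta'>-1$, the sum is dominated by its largest $t\approx X^{1/\Pi_k}$. This produces $X^{k/\Pi_k}=X^{\theta_k}$ after combining with $X^{\theta'}$.
  \item If $r-1-\Pi_r\theta'<-1$, the sum is dominated by $t=O(1)$, giving $X^{\theta'}$.
  \item In the equality case the sum produces an extra $\log X$.
\end{itemize}
The key algebraic fact to verify is that every exponent that arises is a weighted mediant of some $r/\Pi_r$ with the head ratios, hence at most $\theta^\star=\max_r\theta_r$. Each tie between blocks attaining $\theta^\star$ contributes one logarithm, so the total power of $\log X$ is at most $\mu-1$.

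I expect this bookkeeping to be where the real work lies. The concerns are:
\begin{itemize}
  \item ensuring that logs from different levels of the induction do not accumulate beyond $\mu-1$;
  \item checking that the head's own $\theta'$ with its maximizers corresponds exactly to blocks $r'>r$ of the original sequence.
\end{itemize}
To handle this cleanly, I would first try the slightly stronger inductive statement
\begin{equation}
  N_{\downarrow,\ge t}(X)\lesssim X^{\theta^\star}(\log X)^{\mu-1}\, t^{-(\theta^\star\Pi_k-k)_+}.
\end{equation}
This records how the lower cutoff $t$ reduces the count, and it should make the geometric sums in $t$ close up directly.
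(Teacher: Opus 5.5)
There is a genuine gap, and it starts with a misreading of which blocks matter. Here $s_1>\cdots>s_k$ and $\Pi_r=\pi_1+\cdots+\pi_r$, so $\theta_r$ is the ratio for the $r$ \emph{largest} coordinates. The block you peel off (the coordinates in $[t,2t)$) is the bottom block. Its exponent sum $\Pi'_r=\pi_{k-r+1}+\cdots+\pi_k$ is \emph{not} the $\Pi_r$ of the statement; you seem to have imported the convention of Theorem~\ref{thm:ordered}, where $s_k$ is the largest. Likewise, your ``top-block'' ratios $q/(\pi_{k-r}+\cdots+\pi_{k-r-q+1})$ are bottom-block ratios of the head.

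Bottom-block ratios can exceed $\theta^\star$. So the claimed key fact (every exponent is a weighted mediant, hence $\le\theta^\star$) fails for the sums you display, where the head is bounded by its uncut count $(Xt^{-\Pi'_r})^{\theta'}$ and the cutoff only restricts $t\le X^{1/\Pi_k}$. Take $\pi=(2,10,1)$: then $\theta^\star=\theta_1=1/2$ and the true count is $\asymp X^{1/2}$. Yet your $r=1$ term is $\sum_{t\le X^{1/13}}(X/t)^{1/2}\asymp X^{7/13}$. Here $7/13=\frac{12}{13}\cdot\frac12+\frac1{13}\cdot1$ is a mediant of $\theta'=1/2$ with the bottom ratio $1/\pi_3=1$. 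Your first case bullet is also incorrect: $X^{\theta'}\cdot X^{(r-\Pi'_r\theta')/\Pi_k}$ equals $X^{\theta_k}$ only when the head's maximum ratio is attained at its full length.

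The missing idea is the one you only gesture at in your last display, and once it is used the head/tail split becomes unnecessary. Prove by induction on $k$ that for $t^{\Pi_k}\le X$, $N_{\downarrow,\ge t}(X)\lesssim X^{\theta^\star}t^{-(\theta^\star\Pi_k-k)}\bigl(1+\log(X/t^{\Pi_k})\bigr)^{\mu-1}$, peeling off only $s_k=t$. The remaining exponents $(\pi_1,\dots,\pi_{k-1})$ have exactly the prefix sums $\Pi_1,\dots,\Pi_{k-1}$. Applying the hypothesis with threshold $Xt^{-\pi_k}$ and cutoff $t$ gives $X^{\theta'}\sum_{t}t^{k-1-\theta'\Pi_k}\bigl(1+\log(X/t^{\Pi_k})\bigr)^{\mu'-1}$. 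Comparing $\theta'$ with $\theta_k=k/\Pi_k$ (three cases) then closes the induction.

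The logarithm has to be the local quantity $\log(X/t^{\Pi_k})$ rather than $\log X$ as in your proposal. With $(\log X)^{\mu-1}$, the case $\theta'<\theta_k$ keeps the head's $\mu'-1$ logarithms even though $\mu=1$; for $\pi=(2,2,1)$ you would get $X^{3/5}\log X$ instead of $X^{3/5}$. This is exactly the accumulation you were worried about.

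The paper avoids this bookkeeping altogether. It localizes dyadically, $s_i\sim2^{m_i}$, and changes variables to $\Delta_j=m_j-m_{j+1}\ge0$, so prefix sums appear automatically through $\sum_i\pi_im_i=\sum_j\Pi_j\Delta_j$. The identity $\sum_im_i=\theta^\star\sum_j\Pi_j\Delta_j-\sum_j(\theta^\star\Pi_j-j)\Delta_j$ then yields geometric decay in the non-tight directions. The $\mu$ tight directions contribute a factor $M^{\mu-1}$, with $M=\lfloor\log_2X\rfloor$.
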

    
    \begin{proof}
    Put $M:=\lfloor\log_2 X\rfloor$. For each $i$, place $s_i$ into a dyadic
    interval,
    \[
    s_i\in [2^{m_i},2^{m_i+1})\qquad (m_i\in\mathbb Z_{\ge0}).
    \]
    The constraint $s_1^{\pi_1}\cdots s_k^{\pi_k}\le X$ implies
    \[
    \sum_{i=1}^k \pi_i m_i \ \le\ M,
    \]
    and the ordering $s_1>\cdots>s_k\ge1$ implies $m_1 \geq \cdots \geq m_k\ge0$.
    For any fixed dyadic pattern $(m_1,\dots,m_k)$, the number of integer
    $k$-tuples $(s_1,\dots,s_k)$ with $s_i\in[2^{m_i},2^{m_i+1})$ and
    $s_1>\cdots>s_k$ is at most
    \[
    \prod_{i=1}^k \#\,[2^{m_i},2^{m_i+1})\ \lesssim \ \prod_{i=1}^k 2^{m_i}
    \ =\ 2^{\sum_{i=1}^k m_i}.
    \]
    Therefore,
    \begin{equation}\label{eq:Nd-upper-dyadic}
    N_\downarrow(X)\ \lesssim \ \sum_{\substack{m_1 \geq \cdots \geq m_k\ge0\\ \sum \pi_i m_i\le M}}
    2^{\sum_{i=1}^k m_i}.
    \end{equation}
    
    \medskip
    \noindent\textbf{Step 1: Forward differences and two linear forms.}
    Introduce forward differences
    \[
    \Delta_j:=m_j-m_{j+1}\quad (j=1,\dots,k),\qquad m_{k+1}:=0.
    \]
    Then $m_j=\sum_{\ell\ge j}\Delta_\ell$, and so by using summation by parts, we have the identities
    \begin{equation}\label{eq:two-forms}
    \sum_{i=1}^k m_i=\sum_{j=1}^k j\,\Delta_j,
    \qquad
    \sum_{i=1}^k \pi_i m_i=\sum_{j=1}^k \Pi_j\,\Delta_j,
    \end{equation}
    with $\Pi_j \coloneqq \sum_{i=1}^j \pi_i$. Since $m_1 \geq \cdots \geq m_k\ge0$, we have
    \[
    \Delta_j\geq 0 \quad(1\leq j\leq k).
    \]
    Define $\theta_r \coloneqq r/\Pi_r$ and $\theta^\star \coloneqq \max_r \theta_r$, and for each $j$
    set the nonnegative ``deviation''
    \[
    \eta_j \coloneqq \theta^\star\Pi_j-j\ \ge\ 0.
    \]
    Using \eqref{eq:two-forms}, we have the exact decomposition
    \begin{equation}\label{eq:exact-identity}
    \sum_{i=1}^k m_i
    =\theta^\star\sum_{j=1}^k \Pi_j\Delta_j - \sum_{j=1}^k \eta_j\Delta_j.
    \end{equation}
    
    \medskip
    \noindent\textbf{Step 2: The exponential penalty.}
    Let
    \[
      J_{\mathrm{eq}}:=\{j:\eta_j=0\}=\{j:\theta_j=\theta^\star\},
      \qquad
      J_{\mathrm{str}}:=\{1,\dots,k\}\setminus J_{\mathrm{eq}},
    \]
    and let $\mu$ be the cardinality of $J_{\mathrm{eq}}$.
    Define the feasible set of $\{\Delta_j\}$ as 
    \[
    \mathcal{D} \coloneqq \{\Delta = (\Delta_1,\dots,\Delta_k) \in \mathbb{N}_0^k: \quad \sum_{j=1}^k \Pi_j \Delta_j \leq M\},
    \]
    and the slice $\mathcal{D}_{A,B}$ as the subset of $\mathcal{D}$ for which 
    \[
    \mathcal{D}_{A,B} \coloneqq \{\Delta \in \mathcal{D}: \sum_{j=1}^k \Pi_j \Delta_j = A, \quad \sum_{j\in J_{\mathrm{str}}} \Delta_j = B\}.
    \]
    On the feasible set $\mathcal{D}$, \eqref{eq:exact-identity} yields
    \begin{equation}\label{eq:key-penalty}
    2^{\sum_{i=1}^k m_i}
    \ \le\
    2^{\theta^\star \sum_{j=1}^k \Pi_j\Delta_j}\,
    2^{-\sum_{j=1}^k \eta_j\Delta_j}
    \leq X^{\theta^\star}\,2^{\theta^\star (\sum_{j=1}^k\Pi_j\Delta_j-M)-\sum_{j=1}^k \eta_j\Delta_j}.
    \end{equation}

    Inserting \eqref{eq:key-penalty} into \eqref{eq:Nd-upper-dyadic} and passing to
    $\Delta$-variables gives
    \begin{equation}\label{eq:split-sum}
    N_\downarrow(X)\ \lesssim \ X^{\theta^\star}
    \sum_{A,B=0}^M
    \sum_{\Delta \in \mathcal{D}_{A,B}}
    2^{\theta^\star (A-M)-\eta_* B},
    \end{equation}
    where $\eta_* = \min_{j\in J_{\mathrm{str}}} \eta_j$.

    \medskip
    \noindent\textbf{Step 3: The cardinality of $\mathcal{D}_{A,B}$.}
    
    The cardinality of $\mathcal{D}_{A,B}$ can be estimated above by specifying $(k-\mu)-1$ non-negative integers less than $B$ for the integers in $J_{\mathrm{str}}$ and $\mu-1$ non-negative integers less than $A$ for remainder (noting the sum constraints determine the last integer).  Hence,
    \[
    |\mathcal{D}_{A,B}| \leq (1+B)^{k-\mu-1} (1+A)^{\mu-1}.
    \]
    Hence using \eqref{eq:split-sum}, we have
    \[
    N_\downarrow(X)\ \lesssim \ X^{\theta^\star}
    \sum_{A,B=0}^M
    2^{\theta^\star (A-M)-\eta_* B}
    (1+A)^{\mu-1} (1+B)^{k-\mu-1}
    \lesssim X^{\theta^\star}
    M^{\mu-1}.
    \]
    Since $M=\lfloor\log_2 X\rfloor$, we have completed the proof.
  \end{proof}
  
  \begin{proof}[Proof of Theorem \ref{thm:ordered}]
    If all $\pi_i$ are equal to a common value $\pi$, then $\theta^\star=1/\pi$ and $\mu=k$. 
    Now we can connect the ordered and unordered cases, noting that the unordered count is the sum of the ordered counts for all permutations of the $\pi_i$, plus a correction term $D_k(X)$ due to the strict ordering.  This correct term can be covered by expressions $N_\downarrow(X;\pi')$ where we have fused some number of adjacent $\pi_{i}$ and $\pi_{i+1}$ to make $\pi'$.  Doing so either increases $\pi_*$ or decreases the number of $i$ so that $\pi_i = \pi_*$, and hence all such correction terms (bounding them above) are vanishing in order compared to $A_k(X)$.
    Hence
    \[
    N_\downarrow(X)\sim \frac{\pi^{\,1-k}}{\Gamma(k)\Gamma(k+1)}\,X^{1/\pi}(\log X)^{k-1}.
    \]
  \end{proof}

\section{The eigenvalue bounds of $\cov^\pi$}
Recall that $\cov$ has $\pwrco$-law spectrum, $\pi = (\pi_1,\dots,\pi_\ell)$ is a composition of $\p$ and $\cov^\pi \in \R^{ {v \choose \ell}\times {v \choose \ell} }$ was defined to be 
\begin{equation}
    \cov^\pi = \text{Diag}\left( \{\cov^{\pi_1}_{i_1} \cdots \cov_{i_{\ell}}^{\pi_\ell} \}_{(i_1<\cdots<i_\ell} \right).
\end{equation}
Since $\cov_{j}\asymp j^{-\pwrco}$, it follows that there exist positive constants $\beta_1$ and $\beta_2$ depending only on $\ell$ so that for all $j\in \left[ {v \choose \ell}\right]$, there is an ordered tuple $(i_1<\cdots<i_\ell)$ so that
\begin{equation}\label{eq:H^pi_asymp}
    \beta_1 i_1^{-\pwrco \pi_1} \dots i_{\ell}^{-\pwrco \pi_\ell} \leq \lambda_j(\cov^\pi) \leq \beta_2 i_1^{-\pwrco \pi_1} \dots i_{\ell}^{-\pwrco \pi_\ell}.
\end{equation}
The goal in this section will be to derive bounds on the $j$-th eigenvalue of $\cov^\pi$ without any knowledge of the ordered tuple $(i_1<\cdots<i_\ell)$. The key to this will be the eigenvalue count, defined by
\begin{equation}
    N_\lambda(\epsilon) = \#\{ j :\,\lambda_j(\cov^\pi) \geq \epsilon \}.
\end{equation}
More specifically, if we can solve for $\epsilon$ so that $N_{\lambda}(\epsilon) = j$, then we have that $\lambda_j(\cov^\pi) \geq \epsilon$. By \eqref{eq:H^pi_asymp}, the eigenvalue count is related to a bounded ordered lattice point count,
\begin{equation}
N_\downarrow^v(X) = \{(i_1,\dots,i_\ell) \in \N^\ell\,: \, 1\leq i_1<\cdots<i_\ell\leq v, \,i_1^{\pi_1}\cdots i_\ell^{\pi_\ell} \leq X \},
\end{equation}
as
\begin{equation}\label{eq:eig_count_bd_by_bdd_lattice}
    N_\downarrow^v (\beta_1 \epsilon^{-1/\pwrco}) \leq N_\lambda(\epsilon) \leq N_\downarrow^v(\beta_2 \epsilon^{-1/\pwrco}).
\end{equation}
Due to the additional constraint of $i_j \in [v]$, we cannot immediately apply the results of the previous section to obtain asymptotics bounds on $N_\lambda$ and $\epsilon$. Fortunately, we circumvent the bounded constraint by leveraging the following simple observation:
\begin{equation}\label{eq:simple_observe}
    N^v_\downarrow(X) \leq N_\downarrow(X)\, \text{ for all $X> 0$ and if $X\leq v$ then equality holds.}
\end{equation}

\begin{lemma}\label{lemma:Id_pi_eig_bounds}
    Let 
    $\cov^\pi \in \R^{ {v \choose \ell } \times {v \choose \ell } }$ be a diagonal matrix as described above such that all the $\pi_i$ are identical to common $\pi$. If $k_*\geq 1$ such that $k_*^\pi \leq v$, and both $k_*$ and $v$ are sufficiently large, then for all $1 \leq j \leq k_*$, 
    \begin{equation}\label{eq:eig_uppr_lower_count1}
        \lambda_j(\cov^\pi) \asymp \left( \frac{\log^{\ell-1}(j+1)}{j} \right)^{\pwrco \pi}.
    \end{equation}
    Moreover, if $j > k_*$ then
    \begin{equation}\label{eq:eig_uppr_lower_count2}
        \lambda_j(\cov^\pi) \lesssim \left( \frac{\log^{\ell-1}(j+1)}{j} \right)^{\pwrco \pi}.
    \end{equation}
\end{lemma}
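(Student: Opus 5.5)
The plan is to convert the eigenvalue count $N_\lambda(\epsilon)$ into the lattice count and then invert it. By \eqref{eq:eig_count_bd_by_bdd_lattice}, with all exponents equal to the common value $\pi$, the lattice count has exponents $(\pi,\dots,\pi)$ ($\ell$ copies). Since $i_1^\pi\cdots i_\ell^\pi\le X$ is equivalent to $i_1\cdots i_\ell\le X^{1/\pi}$, I will work with the equal-exponent case of Theorem~\ref{thm:ordered} and with Lemma~\ref{lemma:solve_X_from_N}. Write $X=\epsilon^{-1/\pwrco}$, so that $\epsilon=X^{-\pwrco}$.

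\textbf{Upper bound, all $j$.} Using \eqref{eq:simple_observe}, I have $N_\lambda(\epsilon)\le N_\downarrow(\beta_2 X)$ with no restriction on $X$. Fix $j$ and set $\epsilon=\lambda_j(\cov^\pi)$, so that $N_\lambda(\epsilon)\ge j$. Then $j\le N_\downarrow(\beta_2X)\asymp X^{1/\pi}\log^{\ell-1}X$ by Theorem~\ref{thm:ordered}. Inverting this exactly as in Lemma~\ref{lemma:solve_X_from_N} (since $\log X\asymp\log j$ once $j$ is large) gives
\[
X\gtrsim \bigl(j/\log^{\ell-1}(j+1)\bigr)^{\pi},\qquad\text{hence}\qquad \lambda_j=X^{-\pwrco}\lesssim \bigl(\log^{\ell-1}(j+1)/j\bigr)^{\pwrco\pi}.
\]
This proves \eqref{eq:eig_uppr_lower_count2} and the upper half of \eqref{eq:eig_uppr_lower_count1}. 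Small $j$ are absorbed into the constant, because $\lambda_j\le\lambda_1\lesssim1$.

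\textbf{Lower bound, $j\le k_*$.} Here I use $N_\lambda(\epsilon)\ge N^v_\downarrow(\beta_1X)$, and equality $N^v_\downarrow=N_\downarrow$ holds as long as $\beta_1X\le v$. Indeed, any ordered tuple with $i_1^\pi\cdots i_\ell^\pi\le v$ has $i_\ell^\pi\le v$, so the constraint $i_\ell\le v$ is automatic; this is why the hypothesis is phrased as $k_*^\pi\le v$. I will choose
\[
X=C\bigl(j/\log^{\ell-1}(j+1)\bigr)^{\pi}
\]
with $C$ large enough that $N_\downarrow(\beta_1X)\ge j$, which is possible by Theorem~\ref{thm:ordered}'s asymptotic. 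I then need $\beta_1X\le v$. This follows from $j\le k_*$ and $k_*^\pi\le v$, up to the constant $C\beta_1$. To handle that constant, either shrink the admissible range by a constant factor or note that the log factor gives slack once $k_*$ is large. This yields $N_\lambda(X^{-\pwrco})\ge j$, so $\lambda_j\ge X^{-\pwrco}\asymp\bigl(\log^{\ell-1}(j+1)/j\bigr)^{\pwrco\pi}$.

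\textbf{Main obstacle.} The step I expect to be delicate is the boundedness bookkeeping: guaranteeing that the chosen $X$ satisfies $\beta_1X\le v$ uniformly over $j\le k_*$, including the stray constant $C\beta_1$. A secondary issue is that Theorem~\ref{thm:ordered} is only asymptotic as $X\to\infty$, so finitely many small $j$ must be treated separately by constants. This uses the hypotheses that $k_*$ and $v$ are sufficiently large, and, for small $j$, the fact that $\lambda_j\asymp1$ trivially for bounded $j$.
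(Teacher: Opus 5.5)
Your proposal is correct and follows the paper's route. You sandwich $N_\lambda(\epsilon)$ between bounded ordered lattice counts via \eqref{eq:eig_count_bd_by_bdd_lattice}, use $N^v_\downarrow\le N_\downarrow$ for the upper bound and $N^v_\downarrow=N_\downarrow$ below $v$ for the lower bound (with the slack coming from $k_*^\pi\le v$), and invert via Theorem~\ref{thm:ordered} and Lemma~\ref{lemma:solve_X_from_N}; the differences from the paper are cosmetic. You test directly at $\epsilon=\lambda_j$ and at an explicit $X$ rather than through the paper's thresholds $\delta_j\le\lambda_j<\epsilon_{j-1}$, which also avoids assuming the counting function hits each value $j$ exactly. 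In the upper bound, the correct justification is $\log X\asymp\log N$ for $N=N_\downarrow(\beta_2X)\ge j$ together with monotonicity of $t\mapsto t/\log^{\ell-1}t$; your stated reason $\log X\asymp\log j$ does not follow from the one-sided inequality.
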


\begin{proof}
We start by choosing for each $j\in \left[ {v \choose \ell}\right]$, a $\delta_j$ and $\epsilon_j$ so that $N_\downarrow^v\left( \beta_1 \delta_j^{-1/\pwrco} \right) = N_\downarrow^v( \beta_2 \epsilon_j^{-1/\pwrco}) = j$. We will assume that $\delta_j \leq \epsilon_j$ for all $j$, as we may choose $\delta_j = \left(\frac{\beta_1}{\beta_2}\right)^\pwrco \epsilon_j$. It is easy to check that both 
$\epsilon_j$ and $\delta_j$ are decreasing monotonically. By \eqref{eq:eig_count_bd_by_bdd_lattice}, we see that 
\begin{equation}
   j \leq N_\lambda(\delta_j)\quad \text{and} \quad  N_\lambda(\epsilon_j) \leq j .
\end{equation}
In particular, this implies that there are at least $j$-eigenvalues of $\cov^\pi$ that are greater than or equal to $\delta_j$ and there are at most $j-1$ eigenvalues greater than or equal to $\epsilon_{j-1}$. Given that $\delta_j\leq \epsilon_j<\epsilon_{j-1}$, this means the $j$-th eigenvalue of $\cov^\pi$ must lie in between $\delta_j$ and $\epsilon_{j-1}$. That is,
\begin{equation}\label{eq:j-th_eig_delta_eps_bdd}
    \delta_j \leq \lambda_j(\cov^\pi) <\epsilon_{j-1}.
\end{equation}
It remains to estimate $\delta_j$ and $\epsilon_j$, for which the unbounded lattice count will prove useful. Let $\epsilon>0$ and $N = N_\downarrow(\epsilon^{-1/\pwrco})$. By Lemma \ref{lemma:solve_X_from_N}, there exist positive constants $c_1$ and $c_2$ depending only on $\pi$ so that 
\begin{equation}\label{eq:count_uppr_lower_count}
    c_1\left(\frac{N}{\log^{\ell-1}(N)} \right)^\pi \leq \epsilon^{-1/\pwrco} \leq c_2 \left( \frac{N}{\log^{\ell-1}(N)} \right)^\pi.
\end{equation}
Notice that if $N$ is sufficiently large and $N^\pi\leq v$, then $\epsilon^{-1/\pwrco}\leq v$. Let $k_*$ be such an $N$ and
and $\epsilon_*>0$ such that $N_\downarrow( \epsilon_*^{-1/\pwrco} )=k_*$. Since the choice of $k_*$ and $\epsilon_*$ are independent to the choice of $\delta_j$ and $\epsilon_j$ in the steps prior, we may assume that $\delta_{k_*}$ and $\epsilon_{k_*}$ were chosen so that $\delta_{k_*}=\beta_1^\pwrco \epsilon_*$ and $\epsilon_{k_*}=\beta_2^\pwrco \epsilon_*$. Thus by \eqref{eq:simple_observe}, we see that for all $j \leq k_*$
\begin{equation}
    N_\downarrow ( \beta_1 \delta_j^{-1/\pwrco}) =  N_\downarrow^v(\beta_1 \delta_j^{-1/\pwrco}) = j = N_\downarrow^v(\beta_2 \epsilon_j^{-1/\pwrco}) = N_\downarrow(\beta_2 \epsilon_j^{-1/\pwrco}).
\end{equation}
Using \eqref{eq:count_uppr_lower_count}, we may bound $\delta_j$ from below and $\epsilon_j$ from above by
\begin{equation}
    \frac{\beta_1^\pwrco}{c_2^{\pwrco}}\left( \frac{\log^{\ell-1}(j)}{j} \right)^{\pwrco \pi} \leq \delta_j \quad \text{and} \quad \epsilon_j \leq \frac{\beta_2^\pwrco}{c_1^\pwrco}\left( \frac{\log^{\ell-1}(j)}{j} \right)^{\pwrco \pi}.
\end{equation}
Combining this with \eqref{eq:j-th_eig_delta_eps_bdd} shows \eqref{eq:eig_uppr_lower_count1} for all $j\leq k_*$. If $j > k_*$, we have the trivial bound
\begin{equation}  
j -1 =  N^v_\downarrow(\beta_2 \epsilon_{j-1}^{-1/\pwrco}) \leq N_\downarrow(\beta_2 \epsilon_{j-1}^{-1/\pwrco}) \eqqcolon N_{j-1}.
\end{equation}
By \eqref{eq:count_uppr_lower_count},
\begin{equation}
    \epsilon_{j-1} \leq \frac{\beta_2^\pwrco}{c_1^\pwrco}\left( \frac{\log^{\ell-1}(N_{j-1})}{N_{j-1}} \right)^{\pwrco \pi}.
\end{equation}
Notice the function $f: t \mapsto \frac{\log^{\ell-1}(t)}{t}$ is decreasing if $t>e^{\ell-1}$. Thus, if $v$ is sufficiently large so that $k_*>e^{\ell-1}$ and $k_*^\pi \leq v$, then $f(N_{j-1}) \leq f(j-1)$. Therefore, 
\begin{equation}
    \epsilon_{j-1} \leq \frac{\beta_2^\pwrco}{c_1^\pwrco}\left( \frac{\log^{\ell-1}(j-1)}{j-1} \right)^{\pwrco \pi}. 
\end{equation}
Plugging this into \eqref{eq:j-th_eig_delta_eps_bdd} concludes \eqref{eq:eig_uppr_lower_count2} for all $j>k_*$.
\end{proof}

\begin{lemma}\label{lemma:non_ID_pi_eig_bdd}
    Let $\cov^\pi \in \R^{{v \choose \ell} \times {v \choose \ell}}$ be as described in \eqref{eq:H^pi_asymp}. Let 
    \begin{equation}
        \theta^\star = \max_{1\leq r \leq \ell} \frac{r}{\pi_\ell+\dots+\pi_{\ell-r+1}},
    \end{equation}
    and
    \begin{equation}
        \mu = \# \left\{ r \in \{1,\dots,\ell \} :   \frac{r}{\pi_\ell+\dots+\pi_{\ell-r+1}} = \theta^\star \right\},
    \end{equation}
    then for all $j \geq 1$,
    \begin{equation}
        \lambda_j(\cov^\pi) \lesssim \left( \frac{\log^{\mu-1}(j+1)}{j} \right)^{\pwrco/\theta^\star}.
    \end{equation}
\end{lemma}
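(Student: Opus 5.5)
We will argue through the eigenvalue counting function, following the scheme of Lemma \ref{lemma:Id_pi_eig_bounds} but keeping only the upper-bound half. By \eqref{eq:H^pi_asymp}, the diagonal entries of $\cov^\pi$ are comparable to $i_1^{-\pwrco\pi_1}\cdots i_\ell^{-\pwrco\pi_\ell}$ over ordered tuples $i_1<\cdots<i_\ell$. Hence, as in \eqref{eq:eig_count_bd_by_bdd_lattice}, we have
\begin{equation}
N_\lambda(\epsilon)\le N^v_\downarrow(\beta_2\epsilon^{-1/\pwrco})\le N_\downarrow(\beta_2\epsilon^{-1/\pwrco}),
\end{equation}
where the second inequality is \eqref{eq:simple_observe}. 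Here $N_\downarrow$ is the ordered count with exponents $(\pi_1,\dots,\pi_\ell)$ and increasing indices. The weighting in $\theta^\star$ runs over the tail sums $\pi_\ell+\dots+\pi_{\ell-r+1}$. This is exactly the quantity in Theorem \ref{thm:ordered}, or equivalently the reversed-index version proved afterwards with $s_1>\dots>s_k$ and the relabeling $\pi_i\mapsto\pi_{\ell+1-i}$. That theorem gives, for $X\ge X_0$,
\begin{equation}
N_\downarrow(X)\le C\,X^{\theta^\star}\log^{\mu-1}(X+1).
\end{equation}
For bounded $X$ the count is $O(1)$, so the same bound holds for all $X\ge1$ after enlarging $C$.

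Next we invert this bound. Fix $j\ge1$ and set $\lambda=\lambda_j(\cov^\pi)$. Then $j\le N_\lambda(\lambda)\le C X^{\theta^\star}\log^{\mu-1}(X+1)$ with $X=\beta_2\lambda^{-1/\pwrco}$. Since the entries of $\cov^\pi$ are uniformly bounded, $X\gtrsim 1$. We now argue as in Lemma \ref{lemma:solve_X_from_N_2}. We need $\log X\lesssim\log(j+1)$, and this does not follow from the upper bound alone. We handle it by a case split. If $X\le j^{A}$ for a fixed large $A$ (for instance $A=2/\theta^\star$), then $\log(X+1)\lesssim\log(j+1)$. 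In that case
\begin{equation}
X^{\theta^\star}\gtrsim j/\log^{\mu-1}(j+1),
\end{equation}
that is, $\lambda\lesssim\bigl(\log^{\mu-1}(j+1)/j\bigr)^{\pwrco/\theta^\star}$, which is the claim. If instead $X>j^{A}$, then $\lambda=(X/\beta_2)^{-\pwrco}\lesssim j^{-A\pwrco}$. This is smaller than the target bound as soon as $A\pwrco>\pwrco/\theta^\star$, up to constants absorbing the logarithm. So in both cases the bound holds.

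The main obstacle is the step just described: the lower bound in Lemma \ref{lemma:solve_X_from_N_2} used Corollary \ref{cor:N_lowe_bd}, which needs $X\le v$ in order to compare $N^v_\downarrow$ with $N_\downarrow$. The case split avoids that comparison entirely, since the only input it uses is $\lambda=(X/\beta_2)^{-\pwrco}$, which is trivial. A secondary point is to check that the ordering conventions, namely increasing versus decreasing indices and head versus tail sums of $\pi$, match between $\cov^\pi$ and the ordered-count theorem. I expect them to match after reversing the labels. If they did not, one would bound the count with the reversed ordering instead, and this could only shrink the set being counted.
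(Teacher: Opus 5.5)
Your argument is correct. Its skeleton is the paper's: compare $N_\lambda$ with the unbounded ordered count via $N^v_\downarrow\le N_\downarrow$, then apply the upper bound of Theorem~\ref{thm:ordered}. The difference is in how you invert the count.

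\emph{The paper's inversion.} It fixes levels $\epsilon_j$ with $N_\downarrow(\beta_2\epsilon_j^{-1/\pwrco})=j$ and shows $\lambda_j<\epsilon_{j-1}$. It then bounds $\epsilon_j$ through Lemma~\ref{lemma:solve_X_from_N_2}. That lemma obtains $\log X\lesssim\log N$ from the \emph{lower} bound $N_\downarrow(X)\gtrsim X^{1/\pi^*}\log^{\ell-1}X$ of Corollary~\ref{cor:N_lowe_bd}, which itself rests on the Tauberian asymptotics for equal exponents.

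\emph{Your inversion.} You plug in $\lambda_j$ directly through the trivial $j\le N_\lambda(\lambda_j)$. You replace the lower bound by the case split $X\le j^{A}$ versus $X>j^{A}$. So your proof needs only the elementary dyadic upper bound. It also avoids choosing $\epsilon_j$ with $N_\downarrow=j$ exactly, which is not always possible because $N_\downarrow$ can jump by more than one. And it avoids the shift to $j-1$.

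Two of your side remarks are off, though neither affects the proof.
\begin{itemize}
\item The paper's route does not need $X\le v$. There $\epsilon_j$ is defined through the unbounded count, and Corollary~\ref{cor:N_lowe_bd} is a statement about $N_\downarrow$. The only comparison used is $N^v_\downarrow\le N_\downarrow$, which holds for all $X$. Your case split is therefore a simplification, not a repair of a gap.
\item The ordering conventions match directly. Theorem~\ref{thm:ordered} as stated counts $s_1<\cdots<s_k$ with tail sums $\pi_k+\cdots+\pi_{k-r+1}$, exactly as in the lemma. Your fallback would not have worked anyway. Counting with the reversed order changes $\theta^\star$: for $\pi=(2,1)$ the correct value is $1$ and the reversed one is $2/3$. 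A count over a smaller set cannot serve as an upper bound.
\end{itemize}
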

\begin{proof}
    For each $j \in\left[ {v \choose \ell}\right]$, let $\epsilon_j>0$ such that $N_\downarrow(\beta_2 \epsilon_j^{-1/\pwrco}) = j$. By Lemma \ref{lemma:solve_X_from_N_2}, we may upper bound $\epsilon_j$ by
    \begin{equation}\label{eq:non_id_eps_uppr_bdd}
        \epsilon_{j} \lesssim \left( \frac{\log^{\mu-1}(j)}{j} \right)^{\pwrco/\theta^\star}.
    \end{equation}
Moreover, by \eqref{eq:eig_count_bd_by_bdd_lattice},
    \begin{equation}
        N_\lambda(\epsilon_{j-1}) \leq N^v_\downarrow(\beta_2 \epsilon_{j-1}^{-1/\pwrco}) \leq j-1.
    \end{equation}
 which implies that $\lambda_{j}(\cov^\pi) <\epsilon_{j-1}$. The result then follows by \eqref{eq:non_id_eps_uppr_bdd}.
\end{proof}

\section{Power-law persistence across layers: CIFAR-10 experiments}\label{sec:across_layers}

We investigate experimentally whether the power-law spectral structure of data covariance is preserved through multiple layers of neural networks applied to CIFAR-10 \cite{krizhevsky2009learning}.
Theorem~\ref{theorem:MAIN} establishes this for a single random-feature layer with monomial activations; here we test whether the phenomenon extends across depth and to architectures beyond the random-feature model.
Figure~\ref{fig:across_layers} illustrates that, at least in MLPs, there can be a gradual sharpening of the spectral exponent across layers, consistent with the logarithmic corrections from Theorem~\ref{theorem:MAIN} accumulating over depth. This effect is activation-dependent and is even more clearly visible in Figure~\ref{fig:mlp_activation}.

\begin{figure}[t]
  \centering
  \includegraphics[width=\textwidth]{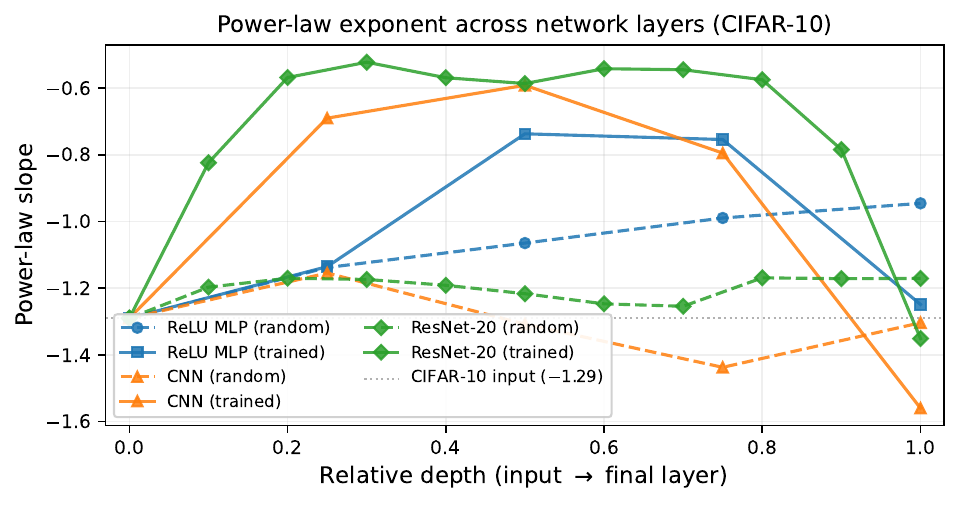}
  \caption{Power-law spectral exponent across network layers for CIFAR-10 data. Each curve shows the fitted slope of the activation covariance eigenvalues at each layer (fit range: indices $1$--$100$), plotted against relative depth. The dashed gray line marks the input data slope ($-1.29$). Dashed curves: random initialization; solid curves: trained networks. For the trained ResNet-20, the slope recovers to the input value at the final layer. The trained CNN shows flattening at middle layers followed by steepening at the final layer.}
  \label{fig:across_layers}
\end{figure}

\paragraph{Setup.}
For each architecture, we pass $n = 10{,}000$ CIFAR-10 training images (scaled to $[-1,1]$) through the network and record the post-activation features at each layer.
For convolutional layers, we flatten the spatial dimensions to obtain a feature vector of dimension $C \times H \times W$ at each layer (rather than spatially averaging, which would reduce the dimension to the number of channels and yield too few eigenvalues for reliable power-law fitting).
We compute the sample covariance matrix $\hat \Sigma = \frac{1}{n} \sum_{i=1}^n (y_i - \bar y)(y_i - \bar y)^\top$, where $y_i$ are the layer activations.
When the feature dimension $d$ exceeds the sample size $n$, we use the Gram matrix identity: the nonzero eigenvalues of $\frac{1}{n} Y Y^\top$ equal those of $\frac{1}{n} Y^\top Y$, allowing efficient computation.
Power-law exponents are estimated by ordinary least squares in log-log coordinates, fitting to eigenvalue indices $1$ through $100$ uniformly across all layers.

\paragraph{Architectures.}
We consider three architectures at both random initialization and after training on CIFAR-10:

\begin{itemize}[leftmargin=*,itemsep=2pt]
\item \textbf{MLP} ($4$ hidden layers, width $1024$, ReLU activations): the architecture closest to the random-feature model of Theorem~\ref{theorem:MAIN}. The input is the flattened $32 \times 32 \times 3 = 3072$-dimensional image. Trained for $20$ epochs with Adam \cite{kingma2015adam} (learning rate $10^{-3}$).

\item \textbf{CNN} (VGG-style \cite{simonyan2014very}, $4$ convolutional blocks with channels $64 \to 128 \to 256 \to 512$, batch normalization, ReLU, max-pooling): a standard convolutional architecture. Feature dimensions at each block are $16384 \to 8192 \to 4096 \to 2048$. Trained for $50$ epochs with SGD (learning rate $0.1$, momentum $0.9$, cosine annealing), achieving $86.2\%$ test accuracy.

\item \textbf{ResNet-20} \cite{he2016deep} (CIFAR-10 variant with $3$ groups of $3$ residual blocks, channels $16 \to 32 \to 64$): the most architecturally complex model, featuring skip connections that add the input of each block to its output. Trained for $50$ epochs with SGD (learning rate $0.1$, momentum $0.9$, cosine annealing).
\end{itemize}

\paragraph{Results.}
Table~\ref{tab:across_layers} reports the fitted power-law slopes and $R^2$ values.
The input CIFAR-10 covariance has slope $-1.29$ with $R^2 = 0.998$, consistent with Figure~\ref{fig:cifar10}.
All models at all layers exhibit clear power-law spectral decay ($R^2 > 0.92$ everywhere).

\begin{table}[h]
\centering
\footnotesize
\begin{tabular}{llrrr}
\toprule
Architecture & Layer & Dim & Slope & $R^2$ \\
\midrule
\multicolumn{2}{l}{\textit{Input (CIFAR-10)}} & 3072 & $-1.29$ & 0.998 \\
\midrule
MLP, ReLU (random) & L1 / L2 / L3 / L4 & 1024 & $-1.14$ / $-1.06$ / $-0.99$ / $-0.95$ & $>0.990$ \\
MLP, ReLU (trained) & L1 / L2 / L3 / L4 & 1024 & $-1.22$ / $-1.00$ / $-0.91$ / $-1.08$ & $>0.984$ \\
MLP, $\tanh$ (random) & L1 / L2 / L3 / L4 & 1024 & $-1.28$ / $-1.28$ / $-1.27$ / $-1.28$ & $>0.998$ \\
MLP, $\mathrm{He}_3$ (random) & L1 / L2 / L3 / L4 & 1024 & $-1.27$ / $-1.21$ / $-1.07$ / $-0.83$ & $>0.990$ \\
\midrule
MLP, ReLU+RMSNorm (random) & L1 / L2 / L3 / L4 & 1024 & $-1.08$ / $-0.99$ / $-0.90$ / $-0.85$ & $>0.998$ \\
MLP, ReLU+LayerNorm (random) & L1 / L2 / L3 / L4 & 1024 & $-1.08$ / $-0.95$ / $-0.85$ / $-0.75$ & $>0.989$ \\
\midrule
CNN (random) & B1 / B2 / B3 / B4 & 16k--2k & $-1.15$ / $-1.31$ / $-1.44$ / $-1.30$ & $>0.991$ \\
CNN (trained) & B1 / B2 / B3 / B4 & 16k--2k & $-0.69$ / $-0.59$ / $-0.79$ / $-1.56$ & $>0.977$ \\
\midrule
ResNet-20 (random) & Conv1 & 16384 & $-1.20$ & 0.998 \\
& G1-B1 / G1-B2 / G1-B3 & 16384 & $-1.17$ / $-1.17$ / $-1.19$ & $>0.998$ \\
& G2-B1 / G2-B2 / G2-B3 & 8192 & $-1.22$ / $-1.25$ / $-1.25$ & $>0.997$ \\
& G3-B1 / G3-B2 / G3-B3 & 4096 & $-1.17$ / $-1.17$ / $-1.17$ & $>0.999$ \\
\midrule
ResNet-20 (trained) & Conv1 & 16384 & $-0.82$ & 0.977 \\
& G1-B1 / G1-B2 / G1-B3 & 16384 & $-0.57$ / $-0.52$ / $-0.57$ & $>0.875$ \\
& G2-B1 / G2-B2 / G2-B3 & 8192 & $-0.59$ / $-0.54$ / $-0.54$ & $>0.983$ \\
& G3-B1 / G3-B2 / G3-B3 & 4096 & $-0.57$ / $-0.78$ / $\mathbf{-1.35}$ & $>0.973$ \\
\bottomrule
\end{tabular}
\caption{Power-law slopes and $R^2$ for activation covariance eigenvalues across layers on CIFAR-10, fitted over indices $1$--$100$. The MLP rows compare three activation functions and two normalization schemes at random initialization, plus a trained ReLU MLP. The randomly initialized ResNet-20 shows remarkably stable slopes ($-1.17$ to $-1.25$) across all layers. For the trained ResNet-20, the final-layer slope ($-1.35$) approximately recovers the input slope.}
\label{tab:across_layers}
\end{table}

Several patterns emerge from these experiments:

\begin{enumerate}[leftmargin=*,itemsep=3pt]

\item \emph{Universal power-law preservation.}
Every layer of every architecture exhibits power-law eigenvalue decay with high $R^2$, confirming that the phenomenon predicted by Theorem~\ref{theorem:MAIN} for single-layer random features extends to multi-layer and non-MLP architectures.  Albeit with drift in the exponents, although at finite size this is consistent with accumulating logarithmic corrections.

\item \emph{Activation function controls the drift rate.}
For randomly initialized MLPs, the rate at which the exponent drifts across layers depends strongly on the activation function (see Figure~\ref{fig:mlp_activation}).
With $\tanh$, the slope stays within $[-1.28, -1.27]$ over four layers---near-perfect preservation.
With ReLU, it drifts to $-0.95$; with $\mathrm{He}_3(x) = x^3 - 3x$ (the third Hermite polynomial), it drifts to $-0.83$.
This ordering is consistent with Theorem~\ref{theorem:MAIN}: since $\tanh$ is an odd function dominated by its linear ($\p=1$) Hermite component, the logarithmic correction $\log^{\p-1}(j+1)$ is trivial ($\p=1$ gives no correction).
ReLU has significant even-degree components ($\p=2$), and $\mathrm{He}_3$ is a pure degree-$3$ monomial in the Hermite basis, giving a correction of $\log^{2}(j+1)$ per layer that accumulates visibly across depth.

\item \emph{Normalization layers accelerate the drift.}
Adding normalization after each activation worsens the spectral drift rather than mitigating it (see Figure~\ref{fig:mlp_norm}).
With ReLU alone, the slope reaches $-0.95$ at L4; adding RMSNorm gives $-0.85$, and LayerNorm gives $-0.75$.

\item \emph{Skip connections stabilize the exponent.}
In the randomly initialized ResNet-20, the slope is remarkably stable across all layers, staying within $[-1.25, -1.17]$ (e.g., $-1.22$, $-1.25$, $-1.25$ in Group~2).
This suggests that skip connections help prevent the per-layer drift seen in the MLP: by adding the block input to its output, the residual connection anchors the spectral structure.

\item \emph{Training recovers the input exponent.}
For the trained ResNet-20, the exponent flattens significantly at early layers (reaching $-0.52$ at G1-B2), but recovers through the network and returns to $-1.35$ at the final layer---close to the input exponent of $-1.29$.
The trained CNN shows a similar pattern: slopes flatten at middle layers ($-0.59$ at B2), then steepen at the final layer ($-1.56$).
\end{enumerate}

Figures~\ref{fig:mlp_activation} and~\ref{fig:mlp_norm} illustrate the activation- and normalization-dependent drift for the MLP.
Figure~\ref{fig:resnet_trained_spectra} shows the eigenvalue spectra at selected layers of the trained ResNet-20, and Figure~\ref{fig:final_layer_spectra} compares the final-layer spectra across all architectures.

\begin{figure}[h]
  \centering
  \includegraphics[width=0.65\textwidth]{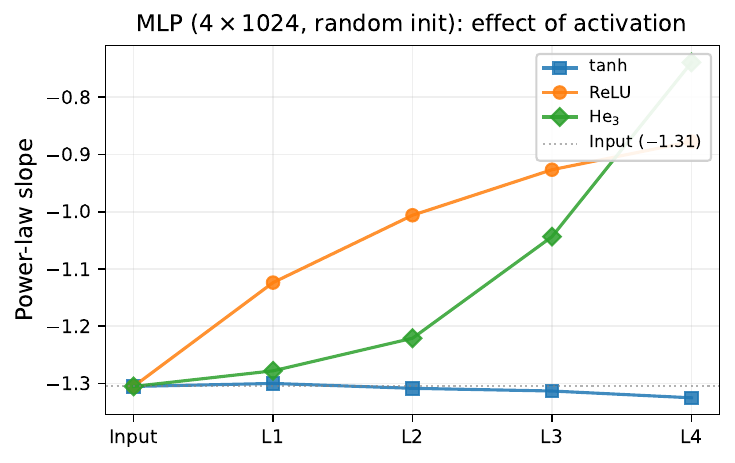}
  \caption{Power-law slope across layers of a $4\times 1024$ MLP at random initialization for three activation functions. The drift rate increases with the effective Hermite degree of the activation: $\tanh$ ($\p=1$ dominant) shows near-perfect preservation, ReLU (mixed $\p$) drifts moderately, and $\mathrm{He}_3$ ($\p=3$) drifts fastest.}
  \label{fig:mlp_activation}
\end{figure}

\begin{figure}[h]
  \centering
  \includegraphics[width=0.65\textwidth]{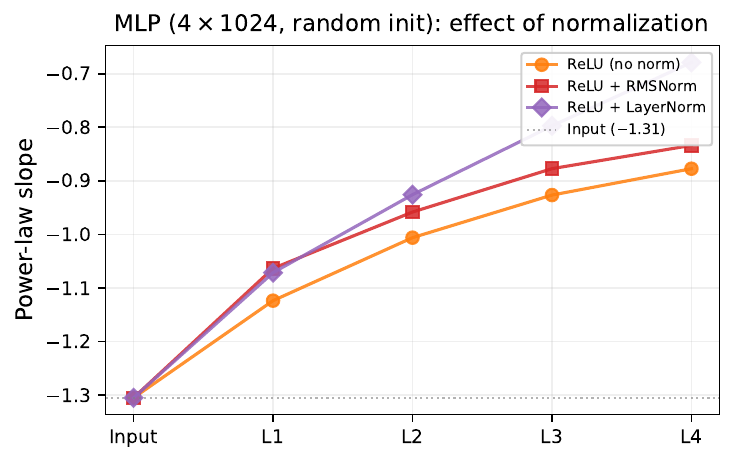}
  \caption{Power-law slope across layers of a $4\times 1024$ ReLU MLP at random initialization with different normalization schemes. Both RMSNorm and LayerNorm accelerate the drift toward flatter spectra compared to no normalization, with LayerNorm having the strongest whitening effect.}
  \label{fig:mlp_norm}
\end{figure}

\begin{figure}[h]
  \centering
  \includegraphics[width=\textwidth]{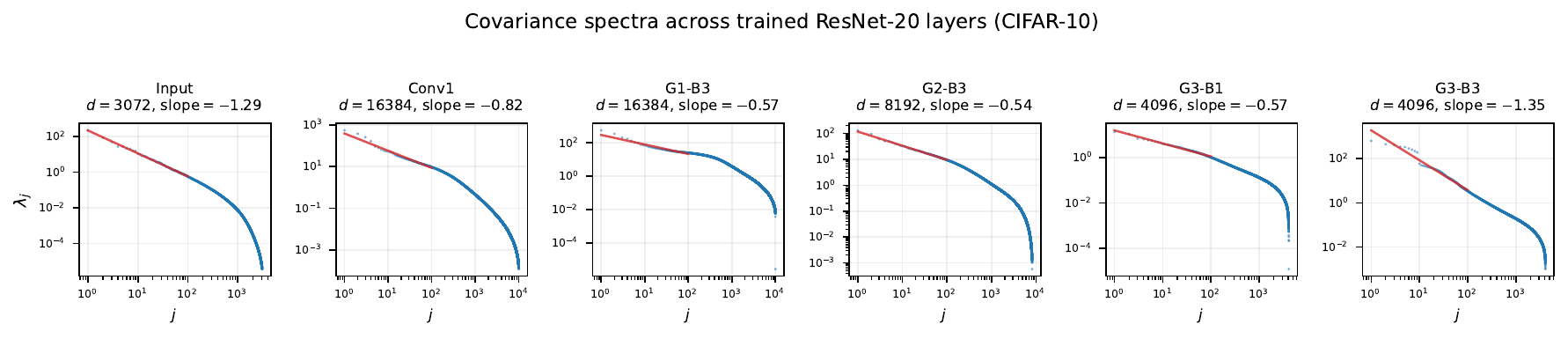}
  \caption{Eigenvalue spectra of the activation covariance at selected layers of a trained ResNet-20 on CIFAR-10. Red lines show power-law fits. The input slope of $-1.29$ is recovered at the final layer (G3-B3).}
  \label{fig:resnet_trained_spectra}
\end{figure}

\begin{figure}[h]
  \centering
  \includegraphics[width=\textwidth]{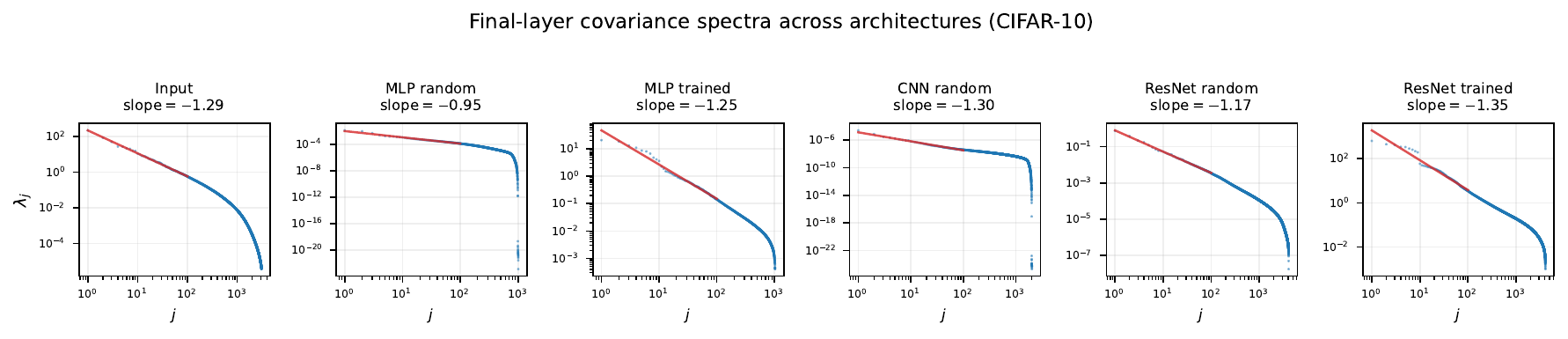}
  \caption{Final-layer covariance spectra for each architecture, compared with the input. The trained ResNet-20 final layer (rightmost) has slope $-1.29$, matching the input (leftmost).}
  \label{fig:final_layer_spectra}
\end{figure}

\end{document}